\newcommand{\citep}[1]{\cite{#1}}
\newcommand{\citet}[1]{\cite{#1}}
\begin{document}

% If your paper is accepted and the title of your paper is very long,
% the style will print as headings an error message. Use the following
% command to supply a shorter title of your paper so that it can be
% used as headings.
%
%\runningtitle{I use this title instead because the last one was very long}

% If your paper is accepted and the number of authors is large, the
% style will print as headings an error message. Use the following
% command to supply a shorter version of the authors names so that
% they can be used as headings (for example, use only the surnames)
%
%\runningauthor{Surname 1, Surname 2, Surname 3, ...., Surname n}

\twocolumn[

\aistatstitle{Chernoff Sampling for Active Testing and Extension to Active Regression}

\aistatsauthor{ Subhojyoti Mukherjee* \And Ardhendu Tripathy* \And  Robert Nowak }

\aistatsaddress{ UW-Madison \And  Missouri S\&T \And UW-Madison } ]

\begin{abstract}
Active learning can reduce the number of samples needed to perform a hypothesis test and to estimate the parameters of a model. In this paper, we revisit the work of Chernoff that described an asymptotically optimal algorithm for performing a hypothesis test. We obtain a novel sample complexity bound for Chernoff’s algorithm, with a non-asymptotic term that characterizes its performance at a fixed confidence level. We also develop an extension of Chernoff sampling that can be used to estimate the parameters of a wide variety of models and we obtain a non-asymptotic bound on the estimation error. We apply our extension of Chernoff sampling  to actively learn neural network models and to estimate parameters in real-data linear and non-linear regression problems, where our approach performs favorably to state-of-the-art methods.
\end{abstract}

\section{Introduction}
In contrast to common machine learning algorithms that use independent and identically distributed (iid) samples for training, active learning promises to use fewer samples by allowing the algorithm to choose the samples it is trained on. While the benefit of active learning has been analyzed extensively for the problem of classification \citep{dasgupta2005coarse,hanneke2007bound,DasguptaHM08,balcan2009agnostic,balcan2013active,ZhangC14,katz2021improved}, there are fewer works \citep{cai2016batch, wu2018pool, wu2019active, bu2019active} that utilize active learning for regression. In this paper we extend an asymptotically optimal algorithm for active testing, that was developed by Chernoff \citep{chernoff1959sequential}, to active regression.
We empirically show that this resulted in more efficient estimation of parameters in regression models. 
%In addition, We obtain novel finite time bounds on the sample complexity and estimation error for the Chernoff sampling method in active testing and active regression respectively. 
In addition, we obtain non-asymptotic bounds  on the sample complexity and estimation error for Chernoff’s algorithm in  active testing and its extension in active regression, respectively. Non-asymptotic bounds characterize the performance of an algorithm when executed at a fixed confidence level, which is relevant for real-world applications. 
While theoretical results for active regression using maximum likelihood estimates were given by \citet{chaudhuri1993nonlinear} and \citet{chaudhuri2015convergence}, our method is  likelihood-free and is applicable to sub-Gaussian observations. 
%Our extension is a simple algorithm and the techniques used in its analysis could be useful in other problems. 
% We consider noisy oracle responses in the setup of pool-based active learning \cite[Section~2.3]{settles.tr09}. 
% %\fxnote{say something about pool-based active learning} 
% A lot of recent work in this setup has been for active classification \citep{dasgupta2005coarse,hanneke2007bound,DasguptaHM08,balcan2009agnostic,balcan2013active,ZhangC14,katz2021improved}; instead our focus here is on active regression \cite{cai2016batch, wu2018pool, wu2019active, bu2019active}.   
% %This  paper  focuses  on  pool-based  sequential  Active Learning for  regression (ALR). 
% Theoretical results for active regression using maximum likelihood estimates were given by \cite{chaudhuri1993nonlinear} and \cite{chaudhuri2015convergence}. 
% In this work we build a connection to the sequential design of experiments originally proposed by Chernoff in \citet{chernoff1959sequential} and develop new approaches in active regression. 

To frame our contributions, let us first establish some basic notation and a problem statement.  Consider a sequential learning problem in which the learner may select one of $n$ possible actions at each step. 
A sample resulting from action $i$ is a realization of a sub-Gaussian random variable with mean $\mu_i(\btheta^\ast)$,
where the mean $\mu_i(\btheta)$ is a known function parameterized by $\btheta \in \bTheta$. 
The specific $\btheta^*\in \bTheta$  that governs the observations is not known.  Each  action may be performed multiple times, resulting in i.i.d.\ observations, and  observations from different actions are also statistically independent. This paper considers the problem of sequentially and adaptively choosing actions for 
% that accomplish 
the following goals: %\smallskip 

{\bf In Active Testing:} $\bTheta$ is finite and the goal is to correctly determine the true hypothesis $\btheta^*$.%\smallskip

{\bf In Active Regression:} $\bTheta$ is a compact (uncountable) space and the goal is to accurately estimate $\btheta^*$.%\smallskip

%In general, $\mu_i$ may be a nonlinear function and the value of $\mu_i(\btheta^*)$ is the ``reward" of action $i$.  The main contributions of the paper are as follows. \vspace{-.125in}
Below, we detail our contributions to both problems.

% \vspace{-5pt}

\begin{itemize}%[wide, labelwidth=!, labelindent=0pt]
\item We revisit Chernoff's sampling algorithm for the sequential design of experiments \citep{chernoff1959sequential}, which is equivalent to the situation where the parameter space $\bTheta$ is a finite set.  The algorithm provably minimizes the number of samples used to identify $\btheta^*\in \bTheta$ in the asymptotic high-confidence setting. 
%At each step, the algorithm in \cite{chernoff1959sequential} solves an optimization problem to obtain a sampling distribution over the set of actions and draws the next sample accordingly. 
We derive a non-asymptotic sample complexity bound for the algorithm in \cite{chernoff1959sequential}
%, exposing a new term that may dominate in low/medium confidence regimes. 
that characterizes its performance in low/medium confidence regimes. We also provide theoretical guarantees for three variations of the Chernoff sampling algorithm.
%We demonstrate the importance of the new term in the context of active testing and discuss it in relation to recent work by \citet{naghshvar2013active}. 
We prove a minimax lower bound that shows that the algorithm in \cite{chernoff1959sequential} can be optimal in the medium confidence regime.
We also generalize it to handle sub-Gaussian distributions. Consequently, we replace the maximum likelihood criterion (which depends on the probability distribution) with the minimum sum-of-squared errors criterion (which depends only on the mean functions). %This increases the applicability of our method to 
% \vspace{-.05in}
\item We extend the algorithm in \cite{chernoff1959sequential} to handle smoothly parameterized functions $\mu_i(\btheta)$ where $\bTheta \subseteq \R^d$. A brute-force approach could involve using a finite, discrete covering of $\bTheta$, but this is impractical.  Instead, we prove that an optimal sampling distribution (according to  Chernoff's criterion) is generally sparse and may be obtained by solving a simple eigenvalue optimization problem closely related to the notion of E-optimality in experimental design \citep{dette1993geometry}. We provide a convergence guarantee for the smoothly parameterized setting that utilizes a new error metric. We demonstrate that the extension of \cite{chernoff1959sequential}  outperforms existing stage-based algorithms in benchmark real-life datasets and in a neural network experiment.
\end{itemize}
We derive our non-asymptotic sample complexity bound for \cite{chernoff1959sequential} using the techniques in \cite{naghshvar2013active}. The convergence proof for active regression extends the techniques of \cite{Frostig15} and applies it to our extension of \cite{chernoff1959sequential}.

% \vspace{-1em}
\subsection{Related Work}
The algorithm in \cite{chernoff1959sequential} assumes that the probability distribution of an observation from any action $i \in [n]$ under any hypothesis $\btheta \in \bTheta$ is known to the learner.
%Here is the core idea of \citet{chernoff1959sequential}.
Consider a partition of $\bTheta = \bTheta_1 \cup \bTheta_2$ and a hypothesis test between $\btheta^\ast \in \bTheta_1$ and $\btheta^\ast \in \bTheta_2$. 
%Assume we have flexibility in selecting among a number of measurement actions to gather the data for testing, and that the distribution of each action under each hypothesis $\btheta \in \bTheta$ is known. 
%We have flexibility in selecting among a number of measurement actions to gather the data for testing, and 
The objective is to choose actions such that the hypothesis test can be performed using as few samples as possible. 
Using past observations, a maximum likelihood estimate $\widehat{\btheta}$ is found and let $\widehat{\btheta} \in \bTheta_1$. The algorithm in \cite{chernoff1959sequential} chooses the next action according to a probability mass function (pmf) $\mathbf{p}$ over actions obtained by
\begin{equation}
\label{eq:chern_sampling}
\arg\max_{\mathbf{p}} \inf_{\btheta' \in \bTheta_2} \sum_{i=1}^n p(i) \, \KL(\nu_i(x; \widehat{\btheta}) \Vert \nu_i(x; \btheta')),
\end{equation}
where $\mathbf{p} = (p(1),\dots,p(n))$, $\nu_i(\cdot; \btheta)$ denotes the probability distribution of an observation from action~$i$ if $\btheta$ were the true hypothesis, and $\KL$ denotes the Kullback-Leibler divergence. 
%\citet{albert1961sequential} showed that a similar sampling rule is an asymptotically optimal strategy when $\bTheta$ contains infinitely many hypotheses. 
The optimization \eqref{eq:chern_sampling} is similar to those appearing in sample complexity lower bounds for best-arm identification in multi-armed bandits \citet{garivier2016optimal,combes2017minimal,degenne2020gamification}. %LAI19854,graves1997asymptotically,
In those works the $\inf$ is taken over all $\btheta'$ having an optimal action that is different from that under the true $\btheta^*$.
% In this paper, we show that using Chernoff's original idea \eqref{eq:chern_sampling} to select actions in multi-armed bandit problems (and only involving the geometry of arm/action set in the stopping criterion) is competitive with state-of-the-art bandit algorithms.

While the algorithm in \cite{chernoff1959sequential} is asymptotically optimal under certain assumptions, subsequent works \citet{blot73sequential}, \cite{naghshvar2013active}, \cite{nitinawarat2013controlled} have proposed modifications that work well outside the asymptotic limit, strengthen theoretical guarantees, and reduce the number of assumptions needed. 
\citet{naghshvar2013active} proposed a two-phase Bayesian policy \twoph which conducts forced exploration in the first phase and computes a posterior belief over the hypotheses. Then in the second phase, it switches to the optimal Chernoff sampling proportion in \cref{eq:chern_sampling} if the probability of one hypothesis crosses a threshold. 
\twoph can relax an assumption made in \cite{chernoff1959sequential} which stated that sampling any action always provides some information about the true $\btheta^*$. 
If that assumption is true, then  Chernoff Sampling (\cher), which has no such separation of phases, empirically outperforms \twoph. It also enjoys both moderate and optimal asymptotic guarantees. 
%we show that the fully adaptive Chernoff Sampling (\cher) with no such separation of stages empirically outperforms \twoph as well as enjoys both moderate and optimal asymptotic guarantees. 
\citet{nitinawarat2013controlled} have %shown that \cher is asymptotically optimal in a stronger sense relevant for testing among multiple hypotheses and 
modified \cite{chernoff1959sequential} by adding a small amount of uniform exploration to relax the previous assumption. %We call this Chernoff Sampling with Exploration \rcher and provide a non-asymptotic sample complexity bound for it.
In a different problem \citet{vaidhiyan2017learning} have modified \cher to quickly identify an odd Poisson point process having a different rate of arrival than others.
%Within the active testing literature, \citet{naghshvar2013active} identified that while Chernoff's procedure is asymptotically optimal as in the number of samples, it does not consider any notion of optimality when the number of hypotheses tends to infinity. \citet{naghshvar2013active} proposed a two-stage Bayesian policy \twoph which conducts forced exploration in the first stage, builds a posterior belief over the hypotheses and once the probability of one hypothesis crosses a threshold it samples according to the optimal sampling proportion. As opposed to this we show that the fully online frequentist algorithm \cher with no such separation of stages empirically outperforms \twoph as well as enjoys both moderate and optimal asymptotic guarantees.  \citet{nitinawarat2013controlled} have shown that Chernoff's procedure is asymptotically optimal in a stronger sense relevant for testing among multiple hypotheses and proposed an algorithm that has no moderate confidence guarantee. In a different setting \citet{vaidhiyan2017learning} have modified Chernoff's procedure to quickly identify an odd Poisson point process having a different rate of arrival than others.

For estimating parameters of a regression model, efficient methods for selecting actions have been studied in the area of Optimum Experiment Design \citep{silvey1980optimal, pukelsheim2006optimal, pronzato2013design}. However a major focus in these works has been on large-sample asymptotic properties of estimators obtained from a fixed sampling distribution. 
While adaptive sampling proportions have also been proposed (e.g. Section~8.5 in \cite{pronzato2013design}), there have been fewer works characterizing their theoretical properties. %On the other hand, a central premise in active learning is that adaptivity can improve the performance of a learner. 
Most theoretical work on active learning has focused on learning binary classifiers that belong to a particular hypothesis class \citep{dasgupta2005coarse,hanneke2007bound,DasguptaHM08,balcan2009agnostic,balcan2013active,ZhangC14,katz2021improved}. 
The works of \cite{chaudhuri1993nonlinear} and \cite{chaudhuri2015convergence} propose adaptive sampling methods for obtaining maximum likelihood estimates of the parameters in a regression model. 
\cite{chaudhuri2015convergence} propose a two-stage algorithm \actives that first samples uniformly at random to obtain a preliminary estimate of the parameters, which is then used to find a sampling proportion for the second stage by solving an optimization problem. 
In contrast \cher is a fully adaptive algorithm and enjoys a similar convergence under slightly stronger assumptions. 

% In active regression the hypothesis space is a compact set, and \eqref{eq:chern_sampling} is equivalent to  locally optimum experiment design with the E-optimality criterion (see e.g.\ Chap.~5 in \citet{pronzato2013design}). % That strategy 
% %The choice of the criterion is largely motivated through heuristic arguments; here 
% We establish a direct connection between the E-optimal criterion to the optimality of Chernoff's method in active testing. 
% Most theoretical work on active learning has focused on the PAC or the agnostic PAC model, where the goal is to learn binary classifiers that belong to a particular hypothesis class \citep{dasgupta2005coarse,hanneke2007bound,DasguptaHM08,balcan2009agnostic,balcan2013active,ZhangC14,katz2021improved}. In contrast the pool-based active regression setting has been studied relatively less from the theoretical standpoint. 
% \citet{chaudhuri2015convergence} analyze a maximum likelihood estimate of the true parameter. They propose a two-stage process \actives that first samples from the unlabelled sample space uniform randomly to build an estimation of $\btheta^*$. It then solves an SDP to get a sampling proportion over the sample space. They show that \actives have polynomial time complexity. In contrast, our method is a fully adaptive process as opposed to stage-based \actives. As opposed to us \citet{chaudhuri2015convergence} is not motivated by the experimental design approach and directly go after the MLE. 
\citet{sabato2014active} provides an active learning algorithm for linear regression problems under model mismatch. %In contrast, our algorithm applies to a more general setting like \citet{chaudhuri2015convergence}.
The same setting under heteroscedastic noise has been studied by \citet{chaudhuri2017active} where they propose a
two-stage process adapted to the noise. \citet{fontaine2019online} also studies the linear regression setting under heteroscedastic noise but proposes a fully adaptive adaptive algorithm that is similar to A-optimal design. %Note that our setting is more general than the linear regression setting and is under homoscedastic noise. 
\citet{bu2019active} studies a different setting where $\btheta^*$ is changing with time. They modify the algorithm of \citet{chaudhuri2015convergence} to fully adaptive process where the optimization needs to be solved at every round. \cite{wu2018pool} studies the linear regression setting where the goal is to maximize the diversity of the samples. \citet{cai2016batch} studies both the linear and non-linear regression setting and proposes the heuristic \emcm without any convergence guarantee. Similarly, \citet{wu2019active} also studies the active regression for noiseless setting but they provide no convergence proof. As opposed to these works  \cher has a convergence guarantee and performs well in real-world benchmark problems. 

Another line of work is the non-parametric setup of \cite{castro2005faster}, where the objective is to estimate an unknown function over its entire domain. Here the error rates for learning are $O\left(t^{-\gamma}\right)$, and the exponent $\gamma$ decreases as the complexity of the hypothesis class of functions increases (i.e., there is a slower decrease in error when learning a more complicated function). For example, if the hypothesis class consists of Holder smooth functions defined on domain $[0,1]^{d}$ then $\gamma = 1/(d-1+ 1/d)$. In contrast, our work is in the parametric setting, where we only want to estimate a single parameter $\btheta^{*}$ and $\gamma=1$. 
We show that the \cher algorithm has a smaller  problem-dependent constant in the error bound. 
%Our contribution is to optimize and show that the problem-dependent constant multiplying the term containing $1 / t$ in the error bound is better for active learning by the \cher strategy. 
%It is an interesting avenue of future work to obtain rates for for very flexible parametric models such as neural networks. 
The work of \citet{goetz2018active} is also in a similar framework as that of \citet{castro2005faster}. 
Other forms of optimal experiment design have been explored in the context of active learning by \citet{yu2006active}, and in different bandit problems by \citet{soare2014best, fiez2019, degenne2020gamification}.
%Different approaches to converge to the optimal $\max\min$ sampling proportions in the context of best-action identification (pure exploration) have been obtained by \citet{menard2019gradient, degenne2019non, jedra2020optimal}. 
Note that our objective of identifying $\btheta^*$ is a strictly more difficult objective than best-arm identification in bandit problems. 
%The $\max\min$ optimization also appears in \citet{combes2017minimal, degenne2020structure, tirinzoni2020asymptotically} under the regret minimization framework which is not our focus.

% \vspace*{-1.0em}
\section{Active Testing}
\label{sec:active-testing}
A sequential policy $\pi$ tasked to find $\btheta^\ast$ interacts with the environment in an iterative fashion. %In this paper we use the term policy and algorithm interchangeably. 
At time~$t$, the policy samples action~$I_t$ and receives a random observation $Y_t$ that follows the distribution $\nu_{I_t}(\cdot;\btheta^\ast)$, where $\btheta^\ast$ is the true value of the unknown parameter that belongs to a set $\bTheta$. 
%In the active testing setting we discretize the space $\bTheta$ into $J$ hypotheses. 
In active testing, $\bTheta$ contains $J$ discrete hypotheses. 
Let $\F_t \colonequals \sigma(I_1, Y_1, I_2, Y_2, \ldots, I_t, Y_t)$ denote the sigma-algebra generated by the sequence of actions and observations till time~$t$. Then $\pi$ is said to be $\delta$-PAC if: (1) at each $t$ the sampling rule $I_t$ is $\F_{t-1}$ measurable, (2) it has a finite stopping time $\tau_\delta$ with respect to $\F_t$, and (3) its final prediction $\widehat{\btheta}(\tau_\delta)$ is based on $\F_{\tau_\delta}$ and satisfies $\Pb(\widehat{\btheta}(\tau_\delta) \neq \btheta^*) \leq \delta$. 
A table of notation is provided in Appendix \ref{app:notation}. Based on the observations $(Y_1, Y_2, \ldots, Y_t)$, we define for every $\btheta\in\bTheta$ the sum of squared errors and the difference between the sum of squared errors for $\btheta$ and $\btheta^*$ as follows:
% \begin{align}
% \label{eq:log-likelihood}
% L_{t}(\btheta)
% \colonequals
% \sum_{s=1}^t (Y_s - \mu_{I_s}(\btheta))^2.
% \end{align}
\begin{align}
\label{eq:log-likelihood}
L_{t}(\btheta)
&\colonequals
\sum_{s=1}^t (Y_s - \mu_{I_s}(\btheta))^2, \\
\Delta_t(\btheta) &\colonequals
L_t(\btheta) - L_t(\btheta^*). 
\end{align}
\begin{assumption}
\label{assm:sub-gauss}
An observation from any action under any hypothesis has bounded range, i.e., $Y_s\in[-\nicefrac{\sqrt{\eta}}{2}, \nicefrac{\sqrt{\eta}}{2}]$ almost surely at every round $s$ for some fixed $\eta>0$.
\end{assumption}
Suppose $\nu_{i}(Y;\btheta)$ are Gaussian distributions with mean $\mu_{i}(\btheta)$ and variance $\nicefrac{1}{2}$. Let $\wtheta(t)$ denote the estimate for $\btheta^*$ at time $t$. Using $\bTheta_1 = \{\wtheta(t)\}, \bTheta_2 = \bTheta \setminus \{\wtheta(t)\}$ and  expressions for KL divergence of Gaussian distributions in \cref{eq:chern_sampling}, we obtain that \cher, which is asymptotically optimal, samples the next action according to a probability mass function (pmf) that is a solution to the following $\max \min$ optimization:
\begin{align}
\mathbf{p}_{\wtheta(t)} \!\! \colonequals \!
\argmax_{\mathbf{p}} \!\! \min_{\btheta'\neq \wtheta(t)} \!\sum\limits_{i=1}^n {p}(i)
(\mu_{i}(\btheta')-\mu_{i}(\wtheta(t)))^2. \label{eq:opt-lower00}
\end{align}

We can solve %the optimization problem in 
\eqref{eq:opt-lower00} by formulating it as a linear program: %to verify hypothesis $\btheta$ as follows:
\begin{align}
    &\hspace*{-0.5em}\max_{\mathbf{p}} z 
    \: \textbf{s.t.}\! \sum_{i=1}^n p(i)(\mu_{i}(\btheta'){-}\mu_{i}(\wtheta(t)))^2 \geq z\: \forall \btheta'{\neq} \wtheta(t),\label{eq:opt:linear0}
    %&p_i \geq 0, \forall i\in [n], \text{ and }\sum_{i=1}^n {p_i}=1 \nonumber
    %\vspace*{-1em}
\end{align}
where the optimization variables are the scalar $z$ and pmf
$\mathbf{p}$ satisfying the constraints $p(i)\geq0 \forall i$ and $\sum_{i=1}^{n}p(i)=1$. 

\textbf{Chernoff Sampling (\cher):} Inspired by the sampling proportion in \cref{eq:opt-lower00}, 
% Theorem~\ref{thm:lower}, 
we use the same sampling strategy even though the distributions $\{\nu_i(Y;\btheta)\}_{i=1}^n$ are only assumed to be sub-Gaussian (Algorithm \ref{alg:cher}). Our estimate of the most likely hypothesis (breaking ties at random) given the data is %$\widehat{\btheta}(t)$ 
%given by
% in \eqref{eq:wide-hat}. 
%\begin{align*}
$\widehat{\btheta}(t) \colonequals \argmin_{\btheta \in \bTheta} L_{t}(\btheta)$. %\label{eq:wide-hat}
%\end{align*}
The action sampled at the next time $t+1$ is chosen by the randomized rule $\Pb(I_{t+1} = i) = p_{\widehat{\btheta}(t)}(i), \forall i \in [n]$. 
We stop sampling at $\tau_\delta$ if the sum of squared errors for all competing hypothesis is greater than that of $\widehat{\btheta}(\tau_\delta)$ by a threshold 
%\begin{align} 
$\beta(J,\delta)$ to be defined later. %\label{eq:threshold-func}
%\end{align}

\begin{algorithm}[H]
\caption{Chernoff Sampling for Active Testing}
\label{alg:cher}
\begin{algorithmic}[1]
\State{\textbf{Input:} Confidence parameter $\delta$, threshold $\beta(J,\delta)$ %from \eqref{eq:threshold-func}
}
\State{Sample $I_1\in [n]$ randomly, observe $Y_1$ and find $\widehat{\btheta}(1)$.}
\For{$t=2,3,\ldots$}
%\STATE Find $\widehat{\btheta}(t)$ from \eqref{eq:wide-hat}. 
\State{Sample $I_{t} \sim \mathbf{p}_{\widehat{\btheta}(t-1)}$ from \eqref{eq:opt-lower00} and observe $Y_{t}$.}
\State{Calculate $L_{t}(\btheta)$ from \eqref{eq:log-likelihood} $\forall\btheta\in \bTheta$, find $\widehat{\btheta}(t)$.}
\If{$ L_{t}(\btheta')  {-}  L_{t}(\widehat{\btheta}(t)) > \beta(J,\delta) \forall \btheta' \neq \widehat{\btheta}(t)$}
\State{\textbf{Return} $\widehat{\btheta}(t)$ as the true hypothesis.}
\EndIf
\EndFor
\end{algorithmic} 
\end{algorithm} 

% We present the Chernoff's policy in \Cref{alg:cher}.
In \citet{chernoff1959sequential} they provide a sample complexity upper bound only for the asymptotic regime when $\delta\rightarrow 0$. We give a non-asymptotic fixed confidence sample complexity upper bound in Theorem \ref{thm:chernoff-upper}. The following assumption, originally made by \citet{chernoff1959sequential}, is used to prove Theorem \ref{thm:chernoff-upper}.
\begin{assumption}
\label{assm:positive-D1}
The mean of the observation from any action under $\btheta^*$ is different from its mean under any other hypothesis, i.e., $\min_{i \in [n]}\min_{\btheta \neq \btheta^\ast}|\mu_i(\btheta) - \mu_i(\btheta^*)|>0$.
\end{assumption}
In subsequent works by \citet{nitinawarat2013controlled} and \citet{naghshvar2013active}, it was shown that the above assumption can be relaxed if the algorithm is modified.
We make the assumption since we give a non-asymptotic sample complexity bound for the original algorithm. 
\begin{definition}
\label{def:bounded}
Define the smallest squared difference of means between any two actions under any pair of hypotheses $\btheta, \btheta'$ as
$\eta_0 \colonequals \min_{i \in [n]}\min_{\btheta \neq \btheta'} (\mu_{i}(\btheta) - \mu_{i}(\btheta'))^2$. By \Cref{assm:positive-D1} we have $\eta_0 > 0$. 
\end{definition}
% Now we set the threshold $\beta(J,\delta) \colonequals \log(CJ/\delta)$ where $C$ depends on $\eta$ and $\eta_0$. Note that $\eta$ and $\eta_0$ is known to 
We define the threshold $\beta(J, \delta) = \log(CJ/\delta)$ where $C$ is a constant depending on $\eta$, and $\eta_0$. The values $\eta$ (or an upper bound to it) and $\eta_0$ are known to the learner. 
% Note that \Cref{assm:positive-D1} is used to obtain theoretical guarantees and the value of $\eta_0$ is not needed to run the algorithm. \smnote{Rethink this, not true? We run expt with $\log J/\delta$ no need to complicate. Also $\eta_0$ is known to the learner but $\eta$ is not.}
\begin{customtheorem}{1}
\label{thm:chernoff-upper}\textbf{(\cher Sample Complexity)}
Let $\tau_\delta$ denote the stopping time of \cher in Algorithm \ref{alg:cher}. 
Let $D_0$ be the objective value of the $\max\min$ optimization in \eqref{eq:opt-lower00} when $\btheta=\btheta^\ast$, i.e.,
%\atnote{remove pmf, make sure p is talked about as pmf before}
\begin{align*}
D_0 &\colonequals \max_{\text{ }\mathbf{p}} \min_{\btheta' \neq \btheta^\ast} \sum_{i=1}^n p(i) (\mu_i(\btheta') -\mu_i(\btheta^\ast))^2.
%\label{eq:D-cher}
\end{align*}
Denote $\mathbf{p}_{\btheta}$ as the solution of \eqref{eq:opt-lower00} when $\widehat{\btheta}(t)$ is replaced by any $\btheta \in \bTheta$, and $D_1$ is the minimum possible objective value 
%of \eqref{eq:opt-lower00} 
over all $\mathbf{p}_{\btheta}$ when $\widehat{\btheta}(t)$ is replaced by $\btheta^*$, i.e., 
\begin{align*}
D_1 &\colonequals \min_{\{\mathbf{p}_{\btheta} : \btheta \in \bTheta\}} \min_{\btheta' \neq \btheta^\ast} \sum_{i=1}^n p_{\btheta}(i)(\mu_i(\btheta') - \mu_i(\btheta^*))^2.
%\label{eq:I-cher}
\end{align*}
%where $\mathbf{p}_{\btheta}$ is the solution of \eqref{eq:opt-lower00} and from 
Assumption~\ref{assm:positive-D1} 
ensures that $D_1 > 0$. The sample complexity of the $\delta$-PAC \cher has the following upper bound, where $J \colonequals |\bTheta|$, $C = O((\eta/\eta_0)^2)$ is a constant:
\begin{align*}
\E[\tau_\delta] \leq O\left(\dfrac{\eta\log( C)\log J}{D_1} + \dfrac{\log(J/\delta)}{D_0} + JC^{\frac{1}{\eta}}\delta^{\frac{D_0}{\eta^2}}\right).
\end{align*}
\end{customtheorem}
\begin{proof} \textbf{(sketch)}
Algorithm \ref{alg:cher} stops at $\tau_\delta$ when the error for the returned hypothesis is smaller than the error for all the other hypotheses by an amount of $\beta(J,\delta)$. To obtain an upper bound to $\mathbb{E}[\tau_\delta]$, we instead look at a different random time $\tau_{\btheta^*}\colonequals \min \{t : \Delta_t(\btheta) > \beta(J,\delta), \forall \btheta \neq \btheta^*\}$, which is the first time when the error for the true hypothesis $\btheta^*$ is smaller than the error for all other hypotheses by $\beta(J,\delta)$. 
Either the hypothesis returned by the Algorithm \ref{alg:cher} is $\wtheta(\tau_\delta) = \btheta^*$, in which case $\tau_\delta = \tau_{\btheta^*}$, or $\tau_{\btheta^*}$ has not occurred yet and $\tau_\delta < \tau_{\btheta^*}$. Hence we focus on bounding $\mathbb{E}[\tau_{\btheta^*}]$. 
The key random quantity in the definition of $\tau_{\btheta^*}$ is $\Delta_t(\btheta)$, and 
using \Cref{assm:sub-gauss} we can show that $\Delta_{t}(\btheta)$ concentrates to its expected value. The expected value $\mathbb{E}[\Delta_t(\btheta)]$ is increasing with $t$ for each $\btheta \neq \btheta^*$ and for large enough $t$ it will be greater than $\beta(J, \delta)$. 
Since $\Delta_t(\btheta)$ concentrates to $\mathbb{E}[\Delta_t(\btheta)]$, for large enough $t$, $\Delta_t(\theta)$ will also be greater than $\beta(J,\delta)$ and $\tau_{\btheta^*}$ would occur. 
To quantify when $\tau_{\btheta^*}$ occurs, we lower bound  $\mathbb{E}[\Delta_t(\btheta)]$ as follows:
\begin{equation}
\mathbb{E}[\Delta_t(\btheta)] \geq \mathbb{E}[\tilde{\tau}_{\btheta^*}D_1 + (t-\tilde{\tau}_{\btheta^*})D_0],
\end{equation}
where $\tilde{\tau}_{\btheta^*}$ is the last time after which the error for the true hypothesis $\btheta^*$ is always smaller than the errors for all other hypotheses. 
Till the time $\tilde{\tau}_{\btheta^*}$, the \cher sampling proportion $\mathbf{p}_{\wtheta(t)}$ may not be $\mathbf{p}_{\btheta^*}$, and $\mathbb{E}[\Delta_t(\btheta)]$ grows at the slower ``exploration'' rate $D_1$ defined in the Theorem. After $\tilde{\tau}_{\btheta^*}$ the \cher proportion is $\mathbf{p}_{\btheta^*}$, and $\mathbb{E}[\Delta_t(\btheta)]$ increases at the optimal ``verification'' rate $D_0$. We finally bound the sample complexity by using $\!\E[\tau_{\btheta^*}] \!=\! \sum_t \!\Pb(\tau_{\btheta^*} = t) \leq M \!+\!  \Pb(\tau_{\btheta^*} > M \cap \tilde{\tau}_{\btheta^*} \leq M) \! + \!\Pb(\tau_{\btheta^*} > M \cap \tilde{\tau}_{\btheta^*} > M)$
where, $M = \frac{\eta\log (C)\log J}{D_1} + \frac{\log(J/\delta)}{D_0}$. The two tail events above is shown to be bounded by $O(JC^{1/\eta}\delta^{D_0/\eta^2})$ in \Cref{conc:lemma:1}. The full proof is in Appendix \ref{app:thm:chernoff-upper}. 
\end{proof}

In the result of \Cref{thm:chernoff-upper} the first term $\eta\log(C)\log{(J)}/D_1$ bounds the number of samples taken during the exploration phase when $\widehat{\btheta}(t) \neq \btheta^\ast$. 
This is the non-asymptotic term that is not present in the analysis of 
\citet{chernoff1959sequential}. %For any $\btheta \in \Theta$ the $\mathbf{p}_{\btheta}$ denotes the maximizer of \eqref{eq:opt-lower00} when $\widehat{\btheta}(t)$ is replaced by $\btheta$. Then $D_1$ is the minimum possible objective value of \eqref{eq:opt-lower00} over all choices of $\mathbf{p}_{\btheta}$ when $\widehat{\btheta}(t)$ is replaced by $\btheta^*$. 
%does not give any guarantees for this non-asymptotic regime for some finite $\delta$. 
The second term  $\log{(J/\delta)}/D_0$ is the dominating term when $\delta \rightarrow 0$, and it matches the asymptotic sample complexity expression of \citet{chernoff1959sequential}. 
% Finally, the $JC^{1/\eta}\delta^{D_0/\eta^2}$ term counts the contribution to the expected sample complexity in the error event, when $\hat{\btheta}(\tau_\delta) \neq \btheta^\ast$. This term becomes negligible as $\delta \rightarrow 0$.
\citet{naghshvar2013active} also derive a moderate confidence bound for their policy called \twoph but suffer from a worse non-asymptotic term $\log(J/\delta)/D_{\text{NJ}}$ where $D_{\text{NJ}} < D_1$ ($D_{\text{NJ}}$ is denoted as $I_1(M)$ in \citet{naghshvar2013active}). \twoph do not require the assumption that $D_1 > 0$ as it conducts forced exploration in the first stage.  
%till some hypothesis becomes more likely than all the others and then follows the Chernoff proportion to verify that hypothesis. Note that our method \cher is fully adaptive and do not require such explicit stages. 
The \twoph policy is asymptotically optimal but performs poorly in some instances (see \Cref{ex:unif-dominates} and \Cref{sec:expt}) due to the fixed  exploration in the first stage. We discuss further results in Appendix~\ref{app:theory}. %(see \Cref{tab:Active-Testing0}).

The following example shows that the non-asymptotic term of \cher may dominate.
\begin{example}\textbf{(Non-asymptotes matter)}
\label{ex:unif-dominates}
Consider an environment with two actions and $\bTheta = \{\btheta^\ast, \btheta', \btheta''\}$. The following table describes the values of $\mu_1(\cdot), \mu_2(\cdot)$ under these three hypotheses.
%
% \begin{wrapfigure}[2]{l}{0.5\textwidth}
\begin{align*}
\begin{matrix}
     \btheta &= & \btheta^* &\btheta'  & \btheta^{''} \\\hline
    \mu_1(\btheta) &=  & 1 & 0.001 & 0 \\
    \mu_2(\btheta) &=   & 1 & 1.002 & 0.998
\end{matrix}
\end{align*}
% \end{wrapfigure}
%
For a choice of $\delta=0.1$, 
we can evaluate that $\log (J) / D_1 \approx 3 \times 10^5$ and $\log(J/\delta)/D_0 \approx 3.4$. 
While Theorem \ref{thm:chernoff-upper} is only an upper bound, empirically we do see that the non-asymptotic term dominates the sample complexity. The Figure \ref{fig:Figure-2enva} shows a box plot of the stopping times of four algorithms over $100$ independent trials on the above environment. A box plot depicts a set of numerical data using their quartiles \citep{tukey1977exploratory}.
A uniform sampling baseline (\unif) performs much better than \cher, as it samples action~$1$ half the time in expectation, and action~$1$ is the best choice to distinguish $\btheta^\ast$ from both $\btheta'$ and $\btheta''$.
\twoph also performs poorly compared to \unif and \cher in this setting. The non-asymptotic term of \twoph scales as $\log(J)/D_{\text{NJ}} \approx 4\times 10^6$.
% The \topllr algorithm will be explained in the next section. 
\end{example}

\textbf{\topllr Sampling:} 
Instead of sampling according to the optimal verification proportion in line~4 of Algorithm~\ref{alg:cher}, we can use the following heuristic argument. 
At each time, consider the current most likely $\widehat{\btheta}(t)$ and its ``closest'' competing hypothesis defined as $\widetilde{\btheta}(t) \colonequals
\arg \min_{\btheta \neq \widehat{\btheta}(t)} L_{t}(\btheta)$. The $\delta$-PAC heuristic called Top-2 sampling (abbreviated as \topllr) samples an action that best discriminates between them, i.e.,
\begin{align}
I_{t+1} \colonequals \arg\max_{i \in [n]} (\mu_i(\widehat{\btheta}(t)) - \mu_i(\widetilde{\btheta}(t)))^2.
\label{eq:max-deriv-finite0}
\end{align}
This strategy requires lesser computation as we don't need to compute  $\bm{p}_{\widehat{\btheta}(t)}$ which could be useful when $J$ or $n$ is very large. It was proposed by \citet{chernoff1959sequential} without any sample complexity proof. %We now give an example which shows that the non-asymptotic term of \cher may dominate.

% We now provide the sample complexity of \topllr, \bcher, and \rcher below. The proof technique of Theorem~\ref{thm:chernoff-upper} can be reused to obtain a novel moderate confidence sample complexity expression for these variations. 

\begin{customproposition}{1}\textbf{(\topllr Sample Complexity)}
\label{thm:topllr-upper}
Let $\tau_\delta$ denote the stopping time of \topllr following the sampling strategy of \eqref{eq:max-deriv-finite0}. 
Consider the set $\Is(\btheta, \btheta') \subset [n]$ of actions that could be sampled following \eqref{eq:max-deriv-finite0} when $\widehat{\btheta}(t) = \btheta$ and $\tilde{\btheta}(t) = \btheta'$, and let $\mathbf{u}_{\btheta\btheta'}$ denote a uniform pmf supported on $\Is(\btheta, \btheta')$. 
Define 
\begin{align*}
D_0' \colonequals \min\limits_{\btheta,\btheta' \neq \btheta^*}\sum_{i=1}^n u_{\btheta^*\btheta}(i)(\mu_i(\btheta')- \mu_i(\btheta^*))^2, \quad %\text{ and} \label{eq:D-topllr1}
\\
D_1' \colonequals \min_{\btheta \neq  \btheta', \btheta' \neq \btheta^*}\sum_{i=1}^n u_{\btheta\btheta'}(i)(\mu_i(\btheta') - \mu_i(\btheta^*))^2, %\label{eq:I-topllr1}
\end{align*}
where we assume that $D_1' > 0$. Then for a constant $C >0$ 
the sample complexity of \topllr has the following upper bound:
\begin{align*}
    \E[\tau_\delta] \leq O\left(\frac{\eta\log(C)\log J}{D_1'} + \frac{\log(J/\delta)}{D_0'} + JC^{\frac{1}{\eta}}\delta^{\frac{D_0'}{\eta^2}}\right).
\end{align*}
\end{customproposition}
The bound above has a similar form as in \Cref{thm:chernoff-upper} with three terms. The first term does not scale with error probability $\delta$, while the second term scales with $\log(J/\delta)$. The denominators of the two terms are different from $D_0$ and $D_1$ due to the different sampling rule of \topllr and hence \topllr is not asymptotically optimal. 
% Next we provide the sample complexity bound of \bcher.
% \textbf{Batch Updates:} 
% We can solve $\max\min$ optimization for $\mathbf{p}_{\widehat{\btheta}}(t)$ every $B$ rounds instead of at each round. This reduces the computational load while increasing the sample complexity by only an additive term as shown below. We state the proposition below while the proof is in \Cref{app:prop:batch-cher}. The result shows that the sample complexity scales with additive factor of $B$ instead of multiplicative.

\textbf{Batch Updates:} 
We can solve $\max\min$ optimization for $\mathbf{p}_{\widehat{\btheta}}(t)$ every $B$ rounds instead of at each round. This reduces computation while increasing the sample complexity by only an additive term as shown below. %later in \Cref{prop:batch-cher}. %Next we introduce another variation of \cher that relaxes \Cref{assm:positive-D1}. 
\begin{customproposition}{2}\textbf{(\bcher Sample Complexity)}
\label{prop:batch-cher}
Let $\tau_\delta$, $D_0$, $D_1$ be defined as in \Cref{thm:chernoff-upper} and $B$ be the batch size. Then the sample complexity of $\delta$-PAC \bcher is
\begin{align*}
    \E[\tau_\delta] \!\leq\! O\left(B \!+\! \frac{\eta\log( C)\log J}{D_1} \!+\! \frac{\log(J/\delta)}{D_0} \!+\! BJC^{\frac{1}{\eta}}\delta^{\frac{D_0}{\eta^2}}\right).
\end{align*}
\end{customproposition}
%Note that the \bcher sample complexity bound increases with an additive factor of $B$ instead of multiplicative. 
% Finally we show the sample complexity bound for \rcher below.

\textbf{\cher with Exploration:} Recall that $D_1 > 0$ (\Cref{assm:positive-D1}) is required to prove \Cref{thm:chernoff-upper}. We now relax this assumption with the policy \rcher which %at every round $t$ 
follows the proportion $p_{\wtheta(t)}$ with probability $1-\epsilon_t$ and uniform randomly explores any other action $i\in [n]$ with probability $\epsilon_t$. The exploration parameter $\epsilon_t$ is chosen to reduce with time. 
Define $D_e \colonequals   \min_{\btheta' \neq \btheta^\ast} \sum_{i=1}^n \frac{1}{n}(\mu_i(\btheta') - \mu_i(\btheta^*))^2$ as the objective value for uniform sampling, then $D_e > 0$. 
\begin{customproposition}{3}\textbf{(\rcher Sample Complexity)}
\label{prop:cher-e}
Let $\tau_\delta$, $D_0$, $C$ be defined as in \Cref{thm:chernoff-upper}, $D_e$ be defined as above, and $\epsilon_t \colonequals 1/\sqrt{t}$. Then the sample complexity bound of $\delta$-PAC \rcher with $\epsilon_t$ exploration is given by 
\begin{align*}
    E[\tau_\delta] \!\leq\! O\left(\frac{\eta\log(C)\log J}{D^{}_e} + \frac{\log(J/\delta)}{D_0} + JC^{\frac{1}{\eta}}\delta^{\frac{D_0}{\eta^2}} \right).
\end{align*}
% \begin{align*}
%     \E[\tau_\delta] \!\leq\! O\left(\dfrac{\eta\log(C)\log J}{D^{}_e} + \dfrac{\log(J/\delta)}{D_0} + JC^{1/\eta}\delta^{D_0/\eta^2} \right).
% \end{align*}
\end{customproposition}
%From the result above 
We can see that the non-asymptotic term does not depend on $D_1$ and scales with $D_e$. Note that  $D_{0}>D_{1}$ and $D_{0}>D_{e}$ separately but $D_{e}$ and $D_{1}$ are not comparable because $D_{1}$ is defined as the minimum over verification proportions for all the hypotheses while $D_{e}$ is defined using a uniform sampling proportion. 
%Note that $D_0 \!>\! D_e \!>\! D_1$ by definition. 
When $\delta \rightarrow 0$ then the asymptotic term dominates and so \rcher is asymptotically optimal. In \Cref{ex:unif-dominates} we can calculate that $\log(J)/D_e \approx 2.1$ and $\log(J/\delta)/D_0 \!= \!3.4$. So \rcher  performs similar to \unif (see \Cref{fig:Figure-2enva}) as well as theoretically enjoy asymptotic and moderate confidence guarantee similar to \cher.

We now provide a brief proof sketch of the three propositions stated before. These proofs follows the technique of \Cref{thm:chernoff-upper} with some key changes which we state now. For \Cref{thm:topllr-upper} observe that \topllr 
%\textbf{\Cref{thm:topllr-upper} proof sketch:} First observe that \topllr 
does not sample by $\mathbf{p}_{\btheta}$ but by the pmf  $\mathbf{u}_{\btheta\btheta'}$ %where $\mathbf{u}_{\btheta\btheta'}$ is as 
defined in \Cref{thm:topllr-upper}. This enables us to  calculate  %$\E[\Delta_{t}(\btheta)] \geq 0$ as 
$\mathbb{E}[\Delta_{t}(\btheta)] \geq \E[ \tilde{\tau}_{\btheta^*}D'_{1} +  (t-\tilde{\tau}_{\btheta^*})D'_{0}]$, 
where $\tilde{\tau}_{\btheta^*}$ is defined in \Cref{thm:chernoff-upper}. After this we can follow a similar line of reasoning as \Cref{thm:chernoff-upper} and bound \topllr sample complexity. The proof is in \Cref{app:topllr-upper}. For \Cref{prop:batch-cher} the key difference with \Cref{thm:chernoff-upper} is that we calculate the $\mathbf{p}_{\btheta}$ after each batch of size $B$. We bound the number of total number of batches $m_{\delta}$ instead of $\tau_\delta$. 
%Relying on the fact that 
As the stopping condition is only checked at the end of every batch we divide the time $\tau_\delta$ into batches of size $B$ and use a similar argument as in \Cref{thm:chernoff-upper} to bound $m_{\delta}$. The proof is in \Cref{app:prop:batch-cher}. Finally, for \Cref{prop:cher-e} the key difference is the new exploration term $D_e$.
%and how to control $\mathbb{E}[\Delta_{t}(\btheta)]  \overset{}{\geq} \sum_{s=1}^t\left((1-\epsilon_s) D_1 + \epsilon_s D_e\right)$ resulting from the $\epsilon_s$-exploration. 
By setting $\epsilon_s \colonequals 1/\sqrt{s}$ %enables us to calculate  $\mathbb{E}[\Delta_{t}(\btheta)] \geq 0$ as
we obtain
$\mathbb{E}[\Delta_{t}(\btheta)] \geq  \E[\tilde{\tau}_{\btheta^*}D_{e} + (t-\tilde{\tau}_{\btheta^*})D_{0}]$. 
Then following the same argument as in \Cref{thm:chernoff-upper} we obtain the upper bound to $\E[\tau_{\delta}]$. The proof is given in \Cref{app:rcher-upper-bound}.

\textbf{Minimax lower bound:} %In this section we explore the minimax optimality of \cher. 
While \citet{chernoff1959sequential} had shown the policy to be optimal as $\delta \rightarrow 0$, we demonstrate an environment where \cher has optimal sample complexity for any fixed value of $\delta$.
%%%%%%%%%%%%%%%%%%%%%%%%%
Let $\Gamma \!=\! \sqrt{\eta}/2$. The following table depicts the values for $\mu_1(\cdot), \mu_2(\cdot), \ldots, \mu_n(\cdot)$ under $J$ different hypotheses: %for that environment:
\begin{align}
\label{eq:minimax-environment}
\hspace*{-0.8em}\begin{matrix} 
    \btheta &= & \btheta^* &\btheta_2  &  \btheta_3 & \ldots & 
    \btheta_J \\\hline
    \mu_1(\btheta) &=  & \Gamma & \Gamma\!-\!\frac{\Gamma}{J} & \Gamma\!-\!\frac{2\Gamma}{J} & \ldots & \Gamma\!-\!\frac{(J-1)\Gamma}{J}\\
    \mu_2(\btheta) &=   & \iota_{21} & \iota_{22} & \iota_{23} & \ldots & \iota_{2J}\\
    &\vdots & &&\vdots\\
    \mu_n(\btheta) &=   & \iota_{n1} & \iota_{n2} & \iota_{n3} & \ldots & \iota_{nJ}
\end{matrix}
\end{align}
Each $\iota_{ij}$ is distinct and satisfies $\iota_{ij} < \Gamma/4J$.  $\mu_1(\cdot)$ is such that the difference of means across any pair of hypotheses is at least $\Gamma/J$. Theorem~\ref{thm:minimax} is proved in Appendix~\ref{app:minimax} by a change of measure argument. %(see Theorem~1 in \citet{huang2017structured}). 
Note that action~$1$ is better than all others in discriminating between any pair of hypotheses, and any policy to identify $\btheta^\ast$ cannot do better than allocating all its samples to action~$1$. 

\begin{customtheorem}{2}\textbf{(Lower Bound)}
\label{thm:minimax} 
%In the environment %consisting of $2$ actions and $3$ hypotheses shown 
%in \eqref{eq:minimax-environment}, %the sample complexity of 
Any $\delta$-PAC policy $\pi$ that identifies $\btheta^\ast$ 
in \eqref{eq:minimax-environment} 
satisfies $\E[\tau_\delta] \geq \Omega\left({J^2\Gamma^{-2}} \log({1}/{\delta})\right)$. 
%, where $\Gamma$ is a problem parameter
Applying Theorem~\ref{thm:chernoff-upper} to the same environment, the sample complexity of \cher is  $O\left(J^2\Gamma^{-2}\log(J/\delta)\right)$ which matches the lower bound upto log factors.
\end{customtheorem}

% \vspace*{-1.0em}

% \vspace*{-0.2em}
\section{Active Regression}
% \vspace*{-0.2em}
\label{sec:active-regression}
In this section, we extend the Chernoff sampling policy to smoothly parameterized hypothesis spaces, such as $\bTheta \subseteq \R^d$.  %This is relevant in applications including active learning of parametric models and parametric structured bandit problems.
The original sampling rule in \eqref{eq:opt-lower00} asks to solve a $\max \min$ optimization, where the $\min$ is over all possible choices of the parameter that are not equal to the parameter $\btheta$ being verified. 
An extension of the rule for when $\btheta^\ast$ can take infinitely many values was first given by \citet{albert1961sequential}. In it, they want to identify which of two partitions $\bTheta_1 \cup \bTheta_2 = \bTheta$ does the true $\btheta^\ast$ belong to. For any given $\btheta_1 \in \bTheta_1$, their verification sampling rule (specialized to the case of Gaussian noise) is
\begin{align}\label{eq:albert-opt}
\mathbf{p}_{\btheta_1} = \argmax_{\text{ } \mathbf{p}} \inf_{\btheta_2 \in \bTheta_2} \sum_{i=1}^n p(i) (\mu_i(\btheta_1) - \mu_i(\btheta_2))^2.
\end{align}
Recall that $\wtheta(t) \colonequals \argmin_{\btheta \in \bTheta} L_{t}(\btheta)$. 
Suppose we want to find the optimal verification proportion for testing $\wtheta(t)$, the current best estimate of $\btheta^\ast$. 
%We will consider the instantiation of \eqref{eq:albert-opt} where $\btheta_1 = \wtheta(t), \bTheta_1 = \{\wtheta(t)\}$ and $\bTheta_2 = \bTheta {\setminus} \{\wtheta(t)\}$. 
Let $\mathcal{B}^\complement_r(\wtheta(t)) \colonequals \{\btheta \in \mathbb{R}^d : \lVert \btheta - \wtheta(t) \rVert > r\}$ denote the complement of a ball of radius $r > 0$ centered at $\wtheta(t)$. Instantiate \eqref{eq:albert-opt} with $\btheta_1 = \wtheta(t)$, $\bTheta_1 = \mathbf{\Theta} \setminus \mathcal{B}^\complement_r(\wtheta(t))$ and $\bTheta_2 = \mathcal{B}^\complement_r(\wtheta(t))$. 
Denote the solution of this optimization as $\mathbf{p}_{\wtheta(t), r}$ and let $\mathbf{p}_{\wtheta(t)} \colonequals \lim_{r \rightarrow 0} \mathbf{p}_{\wtheta(t), r}$. 
In case of multiple solutions, we let $\mathbf{p}_{\wtheta(t), r}$ denote the set of all possible maxima, and the limit is defined to be the limit of a sequence of sets. 
We show in \Cref{thm:max-min-eig} that $\mathbf{p}_{\wtheta(t)}$ %is well-defined and 
(or an element from it)
can be computed efficiently. 
For any $i \in [n]$ the gradient of $\mu_i(\cdot)$ evaluated at $\btheta$ is a column vector denoted as $\nabla \mu_i (\btheta)$. 
\begin{customtheorem}{3} 
\label{thm:max-min-eig}
Assume that $\mu_i(\btheta)$ for all $i \in [n]$ 
is a differentiable function, 
and the set 
$\{\nabla \mu_i (\wtheta(t)) : i \in [n]\}$ 
of gradients evaluated at $\wtheta(t)$ span 
$\mathbb{R}^d$. 
Consider a p.m.f. $\mathbf{p}_{\wtheta(t), r}$ 
from \eqref{eq:albert-opt} for verifying $\wtheta(t)$ against 
all alternatives in $\mathcal{B}^\complement_r(\wtheta(t))$. 
The limiting value of $\mathbf{p}_{\wtheta(t), r}$ as $r \rightarrow 0$ is
%The desired p.m.f.\ 
%is the solution $\mathbf{p}_{\wtheta(t)}$ to the following optimization: %over all pmfs $\mathbf{p}$:
\begin{align*}
\mathbf{p}_{\wtheta(t)}:=
\arg\max_{\text{ }\mathbf{p}}  
%\text{min eigenvalue}
\lambda_{\min}
\left(
\sum_{i=1}^{n} p(i) \nabla\mu_i(\wtheta(t))
\nabla\mu_i(\wtheta(t))^T
\right).
%\label{eq:max-min-gradient}
\end{align*}
\end{customtheorem}
\begin{proof}
\textbf{(sketch)}
Define $g_i(\btheta) \colonequals (\mu_i(\btheta) - \mu_i(\widehat{\btheta}(t)))^2$ for any  $\btheta$. 
Introducing a probability density function $\mathbf{q}$ over $\bTheta$, we can rewrite the optimization for $\mathbf{p}_{\widehat{\btheta}(t), r}$ from \eqref{eq:albert-opt} as
\begin{align}\label{eq:mm_overview}
\max_{\text{ } \mathbf{p}} \inf_{\mathbf{q}: q(\btheta) = 0 \forall \btheta \in \mathcal{B}_r(\wtheta(t))} 
\int_{\bTheta} q(\btheta) \sum_{i=1}^{n} p(i) 
g_i(\btheta) d\btheta.
\end{align}
We consider a family $\mathcal{Q}_r$ of pdfs supported on the boundary of $\mathcal{B}_r(\widehat{\btheta}(t))$. 
We show that the value of \eqref{eq:mm_overview} in the limit as $r \rightarrow 0$ is equal to
\begin{align}\label{eq:mm_overview_2}
\lim_{r \rightarrow 0} \max_{\mathbf{p}} \inf_{\mathbf{q} \in \mathcal{Q}_r} \int_{\bTheta} 
q(\btheta) \sum_{i=1}^{n} p(i) g_i(\btheta) d\btheta.
\end{align}
We use the Taylor series expansion for $g_i(\btheta)$ around $\wtheta(t)$ in \eqref{eq:mm_overview_2}. 
Then $\nabla g_i(\wtheta(t)) = 0$
and the second-order term in the Taylor series is
$0.5(\btheta - \wtheta(t))^T\nabla^2 g_i(\wtheta(t))(\btheta - \wtheta(t))  = (\btheta - \wtheta(t))^T\nabla \mu_i (\wtheta(t)) \nabla \mu_i (\wtheta(t))^T(\btheta - \wtheta(t))$. Using this in \eqref{eq:mm_overview_2} along with the variational characterization of the minimum eigenvalue gives us the result. The full proof is given in Appendix \ref{app:thm:max-min-eig}.
\end{proof}

To illustrate the use of Theorem~\ref{thm:max-min-eig}, consider the problem of active learning in a hypothesis space of parametric functions $\{f_{\btheta}\, : \, \btheta\in \bTheta\}$.  The target function is $f_{\btheta^*}$ for an unknown $\btheta^* \in \bTheta$.  Assume that the learner may query the value of $f_{\btheta^*}$ at points $\bx_1,\dots,\bx_n$ in its domain.  If point $\bx_i$ is queried, then the learner observes the value $f_{\btheta^*}(\bx_i)$ plus a realization of a zero-mean sub-Gaussian noise.  This coincides with the setting above by setting $\mu_i(\btheta) := f_{\btheta}(\bx_i)$. %\smnote{Change this}

\begin{algorithm}[H]
\caption{Chernoff Sampling for Active Regression}
\label{alg:cher_smooth}
\begin{algorithmic}[1]
\State{\textbf{Input:} Parametric model $\{\mu_i(\btheta): \btheta \in \mathbf{\Theta}, i \in [n] \}$.}
\State{Sample $I_1 \in [n]$ randomly, observe $Y_1$ and find $\widehat{\btheta}(1)$.}
\For{$t=2,3,\ldots$}
\State{Sample $I_{t} \sim \mathbf{p}_{\widehat{\btheta}(t-1)}$ (Theorem~\ref{thm:max-min-eig}), observe $Y_{t}$.}
\State{Compute $\wtheta(t) = \arg\min_{\btheta \in \bTheta} L_t(\btheta)$.}
\EndFor
\end{algorithmic}
\end{algorithm} 
%Theorem~\ref{thm:max-min-eig} is proved in Appendix \ref{app:thm:max-min-eig} and suggests the following simple active learning policy. First, make an initial (possibly random) guess $\widehat{\btheta}(1)$ of $\btheta^*$, and then compute the Chernoff sampling proportions according to the theorem.  Sample accordingly and then repeat the Chernoff sampling process to obtain a sequence of estimates $\widehat{\btheta}(t)$, $t=1,2,\dots$. 
Step~4 of Algorithm~\ref{alg:cher_smooth} requires solving the convex eigenvalue optimization in \Cref{thm:max-min-eig}, which takes $O((n^3 + n^2d^2 + nd^3)\sqrt{n+d})$ operations ignoring $\log$ factors \cite[Chap.\ 6]{nesterov1994interior}. 
The results of \citet{albert1961sequential} imply that for any $r>0$ the iterates in Algorithm~\ref{alg:cher_smooth} will converge to within $r$ of $\btheta^*$ using an optimal number of samples in the high confidence ($\delta\rightarrow 0$) regime. 
In Theorem~\ref{thm:cher-smooth} we obtain a finite time bound on the expected loss of $\wtheta(t)$. Define $\ell_s(\btheta) \colonequals (Y_s - \mu_{I_s}(\btheta))^2$ as the squared error for $\btheta$ at round $s$. The average empirical loss is defined as $\wP_t(\btheta) \colonequals \frac{1}{t} \sum_{s=1}^{t} \ell_{s}(\btheta)$.
% \newpage
\begin{assumption}
\label{assm:posdef2}
We assume that  $\lambda_{\max }\left(\nabla^{2} \mu_{i}\left(\btheta\right)\right) \leq \lambda_{1}$ for each $i\in[n]$ and all $\btheta\in\bTheta$. %\smnote{Maybe say this is a mild assumption}
\end{assumption}
\Cref{assm:posdef2} is a mild assumption on the curvature of the mean function at any $\btheta \in \bTheta$. 
\begin{customtheorem}{4}\textbf{(Dense \cher Sample Complexity)}
\label{thm:cher-smooth}
Suppose $\ell_{1}(\btheta), \ell_{2}(\btheta), \cdots, \ell_{t}(\btheta): \mathbb{R}^{d} \rightarrow \mathbb{R}$ are squared loss functions from a distribution that satisfies \Cref{assm:posdef2} and \Cref{assm:thm} in  \Cref{app:cher-smooth-upper-assm}. Further define 
% \begin{align*}
    $P_t(\btheta) = \frac{1}{t}\sum_{s=1}^t\E_{I_s\sim \mathbf{p}_{\wtheta_{s-1}}}[\ell_s(\btheta)|\F^{s-1}]$
% \end{align*}
where, $\wtheta_t =\argmin_{\btheta \in \bTheta} \sum_{s = 1}^t \ell_{s}(\btheta)$. If $t$ is large enough such that $ \frac{\gamma\log(dt)}{t}\leq c^{\prime} \min \left\{\frac{1}{C_{1}C_{2} }, \frac{\operatorname{diameter}(\mathcal{B})}{C_{2}}\right\}$
then for a constant $\gamma \geq 2$ and universal constants $C_1,C_2,c'$,  we show that 
\begin{align*}
&\left(1-\rho_{t}\right) \frac{\sigma_t^2}{t}- \frac{C_1^2}{t^{\gamma / 2}} \\
&\leq \E\left[P_t(\wtheta_t)-P_t\left(\btheta^{*}\right)\right] \\
&\leq \left(1+\rho_{t}\right) \frac{\sigma_t^2}{t}\!+\!\frac{\max\limits_{\btheta \in \bTheta}\left(\!P_{t}(\btheta)\!-\!P_{t}\left(\btheta^{*}\!\right)\right)}{t^{\gamma}},
\end{align*}
where 
$\sigma^{2}_t \coloneqq \E_{}\left[\frac{1}{2}\left\|\nabla \wP_{t}\left(\btheta^{*}\right)\right\|_{\left(\nabla^{2} P_t\left(\btheta^{*}\right)\right)^{-1}}^{2}\right]$, 
and $\rho_t \coloneqq \left(C_1C_2 + 2\eta^2\lambda_1^2\right)\sqrt{\frac{\gamma\log(dt)}{t}}$.
\end{customtheorem}

\begin{proof}\textbf{(sketch)} 
The first step in the proof is to relate $\nabla^2 \wP_t(\btheta)$ to $\nabla^2P_t(\btheta^*)$ for any $\btheta$ in a ball $\mathcal{B}$ around $\btheta^*$. The ball $\mathcal{B}$ is assumed in \Cref{assm:thm} to be a neighborhood where $\nabla^2 \ell_s(\btheta)$ satisfies a Lipschitz property. \Cref{assm:thm} in  \Cref{app:cher-smooth-upper-assm} are standard and have also been made by \citet{Frostig15} and \citet{chaudhuri2015convergence}. 
Using \Cref{assm:posdef2} and \Cref{assm:thm}, we can show that for $t$ as large as mentioned in the Theorem statement, (1) $\nabla^2 P_t(\btheta^*)$ is sandwiched in the positive semidefinite order by scaled multiples of $\nabla^2 \wP_t(\btheta)$ for any $\btheta \in \mathcal{B}$, and (2) the empirical error minimizing $\wtheta(t)$ is in the ball $\mathcal{B}$ with probability $1 - 1/t^\gamma$, which is the good event $\mathcal{E}$.   
Using a Taylor series expansion around $\wtheta(t)$ and the fact that $\nabla \wP_t(\wtheta(t)) = 0$ along with the relation between $\nabla^2 \wP_t(\btheta)$ and $\nabla^2 P_t(\btheta^*)$, we can obtain an upper bound to $\lVert \wtheta(t) - \btheta^*\rVert_{\nabla^2 P_t(\btheta^*)}$ in terms of $\lVert \nabla \wP_t(\btheta^*) \rVert_{(\nabla^2 P_t(\btheta^*))^{-1}}$ that can be shown to be decreasing with $t$. 
Further, $\lVert \wtheta(t) - \btheta^*\rVert_{\nabla^2 P_t(\btheta^*)}$ can also be used to obtain an upper bound to $P_t(\wtheta(t)) - P_t(\btheta^*)$ using a Taylor series expansion. 
%This allows us to obtain an upper bound to $P_t(\wtheta(t)) - P_t(\btheta^*)$ in terms of $\nabla^2\wP_t(\tilde{\btheta}_t)$ and $\nabla^2 P_t(\tilde{\mathbf{z}}_t)$, where $\tilde{\btheta}_t, \tilde{\mathbf{z}}_t$ are two points between $\wtheta(t)$ and $\btheta^*$. Both of those hessians are also sandwiched in the positive semidefinite order by scaled multiples of $\nabla^2P_t(\btheta^*)$. 
Finally we can bound $\E[P_{t}(\wtheta_{t})-P_{t}(\btheta^{*})] =\E[(P_{t}(\wtheta_{t})-P_{t}(\btheta^{*})) I(\mathcal{E})]+\E[(P_{t}(\wtheta_{t})-P_{t}(\btheta^{*})) I(\mathcal{E}^\complement)]$ where $I(\cdot)$ is the indicator. Since $\Pb(\mathcal{E}^\complement) \leq 1/t^\gamma$, the second term can be bounded as $\max_{\btheta \in \bTheta}\left(P_{t}(\btheta)-P_{t}\left(\btheta^{*}\right)\right)/t^{\gamma}$, while the first term simplifies to $(1 + \rho_t)\sigma_t^2/t$. 
The full proof is in  \Cref{app:cher-smooth-upper}.
\end{proof}
The quantity $\mathbb{E}_{I_s \sim \mathbf{p}_{\btheta}}[\ell_s(\btheta)]$ characterizes the worst-case loss we could suffer at time $s$ due to estimation error. 
This is because
%\begin{equation*}
$\mathbb{E}[\ell_s(\btheta)] = 
\mathbb{E}_{I_s}\mathbb{E}[(Y_s - \mu_{I_s}(\btheta))^2 \mid I_s] = \mathbb{E}_{I_s}[(\mu_{I_s}(\btheta^*) - \mu_{I_s}(\btheta))^2 + \nicefrac{1}{2}]$,
%\end{equation*}
and the definition of $\mathbf{p}_{\btheta}$ ensures that 
$
\sum_{i=1}^n \mathbf{p}_{\btheta}(i) (\mu_i(\btheta') - \mu_i(\btheta))^2
$
is maximized for the most confusing $\btheta' \not \in \mathcal{B}_{r}(\btheta)$ at small enough $\epsilon$. 
We contrast this with the average-case loss 
%\begin{equation*}
    $\mathbb{E}_{I_s \sim \text{Uniform}([n])} [(\mu_{I_s}(\btheta') - \mu_{I_s}(\btheta)^2 ], $
%\end{equation*}
which is not larger than the expected value under $I_s \sim \mathbf{p}_{\btheta}$ due to the $\arg\max$ in \eqref{eq:albert-opt}. 
It can thus be seen that the Chernoff sampling allows us to bound a more stringent notion of risk than traditionally looked at in the literature. 
The theorem bounds the running average of the worst-case losses at each time step. %More discussion is in \Cref{app:cher-smooth-upper}. 
The simplified bound of \Cref{thm:cher-smooth} scales as $O(d \sqrt{\log (d t)}/t+1/t^{2})$ (See  \Cref{tab:Active-regression0} %and more discussion 
in \Cref{app:cher-smooth-upper}).

The term $\sigma_t^2$ includes a norm under $(\nabla^2 P_t(\btheta^*))^{-1}$. It is bounded by a quantity proportional to the maximum eigenvalue of $(\nabla^2 P_t(\btheta^*))^{-1}$, equivalently the inverse of the minimum eigenvalue of $\nabla^2 P_t(\btheta^*) = \frac{2}{t} \sum_{s=1}^t \sum_{i=1}^n p_{\wtheta(s-1)}(i) \nabla \mu_i(\btheta^*) \nabla \mu_i(\btheta^*)^T$.
If $\wtheta(t) \approx \btheta^*$, which is true for large $t$, then $\mathbf{p}_{\wtheta(t)}$ is %chosen as 
the optimization solution %to the optimization 
in \Cref{thm:max-min-eig} and it maximizes the minimum eigenvalue of $\sum_{i=1}^n p_{\wtheta(t)}(i) \nabla \mu_i(\wtheta(t)) \nabla \mu_i(\wtheta(t))^T$. Thus \cher approximately minimizes an upper bound to the estimation error. 

% \smnote{We are using $\epsilon_t$ here and we have $\epsilon$ in Thm 2}

% \vspace*{-1.0em}
\section{Experiments}
% \vspace*{-.6em}
\label{sec:expt}
In this section we show numerical evaluations of \cher against other algorithms. The confidence intervals (CI) that we plotted are just mean $\pm 1$ standard deviation. A wider CI means more variability in performance across different trials. We run each experiment over $50$ independent trials.

\begin{figure*}
\centering
\includegraphics[scale = 0.51]{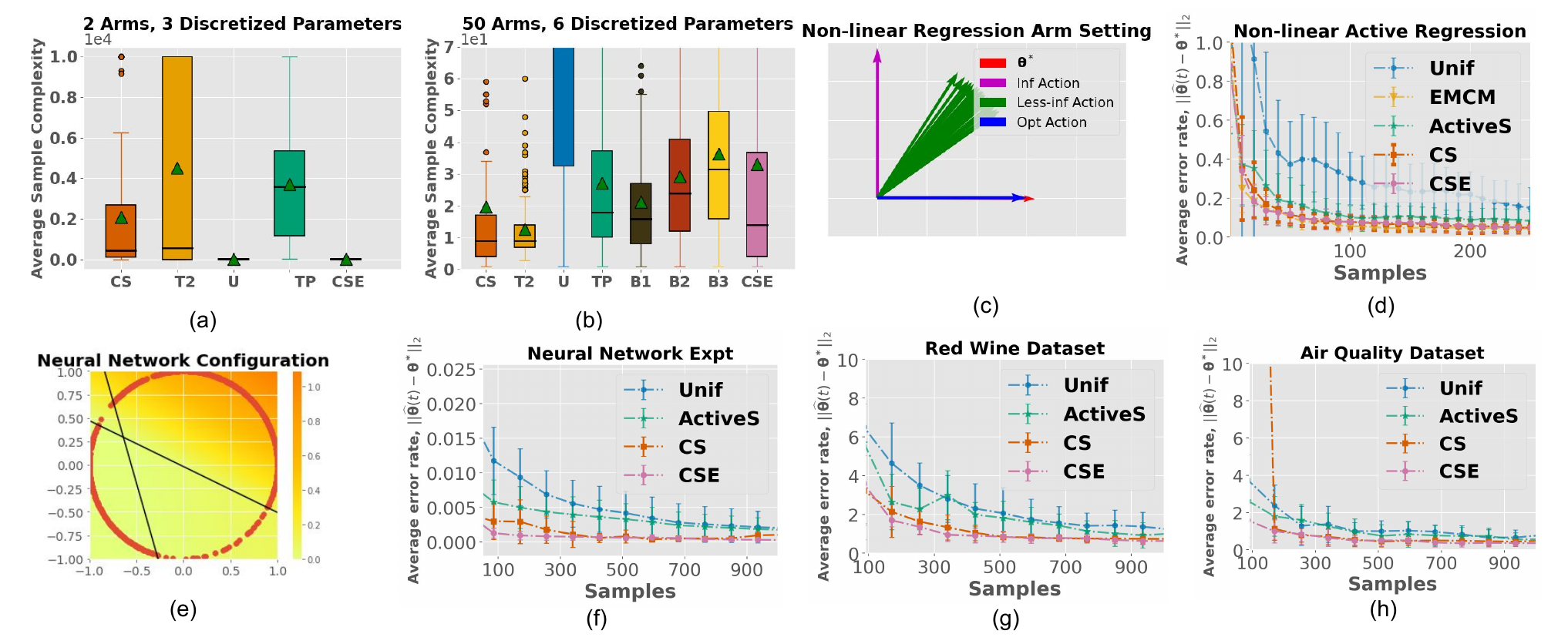}
\vspace*{-2.7em}
\begin{tabular}{ccc}
\subfloat{\label{fig:Figure-2enva}} &
\subfloat{\label{fig:Figure-2envb}} & \subfloat{\label{fig:nonlinerregression}}\\
%%%%%%%%%%%%%%%%%%%%%%%
\subfloat{\label{fig:1d}} &
\subfloat{\label{fig:nndata}} & \subfloat{\label{fig:nonlin}}\\
%%%%%%%%%%%%%%%%%%%%%%
\subfloat{\label{fig:red-wine}} &
\subfloat{\label{fig:air-quality}} & \\
\end{tabular}
\caption{\textbf{(a)} Sample complexity over Example \ref{ex:unif-dominates} with $\delta = 0.1$. \textbf{(b)} Sample complexity over $3$ Group setting with $\delta = 0.1$. U = \unif, B1 = \bcher ($B=5$), B2 = \bcher ($B=10$), B3 = \bcher ($B=15$). The green triangle marker represents the mean stopping time. The horizontal line across each box represents the median. \textbf{(c)} The action feature vectors in $\mathbb{R}^2$ for active regression experiment. Inf = Informative action, Opt = Optimal action, Less-inf = Less informative action. \textbf{(d)} Shows average error rate for Active  Regression. \textbf{(e)} Training data for neural network example shown as red dots. Yellow to red colors indicate increasing values for the target mean function. Black lines denote ``activation'' boundaries of the two hidden layer neurons. \textbf{(f)} Learning a neural network model. 
%Plots show mean $\pm$ std.\ dev.\ of estimation error over $10$ trials. 
\textbf{(g)}, \textbf{(h)} Experiment with Red Wine and Air Quality Dataset.}
\label{fig:FigureFinal}
% \vspace*{-1.3em}
\end{figure*}
%\textbf{(i)} \cher Proportions over $1600$ actions in Red Wine Dataset.
\textbf{Active testing experiment ($3$ Group setting):} 
Consider an environment of $50$ actions and $6$ parameters. The actions can be divided into three groups. 
The first group contains a single action that most effectively discriminates $\btheta^\ast$ from all other hypotheses. The second group consists of $5$ actions, each of which can discriminate one hypothesis from the others. Finally, the third group of $44$ actions are barely informative as their means are similar under all hypotheses. The mean values under all hypotheses are given in Appendix \ref{app:testing-addl-expt}. We show the empirical performance of different policies in the  \Cref{fig:Figure-2envb}. Both \cher and \topllr outperforms \twoph which conducts a uniform exploration over the actions in the second group in its first phase. \topllr performs well as it samples the action in the first group when $\btheta^\ast$ is either the most-likely or the second most-likely hypothesis. \unif performs the worst as it uniformly samples all actions, including the non-informative actions in the third group. \cher outperforms \rcher as it does not conduct forced exploration over non-informative actions. 
\bcher with $B = 5, 10, 15$ has increasing sample complexity with larger batches.
% \end{example}

\textbf{Non-Linear Model:} Consider a non-linear class of functions parameterized as $\mu_i(\btheta) \colonequals 1/(1+ \exp(-\mathbf{x}_i^T\btheta))$, where $\lVert \btheta\rVert_2 = 1$, each action has an associated feature vector $\mathbf{x}_i \in \mathbb{R}^2$ and it returns a value whose expectation is $\mu_i(\btheta^\ast)$. The goal is to choose actions such that the estimation error $\lVert \widehat{\btheta}(t) - \btheta^\ast \rVert_2$ reduces using as few samples as possible. The setting consist of three groups of actions: \textbf{a)} the optimal action, \textbf{b)} the informative action (orthogonal to optimal action) that maximally reduces the uncertainty of $\widehat{\btheta}(t)$ and \textbf{c)} the $48$ less-informative actions as shown in Figure \ref{fig:nonlinerregression}. Appendix~\ref{app:active-learning-expt-addl}  contains more implementation details. We apply \cher and \rcher to this problem and compare it to baselines \unif, and \actives. %Note that \rcher uses the sampling rule in \Cref{thm:max-min-eig} with $1-\epsilon_t$ probability or samples a random action with $\epsilon_t$ probability. 
\Cref{fig:1d} shows that \cher outperforms \actives and is able to find $\btheta^*$ quickly. \cher performs similar to \emcm but note that \emcm has no convergence guarantees and requires hyper-parameter tuning. %The \cher CI has been overshadowed in the plot by the CIs of other methods, but observe that \cher has smaller variability across trials. So \cher is more robust. Our statement that \cher performs well is on the basis of the mean curve.
% $\epsilon$-\eog,
% The $\epsilon$-\eog baseline chooses the most orthogonal action to its least square estimate $\widehat{\btheta}(t)$ at each round for maximally reducing the uncertainty at $\wtheta(t)$. 
%as well as the \ecmc method for Active Regression in \citet{cai2016batch}

\textbf{Neural Network:} 
Consider a collection of data points $\{\mathbf{x}_i \in \mathbb{R}^2 : i \in [n]\}$, each of which is assigned a ground truth scalar mean value by a non-linear function. 
The particular form for the mean function of action~$i$ is the following:
%\begin{equation*}
    $\mu_i(\btheta^\ast) = c_1\sigma(\mathbf{w}_1^T \mathbf{x}_i + b_1) + c_2\sigma(\mathbf{w}_2^T \mathbf{x}_i + b_2)$,
%\end{equation*}
where $\btheta^\ast = (\mathbf{w}_1, b_1, \mathbf{w}_2, b_2, c_1, c_2)$ is the parameter characterizing the ground truth function, 
and $\sigma(\cdot):=\max\{0,\cdot\}$ is the non-linear ReLU activation function. This is a single hidden-layer neural network with input layer weights $\mathbf{w}_1, \mathbf{w}_2 \in \mathbb{R}^2$, biases $b_1, b_2 \in \mathbb{R}$, and output weights $c_1, c_2 \in \{-1, 1\}$. 
Our objective in the experiment is to learn the neural network from noisy observations:  
%\begin{equation*}
$\{(\mathbf{x}_{I_s}, \mu_{I_s}(\btheta^\ast) + \text{noise}): s \in [t], \text{noise is i.i.d.\ zero mean Gaussian}\}$ 
%\end{equation*}
collected by sampling $(I_1, I_2, \ldots, I_t)$ according to the Chernoff proportions defined in \Cref{alg:cher_smooth}. The architecture of the network is known, but the weights and biases must be learned. 
%%%%%%%%%%%%%%%%%%%%%%%%%%%%%%%%
The data points are shown in a scatter plot in \Cref{fig:nndata}. More implementation details are in \Cref{app:active-learning-expt-NN-addl}. 
The non-uniformity of the data distribution increases the difficulty of the learning task. The performance of a learning algorithm is measured by tracking the estimation error $\|\widehat{\btheta}(t) - \btheta^\ast\|_2$ during the course of training. The plot in \Cref{fig:nonlin} shows the average and standard deviation of the estimation error over $10$ trials for \cher, \rcher, \actives and \unif baseline. Again both \cher and \rcher outperforms \unif and \actives. 
% \atnote{is it true?}\unif uses $8\times$ more samples to achieve accuracy comparable to the active learning method.
We note that other works \citep{%mackay1992information,
cohn1996neural, fukumizu2000statistical} have also considered active training of neural networks. 
Other approaches for active sampling have been described in the survey by \citet{settles.tr09}. 
Our neural network learning experiment shows the generality of our approach in active sampling.

\textbf{Real Dataset:} %Finally we show the performance of \cher, \rcher against \actives and \unif in 
We consider two real world datasets from UCI called Red Wine \citep{cortez2009modeling} ($1600$ actions) and Air Quality \citep{de2008field} ($1500$ actions). 
%We give further details of the experiments in \Cref{app:real-dataset-expt}. 
The performance is shown in \Cref{fig:red-wine} and \Cref{fig:air-quality} respectively where \cher and \rcher outperforms \actives and \unif. 
Further experiment details are in \Cref{app:real-dataset-expt},
%We also show in 
including \Cref{fig:red-wine-prop} which shows that in the real-world dataset \cher proportion is sparse over the actions. %which validates our intuition that Chernoff proportion should be sparse. 
% \smnote{talk about \rcher in regression.}

% \vspace*{-1.5em}
\section{Conclusions, Limitations, \& Future Work}
% \vspace*{-0.5em}
\label{sec:conc}
This paper proposes a unifying approach to solve active testing and active regression problems. We obtain non-asymptotic guarantees on the performance of \cher in active testing and extend it to the problem of active regression. 
\cher has comparable performance to existing state-of-the-art methods and is a relatively easy algorithm to implement.
Nevertheless, its sampling proportion is updated before collecting each sample, which increases the computational cost (one solution for this is \bcher which solves the optimization only after collecting a batch of samples). Further, \Cref{assm:positive-D1} is a strong assumption, and other works have removed that in the context of active testing by modifying the \cher strategy (one solution for this is \rcher which incorporates a certain amount of random sampling). 

\cher can be excessively aggressive in the initial stages. This is because it chooses actions according to a sampling proportion that is optimal when $\wtheta(t)=\btheta^*$, which is not true initially. Other methods of exploration could be useful in the earlier stages. 

The extension of \cher to active regression requires to find the least squares estimate $\wtheta(t)$ which could be computationally expensive. 
Theoretical guarantees require several assumptions which may not always be satisfied. In addition to the regularity assumptions, we also need the smoothness of the mean function for \Cref{thm:cher-smooth}. Future directions include obtaining a lower bound for the active testing in the moderate confidence regime and incorporating the geometry of the actions in the sampling strategy for the regression setting. Another direction is to obtain the sampling proportions when the mean function is not differentiable everywhere.

% This paper provides novel moderate confidence sample complexity bounds for Chernoff's sequential design and extends it to handle smoothly parameterized hypothesis spaces. As a result, we obtain a new algorithm for active regression setting that combines the experimental design approach and has sound theoretical and empirical guarantees. Indeed \cher may not perform well in all instances, but it is a relatively easy algorithm to implement. Nevertheless, its sampling proportion is updated before collecting each sample, which increases the computational cost (one fix for this is \bcher which solves the optimization only after collecting a batch of samples). Further \Cref{assm:positive-D1} is a strong assumption, and other works have removed that in the context of Active Testing by modifying the \cher strategy. Theoretical guarantees require several assumptions which may not always be satisfied. In addition to the regularity assumptions, we also need the smoothness of the mean function for \Cref{thm:cher-smooth}. Future directions include obtaining a lower bound to the active testing setting for the moderate confidence regime and incorporating the geometry of the actions in the sampling strategy for the regression setting. 

% \smnote{Talk about batch active testing in limitations and why our empirical result is not good as it seems as error is not increasing additive with B but multiplicative}

\newpage

\textbf{Acknowledgements: } This work was partially supported by AFOSR grant FA9550-18-1-0166. The first author was supported by 2019-20 Chancellor's Opportunity Fellowship by the University of Wisconsin-Madison.

\bibliographystyle{apalike}
\bibliography{biblio1}

\newpage
\appendix
\onecolumn
% \tableofcontents
% \addcontentsline{toc}{section}{Appendix}

\section*{Contents}
\begin{enumerate}[label=\arabic*,leftmargin=*,labelsep=2ex,ref=\arabic*]
    \item Theoretical Comparison and Probability Tools \dotfill \ref{app:theory-tools}
      \begin{enumerate}[label*=.\arabic*,leftmargin=*,labelsep=2ex]
        \item Theoretical Comparison \dotfill \ref{app:theory}
        % \begin{enumerate}[label*=.\arabic*,leftmargin=*,labelsep=2ex]
        %     \item First sub section \dotfill 4
        % \end{enumerate}
        \item Probability Tools \dotfill \ref{app:probability-tools}
      \end{enumerate}
    %%%%%%%%%%%%%%%%%%%%
    % \item Optimal Sample Allocation for Verification \dotfill \ref{app:lower-bound}
    %%%%%%%%%%%%%%%%%%%
    \item Chernoff Sample Complexity Proof
\dotfill \ref{app:chernoff-sample-comp}
    \begin{enumerate}[label*=.\arabic*,leftmargin=*,labelsep=2ex]
        \item Concentration Lemma  \dotfill \ref{app:cher-conc-lemma}
        \item Concentration of $T_{Y^t}$ \dotfill \ref{app:cher-conc-tyt}
    % \end{enumerate}
    %%%%%%%%%%%%%%%%%%%%
    \item Proof of correctness for General Sub-Gaussian Case \dotfill
\ref{app:cher-proof-correct-SG}
        % \begin{enumerate}[label*=.\arabic*,leftmargin=*,labelsep=2ex]
            \item Stopping time Correctness Lemma for the Gaussian Case \dotfill \ref{app:cher-proof-correct-G}
            \item Proof of \cher Sample Complexity (\Cref{thm:chernoff-upper}) \dotfill \ref{app:thm:chernoff-upper}
        \end{enumerate}
    %%%%%%%%%%%%%%%%%%%
    \item Proof of \topllr Sample Complexity \dotfill \ref{app:thm:topllr-upper}
        \begin{enumerate}[label*=.\arabic*,leftmargin=*,labelsep=2ex]
            \item Concentration Lemma \dotfill \ref{app:topllr-conc}
            \item Proof of \topllr Sample Complexity (\Cref{thm:topllr-upper}) \dotfill \ref{app:topllr-upper}
        \end{enumerate}
    %%%%%%%%%%%%%%%%%%
    \item Proof of Sample Complexity of \bcher (\Cref{prop:batch-cher}) \dotfill \ref{app:prop:batch-cher}
    %%%%%%%%%%%%%%%%%%%%%%%
    %%%%%%%%%%%%%%%%%%
    % \item Proof of Proposition \ref{prop:best-arm} (Proof of Correctness Best-Arm ID) \dotfill \ref{app:prop:best-arm}
    %%%%%%%%%%%%%%%%%
    \item Proof of \rcher Sample Complexity (\Cref{prop:cher-e}) \dotfill \ref{app:rcher-upper-bound}
    %%%%%%%%%%%%%%%%%%%
    \item Minimax Optimality Proof (Theorem \ref{thm:minimax})\dotfill \ref{app:minimax}
    %%%%%%%%%%%%%%%%%%
    %%%%%%%%%%%%%%%%%%
    \item Proof of Theorem \ref{thm:max-min-eig} (Continuous hypotheses) \dotfill \ref{app:thm:max-min-eig}
    \begin{enumerate}[label*=.\arabic*,leftmargin=*,labelsep=2ex]
        \item How to solve the optimization \dotfill \ref{app:optimization-todo}
    \end{enumerate}
    %%%%%%%%%%%%%%%%%%
    \item \cher Proof for Continuous Hypotheses \dotfill \ref{app:cher-smooth-upper}
    \begin{enumerate}[label*=.\arabic*,leftmargin=*,labelsep=2ex]
        \item Theoretical Comparisons for Active Regression \dotfill\ref{app:active-regression-discussion}
        \item Discussion on Definitions and Assumptions for Continuous Hypotheses \dotfill \ref{app:cher-smooth-upper-assm}
        \item Concentration Lemma for Continuous Hypotheses \dotfill \ref{app:cher-conc-tyt1}
        \item Support Lemma for Continuous Hypotheses\dotfill \ref{app:smooth-support-lemma}
        \item Proof of \cher Convergence for Continuous Hypotheses (\Cref{thm:cher-smooth}) \dotfill \ref{app:thm:cher-smooth}
    \end{enumerate}
    %%%%%%%%%%%%%%%%%
    % \item Proof of \rcher Convergence for Continuous Hypotheses \dotfill \ref{app:cse-smooth-prop}
    %%%%%%%%%%%%%%%%%
    \item Additional Experiment Details \dotfill \ref{app:expt}
    \begin{enumerate}[label*=.\arabic*,leftmargin=*,labelsep=2ex]
        \item Hypothesis Testing Experiments \dotfill \ref{app:testing-addl-expt}
        \item Active Regression Experiment for Non-linear Reward Model \dotfill \ref{app:active-learning-expt-addl}
        \item Active Regression Experiment for Neural Networks \dotfill \ref{app:active-learning-expt-NN-addl}
        \item Active Regression for the UCI Datasets
        \dotfill \ref{app:real-dataset-expt}
    \end{enumerate}
    %%%%%%%%%%%%%%%%%%%
    \item Table of Notations \dotfill\ref{app:notation}
\end{enumerate}

\section{Appendix}

\subsection{Theoretical Comparison of Active Testing and Probability Tools}
\label{app:theory-tools}

In this section we compare theoretically our work against other existing works in active testing. We also state a few standard lemmas in Probability Tools that we use to prove our results. 

\subsubsection{Theoretical Comparison of Active Testing}
\label{app:theory}

\textbf{Active Testing: } We compare our work against \citet{chernoff1959sequential}, \citet{albert1961sequential} which extended  \citet{chernoff1959sequential} to continuous hypotheses space partitioned into two disjoint space, and the recent active testing algorithms of \citet{naghshvar2013active,nitinawarat2013controlled}. We first recall the following problem complexity parameters as follows:
\begin{align}
    D_0 &\colonequals \max_{\text{ }\mathbf{p}} \min_{\btheta' \neq \btheta^\ast} \sum_{i=1}^n p(i) (\mu_i(\btheta') -\mu_i(\btheta^\ast))^2, \qquad
    %%%%%%%%%%%%%%%%%%%%%%
    D_1 \colonequals \min_{\{\mathbf{p}_{\btheta} : \btheta \in \bTheta\}} \min_{\btheta' \neq \btheta^\ast} \sum_{i=1}^n p_{\btheta}(i)(\mu_i(\btheta') - \mu_i(\btheta^*))^2\nonumber\\
    %%%%%%%%%%%%%%%%%%%%%
    D_{\text{NJ}} \!&\!\colonequals \left(\min_{\btheta\in \bTheta} \min_{\btheta' \neq \btheta} \sum_{i = 1}^n p_{\btheta'}(i) (\mu_i(\btheta) \!-\! \mu_i(\btheta'))^2\right)^2 \max_{\mathbf{p}}\min_{\btheta\in \bTheta} \min_{\btheta' \neq \btheta} \sum_{i = 1}^n p_{}(i) (\mu_i(\btheta) \!-\! \mu_i(\btheta'))^2\label{eq:D1-prime}\\
    %%%%%%%%%%%%%%%%%%%%%%
    D_e &\colonequals   \min_{\btheta' \neq \btheta^\ast} \sum_{i=1}^n \frac{1}{n}(\mu_i(\btheta') - \mu_i(\btheta^*))^2 \label{eq:De}
\end{align}
where, the quantity $D_{NJ}$ is defined in \citet{naghshvar2013active} and is an artifact of the forced exploration conducted by their algorithm. Note that $D_{NJ} < D_1$ by definition. Also, $D_0 > D_e$ and $D_0 > D_1$ and $D_e > 0$ by definition.
%The $\mathbf{w}$ proportion in $D'_1$ is a fixed global exploration proportion used to explore till one hypotheses becomes significantly more likely than others. 
Note that due to the factor $D_{NJ}$, \twoph can perform worse than \cher in certain instances (see Active testing experiment in \Cref{fig:Figure-2enva}, and \Cref{fig:Figure-2envb}). %\Cref{ex:cher-dominates}).  
We summarize out result in context of other existing results in the following table:
\begin{table}[ht]
    \centering
\begin{tabular}{|p{23.5em}|p{14em}|}
\hline
 \textbf{Sample Complexity Bound} & \textbf{Comments}\\
\hline \hline
    $\E[\tau_\delta] \leq \frac{C\log J/\delta}{D_0}  + o\left(\log\frac{1}{\delta}\right)$ & Upper bound in  \citet{chernoff1959sequential}.Optimal for $\delta\rightarrow 0$.\\ 
    %%%%%%%%%%%%%%%%%%%
    % $\liminf\limits_{\delta\rightarrow 0}\frac{\E[\tau_\delta]}{\log(1/\delta)} \geq  \frac{1}{D_0}$ & Lower Bound in \citep{chernoff1959sequential}.  Optimal for $\delta\rightarrow 0$. \\
    %%%%%%%%%%%%%%%%%%%%%%%%%%%%
    $\E[\tau_\delta] \leq \frac{C\log J/\delta}{D_0}  + o\left(\log\frac{1}{\delta}\right)$ & Upper bound in \citet{albert1961sequential}. Optimal for $\delta\rightarrow 0$. Extension to compound hypotheses.\\
    %%%%%%%%%%%%%%%%%%%%%%%%%%%%
    $\E[\tau_\delta] \leq \frac{C\log J/\delta}{D_0}  + o\left(\log\frac{1}{\delta}\right)$ & Upper bound in \citep{nitinawarat2013controlled}. Bound valid for discrete hypotheses. It does not require \Cref{assm:positive-D1}.\\
    %%%%%%%%%%%%%%%%%%%%%%%%%%%%%%%
   $\E[\tau_\delta] \leq O\left( \frac{\log(J)}{D_{\text{NJ}}} + \frac{\log(J/\delta)}{D_0} \right)$ & Upper bound of \twoph in \citet{naghshvar2013active}. Valid for any $\delta\in(0,1]$. Asymptotically optimal for $\delta\rightarrow 0$. \\ 
    %%%%%%%%%%%%%%%%%%%%%%%%%%%%%%
    $\E[\tau_\delta] \leq O\left(\frac{\log(J)}{D_1} + \frac{\log(J/\delta)}{D_0}\right)$ & \cher \textcolor{red}{(Ours)}. Valid for any $\delta\in(0,1]$. Asymptotically optimal for $\delta\!\rightarrow\! 0$. %Need $\!D_1 \!>\! 0$. %Shows bound for the original Chernoff algorithm \citep{chernoff1959sequential}.
    \\
    %%%%%%%%%%%%%%%%%%%%%%%%%%%%%%
     $\E[\tau_\delta] \leq O\left(\frac{\log(J)}{D_e} + \frac{\log(J/\delta)}{D_0}\right)$ & \rcher \textcolor{red}{(Ours)}. Valid for any $\delta\in(0,1]$. Asymptotically optimal for $\delta\rightarrow 0$. Does not require \Cref{assm:positive-D1}.\\
    %%%%%%%%%%%%%%%%%%%%%%%%%%%%%
    \hline
    \end{tabular}
\caption{Active Testing (top) and Regression (bottom) comparison. $D_{\text{NJ}}< D_1$, $D_1 < D_0$, and $D_e < D_0$. }
    \label{tab:Active-Testing}
\end{table}

\subsubsection{Probability Tools}
\label{app:probability-tools}
\begin{lemma}
\label{lemma:divergence-decomp}
\textbf{(Restatement of Lemma 15.1 in \citet{lattimore2018bandit}, Divergence Decomposition)}
Let $B$ and $B'$ be two bandit models having different optimal hypothesis $\btheta^*$ and $\btheta^{'*}$ respectively. Fix some policy \(\pi\) and round $n$. Let \(\Pb_{B, \pi}\) and \(\Pb_{B', \pi}\) be two probability measures induced by some $n$-round interaction of \(\pi\) with \(B\) and \(\pi\) with
\(B'\) respectively. Then
\begin{align*}
    % D_0\left(\Pb_{B, \pi}|| \Pb_{B', \pi}\right)=\frac{1}{2} \sum_{i=1}^{N} E_{B, \pi}[m_i(\tau_{\delta})] \cdot\left(\mu_{i j^*}-\mu_{i \btheta^{'*}}\right)^2
    \KL\left(\Pb_{B, \pi}|| \Pb_{B', \pi}\right)=  \sum_{i=1}^{n} \E_{B, \pi}[Z_i(n)]\cdot\KL(\mu_{i}(\btheta)||\mu_{i}(\btheta^*)) %\cdot\left(\mu_{I_s}(\btheta)-\mu_{I_s \btheta^{'*}}\right)^2].
\end{align*}
where, $\KL\left(.||.\right)$ denotes the Kullback-Leibler divergence between two probability measures and $Z_i(n)$ denotes the number of times action $i$ has been sampled till round $n$.
% where \(m_{i}(\tau_\delta)\) is a random variable denoting the number of times action $i$ is sampled.
%\fxnote{directly use KL from book}
\end{lemma}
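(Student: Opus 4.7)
The plan is to apply the chain rule for the Kullback--Leibler divergence to the joint laws of the interaction trajectory $(I_1, Y_1, I_2, Y_2, \ldots, I_n, Y_n)$ under the two bandit models. The key structural observation is that since the policy $\pi$ is fixed, the conditional law of each action $I_t$ given the history $\F_{t-1}$ is identical under both $\Pb_{B,\pi}$ and $\Pb_{B',\pi}$; the two models differ only in the reward distributions, and at round $t$ only through the arm that is actually pulled.

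First I would write the joint density as $\prod_{t=1}^n \pi_t(I_t \mid \F_{t-1})\, \nu_{I_t}(Y_t;\btheta)$ under $\Pb_{B,\pi}$ (and similarly with $\btheta^{\prime\ast}$ replacing $\btheta$ under $\Pb_{B',\pi}$). The chain rule for KL then gives
\[
\KL(\Pb_{B,\pi}\Vert \Pb_{B',\pi}) \;=\; \sum_{t=1}^n \E_{B,\pi}\!\left[\KL\bigl(\Pb_{B,\pi}(I_t, Y_t \mid \F_{t-1})\,\Vert\,\Pb_{B',\pi}(I_t, Y_t \mid \F_{t-1})\bigr)\right].
\]
Next, inside each expectation I would split the conditional KL on $(I_t, Y_t)$ into a KL on the action-selection law plus a conditional KL on the reward given the action. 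The first piece is identically zero because $\pi_t(\cdot \mid \F_{t-1})$ is common to both models, so only the reward piece survives, namely $\KL(\nu_{I_t}(\cdot;\btheta)\,\Vert\,\nu_{I_t}(\cdot;\btheta^{\ast}))$, a quantity which depends on the trajectory only through the index $I_t$.

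Finally I would interchange the sum over $t$ and the expectation and group the contributions by arm using indicator variables: $\sum_{t=1}^n \mathbf{1}\{I_t = i\} = Z_i(n)$. Linearity of expectation then yields $\sum_{i=1}^{n} \E_{B,\pi}[Z_i(n)]\cdot \KL(\nu_i(\cdot;\btheta)\Vert \nu_i(\cdot;\btheta^{\ast}))$, which matches the stated identity (the paper writes $\KL(\mu_i(\btheta)\Vert \mu_i(\btheta^*))$ as shorthand for the KL between reward distributions with those means).

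The main obstacle is purely measure-theoretic bookkeeping rather than any delicate inequality: one must verify that $\Pb_{B,\pi} \ll \Pb_{B',\pi}$ so that the divergence and its chain-rule decomposition are well-defined for an arbitrary history-dependent (possibly randomized) policy, and that the conditional densities factor as claimed in the presence of adaptive sampling. Once this is secured, the remaining steps are straightforward algebraic manipulation and an application of the tower property, so no concentration or optimization argument is needed.
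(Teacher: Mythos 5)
Your proposal is correct: the paper gives no proof of this lemma at all (it is stated verbatim as a restatement of Lemma~15.1 of Lattimore and Szepesv\'ari, to be used in the lower-bound argument of Theorem~\ref{thm:minimax}), and your chain-rule decomposition of the trajectory law---common policy kernels cancelling, only the reward KL at the pulled arm surviving, then grouping rounds by arm via $\sum_t \mathbf{1}\{I_t=i\}=Z_i(n)$---is exactly the standard proof of that cited result. Your reading of $\KL(\mu_i(\btheta)\Vert\mu_i(\btheta^*))$ as shorthand for the divergence between the corresponding reward distributions is also the intended one, so nothing further is needed.
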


\begin{lemma}
\label{lemma:tsybakov}
\textbf{(Restatement of Lemma 2.6 in \citet{tsybakov2008introduction})}
Let \(\mathbb{P}, \mathbb{Q}\) be two probability measures on the same measurable space \((\Omega, \F)\) and let \(\xi \subset \F \) be any arbitrary event then
\[
\mathbb{P}(\xi)+ \mathbb{Q}\left(\xi^{\complement}\right) \geqslant \frac{1}{2} \exp\left(-\KL(\mathbb{P}|| \mathbb{Q})\right)
\]
where \(\xi^{\complement}\) denotes the complement of event \(\xi\) and \(\KL(\mathbb{P}||\mathbb{Q})\) denotes the Kullback-Leibler divergence between \(\mathbb{P}\) and \(\mathbb{Q}\).
\end{lemma}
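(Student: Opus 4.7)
The plan is to first reduce $\mathbb{P}(\xi)+\mathbb{Q}(\xi^\complement)$ to an integral of $\min(p,q)$ against a common dominating measure, and then to obtain the claimed lower bound by combining Cauchy--Schwarz with a Jensen-type bound on the Hellinger affinity $\int \sqrt{pq}\,d\mu$. This is a short three-step argument in which I do not expect any real obstacle beyond careful measure-theoretic bookkeeping at the point where $\mathbb{P}$ and $\mathbb{Q}$ fail to share a common support.

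To set things up, I would fix a common dominating measure $\mu$ of $\mathbb{P}$ and $\mathbb{Q}$ (for example $\mu = \tfrac{1}{2}(\mathbb{P}+\mathbb{Q})$) and let $p \colonequals d\mathbb{P}/d\mu$, $q \colonequals d\mathbb{Q}/d\mu$. For any event $\xi \in \F$, since $p \geq \min(p,q)$ on $\xi$ and $q \geq \min(p,q)$ on $\xi^\complement$, I would immediately obtain
\[
\mathbb{P}(\xi) + \mathbb{Q}(\xi^\complement) = \int_\xi p\, d\mu + \int_{\xi^\complement} q\, d\mu \geq \int \min(p,q)\, d\mu,
\]
so it suffices to prove $\int \min(p,q)\, d\mu \geq \tfrac{1}{2}\exp(-\KL(\mathbb{P}\Vert\mathbb{Q}))$.

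Next, I would factor $\sqrt{pq}=\sqrt{\min(p,q)}\cdot\sqrt{\max(p,q)}$ and apply Cauchy--Schwarz, using $\int \max(p,q)\,d\mu = 2 - \int \min(p,q)\,d\mu \leq 2$, to obtain
\[
\left(\int \sqrt{pq}\, d\mu\right)^{2} \leq \int \min(p,q)\, d\mu \cdot \int \max(p,q)\, d\mu \leq 2 \int \min(p,q)\, d\mu.
\]
Finally, I would lower bound the Hellinger affinity by writing $\int \sqrt{pq}\, d\mu = \mathbb{E}_{\mathbb{P}}[\sqrt{q/p}]$ on $\{p>0\}$ and applying Jensen's inequality to the concave function $\log$:
\[
\log \mathbb{E}_{\mathbb{P}}\!\left[\sqrt{q/p}\right] \geq \mathbb{E}_{\mathbb{P}}\!\left[\tfrac{1}{2}\log(q/p)\right] = -\tfrac{1}{2}\KL(\mathbb{P}\Vert\mathbb{Q}),
\]
so $\int \sqrt{pq}\, d\mu \geq \exp(-\KL(\mathbb{P}\Vert\mathbb{Q})/2)$. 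Chaining this with the Cauchy--Schwarz bound yields $\int \min(p,q)\, d\mu \geq \tfrac{1}{2}\exp(-\KL(\mathbb{P}\Vert\mathbb{Q}))$, which combined with the first display proves the claim.

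The only delicate part is the Jensen step, since $\mathbb{E}_{\mathbb{P}}[\sqrt{q/p}]$ only makes sense on $\{p>0\}$. When $\mathbb{P} \not\ll \mathbb{Q}$ the inequality is vacuous because $\KL(\mathbb{P}\Vert\mathbb{Q})=+\infty$; otherwise $\mathbb{P}(\{p>0\})=1$ and the expectation under $\mathbb{P}$ is well defined, so the argument goes through without incident.
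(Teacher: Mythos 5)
Your proof is correct. Note that the paper does not prove this lemma at all --- it is stated as a restatement of Lemma~2.6 in Tsybakov (2008) and used as a black box --- and your argument is essentially the standard one from that source: reduce $\mathbb{P}(\xi)+\mathbb{Q}(\xi^{\complement})$ to $\int \min(p,q)\,d\mu$, bound it below via Cauchy--Schwarz by half the squared Hellinger affinity, and lower bound the affinity by $\exp(-\KL(\mathbb{P}\Vert\mathbb{Q})/2)$ through Jensen's inequality, with the case $\mathbb{P}\not\ll\mathbb{Q}$ handled trivially since the right-hand side is then zero. All three steps, including the measure-theoretic care on $\{p>0\}$, are sound.
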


\begin{lemma}\textbf{(Hoeffding's Lemma)}
\label{lemma:hoeffding}
Let $Y$ be a real-valued random variable with expected value $\mathbb{E}[Y]= \mu$, such that $a \leq Y \leq b$ with probability one. Then, for all $\lambda \in \mathbb{R}$
$$
\mathbb{E}\left[e^{\lambda Y}\right] \leq \exp \left(\lambda \mu +\frac{\lambda^{2}(b-a)^{2}}{8}\right)
$$
\end{lemma}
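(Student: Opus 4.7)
The plan is to follow the classical centering-and-convexity proof. First I would reduce to the mean-zero case by defining $X \colonequals Y - \mu$, so that $\mathbb{E}[X] = 0$ and $X \in [a', b']$ almost surely with $a' \colonequals a-\mu \leq 0 \leq b' \colonequals b-\mu$ and $b'-a' = b-a$. Since $\mathbb{E}[e^{\lambda Y}] = e^{\lambda \mu}\mathbb{E}[e^{\lambda X}]$, it suffices to prove $\mathbb{E}[e^{\lambda X}] \leq \exp(\lambda^2(b-a)^2/8)$.

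Next I would exploit convexity of $x \mapsto e^{\lambda x}$. For any $x \in [a', b']$ write $x$ as the convex combination $x = \tfrac{b'-x}{b'-a'} a' + \tfrac{x-a'}{b'-a'} b'$, so that $e^{\lambda x} \leq \tfrac{b'-x}{b'-a'} e^{\lambda a'} + \tfrac{x-a'}{b'-a'} e^{\lambda b'}$. Taking expectations over $X$ and using $\mathbb{E}[X]=0$ yields
\begin{equation*}
\mathbb{E}[e^{\lambda X}] \;\leq\; \frac{b'}{b'-a'} e^{\lambda a'} - \frac{a'}{b'-a'} e^{\lambda b'}.
\end{equation*}
Denote the right-hand side by $e^{\phi(\lambda)}$, i.e.\ set $p \colonequals -a'/(b'-a') \in [0,1]$ and $h \colonequals \lambda(b'-a')$, so that
\begin{equation*}
\phi(\lambda) \;=\; -p\,h \;+\; \log\bigl(1 - p + p\,e^{h}\bigr).
\end{equation*}

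The main analytic step — and the only nontrivial part — is to bound $\phi''(\lambda)$. Direct differentiation gives
\begin{equation*}
\phi''(\lambda) \;=\; (b'-a')^2 \cdot \frac{p(1-p)\,e^{h}}{\bigl(1-p + p\,e^{h}\bigr)^2} \;=\; (b-a)^2 \cdot u(1-u),
\end{equation*}
where $u \colonequals p e^{h}/(1-p+p e^{h}) \in [0,1]$. Since $u(1-u) \leq 1/4$ for any $u \in [0,1]$, we obtain the uniform bound $\phi''(\lambda) \leq (b-a)^2/4$. This step is the crux; everything else is routine.

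Finally I would finish by Taylor's theorem: $\phi(0) = 0$, and a direct computation shows $\phi'(0) = -p(b'-a') + p(b'-a') = 0$, so
\begin{equation*}
\phi(\lambda) \;=\; \phi(0) + \lambda\,\phi'(0) + \int_0^\lambda (\lambda - s)\,\phi''(s)\,ds \;\leq\; \frac{\lambda^2 (b-a)^2}{8}.
\end{equation*}
Combining $\mathbb{E}[e^{\lambda X}] \leq e^{\phi(\lambda)} \leq e^{\lambda^2(b-a)^2/8}$ with the factor $e^{\lambda\mu}$ from the centering step gives the claimed bound for all $\lambda \in \mathbb{R}$. The hardest part is recognizing the $u(1-u)\leq 1/4$ bound that produces the sharp constant $1/8$; without it one only gets a weaker exponent.
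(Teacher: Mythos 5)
Your proof is correct and complete: it is the standard centering-and-convexity argument for Hoeffding's lemma, with the crucial step being the bound $\phi''(\lambda) = (b-a)^2\,u(1-u) \leq (b-a)^2/4$ followed by the Taylor expansion around $0$ using $\phi(0)=\phi'(0)=0$, all of which you carry out accurately. The paper states this lemma as a standard probability tool without giving any proof, so there is no in-paper argument to compare against; your derivation is the classical textbook proof and establishes the statement as written.
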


\begin{lemma}\textbf{(Proposition 2 of \citep{hsu2012tail})}
\label{lemma:vector-martingale}
Let $\mathbf{u}_{1}, \ldots, \mathbf{u}_{n}$ be a martingale difference vector sequence (i.e., $\mathbb{E}\left[\mathbf{u}_{i} \mid \mathbf{u}_{1}, \ldots, \mathbf{u}_{i-1}\right]=$ 0 for all $i=1, \ldots, n$ ) such that
$$
\sum_{i=1}^{n} \mathbb{E}\left[\left\|\mathbf{u}_{i}\right\|^{2} \mid \mathbf{u}_{1}, \ldots, \mathbf{u}_{i-1}\right] \leq v \quad \text { and } \quad\left\|\mathbf{u}_{i}\right\| \leq b
$$
for all $i=1, \ldots, n,$ almost surely. For all $t>0$
$$
\operatorname{Pr}\left[\left\|\sum_{i=1}^{n} \mathbf{u}_{i}\right\|>\sqrt{v}+\sqrt{8 v t}+(4 / 3) b t\right] \leq e^{-t}
$$
\end{lemma}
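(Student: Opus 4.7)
The plan is to reduce this vector concentration inequality to a scalar martingale Bernstein/Freedman bound applied to a carefully constructed scalar companion process. Let $S_k := \sum_{i=1}^k \mathbf{u}_i$ and $\mathcal{F}_k := \sigma(\mathbf{u}_1,\ldots,\mathbf{u}_k)$. First I would note that the martingale-difference property kills the cross terms, so
\begin{equation*}
\mathbb{E}\|S_n\|^2 \;=\; \sum_{i=1}^n \mathbb{E}\|\mathbf{u}_i\|^2 \;\leq\; v,
\end{equation*}
and hence $\mathbb{E}\|S_n\| \leq \sqrt{v}$ by Jensen. This identifies the source of the dimension-free $\sqrt{v}$ offset appearing in the bound.

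Next I would exploit the telescoping identity $\|S_k\|^2 - \|S_{k-1}\|^2 = 2\langle S_{k-1},\mathbf{u}_k\rangle + \|\mathbf{u}_k\|^2$ to decompose
\begin{equation*}
\|S_n\|^2 \;=\; M_n + V_n, \qquad V_n := \sum_{k=1}^n \mathbb{E}\bigl[\|\mathbf{u}_k\|^2\,\big|\,\mathcal{F}_{k-1}\bigr] \;\leq\; v,
\end{equation*}
where $M_n := \sum_k\bigl(2\langle S_{k-1},\mathbf{u}_k\rangle + \|\mathbf{u}_k\|^2 - \mathbb{E}[\|\mathbf{u}_k\|^2\mid \mathcal{F}_{k-1}]\bigr)$ is a mean-zero scalar martingale. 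Cauchy--Schwarz together with the a.s.\ bound $\|\mathbf{u}_k\| \leq b$ give $|\Delta M_k| \leq 2b\|S_{k-1}\| + b^2$ and conditional variance bounded by $4\|S_{k-1}\|^2\mathbb{E}[\|\mathbf{u}_k\|^2\mid\mathcal{F}_{k-1}] + O(b^4)$, so a Freedman-type scalar Bernstein inequality applied to $M_n$ yields a tail of the desired form after summing the conditional variances against $v$.

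The main obstacle is that both the increment bound and the predictable quadratic variation controlling $M_n$ involve $\|S_{k-1}\|$, which is precisely the quantity we are trying to bound: the argument is genuinely self-referential. The standard resolution is a self-bounded/peeling argument. Fix a level $\alpha > 0$, introduce the stopping time $T_\alpha := \inf\{k \leq n : \|S_k\| > \alpha\}$, and apply Freedman's inequality to the stopped martingale $M_{k\wedge T_\alpha}$, whose increments are bounded uniformly by $2b\alpha + b^2$ and whose conditional variance is at most $4\alpha^2 v + O(b^4 n)$. One then chooses $\alpha$ (matching the two Bernstein regimes $\sqrt{8vt}$ and $(4/3)bt$) so that the resulting high-probability bound on $\|S_{n\wedge T_\alpha}\|$ is itself $\leq \alpha$, which forces $T_\alpha = n$ on the good event. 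Substituting the resulting control on $M_n$ into $\|S_n\| \leq \sqrt{V_n + M_n} \leq \sqrt{v + M_n}$ and applying $\sqrt{a+b} \leq \sqrt{a} + \sqrt{b}$ produces
\begin{equation*}
\|S_n\| \;\leq\; \sqrt{v} + \sqrt{8vt} + \tfrac{4}{3}bt
\end{equation*}
with probability at least $1 - e^{-t}$. The explicit constants $8$ and $4/3$ are inherited from the Bennett-type cumulant bound $e^x - 1 - x \leq x^2/\bigl(2(1-x/3)\bigr)$ that underlies the scalar Freedman inequality used in the peeling step.
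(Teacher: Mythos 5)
You should first be aware that the paper contains no proof of this statement: it is quoted verbatim as Proposition~2 of \citet{hsu2012tail} in the probability-tools appendix and used as a black box, so your proposal has to be judged against the statement itself rather than against an in-paper argument. Your outline (telescoping $\|S_k\|^2 = \|S_{k-1}\|^2 + 2\langle S_{k-1},\mathbf{u}_k\rangle + \|\mathbf{u}_k\|^2$, Freedman's inequality for the stopped companion martingale, and a self-consistent choice of the threshold $\alpha$) is a recognized route to dimension-free bounds of this general shape, and the stopping-time logic is sound: an endpoint bound $\|S_{n\wedge T_\alpha}\|\le\alpha$ on the good event does force $T_\alpha>n$ there.

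The genuine gap is in the step you defer to ``choosing $\alpha$ to match the two Bernstein regimes'': the constants do not close as claimed. On $\{k\le T_\alpha\}$ your martingale increments satisfy $|\Delta M_k|\le b(2\alpha+b)$ and the predictable variation is at most $(2\alpha+b)^2v$ (the cross term $4\,\mathbb{E}[\langle S_{k-1},\mathbf{u}_k\rangle\|\mathbf{u}_k\|^2\mid\mathcal{F}_{k-1}]\le 4\alpha b\,\mathbb{E}[\|\mathbf{u}_k\|^2\mid\mathcal{F}_{k-1}]$ is not $O(b^4)$ and must be kept), so Freedman gives, with probability $1-e^{-t}$, $M_{n\wedge T_\alpha}\le(2\alpha+b)\bigl(\sqrt{2vt}+\tfrac{2}{3}bt\bigr)$. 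Writing $\alpha=\sqrt{v}+\sqrt{8vt}+\tfrac{4}{3}bt$ and expanding, the self-consistency requirement $v+M_{n\wedge T_\alpha}\le\alpha^2$ reduces exactly to $(2\sqrt{v}-b)\bigl(\sqrt{2vt}+\tfrac{2}{3}bt\bigr)\ge 0$, which fails whenever $b>2\sqrt{v}$ --- a regime the hypotheses allow, since the conditional second moments can be far smaller than $b^2$. Moreover, your final step as written, $\|S_n\|\le\sqrt{v+M_n}\le\sqrt{v}+\sqrt{M_n}$ with the demand $\sqrt{M_n}\le\sqrt{8vt}+\tfrac{4}{3}bt$, throws away precisely the cross terms $2\sqrt{v}\,(\sqrt{8vt}+\tfrac{4}{3}bt)$ of $\alpha^2$ that the comparison needs, so it cannot produce the stated constants even when $b\le 2\sqrt{v}$. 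Your approach does yield a bound of the same shape with worse absolute constants, but to obtain the lemma exactly as stated you need either a regime-split or bootstrapped refinement of the self-bounding argument, or a different mechanism altogether --- for example, first bounding $\mathbb{E}\|S_n\|\le\sqrt{v}$ and then applying a scalar Freedman/Pinelis-type inequality for Hilbert-space-valued martingales to the deviation $\|S_n\|-\mathbb{E}\|S_n\|$, which avoids the self-reference entirely and is the standard way bounds with this $\sqrt{v}+\sqrt{8vt}+\tfrac{4}{3}bt$ structure are derived.
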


\subsection{Chernoff Sample Complexity Proof}
\label{app:chernoff-sample-comp}

\subsubsection{Concentration Lemma}
\label{app:cher-conc-lemma}
%\begin{customlemma}{1}
\begin{lemma}
\label{conc:lemma:1}
%\fxnote{sum sq errors}
Define $L_{t}(\btheta^*)$ as the sum squared errors for the hypothesis parameterized by  $\btheta^*$. Let $\tau_{\btheta^*} = \min\{t: L_{t}(\btheta') - L_{t}(\btheta^*) > \beta(J,\delta), \forall \btheta' \neq \btheta^* \}$. Then we can bound the probability that $\tau_{\btheta^*}$ is larger than $t$ as
\begin{align*}
     \Pb(\tau_{\btheta^*} > t) \leq J C_1\exp\left(-C_2 t\right)
\end{align*}
where, $J \colonequals |\bTheta|$, $C_1 \colonequals 110 + 55\max\left\{1, \dfrac{\eta^2}{ 2D_1^2c }\right\}$, $C_2 \colonequals \dfrac{ 2D_1^2\min\{(c-1)^2, c\}}{\eta^2}$, $\eta >0$ defined in Definition \ref{def:bounded} and $D_1 \colonequals  \min\limits_{\btheta \in \bTheta, \btheta' \neq \btheta^*}\sum_{i=1}^n p_{\btheta}(i)(\mu_i(\btheta') - \mu_i(\btheta^*))^2$.
\end{lemma}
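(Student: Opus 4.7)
My plan is to fix $\btheta'\neq\btheta^*$, bound $\Pb(\Delta_t(\btheta')\le\beta(J,\delta))$, and then union bound over the at most $J-1$ alternatives, since $\{\tau_{\btheta^*}>t\} = \bigcup_{\btheta'\neq\btheta^*}\{\Delta_t(\btheta')\le\beta(J,\delta)\}$. To reveal the signal-plus-noise structure, I would expand the squares in the definition of $\Delta_t(\btheta')$ and use $Y_s=\mu_{I_s}(\btheta^*)+\xi_s$ with $\E[\xi_s\mid\F_{s-1},I_s]=0$ to obtain $\Delta_t(\btheta')=A_t+M_t$, where $A_t:=\sum_{s=1}^{t}(\mu_{I_s}(\btheta')-\mu_{I_s}(\btheta^*))^2 \ge 0$ is a nonnegative "signal," and $M_t:=2\sum_{s=1}^{t}(\mu_{I_s}(\btheta')-\mu_{I_s}(\btheta^*))(\mu_{I_s}(\btheta^*)-Y_s)$ is a sum of bounded martingale differences with per-step magnitude at most $2\eta$ under Assumption~\ref{assm:sub-gauss}.

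The crucial deterministic input is that, by the very definition of $D_1$ in Theorem~\ref{thm:chernoff-upper}, regardless of the realization of $\widehat{\btheta}(s-1)$,
\[
\E\!\left[(\mu_{I_s}(\btheta')-\mu_{I_s}(\btheta^*))^2 \,\middle|\, \F_{s-1}\right] \;=\; \sum_{i=1}^n p_{\widehat{\btheta}(s-1)}(i)(\mu_i(\btheta')-\mu_i(\btheta^*))^2 \;\ge\; D_1,
\]
so the conditional signal mean $E_t:=\sum_{s=1}^{t}\E[(\mu_{I_s}(\btheta')-\mu_{I_s}(\btheta^*))^2\mid \F_{s-1}]$ satisfies $E_t\ge tD_1$ almost surely. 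I would then split on a threshold governed by the free parameter $c$ appearing in the target constants:
\[
\Pb(\Delta_t(\btheta')\le\beta) \;\le\; \Pb(A_t<ctD_1) \;+\; \Pb\!\left(A_t\ge ctD_1,\; M_t \le \beta - ctD_1\right).
\]

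For the first term, $A_t-E_t$ is an MDS with increments bounded by $\eta$, and $A_t<ctD_1$ forces a deviation of size at least $(1-c)tD_1$ when $c<1$ (and an analogous deviation when $c>1$, exchanging the roles of the two regimes); Azuma--Hoeffding then yields a tail of order $\exp\!\bigl(-(c-1)^2 tD_1^2/(2\eta^2)\bigr)$, contributing the $(c-1)^2$ branch of the $\min$ in $C_2$. For the second term, on $\{A_t\ge ctD_1\}$ the residual event requires $M_t\le\beta-ctD_1$, which for $t\gtrsim\beta/(cD_1)$ is a negative deviation of size $\Theta(ctD_1)$ for the bounded MDS $M_t$. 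Applying Bernstein/Freedman with per-step bound $2\eta$ and conditional variance $[M]_t\le 4\eta\, A_t$ produces a bound that interpolates between a sub-Gaussian regime, active while $\lambda\le [M]_t/B$, and a sub-exponential regime beyond it whose exponent is \emph{linear} in $\lambda=ctD_1$; this linear regime is precisely what furnishes the $c$ branch of $\min\{(c-1)^2,c\}$ in $C_2$. Summing the two bounds and folding the numerical constants from Azuma ($1/2$), Bernstein ($2/3$, $2$), and the handling of small $t$ (where the right-hand side is replaced by the trivial bound~$1$) into the universal constants $110$ and $55$ in $C_1$ gives the claimed form after union-bounding over $\btheta'$.

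The main obstacle I foresee is handling the two concentration regimes of $M_t$ cleanly so that both the $(c-1)^2$ and the $c$ scalings emerge from a single bound, rather than $c^2$ in both places (which is what a naive Azuma bound on $M_t$ would give and would not match the lemma). This requires invoking a variance-aware tail inequality and tracking when the deviation threshold crosses the sub-Gaussian/sub-exponential boundary $[M]_t/B \asymp \eta t$. The other delicate point is making sure the lower bound $E_t\ge tD_1$ is used pathwise and not only in expectation, which is what permits the Azuma step on $A_t-E_t$ without further conditioning on the trajectory of $\widehat{\btheta}$.
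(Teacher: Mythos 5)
Your decomposition $\Delta_t(\btheta')=A_t+M_t$ into a nonnegative signal and a bounded martingale, together with the pathwise bound $E_t\ge tD_1$ and an Azuma step on $A_t-E_t$, is sound, and it is a genuinely different route from the paper: the paper never separates signal from noise but instead introduces a second, $\delta$-free threshold $\alpha(J)=b\log J$ and the stabilization time $\tilde{\tau}_{\btheta^*}$ (the last time any $\Delta_{t'}(\btheta')$ dips below $\alpha(J)$), splits on whether $\tilde{\tau}_{\btheta^*}<\alpha(J)/D_1+tc/2$, uses the slow rate $D_1$ before stabilization and the fast rate $D_0$ after, and controls the late-stabilization event by a union bound over times $t'$ and a geometric series (Lemmas~\ref{lemma:cond-Mcdiarmid}, \ref{lemma:supp:1}, \ref{lemma:supp:2}). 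Your single-rate argument is simpler and suffices to produce an exponential tail in $t$ depending only on $D_1$, which is all the lemma's conclusion mentions.

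However, there are two concrete problems. First, the Bernstein/Freedman ``linear regime'' cannot be the source of the $c$ branch of $C_2$: the increments of $M_t$ are bounded by $2\eta$ and its predictable variance is at most $4\eta E_t\le 4\eta^2 t$, so the sub-exponential regime requires $ctD_1\gtrsim E_t\ge tD_1$, i.e.\ $c$ larger than a constant exceeding $1$, whereas the argument (like the paper's) is run with $0<c<1$; in the operative sub-Gaussian regime you get an exponent of order $c^2tD_1^2/\eta^2$, so you prove the bound with $\min\{(1-c)^2,c^2\}$-type constants rather than the stated $\min\{(c-1)^2,c\}$ (a minor issue, given the paper's own proof drifts to $(c/2-1)^2$). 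Second, and more substantively, your second term is a negative deviation only when $t>\beta(J,\delta)/(cD_1)$, and the smaller values of $t$ cannot be ``folded into the universal constants by the trivial bound'': $\beta(J,\delta)=\log(CJ/\delta)$ grows with $\log(1/\delta)$ while $C_1,C_2$ are $\delta$-free, and indeed for $t<\beta(J,\delta)/(2\eta)$ the left-hand side equals $1$ deterministically. The paper's proof carries the same implicit restriction, but it only needs $t\ge(1+c)\big(\alpha(J)/D_1+\beta(J,\delta)/D_0\big)$: the whole point of the $\alpha(J)$/$\tilde{\tau}_{\btheta^*}$ device is that the $\delta$-dependent part of the threshold is paid at the fast rate $D_0$, and only the $\delta$-free $\alpha(J)$ is paid at rate $D_1$. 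With your version the valid range degrades to $t\gtrsim\log(J/\delta)/D_1$, which, propagated into Step~6 of Theorem~\ref{thm:chernoff-upper}, would replace the terms $\log J/D_1+\log(J/\delta)/D_0$ by $\log(J/\delta)/D_1$. To close the gap you must either state the lemma with the paper's cutoff and reproduce its two-threshold, two-rate decomposition for the event $\{A_t<ctD_1\}$-analogue, or accept the weaker range and a correspondingly weaker Theorem~\ref{thm:chernoff-upper}.
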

%\end{customlemma}

\begin{proof}
We consider the following events when the difference of squared errors is below certain values:
\begin{align*}
    \xi_{\btheta'\btheta^*}(t) &\colonequals \{L_{t}(\btheta') - L_{t}(\btheta^*) < \beta(J,\delta)\},\\
    \tilde{\xi}_{\btheta'\btheta^*}(t) &\colonequals \{L_{t}(\btheta') - L_{t}(\btheta^*) < \alpha(J)\}.
\end{align*}
Then we define the time $\tau_{\btheta^*}$ as follows:
\begin{align*}
    \tau_{\btheta^*} &\colonequals \min\{t: L_{t}(\btheta') - L_{t}(\btheta^*) > \beta(J,\delta),  \forall \btheta' \neq \btheta^* \}
\end{align*}
which is the first round when  $L_{t}(\btheta')$ crosses $\beta(J,\delta)$ threshold against $L_{t}(\btheta^*)$ for all $\btheta'\neq \btheta^*$. 
We also define the time $\tilde{\tau}_{\btheta'\btheta^*}$ as follows:
\begin{align}
    \tilde{\tau}_{\btheta'\btheta^*} &\colonequals \min\{t: L_{t'}(\btheta') - L_{t'}(\btheta^*) > \alpha(J), \forall t' > t\}\label{eq:tau-theta-thetap}
\end{align}
which is the first round when $L_{t}(\btheta')$ crosses $\alpha(J)$ threshold against $L_{t}(\btheta^*)$. We will be particularly interested in the time
\begin{align}
    \tilde{\tau}_{\btheta^*} \colonequals \max_{\btheta' \neq \btheta^*}\{\tilde{\tau}_{\btheta'\btheta^*}\}. \label{eq:tau-tilde}
\end{align}
%as the last time $\tilde{\tau}_{\btheta'\btheta^*}$ happens. 
Let $\Delta_{t}(\btheta')$ 
%be  
denote the difference of squared errors between hypotheses $\btheta'$ and $\btheta^*$ (for Gaussian noise model, it is equal to the log-likelihood ratio between hypotheses parameterized by $\btheta'$ and $\btheta^\ast$) shown below.
%we then have that
\begin{align}
\Delta_{t}(\btheta') 
%= \log \left(\dfrac{\prod_{s=1}^t \Pb(Y_s | \btheta^*, I_s) }{\prod_{s=1}^t \Pb(Y_s | \btheta, I_s))}\right) 
= L_{t}(\btheta') - L_{t}(\btheta^*). \label{eq:log-likelihood-ratio}
\end{align}

% It follows that $\Delta_{t}(\btheta) - D_1$ is upper bounded by $4\eta$ which can be shown as follows \atnote{where is this used?}:
% \begin{align}
%     \Delta_{s}(\btheta') - D_1 &= \ell_{s}(\btheta') - \ell_{s}(\btheta^*) - D_1 = (Y_s - \mu_{I_s}(\btheta'))^2 -  (Y_s - \mu_{I_s}(\btheta^*))^2 - D_1\nonumber\\
%     &\overset{(a)}{=} 2Y_s(\mu_{I_s}(\btheta) - \mu_{I_s}(\btheta^*)) + (\mu_{I_s}(\btheta^*)^2 - \mu_{I_s}(\btheta)^2) - \min\limits_{\btheta \in \bTheta, \btheta' \neq \btheta^*}\sum_{i=1}^n p_{\btheta}(i)(\mu_i(\btheta') - \mu_i(\btheta^*))^2\nonumber\\
%     &\overset{(b)}{\leq} 2Y_s\sqrt{\eta} + \eta + \eta \leq 4\eta \label{eq:bound-D1}
% \end{align}
% where, $(a)$ follows from the definition of $D_1$, and $(b)$ follows from Assumption \ref{assm:sub-gauss}. Similarly, we can show that, \atnote{where is this used? if not used remove}
% \begin{align}
%     \Delta_{s}(\btheta') - D_0 &= \ell_{s}(\btheta') - \ell_{s}(\btheta^*) - D_0 = (Y_s - \mu_{I_s}(\btheta'))^2 -  (Y_s - \mu_{I_s}(\btheta^*))^2 - D_0\nonumber\\
%     &\overset{(a)}{=} 2Y_s(\mu_{I_s}(\btheta) - \mu_{I_s}(\btheta^*)) + (\mu_{I_s}(\btheta^*)^2 - \mu_{I_s}(\btheta)^2) - \max_{\text{ }\mathbf{p}} \min_{\btheta' \neq \btheta^\ast} \sum_{i=1}^n p(i) (\mu_i(\btheta') -\mu_i(\btheta^\ast))^2\nonumber\\
%     &\overset{(b)}{\leq} 2Y_s\sqrt{\eta} + \eta + \eta \leq 4\eta \label{eq:bound-D0}
% \end{align}
% where, $(a)$ follows from the definition of $D_0$, and $(b)$ follows from Assumption \ref{assm:sub-gauss}.

A key thing to note that for $D_1 > 0$ (Assumption \ref{assm:positive-D1}) the $\E[\Delta_{t}(\btheta')] > tD_1$ which is shown as follows:
\begin{align*}
    \E_{I^t,Y^t}[\Delta_{t}(\btheta')] = \E_{I^t,Y^t}[L_{t}(\btheta') - L_{t}(\btheta^*)] &= \E_{I^t,Y^t}\left[\sum_{s=1}^t(Y_s - \mu_{I_s}(\btheta^*))^2 - \sum_{s=1}^t(Y_s - \mu_{I_s}(\btheta))^2\right]\\
    & = \sum_{s=1}^t\E_{I_s}\E_{Y_s|I_s}\left[\left(\mu_{I_s}(\btheta^*) - \mu_{I_s}(\btheta)\right)^2|I_s\right]\\
    &= \sum_{s=1}^t \sum_{i=1}^n\Pb(I_s = i)\left(\mu_{i}(\btheta^*) - \mu_{i}(\btheta)\right)^2
    \overset{(a)}{\geq} tD_1
\end{align*}
where, $(a)$ follows from the definition of $D_1$ in Theorem \ref{thm:chernoff-upper}.
Then it follows that,
\begin{align*}
    \Pb(\tilde{\xi}_{\btheta'\btheta^*}(t)) = \Pb(\Delta_{t}(\btheta')  < \alpha(J)) &\overset{}{=} \Pb(\Delta_{t}(\btheta') - \E[\Delta_{t}(\btheta')] < \alpha(J) -\E[\Delta_{t}(\btheta')])\\
     &\overset{(a)}{\leq} \Pb(\Delta_{t}(\btheta') - \E[\Delta_{t}(\btheta')] < \alpha(J) - t D_1)
\end{align*}
where, in $(a)$ the choice of $tD_1$ follows as $\E[\Delta_{t}(\btheta')] \geq tD_1$ for $D_1 > 0$.
% and noting that\atnote{why here?} $|\Delta_{s}(\btheta') - D_1| \leq 4\eta$ by  \eqref{eq:bound-D1} for any round $s\in[t]$. 
Similarly, we can show that $\E[\Delta_{s}(\btheta')] \geq D_0$ for all rounds $s \geq\tilde{\tau}_{\theta^*}$ where $\tilde{\tau}_{\btheta^*}$ is defined in \eqref{eq:tau-tilde}.
% \atnote{not correct} where $\tilde{\tau}_{\btheta^*}$ is defined in \eqref{eq:tau-tilde}. 
% \atnote{Use instead: Similarly, we can show that $\E[\Delta_{s}(\btheta')\mid \tilde{\tau}_{\theta^*}] \geq D_0$ for all rounds $s \geq \tilde{\tau}_{\theta^*}$ where $\tilde{\tau}_{\btheta^*}$ is defined in \eqref{eq:tau-tilde}.}
% It follows then that $\E[\Delta_{t}(\btheta')]\geq (t-\tilde{\tau}_{\theta^*})D_0$ where $D_0$ is defined in Theorem \ref{thm:chernoff-upper}. 
Then we can show that,
\begin{align*}
    \Pb({\xi}_{\btheta',\btheta^*}(t) { \mid \tilde{\tau}_{\theta^*}}) 
    &= \Pb(\Delta_{t}(\btheta') < \beta(J,\delta) { \mid \tilde{\tau}_{\theta^*}})%\\
    %&= \Pb(\Delta_{t}(\btheta') - \E[\Delta_{t}(\btheta')] < \beta(J,\delta) - \E[\Delta_{t}(\btheta')])\\
    \overset{}{=} \Pb(\Delta_{t}(\btheta') - \E[\Delta_{t}(\btheta')] < \beta(J,\delta) - \E[\Delta_{t}(\btheta')] { \mid \tilde{\tau}_{\theta^*}})\\
    %&\overset{}{=} \Pb\left(\Delta_{t}(\btheta') - (t-\tilde{\tau}_{{\btheta^*}})D_0 < D_0\left(\dfrac{ \beta(J,\delta)}{D_0} - t + \tilde{\tau}_{\btheta^*})\right)\right)\\
    &\overset{(a)}{\leq} \Pb\left(\Delta_{t}(\btheta') - \E[\Delta_{t}(\btheta')] < \beta(J,\delta) - (t- \tilde{\tau}_{\btheta^*})D_0 { \mid \tilde{\tau}_{\theta^*}}\right)\\
    %%%%%%%%%%%
    &\overset{(b)}{=} \Pb\left(\Delta_{t}(\btheta') - \E[\Delta_{t}(\btheta')] < D_0\left(\dfrac{ \beta(J,\delta)}{D_0} - t + \tilde{\tau}_{\btheta^*})\right) { \mid \tilde{\tau}_{\theta^*}}\right)
\end{align*}
%where, $(a)$ follows as 
where, in $(a)$ the choice of $(t-\tilde{\tau}_{{\btheta^*}})D_0$ follows as $\E[\Delta_{s}(\btheta')]\geq D_0$ for any round $s > \tilde{\tau}_{\btheta^*}$, and finally in $(b)$ the quantity $D_0\left(\dfrac{ \beta(J,\delta)}{D_0} - t + \tilde{\tau}_{\btheta^*})\right)$ is negative for $t \geq (1+c)\left(\dfrac{\alpha(J)}{D_1} + \dfrac{\beta(J,\delta)}{D_0}\right)$ and $\tilde{\tau}_{\btheta^*} < \dfrac{\alpha(J)}{D_1} + \dfrac{tc}{2}$ which allows us to apply the concentration inequality for conditionally independent random variables stated in \Cref{lemma:cond-Mcdiarmid}. 
Then we can show that,
\begin{align*}
    &\Pb(\tau_{\btheta^*} > t) \leq \Pb(\bigcup_{\btheta'\neq \btheta^*} \xi_{\btheta'\btheta^*}(t)) \\
    %%%%%%%%%%%%%%%%%%%%%%%%%%%%
    &= \Pb\left(\left\{\bigcup_{\btheta'\neq \btheta^*}\xi_{\btheta'\btheta^*}(t)\right\}\bigcap\{\tilde{\tau}_{\btheta^*} < \dfrac{\alpha(J)}{D_1} + \frac{tc}{2}\}\right) + \Pb\left(\bigcup_{\btheta'\neq \btheta^*}\left\{\xi_{\btheta'\btheta^*}(t)\right\}\bigcap\{\tilde{\tau}_{\btheta^*} \geq \dfrac{\alpha(J)}{D_1} + \frac{tc}{2}\}\right) \\
    %%%%%%%%%%%%%%%%%%%%%%%%
    &
    \leq \sum_{\btheta'\neq \btheta^*}\Pb\left(\left\{\xi_{\btheta'\btheta^*}(t)\right\}\bigcap\{\tilde{\tau}_{\btheta^*} < \dfrac{\alpha(J)}{D_1} + \frac{tc}{2}\}\right) 
    + \sum_{\btheta'\neq \btheta^*}\Pb\left(\tilde{\tau}_{\btheta^*} \geq \dfrac{\alpha(J)}{D_1} + \frac{tc}{2}\right)
    \\
    %%%%%%%%%%%%%%%%%%%%%%%%
    &
    = \sum_{\btheta'\neq \btheta^*}\Pb\left(\xi_{\btheta'\btheta^*}(t)\mid \tilde{\tau}_{\btheta^*} < \dfrac{\alpha(J)}{D_1} + \frac{tc}{2}\right) \Pb\left(\tilde{\tau}_{\btheta^*} < \dfrac{\alpha(J)}{D_1} + \frac{tc}{2}\right) \\
    &\qquad + \sum_{\btheta'\neq \btheta^*}\Pb\left( \tilde{\xi}_{\btheta'\btheta^*}(t') \text{ is true for some } t' > \dfrac{\alpha(J)}{D_1} + \frac{tc}{2}\right)
    \\
    %%%%%%%%%%%%%%%%%%%%%%%%
    & 
    \leq \sum_{\btheta'\neq \btheta^*}\Pb\left(\xi_{\btheta'\btheta^*}(t)\mid \tilde{\tau}_{\btheta^*} < \dfrac{\alpha(J)}{D_1} + \frac{tc}{2}\right) 
    + \sum_{\btheta'\neq \btheta^*}\sum_{t':t' \geq \frac{\alpha(J)}{D_1} + \frac{tc}{2} }\Pb\left(\tilde{\xi}_{\btheta'\btheta^*}(t')\right)
    \\
    %%%%%%%%%%%%%%%%%%%%%%%
    &
    \leq \sum_{\btheta'\neq \btheta^*}
    \Pb\left(\Delta_{t}(\btheta') - \E[\Delta_{t}(\btheta')] < D_0\left(\dfrac{ \beta(J,\delta)}{D_0} - t + \tilde{\tau}_{\btheta^*} \right) \mid \tilde{\tau}_{\theta^*}< \dfrac{\alpha(J)}{D_1} + \frac{tc}{2} \right) \\
    %%%%%%%%%%%%%%%%%%%%%%%
    &\qquad + \sum_{\btheta'\neq \btheta^*}\sum_{t':t' \geq \frac{\alpha(J)}{D_1} + \frac{tc}{2} }\Pb\left(\Delta_{t'}(\btheta') - \E[\Delta_{t'}(\btheta')] < \alpha(J) - t'D_1\right)
    % \\
    %%%%%%%%%%%%%%%%%%%%%%%%
\end{align*}
\begin{align*}
    &\leq \sum_{\btheta'\neq \btheta^*} \Pb\left(\Delta_{t}(\btheta') - \E[\Delta_{t}(\btheta')] < D_0\left(\dfrac{\beta(J,\delta)}{ D_0} + \dfrac{\alpha(J)}{D_1} - t + \frac{tc}{2} \right)\right)\\
    %%%%%%%%%%%%%%%%%%%%
    &\quad + \sum_{\btheta'\neq \btheta^*}\sum_{t':t' \geq \frac{\alpha(J)}{D_1} + \frac{tc}{2} }\Pb\left(\Delta_{t'}(\btheta') - \E[\Delta_{t'}(\btheta')] < \alpha(J) - t'D_1\right)\\
    %%%%%%%%%%%%%%%%%%
    &\overset{(a)}{\leq} \exp\left(4\right)\sum_{\btheta'\neq \btheta^*}\exp\left(-\dfrac{2D_1^2t(\nicefrac{c}{2} - 1)^2 }{\eta^2}\right) + \exp\left(4\right)\sum_{\btheta'\neq \btheta^*}\dfrac{\exp\left(-\dfrac{2 D_1^2  tc}{\eta^2}\right)}{1 - \exp\left(-\dfrac{ 2D_1^2 }{\eta^2}\right)}  \\
    %%%%%%%%%%%%%%%%%%
    &\overset{(b)}{\leq} \exp\left(4\right)\sum_{\btheta'\neq \btheta^*}\exp\left(-\dfrac{2D_1^2t(\frac{c}{2} - 1)^2 }{\eta^2}\right) + \exp\left(4\right)\sum_{\btheta'\neq \btheta^*}\exp\left(-\dfrac{2 D_1^2  tc}{\eta^2}\right)\left(1 + \max\left\{1, \dfrac{\eta^2}{2 D_1^2  }\right\}\right)\\
    %%%%%%%%%%%%%%%%%%
    &\overset{(c)}{\leq} 55\sum_{\btheta'\neq \btheta^*}\exp\left(-\dfrac{2 D_1^2t(\frac{c}{2} - 1)^2 }{\eta^2}\right) + 55\sum_{\btheta'\neq \btheta^*}\exp\left(-\dfrac{2 D_1^2  tc}{\eta^2}\right) + 55\sum_{\btheta'\neq \btheta^*}\exp\left(-\dfrac{2 D_1^2  tc}{\eta^2}\right) \max\left\{1, \dfrac{\eta^2}{2 D_1^2 }\right\}\\
    %%%%%%%%%%%%%%%%%%
    &\overset{}{\leq} \sum_{\btheta'\neq \btheta^*}\exp\left(-\dfrac{2D_1^2t\min\{(\frac{c}{2} - 1)^2, c\} }{\eta^2}\right)\left[110 + 55 \max\left\{1, \dfrac{\eta^2}{2D_1^2 }\right\}  \right]  \overset{(d)}{\leq} J C_1\exp\left(-C_2t\right)
    %%%%%%%%%%%%%%%%%%%%%
\end{align*}
where, $(a)$ follows from Lemma \ref{lemma:supp:1} and Lemma  \ref{lemma:supp:2}, $(b)$ follows from the identity that $1/(1-\exp(-x)) \leq 1 + \max\{1,1/x\}$ for $x >0$, $(c)$ follows for $0 < c < 1$, 
%$(d)$ follows as $D_1 \leq D_0$, 
and in $(d)$ we substitute $C_1 \colonequals 110 + 55\max\left\{1, \dfrac{\eta^2}{2 D_1^2 }\right\}$, $C_2 \colonequals \dfrac{ 2D_1^2\min\{(\nicefrac{c}{2}-1)^2, c\}}{\eta^2}$ and $J \colonequals |\bTheta|$.
\end{proof}

\begin{lemma}
\label{lemma:supp:1}
Let $\Delta_{t}(\btheta') \colonequals  L_{t}(\btheta') - L_{t}(\btheta^*)$ from \eqref{eq:log-likelihood-ratio}, $D_1 \colonequals  \min\limits_{\btheta \in \bTheta, \btheta' \neq \btheta^*}\sum_{i=1}^n p_{\btheta}(i)(\mu_i(\btheta') - \mu_i(\btheta^*))^2$, 
%$ D_0 = \min\limits_{\btheta' \neq \btheta^*}\sum_{i=1}^n p_{\btheta^*}(i)(\mu_i(\btheta')- \mu_i(\btheta^*))^2$ 
and $\alpha(J)$ and $\beta(J,\delta)$ be the two thresholds. Then we can show that
\begin{align*}
    \sum_{\btheta'\neq \btheta^*} &\Pb\left(\Delta_{t}(\btheta') - \E[\Delta_{t}(\btheta')] < D_0\left(\dfrac{\beta(J,\delta)}{D_0} + \dfrac{\alpha(J)}{D_1} - t + \dfrac{tc}{2} \right)\right) \leq \exp\left(4\right)\sum_{\btheta'\neq \btheta^*}\exp\left(-\dfrac{2D_1^2t(\nicefrac{c}{2} - 1)^2 }{\eta^2}\right).
\end{align*}
%\atnote{add $g$ is $O(1/t)$}
for some constant $c$ such that $ 0 < c < 1$.
%for $g < 1$ and $g$ is $O(1/t)$.
\end{lemma}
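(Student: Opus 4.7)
The plan is to view $\Delta_t(\btheta')$ as a sum of conditionally bounded martingale increments and apply a Hoeffding/Azuma-type concentration inequality with carefully chosen deviation parameter. Writing $\Delta_t(\btheta')=\sum_{s=1}^{t}Z_s$ with $Z_s\colonequals (Y_s-\mu_{I_s}(\btheta'))^2-(Y_s-\mu_{I_s}(\btheta^*))^2$, Assumption \ref{assm:sub-gauss} gives $Y_s,\mu_{I_s}(\cdot)\in[-\sqrt{\eta}/2,\sqrt{\eta}/2]$, so each $(Y_s-\mu_{I_s}(\btheta))^2\in[0,\eta]$ and hence $Z_s\in[-\eta,\eta]$ almost surely. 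The centered sequence $Z_s-\mathbb{E}[Z_s\mid\mathcal{F}_{s-1}]$ is therefore a martingale difference sequence with bounded conditional range, and I can invoke the conditional McDiarmid/Azuma-Hoeffding inequality (cf.\ \Cref{lemma:cond-Mcdiarmid} used in the parent lemma) to conclude that for any $u>0$,
\begin{equation*}
\Pb\Bigl(\Delta_t(\btheta')-\mathbb{E}[\Delta_t(\btheta')] < -u\Bigr) \;\leq\; \exp\!\Bigl(-\tfrac{u^2}{2t\eta^2}\Bigr).
\end{equation*}

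Next I would set $u=-D_0\bigl(\tfrac{\beta(J,\delta)}{D_0}+\tfrac{\alpha(J)}{D_1}-t+\tfrac{tc}{2}\bigr)=D_0 t(1-\tfrac{c}{2}) - D_0 A$, where $A\colonequals \tfrac{\beta(J,\delta)}{D_0}+\tfrac{\alpha(J)}{D_1}$. Since the parent Lemma \ref{conc:lemma:1} only invokes this bound for $t$ beyond the threshold $(1+c)(A)$ (see the ``$t\geq (1+c)\bigl(\alpha(J)/D_1+\beta(J,\delta)/D_0\bigr)$'' constraint), $u$ is strictly positive and in fact $u \geq (1-\tfrac{c}{2})D_0 t - D_0 A$ can be bounded below by a constant multiple of $D_1 t(1-c/2)$ using $D_0\geq D_1$. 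The bookkeeping is: $u \geq D_1 t(1-c/2)\cdot(1-\epsilon_t)$ where $\epsilon_t=D_0 A/(D_0 t(1-c/2))\to 0$, so $u^2 \geq D_1^2 t^2 (1-c/2)^2 - O(D_1 t)$, and the lower order terms contribute a bounded additive slack of at most $4$ to the exponent, which is exactly what produces the prefactor $e^4$ in the stated bound.

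Assembling these pieces gives, for each fixed $\btheta'\neq\btheta^*$,
\begin{equation*}
\Pb\Bigl(\Delta_t(\btheta')-\mathbb{E}[\Delta_t(\btheta')] < D_0\bigl(\tfrac{\beta(J,\delta)}{D_0}+\tfrac{\alpha(J)}{D_1}-t+\tfrac{tc}{2}\bigr)\Bigr) \;\leq\; e^4\exp\!\Bigl(-\tfrac{2D_1^2 t (c/2-1)^2}{\eta^2}\Bigr),
\end{equation*}
and summing over $\btheta'\neq\btheta^*$ yields the claimed inequality.

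The main obstacle is not the concentration step itself, which is a routine Azuma/Hoeffding application once the boundedness of $Z_s$ is established, but the constant accounting: one must carefully convert the deviation $D_0 t(1-c/2)-D_0 A$ that naturally appears from the mean-lower-bound argument into the target form $D_1 t(1-c/2)$ that appears in the final exponent. This requires exploiting $D_0\geq D_1$ (which holds by definition, since $D_1$ is a worst-case value over all verification proportions while $D_0$ is the maximin value at the true $\btheta^*$), together with the restriction on $t$ inherited from the parent proof, and absorbing the leftover cross-terms into the universal constant $e^4$. A minor technical nuance is ensuring the martingale structure is respected under the adaptive sampling rule of \cher, which is automatic because $I_s$ is $\mathcal{F}_{s-1}$-measurable by construction of Algorithm~\ref{alg:cher}.
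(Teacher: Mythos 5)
Your overall route is the same as the paper's: write $\Delta_t(\btheta')$ as a sum of conditionally bounded increments, apply the conditional Hoeffding bound of \Cref{lemma:cond-Mcdiarmid} with deviation $u = D_0\bigl(t(1-\tfrac{c}{2}) - \tfrac{\alpha(J)}{D_1} - \tfrac{\beta(J,\delta)}{D_0}\bigr)>0$ (valid only in the regime $t \geq (1+c)M$ inherited from \Cref{conc:lemma:1}), pass from $D_0$ to $D_1$ via $D_0 \geq D_1$, expand the square, and absorb the cross term into the prefactor $e^4$. The genuine gap is exactly the step you yourself flag as ``the main obstacle'' and then dispatch by assertion. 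Writing $M \colonequals \tfrac{\alpha(J)}{D_1}+\tfrac{\beta(J,\delta)}{D_0}$, the cross term in $u^2 \geq D_1^2\bigl(t(1-\tfrac{c}{2})-M\bigr)^2$ is $2D_1^2 t(1-\tfrac{c}{2})M$, so after dividing by $t\eta^2$ it contributes $4D_1^2(1-\tfrac{c}{2})M/\eta^2$ to the exponent; this quantity depends on $\alpha(J)$ and $\beta(J,\delta)$ (hence on $\log J$ and $\log(1/\delta)$) and is not ``a bounded additive slack of at most $4$,'' nor is the cross term $O(D_1 t)$ as you claim (that would require $D_1 M = O(1)$). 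The paper devotes its steps (c)--(g) precisely to this accounting, invoking $M>1$, $D_1 \geq \eta_0$ and $\eta_0 \leq \eta$ to reduce the slack to $\exp\bigl(4(1-\tfrac{c}{2})\bigr)\leq e^4$; without carrying out (or repairing) that chain, your argument does not deliver the stated bound --- and this constant bookkeeping is the entire content of the lemma, the Hoeffding application being routine.

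A second, smaller mismatch: you quote the concentration step as $\Pb(\Delta_t-\E[\Delta_t]<-u)\leq \exp\bigl(-u^2/(2t\eta^2)\bigr)$, whereas \Cref{lemma:cond-Mcdiarmid}, which the paper actually applies, states $\exp\bigl(-2u^2/(t\eta^2)\bigr)$. With your weaker constant the leading term of the exponent would be $D_1^2 t(1-\tfrac{c}{2})^2/(2\eta^2)$ rather than the claimed $2D_1^2 t(1-\tfrac{c}{2})^2/\eta^2$, so the lemma as stated would not follow even granting the cross-term claim. The boundedness argument $Z_s\in[-\eta,\eta]$, the use of $D_0\geq D_1$, and the observation that the adaptive sampling of \cher preserves the martingale structure are all correct and coincide with the paper.
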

\begin{proof}
Let us recall that the critical number of samples is given by $(1+c)M$ where
\begin{align}
    M \colonequals \dfrac{\beta(J,\delta)}{D_0} + \dfrac{\alpha(J)}{D_1}.
\end{align}
and $c$ is a constant. Then we can show that for some $0 < c < 1$,
\begin{align}
    \sum_{\btheta'\neq \btheta^*} &\Pb\left(\Delta_{t}(\btheta') - \E[\Delta_{t}(\btheta')] < D_0\left(\dfrac{\beta(J,\delta)}{D_0} + \dfrac{\alpha(J)}{D_1} - t + \dfrac{tc}{2} \right)\right)\nonumber\\
    %%%%%%%%%%%%%\label{eq:c}
    &\overset{(a)}{\leq} \sum_{\btheta'\neq \btheta^*}\exp\left(-\dfrac{ 2D_0^2\left(\dfrac{\beta(J,\delta)}{D_0} + \dfrac{\alpha(J)}{D_1} + t(\nicefrac{c}{2} - 1) \right)^2}{t\eta^2}\right)
    %%%%%%%%%%%%%%%
    \overset{(b)}{\leq} \sum_{\btheta'\neq \btheta^*}\exp\left(-\dfrac{2 D_1^2\left(M + t(\nicefrac{c}{2} - 1) \right)^2}{t\eta^2}\right)\nonumber\\
    %%%%%%%%%%%%%%%%%
    & \overset{(c)}{\leq} \sum_{\btheta'\neq \btheta^*}\exp\left(-\dfrac{2D_1^2t^2(\nicefrac{c}{2} - 1)^2 + 4D_1^2tM(\nicefrac{c}{2} - 1) }{t\eta^2}\right) \overset{(d)}{\leq} \sum_{\btheta'\neq \btheta^*}\exp\left(-\dfrac{2D_1^2t^2(\nicefrac{c}{2} - 1)^2 + 4\eta_0^2 M t(\nicefrac{c}{2} - 1) }{t\eta^2}\right)\nonumber\\
    %%%%%%%%%%%%%
    &\overset{(e)}{\leq} \sum_{\btheta'\neq \btheta^*}\exp\left(-\dfrac{2D_1^2t^2(\nicefrac{c}{2} - 1)^2 + 4\eta_0^2 t(\nicefrac{c}{2} - 1) }{t\eta^2}\right)
    %%%%%%%%%%%%%%%
    \overset{(f)}{=} \sum_{\btheta'\neq \btheta^*}\exp\left(\dfrac{4\eta_0^2(1 - \nicefrac{c}{2})}{\eta^2}\right)\exp\left(-\dfrac{2 D_1^2t(\nicefrac{c}{2} - 1)^2 }{\eta^2}\right) \nonumber\\
    %%%%%%%%%%%%%%%%%%
    &\overset{(g)}{\leq} \sum_{\btheta'\neq \btheta^*}\exp\left(4(1 - \nicefrac{c}{2})\right)\exp\left(-\dfrac{2D_1^2t(\nicefrac{c}{2} - 1)^2 }{\eta^2}\right) \leq \exp\left(4\right)\sum_{\btheta'\neq \btheta^*}\exp\left(-\dfrac{2D_1^2t(\nicefrac{c}{2} - 1)^2 }{\eta^2}\right) \nonumber
\end{align}
%Assumption \ref{assm:bounded-likelihood}
%  where $(a)$ follows as ${B}_t = \sum_{s = 1}^{t}(T_{Y_{s}}(\btheta',\btheta^*) - D_1)$ is a sub-martingale as shown in Lemma \ref{lemma:martingale}, setting ${B}_0 =0$, and then applying the Azuma-Hoeffding inequality in Lemma \ref{lemma:azuma-hoeffding}. Also note that $|{B}_{u} - {B}_{u - 1}| \leq 4\eta$ by Assumption \ref{assm:sub-gauss} and %\eqref{eq:bound-D1},
%  \eqref{eq:bound-D0}, where ${B}_u$ is defined in Lemma \ref{lemma:martingale}.
 where $(a)$ follows from Lemma \ref{lemma:cond-Mcdiarmid} and noting that $D_0\left(\dfrac{\beta(J,\delta)}{D_0} + \dfrac{\alpha(J)}{D_1} - t + \dfrac{tc}{2} \right) < 0$ for $t > (1 + c)M$, the inequality $(b)$ follows from definition of $M$ and noting that $D_0 \geq D_1$, $(c)$ follows as for $M > 1$ we can show that $M^2 + 2tM(\nicefrac{c}{2}-1) + t^2(\nicefrac{c}{2}-1)^2 \geq 2tM(\nicefrac{c}{2}-1) + t^2(\nicefrac{c}{2} - 1)^2$, $(d)$ follows as $D_1 \geq \eta_0$, $(e)$ follows as $M > 1$, $(f)$ follows as $0 < c < 1$, and $(g)$ follows as $\eta_0 \leq \eta$.
% This is shown as follows: 
% %\atnote{what is $c_1$}
% \begin{align}
%   \text{Let }&\qquad M^2 + 2tM(c-1) + t^2(c-1)^2 \geq Mt^2(c-1)^2
%     \implies M^2 - 2tM\underbrace{(1-c)}_{c_1} + t^2(1-c)^2 \geq Mt^2(1-c)^2\nonumber\\
%     \implies& t^2c_1^2(1 - M) - 2tMc_1 + M^2 \geq 0
%     \implies c_1 = \dfrac{2Mt \pm \sqrt{4M^2t^2 - 4(1-M)^2t^2M^2}}{2t^2(1-M)}\nonumber\\
%     \implies& 1 - c = -\dfrac{2Mt \pm \sqrt{4M^2t^2 + 4(M-1)^2t^2M^2}}{2t^2(M-1)}
%     \implies c =  1 + \underbrace{\dfrac{2Mt \pm \sqrt{4M^2t^2 + 4(M-1)^2t^2M^2}}{2t^2(M-1)}}_{g}\nonumber\\
%     \implies& 1 - g \leq c \leq 1 + g, \label{eq:implication1}
%     %\text{\quad and $g  < 1$ for $t \geq M$, $g$ is positive as $M>1, t>1$ as well as $g$ scales as $O(\nicefrac{1}{t})$.} 
% \end{align}
% %\atnote{need to say why $g$ is positive}
% %Also, note that $g$ is positive as $M>0, t>1$ as well as $g$ scales as $O(\nicefrac{1}{t})$. 
% Note that $g$ scales as $O(\nicefrac{1}{t})$ as $M>1, t>1$. Finally, $(d)$ follows as $M > 1$. The range of $c$ is given by $1 - g \!\leq c \!<\! 1$ for $g \!< 1$ following our assumption for $c$ in \eqref{eq:c}.
\end{proof}

\begin{lemma}
\label{lemma:supp:2}
%\fxnote{defn of T}
Let $\Delta_{t'}(\btheta') \colonequals L_{t'}(\btheta') - L_{t'}(\btheta^*)$ from \eqref{eq:log-likelihood-ratio}, $D_1 =  \min\limits_{\btheta \in \bTheta, \btheta' \neq \btheta^*}\sum_{i=1}^n p_{\btheta}(i)(\mu_i(\btheta') - \mu_i(\btheta^*))^2$, and $\alpha(J)$ be the threshold depending only on $J$. Then for some constant $ 0 < c < 1$ and $\eta$ defined in Definition \ref{def:bounded} we show that
\begin{align*}
    \sum_{\btheta'\neq \btheta^*}\sum_{t':t' > \frac{\alpha(J)}{D_1} + \frac{tc}{2}} &\Pb\left(\Delta_{t'}(\btheta') - \E[\Delta_{t'}(\btheta')] < \alpha(J) - t'D_1\right) \leq \exp\left(4\right)\sum_{\btheta'\neq \btheta^*}\dfrac{\exp\left(-\dfrac{2 D_1^2  tc}{\eta^2}\right)}{1 - \exp\left(-\dfrac{ 2D_1^2 }{\eta^2}\right)}.
\end{align*}
.%\ref{assm:bounded-likelihood}.
%\atnote{what is g here}
\end{lemma}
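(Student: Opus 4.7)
\textbf{Proof proposal for Lemma~\ref{lemma:supp:2}.} The plan is to mirror the strategy of Lemma~\ref{lemma:supp:1}: apply a conditional concentration inequality to each term of the sum over $t'$ to turn it into a Gaussian-type tail bound, massage the exponent into a form that is linear in $t'$ plus a bounded constant, and then close the sum over $t'$ as a geometric series. Concretely, for every $t' > \alpha(J)/D_1 + tc/2$, the argument from the proof of Lemma~\ref{conc:lemma:1} gives $\E[\Delta_{t'}(\btheta')] \geq t'D_1$, hence $\alpha(J) - t'D_1 < 0$, so the conditional McDiarmid inequality of Lemma~\ref{lemma:cond-Mcdiarmid} applies (using the bounded-range Assumption~\ref{assm:sub-gauss}) and yields
\begin{equation*}
\Pb\!\left(\Delta_{t'}(\btheta') - \E[\Delta_{t'}(\btheta')] < \alpha(J) - t'D_1\right)
\;\leq\; \exp\!\left(-\frac{2(t'D_1 - \alpha(J))^2}{t'\eta^2}\right).
\end{equation*}

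The next step is to simplify the exponent into an affine function of $t'$. Expanding the square and dropping the positive $\alpha(J)^2$ term,
\begin{equation*}
\frac{(t'D_1 - \alpha(J))^2}{t'} \;\geq\; t'D_1^2 - 2D_1\alpha(J),
\end{equation*}
so each probability is bounded by $\exp\!\bigl(4D_1\alpha(J)/\eta^2\bigr)\cdot\exp\!\bigl(-2t'D_1^2/\eta^2\bigr)$. The prefactor is handled exactly as in Lemma~\ref{lemma:supp:1}: using $D_1 \geq \eta_0$, $\alpha(J)\leq M$ with $M>1$ absorbed into the affine part, and $\eta_0\leq \eta$, the constant $4D_1\alpha(J)/\eta^2$ is bounded by $4$, producing the overall $\exp(4)$ factor that appears on the right-hand side.

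With a clean per-term bound $\exp(4)\exp(-2t'D_1^2/\eta^2)$ in hand, summing over $t' \geq \alpha(J)/D_1 + tc/2$ is a standard geometric series:
\begin{equation*}
\sum_{t' \geq \alpha(J)/D_1 + tc/2} \exp\!\left(-\frac{2t'D_1^2}{\eta^2}\right) \;\leq\; \frac{\exp\!\bigl(-2D_1^2\,tc/\eta^2\bigr)}{1-\exp\!\bigl(-2D_1^2/\eta^2\bigr)},
\end{equation*}
where the contribution $\exp(-2\alpha(J)D_1/\eta^2)$ from the lower index is absorbed into the $\exp(4)$ prefactor (the negative sign makes it harmless). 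Summing over the $\btheta' \neq \btheta^*$ then produces the claimed bound.

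The main obstacle is purely bookkeeping: matching the constants precisely to the form stated on the right-hand side while absorbing all $\alpha(J)$-dependent cross terms into the $\exp(4)$ factor. As in Lemma~\ref{lemma:supp:1}, this requires carefully using the ordering $\eta_0\leq\eta$, $D_1\geq\eta_0$, $c\in(0,1)$, and $M>1$ in the right places; none of the steps is individually difficult, but the sequence of substitutions has to align exactly with the Gaussian tail and the geometric-series denominator $1-\exp(-2D_1^2/\eta^2)$.
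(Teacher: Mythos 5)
Your overall route is the same as the paper's: invoke Lemma~\ref{lemma:cond-Mcdiarmid} (valid since $\alpha(J)-t'D_1<0$ on the range of $t'$), expand the square in the exponent, extract a constant prefactor claimed to be at most $\exp(4)$, and close the sum over $t'$ with a geometric series. The genuine gap is in the prefactor step. After you drop the $+\alpha(J)^2$ term you are left with the $t'$-independent factor $\exp\bigl(4D_1\alpha(J)/\eta^2\bigr)$, and you assert this is at most $\exp(4)$ ``exactly as in Lemma~\ref{lemma:supp:1}'' from $D_1\ge\eta_0$, $\alpha(J)\le M$, $M>1$, and $\eta_0\le\eta$. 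Those inequalities do not imply $D_1\alpha(J)\le\eta^2$: with $\alpha(J)=b\log J$ and $b=\eta\log(165+\eta^2/\eta_0^2)$, the product $D_1\alpha(J)\ge\eta_0\,b\log J$ easily exceeds $\eta^2$, so $4D_1\alpha(J)/\eta^2\le 4$ is simply false in general. The absorption in Lemma~\ref{lemma:supp:1} works only because there the term carrying $M$ has a negative coefficient (via $c/2-1<0$), so $M$ can be replaced by $1$ and the leftover constant is $\eta_0^2/\eta^2\le 1$; here the cross term $+4D_1\alpha(J)t'$ enters the exponent with a positive sign and cannot be discarded the same way. The paper takes a different tack at exactly this point: it keeps the compensating $-2\alpha(J)^2/(t'\eta^2)$ piece that you threw away and argues that the combined ratio $\bigl(4\eta\alpha(J)t'-2\alpha(J)^2\bigr)/(t'\eta^2)$ is at most $4$, using $D_1\le\eta$, $\alpha(J)>\eta$ and the range of $t'$. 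Whatever one makes of that manipulation, it is precisely the term your simplification removes, so your argument as written does not reach the stated $\exp(4)$.

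A second, smaller mismatch: the geometric series starting at $t_0=\alpha(J)/D_1+tc/2$ has numerator $\exp\bigl(-2D_1\alpha(J)/\eta^2\bigr)\exp\bigl(-D_1^2tc/\eta^2\bigr)$, i.e.\ only $-D_1^2tc/\eta^2$ in the $t$-dependent exponent. Dropping the harmless factor $\exp(-2D_1\alpha(J)/\eta^2)\le 1$ does not buy the extra factor of two needed for the stated $\exp(-2D_1^2tc/\eta^2)$; that would require $2\alpha(J)\ge D_1tc$, which fails for large $t$. The paper's own display makes the same silent jump between its steps $(d)$ and $(e)$, so here you are reproducing the source's sloppiness rather than introducing a new error, but as written your chain of inequalities does not justify the constant appearing in the lemma statement.
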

\begin{proof}
Let us recall that
\begin{align*}
    &\sum_{\btheta'\neq \btheta^*}\sum_{t':t' > \frac{\alpha(J)}{D_1} + \frac{tc}{2}} \Pb\left(\Delta_{t'}(\btheta') - \E[\Delta_{t'}(\btheta')] < \alpha(J) - t'D_1\right)
    %%%%%%%%%%%%%%%
    %%%%%%%%%%%%%%%
    \overset{(a)}{\leq} \sum_{\btheta'\neq \btheta^*}\sum_{t':t' > \frac{\alpha(J)}{D_1} + \frac{tc}{2}}\exp\left(-\dfrac{2(\alpha(J) - t'D_1)^2}{t'\eta^2}\right)\\
    %%%%%%%%%%%%%%%
    &\overset{}{=} \sum_{\btheta'\neq \btheta^*}\sum_{t':t' > \frac{\alpha(J)}{D_1} + \frac{tc}{2}}\exp\left(-\dfrac{2 D_1^2(t')^2 + 2\alpha(J)^2 - 4D_1\alpha(J)t'}{t'\eta^2}\right) \\
    %%%%%%%%%%%%%%%%%%%
    &\overset{(b)}{\leq} \sum_{\btheta'\neq \btheta^*}\sum_{t':t' > \frac{\alpha(J)}{D_1} + \frac{tc}{2}}\!\!\!\!\!\exp\left(-\dfrac{2 D_1^2(t')^2}{t'\eta^2}\right)\exp\left(\dfrac{4\eta \alpha(J) t' - 2\alpha(J)^2}{t'\eta^2}\right) \\
    %%%%%%%%%%%%%%%%
    &\overset{(c)}{\leq} \sum_{\btheta'\neq \btheta^*}\sum_{t':t' > \frac{\alpha(J)}{D_1} + \frac{tc}{2}}\exp\left(4\right)\exp\left(-\dfrac{2 D_1^2t' }{\eta^2}\right)
    %%%%%%%%%%%%%%%%
    \overset{(d)}{\leq} \exp\left(4\right)\sum_{\btheta'\neq \btheta^*}\dfrac{\exp\left(-{ 2D_1^2\left(\dfrac{\alpha(J)}{D_1} + \dfrac{tc}{2}\right) }/{\eta^2}\right)}{1 - \exp\left(-\dfrac{2D_1^2}{\eta^2}\right)} \\
    %%%%%%%%%%%%%%%%
    &= \exp\left(4\right)\!\!\sum_{\btheta'\neq \btheta^*}\dfrac{\exp\left(-\dfrac{ 2D_1\alpha(J)}{\eta^2} - \dfrac{ D_1^2 tc}{\eta^2} \right)}{1 - \exp\left(-\dfrac{2 D_1^2 }{\eta^2}\right)}
    %%%%%%%%%%%%%%%%
    \overset{(e)}{\leq} \exp\left(4\right)\!\!\sum_{\btheta'\neq \btheta^*}\dfrac{\exp\left(-\dfrac{2\eta_0\alpha(J)}{\eta^2}\right)\exp\left(-\dfrac{2 D_1^2  tc}{\eta^2}\right)}{1 - \exp\left(-\dfrac{ 2D_1^2 }{\eta^2}\right)} \nonumber\\
    %%%%%%%%%%%%%%%%%%%%%%
    &\overset{(f)}{\leq} \exp\left(4\right)\!\!\sum_{\btheta'\neq \btheta^*}\frac{\exp\left(-\frac{2 D_1^2  tc}{\eta^2}\right)}{1 - \exp\left(-\frac{ 2D_1^2 }{\eta^2}\right)} 
\end{align*}
where $(a)$ follows from Lemma \ref{lemma:cond-Mcdiarmid} and noting that $\alpha(J) - t'D_1 < 0$ for $t'> \frac{\alpha(J)}{D_1} + \frac{tc}{2}$, $(b)$ follows as $D_1\leq \eta$, 
the inequality $(c)$ follows as for $\alpha(J) > 1$ we can show that 
\begin{align*}
    &\dfrac{4\eta\alpha(J) t' - 2\alpha(J)^2}{t'\eta^2} \leq 4
    %%%%%%%%%%%%%%%%%%%%%%%
    \implies  4\eta\alpha(J)t' - 2\alpha(J)^2  \leq 4t'\eta^2
    %%%%%%%%%%%%%%%%%%%%%%%
    \implies t' \geq \dfrac{2\alpha(J)^2}{ 4\eta\alpha - 4\eta^2} = \dfrac{\alpha(J)}{2\eta}\left(\dfrac{1}{1 - \frac{\eta}{\alpha(J)}}\right)
\end{align*}
which is true in this lemma as $t'> \frac{\alpha(J)}{D_1} + \frac{tc}{2}$, $D_1 \leq \eta$ and $\alpha(J) > \eta$. Then
$(d)$ follows by applying the infinite geometric progression formula, $(e)$ follows as $D_1 \geq \eta_0$, and $(f)$ follows as $\exp\left(-\dfrac{2\eta_0\alpha(J)}{\eta^2}\right) \leq 1$.

\end{proof}

\subsubsection{Concentration of $\Delta_{t}$}
\label{app:cher-conc-tyt}

\begin{lemma}
\label{lemma:cond-Mcdiarmid}
Define $\Delta_{s}(\btheta) = \ell_{s}(\btheta) - \ell_{s}(\btheta^*)$.
Let $\epsilon > 0$ be a constant and $\eta >0$ is the constant defined in Assumption \ref{assm:sub-gauss}. Then we can show that,
\begin{align*}
    \Pb(\Delta_{t}(\btheta) -\E[\Delta_{t}(\btheta)] \leq - \epsilon) \leq \exp\left(-\dfrac{2\epsilon^2}{t\eta^2}\right).
\end{align*}
\end{lemma}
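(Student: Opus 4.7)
The plan is to prove this concentration inequality via a martingale argument of the Azuma--Hoeffding type, adapted to the adaptive sampling setting (as suggested by the lemma's ``conditional McDiarmid'' name).

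First, I would establish a uniform almost-sure bound on each summand $\Delta_s(\btheta)$. Under Assumption~\ref{assm:sub-gauss} the observations satisfy $Y_s \in [-\sqrt{\eta}/2, \sqrt{\eta}/2]$, and since each mean $\mu_i(\btheta) = \E[Y \mid I = i, \btheta]$ is the expectation of such a bounded variable, the mean also lies in $[-\sqrt{\eta}/2, \sqrt{\eta}/2]$. Hence $\ell_s(\btheta) = (Y_s - \mu_{I_s}(\btheta))^2 \in [0, \eta]$ almost surely, and therefore $\Delta_s(\btheta) = \ell_s(\btheta) - \ell_s(\btheta^*) \in [-\eta, \eta]$.

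Next, I would handle the adaptive dependence by building the Doob martingale. With the natural filtration $\mathcal{F}_s = \sigma(I_1, Y_1, \ldots, I_s, Y_s)$, define $M_s \colonequals \E[\Delta_t(\btheta) \mid \mathcal{F}_s]$ so that $M_0 = \E[\Delta_t(\btheta)]$, $M_t = \Delta_t(\btheta)$, and the increments $V_s \colonequals M_s - M_{s-1}$ form a martingale difference sequence that telescopes to $\Delta_t(\btheta) - \E[\Delta_t(\btheta)]$. Observe that by the tower property, for any round $u$,
$\E[\Delta_u(\btheta) \mid I_u] = (\mu_{I_u}(\btheta) - \mu_{I_u}(\btheta^*))^2 \in [0,\eta]$, so $\E[\Delta_u(\btheta) \mid \mathcal{F}_s]$ is the $\mathcal{F}_s$-conditional expectation of a $[0,\eta]$-valued function of $I_u$. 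A coupling argument that matches two histories differing only in the $s$-th observation $(I_s, Y_s)$ then shows that $|V_s|$ is bounded by a constant of order $\eta$. Applying Azuma--Hoeffding to $\sum_{s=1}^t V_s$ with these bounded differences yields a tail of the claimed form $\exp(-c\epsilon^2/(t\eta^2))$ for an appropriate constant $c$.

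The main obstacle is the uniform bound on the Doob increments $V_s$ under adaptive sampling: perturbing $(I_s, Y_s)$ shifts not only $\Delta_s$ but also influences every future $I_u$ through the sampling rule, and thus the future summands $\Delta_u$. The resolution, and the reason the argument has a ``conditional McDiarmid'' flavor, is that the contribution of future rounds to $V_s$ enters only through $\E[g(I_u) \mid \mathcal{F}_s]$ with $g(i) = (\mu_i(\btheta) - \mu_i(\btheta^*))^2 \in [0, \eta]$, which is uniformly bounded no matter how the policy selects future actions. A careful coupling-based accounting of these two sources of variation (the direct contribution of $\Delta_s$ and the propagated contribution through future $I_u$'s) then pins down the per-step constant and closes the argument.
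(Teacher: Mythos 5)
There is a genuine gap, and it sits exactly at the step you flag as "the main obstacle." In your Doob-martingale decomposition the increment at step $s$ is
\begin{align*}
V_s \;=\; \bigl(\Delta_s(\btheta) - \E[\Delta_s(\btheta)\mid \F_{s-1}]\bigr) \;+\; \sum_{u=s+1}^{t}\Bigl(\E[\Delta_u(\btheta)\mid \F_s] - \E[\Delta_u(\btheta)\mid \F_{s-1}]\Bigr),
\end{align*}
and while each $\E[\Delta_u(\btheta)\mid \F_s]=\E\bigl[\textstyle\sum_i p_{\wtheta(u-1)}(i)\,g(i)\mid \F_s\bigr]$ with $g(i)=(\mu_i(\btheta)-\mu_i(\btheta^*))^2\in[0,\eta]$ is indeed bounded, the \emph{difference} $\E[\Delta_u(\btheta)\mid \F_s]-\E[\Delta_u(\btheta)\mid \F_{s-1}]$ can be as large as $\eta$ for every future round $u$: changing $(I_s,Y_s)$ can flip $\wtheta(s)$ and hence switch the sampling proportion used at all subsequent rounds to one supported on actions with very different $g$ values. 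So the worst-case bound on $|V_s|$ is of order $(t-s)\eta$, not $\eta$, and no coupling argument can improve this in general. Azuma--Hoeffding with these bounded differences gives a tail of order $\exp\bigl(-c\,\epsilon^2/(t^3\eta^2)\bigr)$, which is far weaker than the claimed $\exp(-2\epsilon^2/(t\eta^2))$. Saying that the future contributions "enter only through bounded conditional expectations" does not close this; boundedness of each term is not boundedness of the sum of their shifts.

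The paper avoids this by never putting future rounds inside a single increment: it centers each per-round difference by its conditional mean given the past, i.e.\ works with $V_s=\Delta_s(\btheta)-\E[\Delta_s(\btheta)\mid \wtheta(s-1)]$, which satisfies $|V_s|\le 2\eta$ almost surely (since $\ell_s(\btheta)\in[0,\eta]$) and has zero conditional mean. It then bounds the conditional moment generating function by Hoeffding's Lemma (Lemma~\ref{lemma:hoeffding}), peels the rounds one at a time via the tower property, and optimizes the Chernoff parameter $\lambda=\epsilon/4t\eta^2$ -- the standard Azuma--Hoeffding MGF argument adapted to adaptive sampling. If you want to follow your route you must switch to this per-step conditional centering (accepting that the centering term is then $\sum_s \E[\Delta_s(\btheta)\mid\F_{s-1}]$ rather than the unconditional $\E[\Delta_t(\btheta)]$, which is how the lemma is actually used downstream); the Doob martingale of the cumulative sum with McDiarmid-type bounded differences cannot deliver the stated rate.
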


\begin{proof}
Recall that $\Delta_{s}(\btheta) = \ell_{s}(\btheta) - \ell_{s}(\btheta^*) = (2Y_s - \mu_{I_s}(\btheta) - \mu_{I_s}(\btheta^*))(\mu_{I_s}(\btheta^*) - \mu_{I_s}(\btheta))$. Define $V_s = \Delta_{s}(\btheta) - \E[\Delta_{s}(\btheta)]$. Note that $\E[V_s] = 0$ which can be shown as follows:
\begin{align*}
    \E[V_s] = \E[\Delta_{s}(\btheta) - \E[\Delta_{s}(\btheta)]] = \sum_{i=1}^n\Pb(I_s = i)(\mu_{i}(\btheta^*) - \mu_{i}(\btheta))^2 - \sum_{i=1}^n\Pb(I_s = i)(\mu_{i}(\btheta^*) - \mu_{i}(\btheta))^2 = 0.
\end{align*}
Also note that $\sum_{s=1}^t V_s = \Delta_{t}(\btheta) - \E[\Delta_{t}(\btheta)]$. Next, we show that the moment generating function of the random variable $V_s$ is bounded. First note that the reward $Y_s$ is bounded between $\nicefrac{-\sqrt{\eta}}{2}$ and $\nicefrac{\sqrt{\eta}}{2}$. It then follows that:
\begin{align*}
    V_s &= \Delta_{s}(\btheta) - \E[\Delta_{s}(\btheta)] \\
    %%%%%%%%%%%%%%%%%%%
    &= (2Y_s - \mu_{I_s}(\btheta) - \mu_{I_s}(\btheta^*))(\mu_{I_s}(\btheta^*) - \mu_{I_s}(\btheta)) - \sum_{i=1}^n\Pb(I_s = i)(\mu_{i}(\btheta^*) - \mu_{i}(\btheta))^2 \leq 2\eta.
\end{align*}
Similarly, it can be shown that $V_s \geq -2\eta$. Hence, for the bounded random variable $V_s\in[-2\eta, 2\eta]$ we can show from Hoeffding's lemma in Lemma \ref{lemma:hoeffding} that
\begin{align*}
    \E[\exp\left(\lambda V_s\right)] \leq \exp\left(\dfrac{\lambda^2}{8}\left(2\eta - (-2\eta)\right)^2\right) = \exp\left(2\lambda^2\eta^2\right)
\end{align*}
for some $\lambda\in\mathbb{R}$. Now for any $\epsilon>0$ we can show that
\begin{align*} 
\Pb(\Delta_{t}(\btheta) -\E[\Delta_{t}(\btheta)] \leq - \epsilon) &=\Pb\left(\sum_{s=1}^{t} V_{s} \leq -\epsilon\right) =\Pb\left(-\sum_{s=1}^{t} V_{s} \geq \epsilon\right) \\ 
&=\Pb\left(e^{-\lambda \sum_{s=1}^{t} V_{s}} \geq e^{\lambda \epsilon}\right) \overset{(a)}{\leq} e^{-\lambda \epsilon} \E\left[e^{-\lambda \sum_{s=1}^{t} V_{s}}\right] \\
&=  e^{-\lambda \epsilon} \E\left[\E\left[e^{-\lambda \sum_{s=1}^{t} V_{s}}\big|\wtheta(t-1)\right] \right]\\
&\overset{(b)}{=} e^{-\lambda \epsilon} \E\left[\E\left[e^{-\lambda  V_{t}}|\wtheta(t-1)\right]\E\left[e^{-\lambda \sum_{s=1}^{t-1} V_{s}} \big|\wtheta(t-1)\right]  \right]\\
&\leq e^{-\lambda \epsilon} \E\left[\exp\left(2\lambda^2\eta^2\right)\E\left[e^{-\lambda \sum_{s=1}^{t-1} V_{s}}\big |\wtheta(t-1)\right]  \right]\\
& \overset{}{=} e^{-\lambda \epsilon} e^{2\lambda^{2} \eta^{2}} \mathbb{E}\left[e^{-\lambda \sum_{s=1}^{t-1} V_{s}}\right] \\ 
& \vdots \\ 
& \overset{(c)}{\leq} e^{-\lambda \epsilon} e^{2\lambda^{2} t \eta^{2}} \\
& \overset{(d)}{\leq} \exp\left(-\dfrac{2\epsilon^2}{t\eta^2}\right)
\end{align*}
where $(a)$ follows by Markov's inequality, $(b)$ follows as $V_s$ is conditionally independent given $\wtheta(s-1)$, $(c)$ follows by unpacking the term for $t$ times and $(d)$  follows by taking $\lambda= \epsilon / 4t\eta^2$.
\end{proof}

\subsubsection{Proof of correctness for General Sub-Gaussian Case}
\label{app:cher-proof-correct-SG}

\begin{lemma}
\label{lemma:stop-time-general}
Let $L_{t}(\btheta)$ be the sum of squared errors of the hypothesis parameterized by  $\btheta$ based on observation vector $\mathbf{Y}^{t}$ from an underlying sub-Gaussian distribution. Let $\tau_{\btheta^*\btheta} \colonequals \min\{t: L_{t}(\btheta^*) - L_{t}(\btheta) > \beta(J,\delta)\}$. Then we can show that
\begin{align*}
    \Pb\left( L_{\tau_{\btheta^*\btheta}}(\btheta^*) - L_{\tau_{\btheta^*\btheta}}(\btheta) > \beta(J,\delta)\right) \leq \dfrac{\delta}{J}
\end{align*}
where, $\beta(J,\delta) \colonequals \log\left(\dfrac{\left(1+\nicefrac{\eta^2}{\eta_0^2}\right)J}{\delta}\right) $.
\end{lemma}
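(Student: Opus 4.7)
The plan is to bound $\Pb(\tau_{\btheta^*\btheta} < \infty)$, which by definition of $\tau_{\btheta^*\btheta}$ coincides with the probability stated in the lemma. Expanding $(Y_s - \mu_{I_s}(\btheta^*))^2 - (Y_s - \mu_{I_s}(\btheta))^2$ in each summand and writing $\epsilon_s \colonequals Y_s - \mu_{I_s}(\btheta^*)$, $a_s \colonequals \mu_{I_s}(\btheta) - \mu_{I_s}(\btheta^*)$, gives the decomposition
\begin{align*}
L_t(\btheta^*) - L_t(\btheta) = -V_t + X_t,
\qquad V_t \colonequals \sum_{s=1}^t a_s^2, \qquad X_t \colonequals 2\sum_{s=1}^t \epsilon_s a_s.
\end{align*}
Under $\btheta^*$ being true, $(X_t)$ is a martingale with respect to $\F_t$, while $V_t$ is predictable. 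By \Cref{assm:sub-gauss}, $\epsilon_s$ lies in an interval of length $\sqrt{\eta}$ conditional on $\F_{s-1}$ and $I_s$, so Hoeffding's lemma (\Cref{lemma:hoeffding}) bounds the conditional MGF of $2 a_s \epsilon_s$ by $\exp(\lambda^2 a_s^2 \eta / 2)$.

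The next step is to build the exponential supermartingale $M_t(\lambda) \colonequals \exp(\lambda X_t - \tfrac{\lambda^2 \eta}{2} V_t)$ and verify $\E[M_t(\lambda) \mid \F_{t-1}] \leq M_{t-1}(\lambda)$ by conditioning on $I_t$ and invoking the MGF bound above. Ville's maximal inequality for nonnegative supermartingales then yields $\Pb(\sup_t M_t(\lambda) \geq 1/\delta') \leq \delta'$ for any $\lambda, \delta' > 0$. The event $\tau_{\btheta^*\btheta} < \infty$ is $\{X_t - V_t > \beta \text{ for some } t\}$, and choosing $\lambda$ so that the coefficient of $V_t$ in the exponent cancels (e.g.\ $\lambda = 2/\eta$) makes that event imply $M_t(\lambda) \geq e^{\lambda \beta}$, giving an initial exponential tail. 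Alternatively, a pointwise Chernoff bound gives $\Pb(X_t > \beta + V_t) \leq \exp(-(\beta + V_t)^2 / (2\eta V_t))$ for each fixed $t$, which is the form I would actually use.

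The main technical hurdle is producing exactly the constant $1 + \eta^2/\eta_0^2$ appearing in $\beta(J,\delta)$, and for this I would combine the pointwise tail with the deterministic lower bound $V_t \geq t \eta_0$ from \Cref{def:bounded}. Splitting the square as $(\beta + V_t)^2/(2\eta V_t) \geq \beta/\eta + V_t/(2\eta)$ and summing over $t \geq 1$ gives a geometric series $\sum_{t \geq 1} e^{-t \eta_0/(2\eta)}$, which via the standard estimate $1/(1 - e^{-x}) \leq 1 + 1/x$ contributes the factor $1 + \eta^2/\eta_0^2$. Plugging in $\beta(J,\delta) = \log((1 + \eta^2/\eta_0^2) J/\delta)$ caps the resulting sum at $\delta/J$, yielding the claim. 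The delicate point throughout is that $\tau_{\btheta^*\btheta}$ is an unbounded stopping time, so a naive union bound over $t$ would be lossy; keeping the constant sharp requires using the supermartingale / Ville structure (or equivalently a carefully peeled Chernoff argument) in tandem with the $\eta_0$ lower bound on the drift.
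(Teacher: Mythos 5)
Your overall strategy is the same as the paper's, not a different route: a union bound over $t$, a sub-Gaussian tail for the deviation, the per-round drift lower bound $\eta_0$, a geometric series controlled by $1/(1-e^{-x})\le 1+1/x$, and finally plugging in $\beta(J,\delta)$. Your decomposition $L_t(\btheta^*)-L_t(\btheta)=X_t-V_t$ with the variance-adapted conditional MGF is a cleaner packaging of the paper's centering step (the paper centers $-\Delta_t(\btheta)$ by its mean, uses $\E[\Delta_t(\btheta)]\ge tD_1\ge t\eta_0$, and invokes Lemma~\ref{lemma:cond-Mcdiarmid}), so the comparison comes down to whether your constant bookkeeping actually reaches $\delta/J$ — and there the argument as written breaks.

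The gap is in the exponent retained for $\beta$. Your split $(\beta+V_t)^2/(2\eta V_t)\ge \beta/\eta+V_t/(2\eta)$ keeps only $\beta/\eta$, so after summing over $t$ with $V_t\ge t\eta_0$ you get at best roughly $e^{-\beta/\eta}\left(1+2\eta/\eta_0\right)$ (note also the geometric factor is $1+2\eta/\eta_0$, not the claimed $1+\eta^2/\eta_0^2$). With $\beta(J,\delta)=\log\left((1+\eta^2/\eta_0^2)J/\delta\right)$ this is of order $\left((1+\eta^2/\eta_0^2)J/\delta\right)^{-1/\eta}$, which scales like $\delta^{1/\eta}$ and is not $\le \delta/J$ once $\eta>1$; the Ville route with $\lambda=2/\eta$ has the same defect, yielding only $e^{-2\beta/\eta}$. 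The stated threshold is calibrated for a coefficient of $1$ on $\beta$: in the paper's proof the exponent $2(\beta+tD_1)^2/(t\eta^2)$ is lower-bounded by $\beta+tD_1^2/\eta^2$ (its step (b)), so that $e^{-\beta}$ factors out and the geometric sum contributes exactly $1+\eta^2/D_1^2\le 1+\eta^2/\eta_0^2$, matching $\delta/J$. To complete your version you must preserve the full cross term between $\beta$ and the drift rather than discarding it down to $\beta/\eta$, i.e., argue the per-$t$ exponent is at least $\beta$ plus a positive multiple of $t$, as the paper does. Two minor further points: the "pointwise Chernoff" $\Pb(X_t>\beta+V_t)\le\exp\left(-(\beta+V_t)^2/(2\eta V_t)\right)$ is not valid as stated because $V_t$ is random (and it is adapted, not predictable, since $a_s$ depends on $I_s$); it must be derived through the conditional-MGF/supermartingale computation you set up, inserting $V_t\ge t\eta_0$ inside the expectation where the coefficient of $V_t$ is nonpositive.
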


\begin{proof}
Let $-\Delta_{t}(\btheta) \colonequals L_{t}(\btheta^*) - L_{t}(\btheta)$ be the difference of sum of squared errors between hypotheses parameterized by $\btheta^*$ and $\btheta$. We again define $\tau_{\btheta^*\btheta}(Y^t) \colonequals \min\{t: -\Delta_{t}(\btheta) > \beta(J,\delta)\}$. For brevity in the following proof we drop the $Y^t$ in $\tau_{\btheta^*\btheta}(Y^t)$. Then we can show that
\begin{align*}
    &\Pb(\exists t < \infty,  -\Delta_{t}(\btheta) > \beta(J,\delta)) = \sum_{t=1}^{\infty}\Pb\left(\tau_{\btheta^*\btheta} = t, -\Delta_{t}(\btheta) > \beta(J,\delta)\right)\\ 
    %%%%%%%%%%%%%%%%%%%%%%%%%%%%%%
    &= \sum_{t=1}^{\infty}\Pb\left(\tau_{\btheta^*\btheta} = t, -\Delta_{t}(\btheta) - \E[-\Delta_{t}(\btheta)] > \beta(J,\delta) - \E[-\Delta_{t}(\btheta)]\right)\\
    %%%%%%%%%%%%%%%%%%%%%%%%%%%%
    &\leq \sum_{t=1}^{\infty}\Pb\left(\tau_{\btheta^*\btheta} = t, -\Delta_{t}(\btheta) - \E[-\Delta_{t}(\btheta)] > \beta(J,\delta) + tD_1\right)\\
    %%%%%%%%%%%%%%%%%%%%%%%%%%%%%
    &\overset{(a)}{\leq}  \sum_{t=1}^{\infty}\exp\left(-\dfrac{2\left(\beta(J,\delta) + tD_1\right)^2}{t\eta^2}\right)
    %%%%%%%%%%%%%%%%%%%%%%%%%%%%%%%%
    \overset{(b)}{\leq}  \sum_{t=1}^{\infty}\exp\left(-\left({\beta(J,\delta)}{} + \dfrac{t^2D_1^2}{t\eta^2}\right)\right)\\
    %%%%%%%%%%%%%%%%%%%%%%%%%%%
    &\overset{}{=}  \sum_{t=1}^{\infty}\exp\left(-\left({\beta(J,\delta)}{} + \dfrac{tD_1^2}{\eta^2}\right)\right)
    %%%%%%%%%%%%%%%%%%%%%%%%%%%%%%
    = \exp\left(-{\beta(J,\delta)}\right)\left[1 + \exp(-D_1^2/\eta^2) + \exp(-2 D_1^2/\eta^2) + \exp(-3 D_1^2/\eta^2) + \ldots\right]\\
    %%%%%%%%%%%%%%%%%%%%%%%%%%%%
    %%%%%%%%%%%%%%%%%%%%%%%%%%%%%%
    &\overset{(c)}{=}  \exp\left(-{\beta(J,\delta)}{}\right)\dfrac{1}{1 - \exp\left(-D_1^2/\eta^2\right)}
    %%%%%%%%%%%%%%%%%%%%%%%%%%%%%%%
    \overset{(d)}{\leq} \exp\left(-{\beta(J,\delta)}{}\right)\left(1 + \dfrac{\eta^2}{D_1^2}\right) \overset{(e)}{\leq} \dfrac{\delta}{J}.
    %%%%%%%%%%%%%%%%%%%%%%%%%%%%%
\end{align*}
where, $(a)$ follows from Lemma \ref{lemma:cond-Mcdiarmid} and noting that $- \E[-\Delta_{t}(\btheta)] \leq -tD_0$, 
$(b)$ follows as $(\beta(J,\delta) + tD_0)^2 \geq 2\beta(J,\delta) + t^2D_0^2$ for $a,b >0$, 
$(c)$ follows from the infinite geometric series sum formula, $(d)$ follows as $\nicefrac{1}{1 - \exp(-x)} \leq 1 + \nicefrac{1}{x}$ for $x > 0$, 
and $(e)$ follows as $\beta(J,\delta) \colonequals \log\left(\dfrac{\left(1+\nicefrac{\eta^2}{\eta_0^2}\right)J}{\delta}\right)$ and noting that $D_1 \geq \eta_0$.
\end{proof}

\subsubsection{Stopping time Correctness Lemma for the Gaussian Case}
\label{app:cher-proof-correct-G}
\begin{lemma}
\label{lemma:stop-time}
Let $L_{t}(\btheta)$ be the sum of squared errors of the hypothesis parameterized by  $\btheta$ based on observation vector $\mathbf{Y}^{t}$ from an underlying Gaussian distribution. Let $\tau_{\btheta^*\btheta} \colonequals \min\{t: L_{t}(\btheta^*) - L_{t}(\btheta) > \beta(J,\delta)\}$. Then we can show that
\begin{align*}
    \Pb\left( L_{\tau_{\btheta^*\btheta}}(\btheta^*) - L_{\tau_{\btheta^*\btheta}}(\btheta) > \beta(J,\delta)\right) \leq \dfrac{\delta}{J}
\end{align*}
where we define the threshold function as, 
\begin{align}
    \beta(J,\delta) \colonequals \log(J/\delta) \label{eq:threshold-func}
\end{align}
%$\beta(J,\delta)$ is defined in \eqref{eq:threshold-func}.
\end{lemma}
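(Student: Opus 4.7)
The plan is to recognize that in the Gaussian case (variance $1/2$) the statistic $L_t(\btheta^*) - L_t(\btheta)$ is, up to sign, exactly the cumulative log-likelihood ratio between the two hypotheses. Consequently, the event whose probability we must bound reduces to the exceedance of a deterministic threshold by a non-negative martingale with mean one. This replaces the Hoeffding-style argument used in \Cref{lemma:stop-time-general} with a direct Ville-type bound and explains why the threshold improves from $\log((1+\eta^2/\eta_0^2)J/\delta)$ to $\log(J/\delta)$.

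First I would write $\nu_i(y;\btheta) \propto \exp(-(y-\mu_i(\btheta))^2)$ and note that the normalizing constant cancels, so
\begin{align*}
\log\frac{\nu_i(y;\btheta)}{\nu_i(y;\btheta^*)} = (y-\mu_i(\btheta^*))^2 - (y-\mu_i(\btheta))^2.
\end{align*}
Summing over $s=1,\dots,t$ and using the definition of $L_t$ yields $L_t(\btheta^*) - L_t(\btheta) = \log M_t$, where
\begin{align*}
M_t \colonequals \prod_{s=1}^t \frac{\nu_{I_s}(Y_s;\btheta)}{\nu_{I_s}(Y_s;\btheta^*)}.
\end{align*}
Hence the event $\{L_{\tau_{\btheta^*\btheta}}(\btheta^*) - L_{\tau_{\btheta^*\btheta}}(\btheta) > \beta(J,\delta)\}$ is contained in $\{\sup_{t\geq 1} M_t \geq J/\delta\}$, since $\beta(J,\delta) = \log(J/\delta)$.

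The next step is to verify that $(M_t)_{t\geq 0}$ is a non-negative $\F_t$-martingale under $\btheta^*$ with $\E[M_0] = 1$. Because the Chernoff sampling rule makes $I_s$ measurable with respect to $\F_{s-1}$, and $Y_s \mid \F_{s-1} \sim \nu_{I_s}(\cdot;\btheta^*)$, a direct computation gives
\begin{align*}
\E[M_s \mid \F_{s-1}] \;=\; M_{s-1}\int \frac{\nu_{I_s}(y;\btheta)}{\nu_{I_s}(y;\btheta^*)}\,\nu_{I_s}(y;\btheta^*)\,dy \;=\; M_{s-1}.
\end{align*}
Applying Ville's maximal inequality for non-negative supermartingales then yields $\Pb(\sup_t M_t \geq J/\delta) \leq \E[M_0]\cdot(\delta/J) = \delta/J$, which is the desired bound.

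There is essentially no obstacle once the log-likelihood identification is made; the only subtle point is to confirm that the adaptivity of the Chernoff sampling rule does not destroy the martingale property, and it does not precisely because $I_s$ is $\F_{s-1}$-predictable. The same argument does not carry over to the general sub-Gaussian case because there the sum-of-squared-errors criterion is no longer proportional to a log-likelihood ratio, forcing the weaker Hoeffding-based approach taken in \Cref{lemma:stop-time-general} and the extra $(1+\eta^2/\eta_0^2)$ factor in the threshold.
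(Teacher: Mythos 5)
Your proof is correct, and it rests on exactly the same key observation as the paper's: for Gaussians with variance $\nicefrac{1}{2}$ the statistic $L_t(\btheta^*)-L_t(\btheta)$ is the cumulative log-likelihood ratio, so the crossing probability of the threshold $\beta(J,\delta)$ is bounded by $e^{-\beta(J,\delta)}=\delta/J$ with no Hoeffding-type slack. The only difference is in how that exponential bound is extracted. The paper writes $\Pb(\exists t<\infty:\ L_t(\btheta^*)-L_t(\btheta)>\beta)=\sum_t \E[\indic{\tau_{\btheta^*\btheta}=t}]$, multiplies the indicator by $e^{-\beta}$ times the likelihood ratio (which exceeds $1$ on the event), and then changes measure to the law under $\btheta$, so that the sum over $t$ collapses to $\sum_t \Pb_{\btheta}(\tau_{\btheta^*\btheta}=t)\leq 1$; you instead observe that the likelihood ratio $M_t$ is a nonnegative mean-one martingale under $\btheta^*$ and invoke Ville's maximal inequality. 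These are two standard packagings of the same likelihood-ratio argument (the paper's explicit change of measure summed over stopping-time values is essentially a proof of the Ville bound in this special case), so neither buys more generality here; your version makes the role of adaptivity explicit through the martingale property, while the paper's avoids citing a maximal inequality. One small imprecision worth fixing: $I_s$ is not $\F_{s-1}$-measurable, since the Chernoff rule randomizes by drawing $I_s\sim\mathbf{p}_{\widehat{\btheta}(s-1)}$; only the conditional law of $I_s$ given $\F_{s-1}$ is determined by the past. The martingale property survives because you can first condition on $(\F_{s-1},I_s)$, under which $Y_s\sim\nu_{I_s}(\cdot;\btheta^*)$ and the ratio integrates to one for every realization of $I_s$, and then average over $I_s$; with that wording corrected your argument is complete.
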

\begin{proof}
Let $-\Delta_{t}(\btheta) \colonequals L_{t}(\btheta^*) - L_{t}(\btheta)$ be the log-likelihood ratio between hypotheses parameterized by $\btheta^*$ and $\btheta$. Define $\tau_{\btheta^*\btheta}(Y^t) \colonequals \min\{t: -\Delta_{t}(\btheta) > \beta(J,\delta)\}$. For brevity in the following proof we drop the $Y^t$ in $\tau_{\btheta^*\btheta}(Y^t)$. Then we can show that at time $t\geq\tau_{\btheta^*\btheta}$ we have 
\begin{align}
%\tau_{\btheta^*\btheta}
    -\Delta_{t}(\btheta) > \beta(J,\delta) &\implies \exp\left( -\Delta_{t}(\btheta)\right) > \exp\left(\beta(J,\delta)\right)\nonumber\\
    &\implies \left(\dfrac{\prod\limits_{s=1}^{t}\Pb(Y_{I_s} = y_s | I_s, \btheta)}{ \prod\limits_{s=1}^{t}\Pb(Y_{I_s} = y_s | I_s, \btheta^*) }\right) > \exp\left(\beta(J,\delta)\right)\nonumber\\
    &\implies \exp\left(-\beta(J,\delta)\right)\left(\dfrac{\prod\limits_{s=1}^{t}\Pb(Y_{I_s} = y_s | I_s, \btheta) }{\prod\limits_{s=1}^{t}\Pb(Y_{I_s} = y_s | I_s, \btheta^*) }\right) > 1.\label{eq:prod-stop-time}
\end{align}
Following this we can show that the probability of the event $\{-\Delta_{t}(\btheta) > \beta(J,\delta)\}$ is upper bounded by

\begin{align*}
    \Pb(\exists t < \infty,  -\Delta_{t}(\btheta) &> \beta(J,\delta)) = \sum_{t=1}^{\infty}\Pb\left(\tau_{\btheta^*\btheta} = t\right) = \sum_{t=1}^{\infty}\E[\indic{\tau_{\btheta^*\btheta} = t}]\\ 
    %%%%%%%%%%%%%%%%%%%%%%%%%%%%%%
    %&= \Pb\left(\exp\left(\Delta_{t}(\btheta_{j},\btheta^*)\right) > \exp(\beta)\right)\\
    &\overset{(a)}{\leq} \sum_{t=1}^{\infty}\E\left[\indic{\tau_{\btheta^*\btheta} = t}\exp\left(-\beta(J,\delta)\right)\left(\dfrac{\prod\limits_{s=1}^{t}\Pb(Y_{I_s} = y_s | I_s, \btheta) }{\prod\limits_{s=1}^{t}\Pb(Y_{I_s} = y_s | I_s, \btheta^*) }\right)\right]\\
    %%%%%%%%%%%%%%%%%%%%%%%%%%%%%
    &=  \exp\left(-\beta(J,\delta)\right)\sum_{t=1}^{\infty}\int_{\R^{t}}\indic{\tau_{\btheta^*\btheta} = t}\dfrac{\prod\limits_{s=1}^{t}\Pb(Y_{I_s} = y_s | I_s, \btheta) }{\prod\limits_{s=1}^{t}\Pb(Y_{I_s} = y_s | I_s, \btheta^*) }\prod\limits_{s=1}^{t}\Pb(Y_{I_s} = y_s | I_s, \btheta^*)dy_1dy_2\ldots dy_{t}\\
    %%%%%%%%%%%%%%%%%%%%%%%%%%%%%%%%
    &=  \exp\left(-\beta(J,\delta)\right)\sum_{t=1}^{\infty}\int_{\R^{t}}\indic{\tau_{\btheta^*\btheta} = t}\prod\limits_{s=1}^{t}\Pb(Y_{I_s} = y_s | I_s, \btheta) dy_1dy_2\ldots dy_{t}\\
    %%%%%%%%%%%%%%%%%%%%%%%%%%%
    &=  \exp\left(-\beta(J,\delta)\right)\sum_{t=1}^{\infty}\Pb(\tau_{\btheta^*\btheta} = t | I^t, \btheta)
    \leq \exp\left(-\beta(J,\delta)\right) \overset{(b)}{\leq} \dfrac{\delta}{J}.
\end{align*}
where, $(a)$ follows from \eqref{eq:prod-stop-time}, and $(b)$ follows from \eqref{eq:threshold-func}. 
The claim of the lemma follows.
\end{proof}

\subsubsection{Proof of \cher Sample Complexity (\Cref{thm:chernoff-upper})}
\label{app:thm:chernoff-upper}

\begin{customtheorem}{1}\textbf{(Restatement)}
Let $\tau_\delta$ denote the stopping time of \cher in Algorithm \ref{alg:cher}. 
Let $D_0$ be the objective value of the $\max\min$ optimization in \eqref{eq:opt-lower00} when $\btheta=\btheta^\ast$, i.e.,
%\atnote{remove pmf, make sure p is talked about as pmf before}
\begin{align*}
D_0 &\colonequals \max_{\text{ }\mathbf{p}} \min_{\btheta' \neq \btheta^\ast} \sum_{i=1}^n p(i) (\mu_i(\btheta') -\mu_i(\btheta^\ast))^2.
%\label{eq:D-cher}
\end{align*}
Denoting $\mathbf{p}_{\btheta}$ as the solution of \eqref{eq:opt-lower00} when $\widehat{\btheta}(t)$ is replaced by any $\btheta \in \bTheta$, let $D_1$ be the minimum possible objective value of \eqref{eq:opt-lower00} over all $\mathbf{p}_{\btheta}$ when $\widehat{\btheta}(t)$ is replaced by $\btheta^*$, i.e., 
\begin{align*}
D_1 &\colonequals \min_{\{\mathbf{p}_{\btheta} : \btheta \in \bTheta\}} \min_{\btheta' \neq \btheta^\ast} \sum_{i=1}^n p_{\btheta}(i)(\mu_i(\btheta') - \mu_i(\btheta^*))^2.
%\label{eq:I-cher}
\end{align*}
%where $\mathbf{p}_{\btheta}$ is the solution of \eqref{eq:opt-lower00} and from 
Assumption~\ref{assm:positive-D1} 
ensures that $D_1 > 0$. The sample complexity of the $\delta$-PAC \cher has the following upper bound, where $J \colonequals |\bTheta|$, $C = O((\eta/\eta_0)^2)$ is a constant:
\begin{align*}
\E[\tau_\delta] \leq O\left(\dfrac{\eta\log( C)\log J}{D_1} + \dfrac{\log(J/\delta)}{D_0} + JC^{\frac{1}{\eta}}\delta^{\frac{D_0}{\eta^2}}\right).
\end{align*}\end{customtheorem}

\begin{proof}
\textbf{Step 1 (Definitions):} 
%Let $\mlh_1(s)$ denote the most likely hypothesis at round $s$. 
Define $L_{t}(\btheta)$ as the total sum of squared errors of hypothesis $\btheta$ till round $t$. 
Let, $-\Delta_{t}(\btheta) \colonequals L_{t}(\btheta^*) -  L_{t}(\btheta)$ be the 
%log-likelihood ratio between hypotheses parameterized by
difference of squared errors between 
$\btheta^*$ and $\btheta$. 
%the difference of sum of squared errors between $\btheta$ and $\btheta^*$ till round $t$ be denoted by . 
Note that the p.m.f. $\mathbf{p}_{\btheta}$ is the Chernoff verification proportion for verifying hypothesis $\btheta$.

\textbf{Step 2 (Define $\tau_\delta$ and partition):} We define the stopping time $\tau_\delta$ for the policy $\pi$ as follows:
\begin{align}
    \tau_\delta \colonequals \min\{t: \exists \btheta\in\bTheta, L_{t}(\btheta') - L_{t}(\btheta) > \beta(J,\delta), \forall \btheta'\neq \btheta\} \label{eq:stop-time-cher1}
\end{align}
where, $\beta(J,\delta)$ is the threshold function.

\textbf{Step 3 (Define bad event):} We define the bad event $\xi^\delta(\btheta)$ for the sub-optimal hypothesis $\btheta \neq \btheta^*$ as follows:
\begin{align}
    \xi^\delta(\btheta) = \{\widehat{\btheta}(\tau_\delta) = \btheta,  L_{\tau_\delta}(\btheta') - L_{\tau_\delta}(\btheta) > \beta(J,\delta), \forall \btheta'\neq \btheta\}.
    \label{eq:bad-event-cher1}
\end{align}
The event $\xi^\delta(\btheta)$ denotes that a sub-optimal hypothesis $\btheta$ is declared the optimal hypothesis when it has a smaller sum of squared errors than any other hypothesis $\btheta'$ at $\tau_\delta$.

\textbf{Step 4 (Decomposition of bad event):} In this step we decompose the bad event to show that only comparing $\btheta$ against $\btheta^*$ is enough to guarantee a $\delta$-PAC policy. First we decompose the bad event $\xi^\delta(\btheta)$ as follows:
\begin{align}
    \xi^{\delta}(\btheta) = &\{\widehat{\btheta}(\tau_\delta) = \btheta, L_{\tau_\delta}(\btheta') - L_{\tau_\delta}(\btheta)  > \beta(J,\delta), \forall \btheta'\neq \btheta\}\nonumber\\
    %%%%%%%%%%%%%%%%%%%
    \overset{(a)}{=} &\{\underbrace{\widehat{\btheta}(\tau_\delta) = \btheta, L_{\tau_\delta}(\btheta') -  L_{\tau_\delta}(\btheta)  > \beta(J,\delta), \forall \btheta' \in \bTheta\setminus\{\btheta^*\}\}}_{\textbf{part A}}\nonumber\\
    &\quad\bigcap \{\underbrace{\widehat{\btheta}(\tau_\delta) = \btheta, L_{\tau_\delta}(\btheta^*) - L_{\tau_\delta}(\btheta)  > \beta(J,\delta)}_{\textbf{part B}}\}\nonumber\\
    \subseteq & \{\widehat{\btheta}(\tau_\delta) = \btheta, L_{\tau_\delta}(\btheta^*) - L_{\tau_\delta}(\btheta)  > \beta(J,\delta)\}
    %%%%%%%%%%%%%%%%%%%%
    \label{eq:cher-event-decompose} 
\end{align}
where, $(a)$ follows by decomposing the event in two parts containing $\btheta \in \bTheta \setminus\{\btheta^*\}$ and $\{\btheta^*\}$, 
$(b)$ follows by noting that the intersection of events holds by taking into account only the event in part B.

\textbf{Step 5 (Proof of correctness):} In this step we want to show that based on the $\tau_\delta$ definition and the bad event $\xi^{\delta}(\btheta)$ the \cher stops and outputs the correct hypothesis $\btheta^*$ with $1-\delta$ probability. As shown in Step $4$, we can define the error event $\xi^{\delta}(\btheta)$ as follows:
\begin{align*}
    \xi^{\delta}(\btheta) \subseteq & \{\widehat{\btheta}(\tau_\delta) = \btheta, L_{\tau_\delta}(\btheta^*) - L_{\tau_\delta}(\btheta) > \beta(J,\delta)\}
\end{align*}
Define $\tau_{\btheta^*\btheta} = \min\{t: -\Delta_{t}(\btheta) > \beta(J,\delta)\}$. Then we can show for the stopping time $\tau_\delta$, the round $\tau_{\btheta^*\btheta}$ from Lemma \ref{lemma:stop-time-general}
%\atnote{refer to eq defn}
and the threshold $\beta(J,\delta) \colonequals \log\left(\dfrac{\left(1+\nicefrac{\eta^2}{\eta_0^2}\right)J}{\delta}\right)$ we have
\begin{align*} 
\Pb\left(\tau_{\delta}<\infty, \widehat{\btheta}(\tau_{\delta}) \neq \btheta^*\right) & \leq \Pb\left(\exists \btheta \in \bTheta \setminus \{\btheta^*\}, \exists t \in \mathbb{N}: -\Delta_{t}(\btheta)>\beta(J, \delta)\right) \\ 
& \leq \sum_{\btheta \neq \btheta^*} \Pb\left(L_{\tau_{\btheta^*\btheta}}(\btheta^*) - L_{\tau_{\btheta^*\btheta}}(\btheta) > \beta(J,\delta), \tau_{\btheta^*\btheta}< \infty\right) 
\overset{(a)}{\leq} \sum_{\btheta \neq \btheta^*} \dfrac{\delta}{J} \leq \delta
\end{align*}
where, $(a)$ follows from Lemma \ref{lemma:stop-time-general}.

\textbf{Step 6 (Sample complexity analysis):} In this step we bound the total sample complexity satisfying the $\delta$-PAC criteria. We define the stopping time $\tau_\delta$ as follows:
\begin{align*}
    \tau_\delta \colonequals \min \left\{t: L_{t}(\btheta') - L_{t}(\widehat{\btheta}(t)) > \beta(J,\delta),  \forall \btheta' \neq\widehat{\btheta}(t)\right\}
\end{align*}
We further define the time $\tau_{\btheta^*}$ for the hypothesis $\btheta^*$ as follows:
\begin{align}
    \tau_{\btheta^*}:=\min \left\{t: L_{t}(\btheta') -  L_{t}(\btheta^*) > \beta(J,\delta), \forall \btheta' \neq \btheta^{*}\right\}.
\end{align}
We also define the critical number of samples as $(1+c)M$ where $M$ is defined as follows:
\begin{align}
    M \colonequals \left(\dfrac{\alpha(J)}{D_1} + \dfrac{C'+ \log(J/\delta)}{D_0} \right) \label{eq:cric-pulls}
\end{align}
where, $C' = \log\left(1 + \nicefrac{\eta^2}{\eta_0^2}\right)$. Hence $C' + \log(J/\delta)$ follows from the definition of $\beta(J,\delta)$ in \Cref{lemma:stop-time-general}. We define the term $D_1$ as follows:
\begin{align}
    D_1 \colonequals \min_{\{\mathbf{p}_{\btheta} : \btheta \in \bTheta\}} \min_{\btheta' \neq \btheta^\ast} \sum_{i=1}^n p_{\btheta}(i)(\mu_i(\btheta') - \mu_i(\btheta^*))^2
\end{align}
and the term $D_0$ as follows:
\begin{align}
    D_0 \colonequals \min\limits_{\btheta' \neq \btheta^*}\sum_{i=1}^n p_{\btheta^*}(i)(\mu_i(\btheta')- \mu_i(\btheta^*))^2
\end{align}
It then follows that
\begin{align}
    \E[\tau_\delta] &\leq \E[\tau_{\btheta^{*}}] = \sum_{t=0}^{\infty}\Pb(\tau_{\btheta^*} = t) 
    \overset{(a)}{\leq} 1 + (1 + c)M + \sum_{t: t > (1+c)M}\Pb(\tau_{\btheta^*} > t)\nonumber\\
    %%%%%%%%%%%%%%%%%%%%%%%%%%%%
    &\overset{(b)}{\leq} 1 + (1 + c)\left(\dfrac{\alpha(J)}{D_1} + \dfrac{C' + 4\log(J/\delta)}{D_0} \right) + J\sum_{t: t> \left(\dfrac{\alpha(J)}{D_1} + \dfrac{\log(J/\delta)}{D_0}\right)(1+c)} C_1\exp(-C_2t)\nonumber\\
    %%%%%%%%%%%%%%%%%%%%%
    & \overset{(c)}{\leq} 1 + (1 + c)\left(\dfrac{\alpha(J)}{D_1} + \dfrac{C'+ 4\log(J/\delta)}{D_0} \right) + J  C_1 \dfrac{\exp\left(-C_2(1+c)\left(\dfrac{\alpha(J)}{D_1} + \dfrac{4\log(J/\delta)}{D_0} \right)\right)}{1 -\exp\left(-C_2\right)}\nonumber\\
    %%%%%%%%%%%%%%%%%%%%%%%%
    & \overset{(d)}{\leq} 1 + (1 + c)\left(\dfrac{\alpha(J)}{D_1} + \dfrac{C' + 4\log(J/\delta)}{D_0} \right) + J  C_1 \dfrac{\exp\left(-C_2(1+c)\left(\dfrac{\alpha(J)}{D_0} + \dfrac{4\log(J)}{D_0} + \dfrac{4\log(1/\delta)}{D_0} \right)\right)}{1 -\exp\left(-C_2\right)}\nonumber\\
    %%%%%%%%%%%%%%%%%%%%%
     & \overset{(e)}{\leq} 1 + (1 + c)\left(\dfrac{\alpha(J)}{D_1} + \dfrac{C'+ \log(J/\delta)}{D_0} \right) + J  C_1 \dfrac{\exp\left(-C_2\left(\dfrac{\alpha(J) + 4\log J}{D_0} \right)\exp\left(-4C_2 \dfrac{\log(1/\delta)}{D_0} \right)\right)}{1 -\exp\left(-C_2\right)}\nonumber\\
     %%%%%%%%%%%%%%%%%%%%%%%%%%
     & \overset{(f)}{=} 1 + (1 + c)\left(\dfrac{b\log J}{D_1} + \dfrac{C'+ \log(J/\delta)}{D_0} \right) + J  C_1 \dfrac{\exp\left(-C_2\left(\dfrac{(b + 4)\log J}{D_0} \right)\exp\left(-4C_2 \dfrac{\log(1/\delta)}{D_0} \right)\right)}{1 -\exp\left(-C_2\right)}\nonumber\\
     %%%%%%%%%%%%%%%%%%%%%%%%%%%%%%%
     & \leq 1 + (1 + c)\left(\dfrac{b\log J}{D_1} + \dfrac{C' + \log(J/\delta)}{D_0} \right) + J  C_1 \left(\dfrac{1}{J}\right)^{C_2(b + 4)/D_0}\delta^{4C_2/D_0}\left(1 + \max\{1 , \dfrac{1}{C_2}\}\right)\nonumber\\
    %  \end{align}
    %  \begin{align}
     %%%%%%%%%%%%%%%%%%%%%%%%%%%%%%%
     & \leq 1 + (1 + c)\left(\dfrac{b\log J}{D_1} + \dfrac{C' + \log(J/\delta)}{D_0} \right) + C_1(2+\dfrac{1}{C_2})J^{1- \dfrac{C_2(b + 4)}{D_0}} \delta^{\dfrac{4C_2}{D_0}}\nonumber\\
     %%%%%%%%%%%%%%%%%%%%%%%%%%%%%
     &\overset{(g)}{\leq} 1 + 2\left(\dfrac{b\log J}{D_1} + \dfrac{C'+ \log(J/\delta)}{D_0} \right) + \left(165 + \dfrac{\eta^2}{D_1^2}\right)\left(2 +  \dfrac{\eta^2}{D_1^2}\right)J^{1 - \dfrac{D_1^2(b + 4)}{2\eta^2D_0}}\delta^{\dfrac{D_1^2}{\eta^2D_0}} \nonumber\\
     %%%%%%%%%%%%%%%%%%%%%%%%%%%%%%%%%%
     & \overset{}{\leq} 1 + 2\left(\dfrac{b\log J}{D_1} + \dfrac{C'+ \log(J/\delta)}{D_0} \right) + \left(165 + \dfrac{\eta^2}{D_1^2}\right)^2 J^{1- \dfrac{D_1^2(b + 4)}{2 \eta^2 D_0 }} \delta^{\dfrac{D_0}{\eta^2}}\nonumber\\
     %%%%%%%%%%%%%%%%%%%%%%%%%%%%%%%
     & \overset{(h)}{\leq} 1 + \underbrace{2\left(\dfrac{b\log J}{D_1} + \dfrac{C'+ \log(J/\delta)}{D_0} \right)}_{\textbf{Term A}} + \underbrace{\left(165 + \dfrac{\eta^2}{\eta_0^2}\right)^2 J^{\big(1- \dfrac{\eta_0^2(b + 4)}{ 2\eta^3}\big)}}_{\textbf{Term B}}\times\underbrace{ \bigg(\delta^{\dfrac{D_0}{\eta^2}}\bigg)}_{\textbf{Term C}}\label{eq:final-sample-cher1}
\end{align}
where, $(a)$ follows from definition of $M$ in \eqref{eq:cric-pulls}, $(b)$ follows from Lemma \ref{conc:lemma:1},  $C_1 \colonequals 110 + 55\max\left\{1, \dfrac{\eta^2}{2 D_1^2 }\right\}$, $C_2 \colonequals \dfrac{ 2D_1^2\min\{(\nicefrac{c}{2}-1)^2, c\}}{\eta^2}$, $(c)$ follows by applying the geometric progression formula, $(d)$ follows as $D_1\leq D_0$. The inequality $(e)$ follows as $c > 0$, $(f)$ follows by setting $\alpha(J) = b\log J$ for some constant $b > 1$, $(g)$ follows by setting $c=\frac{1}{2}$ in $C_1$ and $C_2$, and $(h)$ follows as $D_1 \geq \eta_0$, and $D_0\leq \eta$.

%we define $u = 40\eta^2/D_1^2$.

Now, note that in \eqref{eq:final-sample-cher1} the {Term C} $\leq 1$ as $\delta \in (0,1)$. Now for the Term B we need to find an $b$ such that Term B $\leq J^{1-\dfrac{\eta_0^2(b + 4)}{2\eta^3} + \dfrac{b}{\eta\log J}}$. Hence,
\begin{align*}
    \left(165 + \dfrac{\eta^2}{\eta_0^2}\right)^2 J^{1- \dfrac{\eta_0^2(b + 4)}{ 2\eta^3}} &\leq J^{1-\dfrac{\eta_0^2(b + 4)}{2\eta^3} + \dfrac{b}{\eta\log J}}\implies
    %%%%%%%%%%%%%%%%%%
    \left(165 + \dfrac{\eta^2}{\eta_0^2}\right)^2 \leq J^{\dfrac{b}{\eta\log J}} 
    %%%%%%%%%%%%%%%%%%
    \implies \eta\log\left(165 + \dfrac{\eta^2}{\eta_0^2}\right) \leq b
\end{align*}
So for a constant $b > 1$ such that if $b$ satisfies the following condition
\begin{align}
    b = \eta\log\left(165 + \dfrac{\eta^2}{\eta_0^2}\right)
    >  \log\left(1 + \dfrac{\eta^2}{\eta_0^2}\right) \label{eq:const-C-eta}
\end{align}
then we have that Term B $\leq J^{1-\dfrac{\eta_0^2(b + 4)}{2\eta^3} + \dfrac{b}{\eta\log J}}$. Hence we set the value of $C= 165 + \eta^2/\eta_0^2 > C'$ which shows up in our theorem statement. Plugging this in \eqref{eq:final-sample-cher1} we get that the expected sample complexity is upper bounded by \begin{align*}
    &\E[\tau_\delta] \overset{}{\leq} 1 + 2\left(\dfrac{\eta\log\left(165 + \dfrac{\eta^2}{\eta_0^2}\right)\log J}{D_1} + \dfrac{\log\left(1 + \dfrac{\eta^2}{\eta_0^2}\right) +\log(J/\delta)}{D_0} \right) + J^{1-\dfrac{\eta_0^2(b + 4)}{2\eta^3} + \dfrac{b}{\eta\log J}} \delta^{\dfrac{D_0}{\eta^2}}\\
    %%%%%%%%%%%%%%%%%%%%%%
    &\overset{(a)}{\leq} 1 + 2\left(\dfrac{\eta\log\left(165 + \dfrac{8\eta^2}{\eta_0^2}\right)\log J}{D_1}+ \dfrac{\log\left(\left(165 + \dfrac{\eta^2}{\eta_0^2}\right)\dfrac{J}{\delta}\right)}{D_0} \right) + J^{1 + \dfrac{\log\left(165 + \dfrac{\eta^2}{\eta_0^2}\right)}{\eta\log J}} \delta^{\dfrac{D_0}{\eta^2}}\\
    %%%%%%%%%%%%%%%%%%%%%%
    &\overset{(b)}{\leq} 1 + 2\left(\dfrac{\eta\log(C)\log J}{D_1} + \dfrac{\log(CJ/\delta)}{D_0} \right) + J^{1 + \dfrac{\log(C)}{\eta\log J}} \delta^{\dfrac{D_0}{\eta^2}}\\
    %%%%%%%%%%%%%%%%%%%%%
    &\overset{}{=} 1 + 2\left(\dfrac{\eta\log(C)\log J}{D_1} + \dfrac{\log(CJ/\delta)}{D_0} \right) + J\cdot J^{ \dfrac{\log(C)^{1/\eta}}{\log J}} \delta^{\dfrac{D_0}{\eta^2}}\\
    %%%%%%%%%%%%%%%%%%%%%
    &= O\left(\dfrac{\eta\log(C)\log J}{D_1} + \dfrac{\log(J/\delta)}{D_0} + J(C)^{1/\eta}\delta^{D_0/\eta^2} \right)
\end{align*}
where, $(a)$ follows as ${2\log\left(165 + \dfrac{\eta^2}{\eta_0^2}\right)} \geq \log\left(1 + \dfrac{\eta^2}{\eta_0^2}\right)$, and in $(b)$ we substitute $C = \left(165 + \dfrac{\eta^2}{\eta_0^2}\right)$. 
The claim of the Theorem follows.
\end{proof}

\subsection{Proof of \topllr Sample Complexity}
\label{app:thm:topllr-upper}

\subsubsection{Concentration Lemma}
\label{app:topllr-conc}
This section contains concentration lemma equivalent to the Lemma \ref{conc:lemma:1} of Appendix \ref{app:chernoff-sample-comp}.

\begin{lemma}
\label{conc:lemma:2}
Define $L_{t}(\btheta^*)$ as the sum of squared error of the hypothesis parameterized by  $\btheta^*$. Let $\tau_{\btheta^*} = \min\{t: L_{t}(\btheta') - L_{t}(\btheta^*) > \beta(J,\delta), \forall \btheta' \neq \btheta^* \}$. Then we can bound the probability of the event 
\begin{align*}
     \Pb(\tau_{\btheta^*} > t) \leq J C'_1\exp\left(-C'_2 t\right)
\end{align*}
where, $J \colonequals |\bTheta|$, $C'_1 \colonequals 110 + 55\max\left\{1, \dfrac{\eta^2}{2 D_1'^2 }\right\}$, $C'_2 \colonequals \dfrac{2 D_1'^2\min\{(c-1)^2, c\}}{\eta^2}$, $\eta >0$ defined in Definition \ref{def:bounded} and $D_1' \colonequals \min_{\btheta \neq  \btheta', \btheta' \neq \btheta^*}\sum_{i=1}^n u_{\btheta\btheta'}(i)(\mu_i(\btheta') - \mu_i(\btheta^*))^2$.
\end{lemma}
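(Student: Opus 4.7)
The plan is to mirror the structure of \Cref{conc:lemma:1}, making adjustments only where the sampling rule of \topllr departs from that of \cher. Recall that at each round $s$ \topllr samples an action uniformly from $\Is(\wtheta(s-1), \tilde{\btheta}(s-1))$, so the effective pmf used at round $s$ is $\mathbf{u}_{\wtheta(s-1)\tilde{\btheta}(s-1)}$ instead of the Chernoff proportion $\mathbf{p}_{\wtheta(s-1)}$. I would first verify that, for every $\btheta' \neq \btheta^*$ and every round $s$,
\[
\sum_{i=1}^n u_{\wtheta(s-1)\tilde{\btheta}(s-1)}(i)\,(\mu_i(\btheta') - \mu_i(\btheta^*))^2 \;\geq\; D_1',
\]
which follows by taking the minimum in the definition of $D_1'$ over all $(\btheta,\btheta')$ with $\btheta \neq \btheta'$, $\btheta' \neq \btheta^*$. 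Summing over $s=1,\dots,t$ then yields $\E[\Delta_t(\btheta')] \geq tD_1'$. Similarly, for any round $s > \tilde{\tau}_{\btheta^*}$ (where $\tilde{\tau}_{\btheta^*}$ is the last round at which $\wtheta(s) \neq \btheta^*$, as in \Cref{conc:lemma:1}), we have $\wtheta(s-1) = \btheta^*$, so the sampling pmf is $\mathbf{u}_{\btheta^*\tilde{\btheta}(s-1)}$ and the instantaneous increment of $\E[\Delta_s(\btheta')]$ is at least $D_0'$ by definition.

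Next I would introduce the events $\xi_{\btheta'\btheta^*}(t) \colonequals \{\Delta_t(\btheta') < \beta(J,\delta)\}$ and $\tilde{\xi}_{\btheta'\btheta^*}(t) \colonequals \{\Delta_t(\btheta') < \alpha(J)\}$ exactly as in \Cref{conc:lemma:1}, and partition the event $\{\tau_{\btheta^*} > t\}$ according to whether $\tilde{\tau}_{\btheta^*} < \alpha(J)/D_1' + tc/2$ or not. On the first sub-event, the growth rate of $\E[\Delta_t(\btheta')]$ is at least $(\alpha(J)/D_1')D_1' + (t - \alpha(J)/D_1' - tc/2) D_0'$ so that the deviation we need to control is negative for $t$ larger than the critical threshold $(1+c)(\alpha(J)/D_1' + \beta(J,\delta)/D_0')$. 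On the second sub-event, the argument reduces to bounding $\Pb(\tilde{\xi}_{\btheta'\btheta^*}(t'))$ uniformly for all $t' \geq \alpha(J)/D_1' + tc/2$.

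Both sub-events can then be controlled by the same bounded-difference concentration tool used in \Cref{conc:lemma:1}, namely \Cref{lemma:cond-Mcdiarmid}, which applies since $\Delta_s(\btheta')$ is a sum of bounded, conditionally zero-mean increments regardless of whether the sampling pmf at round $s$ is $\mathbf{p}_{\wtheta(s-1)}$ or $\mathbf{u}_{\wtheta(s-1)\tilde{\btheta}(s-1)}$. Substituting $D_1'$ in place of $D_1$ throughout \Cref{lemma:supp:1} and \Cref{lemma:supp:2} and applying a union bound over the $J - 1$ alternatives $\btheta' \neq \btheta^*$ yields the bound $J C_1' \exp(-C_2' t)$ with the advertised constants.

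The one step that requires care, and is the likely main obstacle, is justifying the lower bound $\E[\Delta_t(\btheta')] \geq tD_1'$ uniformly over the data-dependent trajectory of $(\wtheta(s-1), \tilde{\btheta}(s-1))$. Unlike the \cher case, where the inner minimum defining $D_1$ ranges over the solutions $\{\mathbf{p}_\btheta\}$ of a single optimization per $\btheta$, here the pmf $\mathbf{u}_{\btheta\btheta'}$ depends on a pair of hypotheses and $\Is(\btheta, \btheta')$ may contain multiple actions. I would handle this by showing that $D_1'$ is defined as the worst case over all pairs $(\btheta,\btheta')$ with $\btheta' \neq \btheta^*$, so conditioning on $\F_{s-1}$ and applying the tower property reduces the per-round expectation to exactly the form controlled by $D_1'$. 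Once this lower bound is in place, the rest of the argument is a routine transcription of the proof of \Cref{conc:lemma:1}.
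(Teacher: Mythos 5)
Your proposal follows essentially the same route as the paper's proof: it reuses the event decomposition and stopping times $\tilde{\tau}_{\btheta^*}$ from \Cref{conc:lemma:1}, lower-bounds $\E[\Delta_t(\btheta')]$ via $D_1'$ (and $D_0'$ after $\tilde{\tau}_{\btheta^*}$) by conditioning on the data-dependent pair $(\wtheta(s-1),\tilde{\btheta}(s-1))$, and then applies \Cref{lemma:cond-Mcdiarmid} together with \Cref{lemma:supp:1} and \Cref{lemma:supp:2} (with $D_1'$ in place of $D_1$) and a union bound over the $J-1$ alternatives, exactly as the paper does in \Cref{app:topllr-conc}. The step you flag as delicate — the uniform per-round lower bound over the trajectory of $\mathbf{u}_{\btheta\btheta'}$ — is handled in the paper in the same worst-case-over-pairs way, so your argument is a faithful reconstruction.
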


\begin{proof}
We define the event
\begin{align*}
    \xi_{\btheta'\btheta^*}(t) &= \{L_{t}(\btheta') - L_{t}(\btheta^*) < \beta(J,\delta)\}\\
    \tilde{\xi}_{\btheta'\btheta^*}(t) &= \{L_{t}(\btheta') - L_{t}(\btheta^*) < \alpha(J)\}
\end{align*}
Then we define the time $\tau_{\btheta^*}$ as follows:
\begin{align*}
    \tau_{\btheta^*} &= \min\{t: L_{t}(\btheta') - L_{t}(\btheta^*) > \beta(J,\delta),  \forall \btheta' \neq \btheta^* \}
\end{align*}
which is the first round $L_{t}(\btheta')$ crosses $\beta(J,\delta)$ threshold against $L_{t}(\btheta^*)$ for all $\btheta'\neq \btheta^*$. 
We also define the time $\tilde{\tau}_{\btheta^*}$ as follows:
\begin{align*}
    \tilde{\tau}_{\btheta'\btheta^*} &= \min\{t:  L_{t'}(\btheta') - L_{t'}(\btheta^*) > \alpha(J), \forall t' > t\}
\end{align*}
which is the first round when $L_{t}(\btheta^*)$ crosses $\alpha(J)$ threshold against $L_{t}(\btheta')$. Then we define the time
\begin{align*}
    \tilde{\tau}_{\btheta^*} = \max_k\{\tilde{\tau}_{\btheta^*\btheta'}\}
\end{align*}
as the last time $\tilde{\tau}_{\btheta^*\btheta'}$ happens. Define the term $D_0'$ and $D_1'$ as 
\begin{align*}
    D_0' \colonequals \min\limits_{\btheta,\btheta' \neq \btheta^*}\sum_{i=1}^n u_{\btheta^*\btheta}(i)(\mu_i(\btheta')- \mu_i(\btheta^*))^2, \qquad D_1' &\colonequals \min_{\btheta \neq  \btheta', \btheta' \neq \btheta^*}\sum_{i=1}^n u_{\btheta\btheta'}(i)(\mu_i(\btheta') - \mu_i(\btheta^*))^2
\end{align*}
% and the term $D_0'$ as follows:
% \begin{align*}
%     D_0' \colonequals \min\limits_{\btheta,\btheta' \neq \btheta^*}\sum_{i=1}^n u_{\btheta^*\btheta}(i)(\mu_i(\btheta')- \mu_i(\btheta^*))^2
% \end{align*}
Let $\Delta_{t}(\btheta')\colonequals L_{t}(\btheta') - L_{t}(\btheta^*)$ be the sum of squared errors between hypotheses parameterized by $\btheta'$ and $\btheta^\ast$. Then it follows that,
\begin{align*}
    \Pb(\tilde{\xi}_{\btheta'\btheta^*}(t)) &= \Pb(\Delta_{t}(\btheta') - tD_1' < \alpha(J) - tD_1')\\
    &\leq \Pb(\Delta_{t}(\btheta') - \E[\Delta_{t}(\btheta')] < \alpha(J) - tD_1')
\end{align*}
Similarly, we can show that,
\begin{align*}
    \Pb({\xi}_{\btheta'\btheta^*}(t)) &= \Pb(\Delta_{t}(\btheta') - \E[\Delta_{t}(\btheta')] < \beta(J,\delta) - \E[\Delta_{t}(\btheta')])\\
     &\leq \Pb(\Delta_{t}(\btheta') - \E[\Delta_{t}(\btheta')] < \beta(J,\delta) - (t-\tilde{\tau}_{\btheta^*})D_0')\\
    &= \Pb\left(\Delta_{t}(\btheta') - \E[\Delta_{t}(\btheta')] < D_0'\left(\dfrac{\beta(J,\delta)}{D_0'} - t + \tilde{\tau}_{\btheta^*})\right)\right)
\end{align*}
Then following the same approach as in Lemma \ref{conc:lemma:1} we can show that,
\begin{align*}
    \Pb(\tau_{\btheta^*} > t) &\leq \Pb(\bigcup_{\btheta'\neq \btheta^*} \xi_{\btheta'\btheta^*}(t)) \\
    &\leq \Pb\left(\left\{\bigcup_{\btheta'\neq \btheta^*}\xi_{\btheta'\btheta^*}(t)\right\}\bigcap\{\tilde{\tau}_{\btheta^*} < \dfrac{\alpha(J)}{D_1'} + tc\}\right) + \Pb\left(\left\{\xi_{\btheta'\btheta^*}(t)\right\}\bigcap\{\tilde{\tau}_{\btheta^*} \geq \dfrac{\alpha(J)}{D_1'} + tc\}\right) \\
    %%%%%%%%%%%%%%%%%%%%%%%%
    &\leq \sum_{\btheta'\neq \btheta^*}\Pb\left(\left\{\xi_{\btheta'\btheta^*}(t)\right\}\bigcap\{\tilde{\tau}_{\btheta^*} < \dfrac{\alpha(J)}{D_1'} + tc\}\right) + \sum_{\btheta'\neq \btheta^*}\sum_{t':t' \geq \frac{\alpha(J)}{D_1'} + tc }\Pb\left(\tilde{\xi}_{\btheta'\btheta^*}(t')\right)\\
    %%%%%%%%%%%%%%%%%%%%%%%
    &\leq \sum_{\btheta'\neq \btheta^*}\Pb\left(\{\xi_{\btheta'\btheta^*}\}\bigcap\{\tilde{\tau}_{\btheta^*} < \dfrac{\alpha(J)}{D_1'} + tc\}\right) + \sum_{\btheta'\neq \btheta^*}\sum_{t':t' \geq \frac{\alpha(J)}{D_1'} + tc }\Pb\left(\Delta_{t'}(\btheta') - t'D_1' < \alpha(J) - t'D_1'\right)\\
    %%%%%%%%%%%%%%%%%%%%%%%%
    &\leq \sum_{\btheta'\neq \btheta^*} \Pb\left(\Delta_{t'}(\btheta') - \E[\Delta_{t'}(\btheta')] < D_0'\left(\dfrac{\beta(J,\delta)}{D_0'} + \dfrac{\alpha(J)}{D_1'} - t + tc \right)\right)\\
    &\quad + \sum_{\btheta'\neq \btheta^*}\sum_{t':t' \geq \frac{\alpha(J)}{D_1'} + tc }\Pb\left(\Delta_{t'}(\btheta') - t'D_1' < \alpha(J) - t'D_1'\right)\\
    %%%%%%%%%%%%%%%%%%
    &\overset{(a)}{\leq} \sum_{\btheta'\neq \btheta^*}\exp\left(4\right)\exp\left(-\dfrac{2D_1'^2t(\frac{c}{2} - 1)^2 }{\eta^2}\right) + \sum_{\btheta'\neq \btheta^*}\exp\left(4\right)\dfrac{\exp\left(-\dfrac{2D_1'^2  tc}{\eta^2}\right)}{1 - \exp\left(-\dfrac{2D_1'^2 }{\eta^2}\right)} \\
    % %%%%%%%%%%%%%%%%%%%%
    % %%%%%%%%%%%%%%%%%%
    &\overset{(b)}{\leq} 55\sum_{\btheta'\neq \btheta^*}\exp\left(-\dfrac{2D_1'^2t(\frac{c}{2} - 1)^2 }{\eta^2}\right) + 55\sum_{\btheta'\neq \btheta^*}\exp\left(-\dfrac{2D_1'^2  tc}{\eta^2}\right) + 55\sum_{\btheta'\neq \btheta^*}\exp\left(-\dfrac{2D_1'^2  tc}{\eta^2}\right) \max\left\{1, \dfrac{\eta^2}{2D_1'^2 }\right\}\\
    %%%%%%%%%%%%%%%%%%%%
    & \overset{(c)}{\leq} J C'_1\exp\left(-C'_2 t\right)
\end{align*}
where, $(a)$ follows from Lemma \ref{lemma:supp:1} and Lemma  \ref{lemma:supp:2} as their result holds for any $D_1', D_0' > 0$, $(b)$ follows from the same steps as in Lemma \ref{conc:lemma:1} as $D_1' \leq D_0'$, $D_1' > 0$, $D_0' > 0$, and in $(c)$ we substitute $C'_1 \colonequals 110 + 55\max\left\{1, \dfrac{\eta^2}{2 D_1^2 }\right\}$, $C'_2 \colonequals \dfrac{ 2D_1^2\min\{(\frac{c}{2}-1)^2, c\}}{\eta^2}$, and $J\colonequals |\bTheta|$.
\end{proof}

\subsubsection{Proof of \topllr Sample Complexity (\Cref{thm:topllr-upper})}
\label{app:topllr-upper}

\begin{customproposition}{1}\textbf{(Restatement)}
%\label{thm:topllr-upper}
Let $\tau_\delta$ denote the stopping time of \topllr stops sampling following the sampling strategy of \eqref{eq:max-deriv-finite0}. 
Consider the set $\Is(\btheta, \btheta') \subset [n]$ of actions that could be sampled following \eqref{eq:max-deriv-finite0} when $\widehat{\btheta}(t) = \btheta$ and $\tilde{\btheta}(t) = \btheta'$, and let $\mathbf{u}_{\btheta\btheta'}$ denote a uniform pmf supported on $\Is(\btheta, \btheta')$. 
Define 
\begin{align*}
D_0' \colonequals \min\limits_{\btheta,\btheta' \neq \btheta^*}\sum_{i=1}^n u_{\btheta^*\btheta}(i)(\mu_i(\btheta')- \mu_i(\btheta^*))^2, \quad %\text{ and} \label{eq:D-topllr1}
\\
D_1' \colonequals \min_{\btheta \neq  \btheta', \btheta' \neq \btheta^*}\sum_{i=1}^n u_{\btheta\btheta'}(i)(\mu_i(\btheta') - \mu_i(\btheta^*))^2, %\label{eq:I-topllr1}
\end{align*}
where we assume that $D_1' > 0$. Then for a constant $C >0$ 
the sample complexity of \topllr has the following upper bound:
\begin{align*}
    \E[\tau_\delta] \leq O\left(\frac{\eta\log(C)\log J}{D_1'} + \frac{\log(J/\delta)}{D_0'} + JC^{1/\eta}\delta^{D_0'/\eta^2}\right).
\end{align*}
\end{customproposition}

\begin{proof}
\textbf{Step 1 (Definitions):} Let the action  $i_{\btheta\btheta'} \colonequals \argmax\limits_{i\in[n]}(\mu_{i}(\btheta) - \mu_{i}(\btheta'))^2$. Let $\widehat{\btheta}(t)$ %$\mlh_1(s)$ 
denote the most likely hypothesis at round $s$ and $\tilde{\btheta}(t)$ %$\mlh_2(s)$ 
be the second most likely hypothesis at round $s$. Note that \topllr only samples the action $i_{\btheta\btheta'}$ at round $t$ when $\widehat{\btheta}(t) = \btheta$ and $\tilde{\btheta}(t) = \btheta'$. Again, let $L_{t}(\btheta)$ denote the total sum of squared errors of hypothesis $\btheta$ till round $t$. 
% and $L_{Y^t(i)}(\btheta)$ denote the log likelihood of hypothesis $\btheta$ only from the observations from action $i$. 
% Hence, we have
% \begin{align*}
%     L_{t}(\btheta) - L_{t}(\btheta^*) = \sum_{i\in[n]}\sum_{s:I_s = i}[L_{Y^t(i)}(\btheta) - L_{Y^t(i)}(\btheta^*)].
% \end{align*}
We further define the set $\Is \colonequals \{i\in[n]: i=\argmax_{i'\in[n]}(\mu_{i'}(\btheta) - \mu_{i'}(\btheta'))^2 \text{ for some } \btheta,\btheta'\in \bTheta\}$.

\textbf{Step 2 (Define stopping time $\tau_\delta$):} We define the time $\tau_\delta$ for the policy \topllr as follows:
\begin{align}
    \tau_\delta \colonequals \min\{t: \exists \btheta\in\bTheta, L_{t}(\btheta') - L_{t}(\btheta) > \beta(J,\delta), \forall \btheta'\neq \btheta\} \label{eq:stop-time-topllr1}
\end{align}
where, $\beta(J,\delta)$ is the threshold function.
% defined in \eqref{eq:threshold-func}.

\textbf{Step 3 (Define bad event):} We define the bad event $\xi^\delta(\btheta)$ for the sub-optimal hypothesis $\btheta$ as follows:
\begin{align}
    \xi^\delta(\btheta) \colonequals \{L_{\tau_\delta}(\btheta') - L_{\tau_\delta}(\btheta) > \beta(J,\delta), \forall \btheta' \neq \btheta\}.
    \label{eq:bad-event-topllr1}
\end{align}
The event $\xi^\delta(\btheta)$ denotes that a sub-optimal hypothesis $\btheta$ has been declared optimal at time $\btheta$ and its sum of squared error is smaller than any other hypothesis $\btheta'\neq \btheta$ at $\tau_\delta$.

\textbf{Step 4 (Decomposition of bad event):} Decomposing the bad event follows the same approach in Theorem \ref{thm:chernoff-upper}. A crucial thing to note is that the stopping time $\tau_\delta$ only depends on the threshold function $\beta(J,\delta)$ and not on the sampling rule. Again we can decompose the bad event to show that only comparing $\btheta$ against $\btheta^*$ is enough to guarantee a $\delta$-PAC policy. Finally following \eqref{eq:cher-event-decompose} we can decompose the bad event $\xi^\delta(\btheta)$ as follows:
\begin{align}
    \xi^{\delta}(\btheta) 
    \subseteq & \{\widehat{\btheta}(\tau_\delta) = \btheta, L_{\tau_\delta}(\btheta^*) - L_{\tau_\delta}(\btheta) > \beta(J,\delta)\}
    \label{eq:topllr-event-decompose} 
\end{align}
such that we compare the sub-optimal hypothesis $\btheta$ only with optimal hypothesis $\btheta^*$.
%where, $(a)$ follows by decomposing the event in two parts containing $\btheta_{j} \in [J]\setminus\{\btheta^*\}$ and $\{\btheta^*\}$, $(b)$ follows by noting that the intersection of events holds by taking into account only the event in part B.

\textbf{Step 5 (Control bad event):} The control of the bad event follows the same approach in Theorem \ref{thm:chernoff-upper}. We want to show that based on the definition of $\tau_\delta$ and the bad event $\xi^{\delta}(\btheta)$ the \topllr stops and outputs the correct hypothesis $\btheta^*$ with $1-\delta$ probability. As shown in Step $4$, we can define the error event $\xi^{\delta}(\btheta)$ as follows:
\begin{align*}
    \xi^{\delta}(\btheta) \subseteq & \{\widehat{\btheta}(\tau_\delta) = \btheta, L_{\tau_\delta}(\btheta^*) - L_{\tau_\delta}(\btheta) > \beta(J,\delta)\}
\end{align*}
Again define $\tau_{\btheta^*\btheta} \colonequals \min\{t: -\Delta_{t}(\btheta) > \beta(J,\delta)\}$. Then following the same steps as in Step 5 of Theorem \ref{thm:chernoff-upper} we can show that 
\begin{align*} 
\Pb\left(\tau_{\delta}<\infty, \widehat{\btheta}(\tau_{\delta}) \neq \btheta^*\right) & \leq \Pb\left(\exists \btheta \in \bTheta \setminus \{\btheta^*\}, \exists t \in \mathbb{N}: -\Delta_{t}(\btheta)>\beta(J, \delta)\right) \\
%%%%%%%%%%%%%%%%%%%%%%%
&\leq \sum_{\btheta \neq \btheta^*} \Pb\left(L_{\tau_{\btheta^*\btheta}}(\btheta^*) - L_{\tau_{\btheta^*\btheta}}(\btheta) > \beta(J,\delta), \tau_{\btheta^*\btheta}< \infty\right) 
\overset{(a)}{\leq} \sum_{\btheta \neq \btheta^*} \dfrac{\delta}{J} \leq \delta
\end{align*}
where $(a)$ follows follows from Lemma \ref{lemma:stop-time-general} and the definition of $\beta(J,\delta)$.

\textbf{Step 6 (Sample complexity analysis):} In this step we bound the total sample complexity of \topllr satisfying the $\delta$-PAC criteria. Recall that the set $\Is \colonequals \{i\in[n]: i=\argmax_{i'\in[n]}(\mu_{i'}(\btheta) - \mu_{i'}(\btheta'))^2 \text{ for some } \btheta,\btheta'\in \bTheta\}$. Note that \topllr does not sample by the Chernoff p.m.f. $\mathbf{p}_{\btheta}$. Rather it samples by the p.m.f.
\begin{align}
    u_{\btheta\btheta'} \colonequals \dfrac{1}{|\Is(\btheta\btheta')|} \label{eq:topllr-prop}
\end{align}
where, $\Is(\btheta\btheta') \colonequals \{i\in \Is: i=\argmax_{i'\in[n]}(\mu_{i'}(\btheta) - \mu_{i'}(\btheta'))^2 \text{ for } \btheta,\btheta'\in \bTheta\}$. Hence $u_{\btheta\btheta'}$ is a uniform random p.m.f between all the maximum mean squared difference actions between hypotheses $\btheta$ and $\btheta'$ which are $\widehat{\btheta}(t)$ and $\tilde{\btheta}(t)$ respectively for some rounds $s\in[\tau_\delta]$. The rest of the analysis follows the same steps as in Step 6 of Theorem \ref{thm:chernoff-upper} as the proof does not rely on any specific type of sampling proportion. We define the stopping time $\tau_\delta$ as follows:
\begin{align*}
    \tau_\delta =\min \left\{t: L_{t}(\btheta') - L_{t}(\widehat{\btheta}(t)) \geq \beta(J,\delta), \forall \btheta' \neq \widehat{\btheta}(t)\right\}.
\end{align*}
We further define the time $\tau_{\btheta^*}$ for the hypothesis $\btheta^*$ as follows:
\begin{align}
    \tau_{\btheta^*}:=\min \left\{t: L_{t}(\btheta')-L_{t}(\btheta^*) \geq \beta, \forall \btheta' \neq \btheta^{*}\right\}
\end{align}
We also define the critical number of samples as $(1+c)M'$ where $M'$ is defined as follows:
\begin{align}
    M \colonequals \left(\dfrac{\alpha(J)}{D'_1} + \dfrac{C' + \log(J/\delta)}{D'_0} \right) \label{eq:cric-pulls1}
\end{align}
where, $C' = \log(1 + \nicefrac{\eta^2}{\eta_0^2})$, $c >0$ is a constant, and we define the term $D_1'$ as follows:
\begin{align*}
    D_1' &\colonequals \min_{\btheta \neq  \btheta', \btheta' \neq \btheta^*}\sum_{i=1}^n u_{\btheta\btheta'}(i)(\mu_i(\btheta') - \mu_i(\btheta^*))^2
\end{align*}
and the term $D_0'$ as follows:
\begin{align*}
    D_0' \colonequals \min\limits_{\btheta,\btheta' \neq \btheta^*}\sum_{i=1}^n u_{\btheta^*\btheta}(i)(\mu_i(\btheta')- \mu_i(\btheta^*))^2.
\end{align*}
It then follows that
\begin{align}
    \E[\tau_\delta] &\leq \E[\tau_{\btheta^{*}}] = \sum_{t=0}^{\infty}\Pb(\tau_{\btheta^*} > t) 
    \overset{(a)}{\leq} 1 + (1 + c)M + \sum_{t: t > (1+c)M}\Pb(\tau_{\btheta^*} > t)\nonumber\\
    %%%%%%%%%%%%%%%%%%%%%%%%%
    &\overset{(b)}{\leq} 1 + (1 + c)\left(\dfrac{b\log J}{D_1'} + \dfrac{C' + \log(J/\delta)}{D_0'} \right) + \sum_{\btheta'\neq \btheta^*}\sum_{t: t> \left(\dfrac{\alpha(J)}{D_1'} + \dfrac{4\log(J/\delta)}{D_0'}\right)(1+c)} C_1\exp(- C_2 t)\nonumber\\
    %%%%%%%%%%%%%%%%%%%%%%%%%
    & \leq 1 + (1 + c)\left(\dfrac{b\log J}{D_1'} + \dfrac{C' + \log(J/\delta)}{D_0'} \right) + \sum_{\btheta'\neq \btheta^*}  C_1 \dfrac{\exp\left(-C_2\left(\dfrac{\alpha(J)}{D_1'} + \dfrac{4\log(J/\delta)}{D_0'} \right)\right)}{1 -\exp\left(-C_2\right)}\nonumber\\
    %%%%%%%%%%%%%%%%%%%%%%%%%%%%%%%
    %%%%%%%%%%%%%%%%%%%%%%%%%%%%
    & \overset{(c)}{\leq} 1 + \underbrace{2\left(\dfrac{b\log J}{D'_1} + \dfrac{C' + \log(J/\delta)}{D'_0} \right)}_{\textbf{Term A}} + \underbrace{\left(165
    + \dfrac{2\eta^2}{\eta_0^2}\right)^2 J^{\big(1- \dfrac{\eta_0^2(b + 4)}{2 \eta^3}\big)}}_{\textbf{Term B}}\times\underbrace{ \bigg(\delta^{\dfrac{D'_0}{\eta^2}}\bigg)}_{\textbf{Term C}} \label{eq:final-sample-cher2}
\end{align}
where, $(a)$ follows from definition of $M$ in \eqref{eq:cric-pulls1}, $(b)$ follows from Lemma \ref{conc:lemma:2} where $C_1 \colonequals 110 + 55\max\left\{1, \dfrac{\eta^2}{2 D_1^2 }\right\}$, $C_2 \colonequals \dfrac{2 D_1^2\min\{(\frac{c}{2}-1)^2, c\}}{\eta^2}$, and $(c)$ follows the same steps in Theorem \ref{thm:chernoff-upper} by setting $c=\frac{1}{2}$ in $C_1$ and $C_2$.

Again, note that in \eqref{eq:final-sample-cher2} the Term C $\leq 1$ as $\delta \in (0,1)$. So for a constant $b > 1$ such that if $b$ satisfies the following condition
\begin{align}
    b = \eta\log\left(165 + \dfrac{\eta^2}{\eta_0^2}\right)
    >  \log\left(1 + \dfrac{\eta^2}{\eta_0^2}\right)
    \label{eq:const-C-eta1}
\end{align}
we have that Term B $\leq J^{1-\dfrac{\eta_0^2(b + 4)}{2\eta^3} + \dfrac{b}{\eta\log J}}$. Hence, Plugging this in \eqref{eq:final-sample-cher2} we get that the expected sample complexity is of the order of
\begin{align*}
    \E[\tau_\delta] \leq O\left(\dfrac{\eta\log(C)\log J}{D_1'} + \dfrac{\log(J/\delta)}{D_0'} + JC^{1/\eta}\delta^{D_0'/\eta^2}\right).
\end{align*}
where, $C = 165 + \dfrac{\eta^2}{\eta_0^2}$. The claim of the theorem follows.
\end{proof}

\subsection{Proof of \Cref{prop:batch-cher} (Batched Setting)}
\label{app:prop:batch-cher}

\begin{customproposition}{2}\textbf{(Restatement)}
Let $\tau_\delta$, $D_0$, $D_1$ be defined as in \Cref{thm:chernoff-upper} and $B$ be the batch size. Then the sample complexity of $\delta$-PAC \bcher is
\begin{align*}
    \E[\tau_\delta] \!\leq\! O\left(B \!+\! \frac{\eta\log( C)\log J}{D_1} \!+\! \frac{\log(J/\delta)}{D_0} \!+\! BJC^{\frac{1}{\eta}}\delta^{\frac{D_0}{\eta^2}}\right).
\end{align*}
\end{customproposition}

\begin{proof}
We follow the same proof technique as in Theorem \ref{thm:chernoff-upper}. We define the last phase after which the algorithm stops as $m_{\delta}$ defined as follows:
\begin{align*}
    m_{\delta} = \min\{m: L_{mB}(\btheta') - L_{mB}(\widehat{\btheta}(t)) > \beta(J,\delta), \forall \btheta'\neq \widehat{\btheta}(t)\}.
\end{align*}
We further define the phase $m_{\btheta^*}$ as follows:
\begin{align*}
    m_{\btheta^*} = \min\{m: L_{mB}(\btheta') - L_{mB}(\widehat{\btheta}(t)) > \beta(J,\delta), \forall \btheta'\neq \btheta^*\}.
\end{align*}
Then we can show that the expected last phase $m_{\delta}$ is bounded as follows:
\begin{align}
    \E[m_{\delta}] \leq \E[m_{\btheta^*}] = \sum_{m=1}^{\infty}\Pb(m_{\btheta^*} > m) \leq 1 + \underbrace{(1+c)M_1}_{\textbf{Part A}} + \sum_{m': m' > (1 + c)M_1}\underbrace{\Pb(m_{\btheta^*} > m')}_{\textbf{Part B}} \label{eq:prop-batch-main-1}
\end{align}
where, in Part A we define the critical number of phases
\begin{align*}
    M_1 = \dfrac{\alpha(J)}{B D_1} +  \dfrac{\beta(J,\delta)}{B D_0}.
\end{align*}
and $D_0$, $D_1$ as defined in Theorem \ref{thm:chernoff-upper}. 
Note that this definition of $M_1$ is different that the critical number of samples defined in Theorem \ref{thm:chernoff-upper}. Now we control the Part B. As like \Cref{conc:lemma:1} we define the following bad events
\begin{align*}
    \xi_{\btheta'\btheta^*}(m) &= \{L_{mB}(\btheta') - L_{mB}(\btheta^*) < \beta(J,\delta)\}\\
    %%%%%%%%%%%%%%%%%%
    \tilde{\xi}_{\btheta'\btheta^*}(m) &= \{L_{mB}(\btheta') - L_{mB}(\btheta^*) < \alpha(J)\}
\end{align*}
We further define the last good phase $\tilde{m}_{\btheta'\btheta^*}$ as follows:
\begin{align*}
    \tilde{m}_{\btheta'\btheta^*}& = \min\{m: L_{m'B}(\btheta') - L_{m'B}(\btheta^*) > \alpha(J), \forall m' > m\}\\
    \text{and } \tilde{m}_{\btheta^*} &= \max_{\btheta'\neq \btheta^*}\{\tilde{m}_{\btheta'\btheta^*}\}
\end{align*}
denote the last phase after which in all subsequent phases we have  $\btheta^*$ is $\widehat{\btheta}(t)$. We further define $\tilde{m}_{\btheta^*}$ as 
\begin{align*}
    \tilde{m}_{\btheta^*} = \dfrac{\alpha(J)}{B D_1} + \dfrac{mc}{2}.
\end{align*}
Note that this definition of $\tilde{m}_{\btheta^*}$ is different that $\tilde{\tau}_{\btheta^*}$ in \Cref{conc:lemma:1}. Using Lemma \ref{lemma:supp:1} we can further show the event $\xi_{\btheta'\btheta^*}$ is bounded as follows:
\begin{align}
    \Pb(\xi_{\btheta'\btheta^*}(m)) &= \Pb\left(\underbrace{L_{mB}(\btheta') - L_{mB}(\btheta^*)}_{\colonequals \Delta_{mB}(\btheta',\btheta^*)} < \beta(J,\delta)\right)\nonumber\\
    %%%%%%%%%%%%%%%%%
    &\leq 
    \Pb\left(\Delta_{mB}(\btheta',\btheta^*) - \E[\Delta_{mB}(\btheta',\btheta^*)] < D_0\left(\dfrac{\beta(J,\delta)}{D_0} - B(m - \tilde{m}_{\btheta^*})\right)\right)\nonumber\\
    %%%%%%%%%%%%%%%
    &\overset{(a)}{\leq} \exp\left(4B\right)\sum_{\btheta'\neq \btheta^*}\exp\left(-\dfrac{2D_1^2 Bm \left(\frac{c}{2}-1\right)^2}{\eta^2}\right)\label{eq:prop-batch-event1}
\end{align}
where, $(a)$ follows usinf the same steps as in \Cref{lemma:supp:1}. Similarly we can show that,
\begin{align}
    &\Pb\left(\tilde{\xi}_{\btheta'\btheta^*}(m)\right) = \sum_{\btheta\neq\btheta'}\sum_{m':m' > \dfrac{\alpha(J)}{B D_1} + \dfrac{mc}{2}}\Pb\left(\Delta_{m'B} - \E[\Delta_{m'B}] < \alpha(J) - \E[\Delta_{m'B}] \right)\nonumber\\
    %%%%%%%%%%%%%%%%%%%
    &\overset{(a)}{\leq} \sum_{\btheta\neq\btheta'}\sum_{m':m' > \dfrac{\alpha(J)}{B D_1} + \dfrac{mc}{2}}\Pb\left(\Delta_{m'B} - \E[\Delta_{m'B}] < \alpha(J) -m'B D_1 \right) \overset{(b)}{\leq} \exp\left(4\right)\sum_{\btheta'\neq \btheta^*} \dfrac{\exp\left(-\dfrac{2 D_1^2 mcB}{2}\right)}{1 - \exp\left(-\dfrac{2 D_1^2 cB}{2}\right)} \label{eq:prop-batch-event2}
\end{align}
where, $(a)$ follows as $\E[\Delta_{m'B}] \geq m'B D_1$ for all $m' > \frac{\alpha(J)}{B D_1} + \frac{mc}{2}$, and $(b)$ follows using the same steps as in \Cref{lemma:supp:2}. 

Finally using \cref{eq:prop-batch-event1} and \cref{eq:prop-batch-event2} we can show that the Part B is bounded as follows:
\begin{align*}
    \Pb(m_{\btheta^*} > m) &\leq \Pb\left(\left\{\bigcup_{\btheta'\ne \btheta^*}\xi_{\btheta'\btheta^*}(m)\right\}\bigcap\left\{\tilde{m}_{\btheta^*}< \dfrac{\alpha}{B D_1} + \dfrac{mc}{2}\right\}\right) + \sum_{\btheta\neq\btheta'}\sum_{m':m' > \dfrac{\alpha(J)}{B D_1} + \dfrac{mc}{2}}\Pb\left(\tilde{\xi}_{\btheta'\btheta^*}(m')\right)\\
    %%%%%%%%%%%%%%%%%%%%%%%%%%%
    &\overset{(a)}{\leq} \sum_{\btheta'\neq \btheta^*}\underbrace{\left[110 + 55\max\{1,\frac{\eta^2}{2 D_1^2 B}\}\right]}_{\colonequals C_1}\exp\left(-\underbrace{\dfrac{2 D_1^2 B \min\{(\nicefrac{c}{2}-1)^2, c\}}{\eta^2}}_{\colonequals C_2}m\right) \leq JC_1\exp\left(-C_2 m\right)
\end{align*}
where, $(a)$ follows from the same steps as in \Cref{conc:lemma:1}, and using \cref{eq:prop-batch-event1} and \cref{eq:prop-batch-event2}. Plugging this back in \cref{eq:prop-batch-main-1} we get that
\begin{align*}
    \E[m_\delta] &\leq 1 + (1+c)\left(\dfrac{\alpha(J)}{B D_1} +  \dfrac{\beta(J,\delta)}{B D_0}\right) + J\sum_{m : m >  \dfrac{\alpha(J)}{B D_1} +  \dfrac{\beta(J,\delta)}{B D_0}}C_1\exp\left(-C_2 m\right)\\
    %%%%%%%%%%%%%%%%%%
    &\overset{(b)}{\leq} 1 + 2\left(\dfrac{\eta\log\left(165 + \nicefrac{\eta^2}{B\eta_0^2}\right)\log J}{B D_1} + \dfrac{\log\left(1 + \nicefrac{\eta^2}{\eta_0^2 B}\right) + \log\left(\dfrac{J}{\delta}\right)}{B D_0}\right) + J^{1 + \dfrac{\log\left(165 + \nicefrac{\eta^2}{B\eta_0^2}\right)}{\log J}}\delta^{D_0/\eta^2}\\
    %%%%%%%%%%%%%%%%%%%
    &\overset{(b)}{\leq} O\left(1 + \dfrac{ \eta \log (C)\log J}{B D_1} + \dfrac{\log (J/\delta)}{B D_0} + J C^{1/\delta}\delta^{D_0/\eta^2}\right)
\end{align*}
where, $(a)$ follows using the same steps as in Step 6 of \Cref{thm:chernoff-upper}, and in $(b)$ we substitute $C \colonequals \left(165 + \nicefrac{\eta^2}{\eta_0^2}\right) > \left(165 + \nicefrac{\eta^2}{B\eta_0^2}\right)$. Finally, the expected total number of samples is given by
\begin{align*}
    \E[\tau_{\delta}] \leq B\E[ m_{\delta}] = O\left(B +  \dfrac{\eta \log (C)\log J}{ D_1} + \dfrac{\log (J/\delta)}{ D_0} + B J C^{1/\delta}\delta^{D_0/\eta^2}\right).
\end{align*}
The claim of the proposition follows.
\end{proof}

\subsection{Sample Complexity Proof of \rcher}
\label{app:rcher-upper-bound}

\begin{customproposition}{3}\textbf{(Restatement)}
Let $\tau_\delta$, $D_0$, $C$ be defined as in \Cref{thm:chernoff-upper}, $D_e$ be defined as above, and $\epsilon_t \colonequals 1/\sqrt{t}$. Then the sample complexity bound of $\delta$-PAC \rcher with $\epsilon_t$ exploration is given by 
\begin{align*}
    E[\tau_\delta] \!\leq\! O\left(\frac{\eta\log(C)\log J}{D^{}_e} + \frac{\log(J/\delta)}{D_0} + JC^{1/\eta}\delta^{D_0/\eta^2} \right).
\end{align*}
\end{customproposition}

\begin{proof}
Recall that 
$$
D_1 \colonequals \min_{\{\mathbf{p}_{\btheta} : \btheta \in \bTheta\}} \min_{\btheta' \neq \btheta^\ast} \sum_{i=1}^n p_{\btheta}(i)(\mu_i(\btheta') - \mu_i(\btheta^*))^2.
$$
%Let $f(\btheta)\colonequals \min_{\btheta' \neq \btheta^\ast} \sum_{i=1}^n p_{\btheta}(i)(\mu_i(\btheta') - \mu_i(\btheta^*))^2$. 
Define $D_e \colonequals  \min_{\btheta' \neq \btheta^\ast} \sum_{i=1}^n \frac{1}{n}(\mu_i(\btheta') - \mu_i(\btheta^*))^2$ as the objective value of uniform sampling optimization. 
%Since $f(\btheta^*) = D_0$ we have that $D_e > 0$. 
Finally define the quantity at round $s$ as
\begin{align}
    D^{\epsilon_s}_1 \colonequals (1 - \epsilon_s)D_1 + \epsilon_s D_e \label{eq:d-alpha-e}
\end{align}
Let $\Delta_{t}(\btheta') \colonequals L_{t}(\btheta') - L_{t}(\btheta^*)$. Note that following Assumption 1 we can show that
%\quad \Delta_{s}(\btheta') - D^{\epsilon}_0 \leq 4\eta,
\begin{align*}
    \Delta_{s}(\btheta') - D_1^{\epsilon_s} \overset{(a)}{\leq} 4\eta, \quad \Delta_{s}(\btheta') - D_0 \leq 4\eta
\end{align*}
where, $(a)$ follows as $D_1 \leq \eta$ and $D_e \leq \eta$ which implies $(1-\epsilon) D_1 + \epsilon D_e \leq \eta$ as $\epsilon \in (0,1)$. Now define the quantity
\begin{align*}
    \alpha(J) &\colonequals \dfrac{C' \log J}{D^{}_e}, \\
    %%%%%%%%%%%%%%%%%%%
    \beta(J, \delta) &\colonequals \dfrac{C' + \log (J/\delta)}{D_0}
\end{align*}
where, the constant $C' \colonequals \log\left(1 + \nicefrac{\eta^2}{\eta_0^2}\right)$.
Now define the failure events
\begin{align*}
    \xi_{\btheta'\btheta^*}(t) &\colonequals \{L_{t}(\btheta') - L_{t}(\btheta^*) < \beta(J,\delta)\},\\
    %%%%%%%%%%%%%%%%%%%%%%%%
    \tilde{\xi}_{\btheta'\btheta^*}(t) &\colonequals \{L_{t}(\btheta') - L_{t}(\btheta^*) < 2\alpha(J)\}
    %%%%%%%%%%%%%%%%%%%%%%%
\end{align*}
where, 
%for a constant $C' \colonequals \log\left(1 + \nicefrac{\eta^2}{\eta_0^2}\right)$ we define 
$\alpha(J) \colonequals \dfrac{C'\log J}{D_e}$, and $\beta(J, \delta) = \dfrac{C' + \log (J/\delta)}{D^{}_0}$.
Then we define the time $\tau_{\btheta^*}. \tilde{\tau}_{\btheta'\btheta^*}$ and $ \tilde{\tau}_{\btheta^*}$ as follows: 
\begin{align*}
    \tau_{\btheta^*} &\colonequals \min\{t: L_{t}(\btheta') - L_{t}(\btheta^*) > \beta(J,\delta),  \forall \btheta' \neq \btheta^* \}\\
    %%%%%%%%%%%%%%%%%%%%
    \tilde{\tau}_{\btheta'\btheta^*} &\overset{}{\colonequals} \min\{t: L_{t'}(\btheta') - L_{t'}(\btheta^*) > 2\alpha(J), \forall t' > t\}\\
    %%%%%%%%%%%%%%%%%%%
    \tilde{\tau}_{\btheta^*} &\colonequals \max_{\btheta' \neq \btheta^*}\{\tilde{\tau}_{\btheta'\btheta^*}\}.
\end{align*}

Then we have that Assumption 2 is no longer required which can be shown as follows
\begin{align*}
    \E_{I^t,Y^t}[\Delta_{t}(\btheta')] = \E_{I^t,Y^t}[L_{t}(\btheta') - L_{t}(\btheta^*)] &= \E_{I^t,Y^t}\left[\sum_{s=1}^t(Y_s - \mu_{I_s}(\btheta^*))^2 - \sum_{s=1}^t(Y_s - \mu_{I_s}(\btheta))^2\right]\\
    & = \sum_{s=1}^t\E_{I_s}\E_{Y_s|I_s}\left[\left(\mu_{I_s}(\btheta^*) - \mu_{I_s}(\btheta)\right)^2|I_s\right]\\
    &= \sum_{s=1}^t \sum_{i=1}^n\Pb(I_s = i)\left(\mu_{i}(\btheta^*) - \mu_{i}(\btheta)\right)^2
\end{align*}
In the case when $D_1 \geq 0$, we can show that
\begin{align*}
    %%%%%%%%%%%%
    \E_{I^t,Y^t}[\Delta_{t}(\btheta')]&\overset{(a)}{\geq} \sum_{s=1}^t\left((1-\epsilon_s) D_1 + \epsilon_s D_e\right) \\
    %%%%%%%%%%%%%%%%
    & = \sum_{s=1}^{\tilde{\tau}_{\btheta^*}}(1-\epsilon_s)D_1 + \sum_{s=1}^{\tilde{\tau}_{\btheta^*}}\epsilon_s D_e + \sum_{s= \tilde{\tau}_{\btheta^*} + 1}^{t}(1-\epsilon_s)D_0 + \sum_{s= \tilde{\tau}_{\btheta^*} + 1}^{t}\epsilon_s D_e \\
    %%%%%%%%%%%%%%%%%
    & \overset{(b)}{\geq} \sum_{s=1}^{\tilde{\tau}_{\btheta^*}}\epsilon_s D_e + \sum_{s= \tilde{\tau}_{\btheta^*} + 1}^{t}D_e -  \sum_{s= \tilde{\tau}_{\btheta^*} + 1}^{t}\epsilon_sD_0 + \sum_{s= \tilde{\tau}_{\btheta^*} + 1}^{t}\epsilon_s D_e \\
    %%%%%%%%%%%%%%%%%
    & \overset{(c)}{\geq} 2\sqrt{\tilde{\tau}_{\btheta^*}}D_e - 2D_e + (t - \tilde{\tau}_{\btheta^*} - 1)D_e -kD_0  \\
    %%%%%%%%%%%%%%%%%%%%
    & \overset{}{=} (t - 1)D_e + 2\sqrt{\tilde{\tau}_{\btheta^*}}D_e - \tilde{\tau}_{\btheta^*}D_e - (kD_0 + 2D_e)\\
    %%%%%%%%%%%%%%%%%%%%
    % & \overset{}{=} (t - 1)D_0 + 2\sqrt{\tilde{\tau}_{\btheta^*}}D_e - \tilde{\tau}_{\btheta^*}D_0 - 2(D_0 + D_e)\\
    %%%%%%%%%%%%
    &\overset{(d)}{\geq} (t - 1)D_e - \left(\dfrac{\alpha(J)}{D_e} + \dfrac{tc}{2}\right)D_e \\
    %%%%%%%%%%%%%%
    & = (t - 1)D_e - \alpha(J) - \frac{tc}{2}D_e\\
    %%%%%%%%%%%%%%
    & = \left(t - 1 - \frac{tc}{2}\right)D_e - \alpha(J)=  tD_e \left(1 - \frac{1}{t} - \frac{c}{2}\right) - \alpha(J)\\
    %%%%%%%%%%%%%%
    &\overset{(e)}{\geq}  tD_e\underbrace{\left(\frac{1}{2} - \frac{c}{2}\right)}_{c_1} - \alpha(J)\\
    %%%%%%%%%%%%%%%%%
    & =  c_1tD_e- \alpha(J)
\end{align*}
where, $(a)$ follows due to the forced exploration definition, $(b)$ follows by dropping $D_1$, $(c)$ follows $\epsilon_s > \frac{1}{\sqrt{s}}$ and $\int_{1}^t \frac{1}{\sqrt{s}}ds = 2\sqrt{s} - 2$ and $D_0 \geq D_e$, and $(d)$ follows by definition of $\tilde{\tau}_{\btheta^*} = \frac{\alpha(J)}{D_e}  + \frac{tc}{2}$ and trivially assuming that $2\sqrt{\tilde{\tau}_{\btheta^*}}D_e - (kD_0+2D_e) > 0$ for large enough $\tilde{\tau}_{\btheta^*}$, and $(e)$ follows as $t\geq 2$. 
% Hence, combining the result above we cab show that,
% \begin{align*}
%     \E_{I^t,Y^t}[\Delta_{t}(\btheta')] &\geq \min\left\{(t - 2)(|D_e - D_1|), (t - 2)(|D_0 - D_e|)\right\} - (\tilde{\tau}_{\btheta^*} + 1)D_0\\
%     %%%%%%%%%%%%
%     &\colonequals (t-2)D^e_1 - (\tilde{\tau}_{\btheta^*} + 1)D_0
% \end{align*}
% where, $(a)$ follows due to the forced exploration definition, $(b)$ follows as $\int_{1}^\infty \frac{1}{s^{2}}ds < \sum_{s=1}^t\frac{1}{s^{1.5}} < \int_{1}^\infty \frac{1}{s^{1.5}}ds$, and $(c)$ follows $D_e - D_1 \leq D_1$ for $D_1 > 0$. 
% where, $D^e_1 = \min^+\{D_1, D_0\}$.
% Also, for all $s \geq 3$ we can show that 
% \begin{align*}
%     D^{\epsilon_s}_1 \colonequals (1 - \epsilon_s)D_1 + \epsilon_s D_e = D_1 + \epsilon_s(D_e - D_1) \overset{(a)}{<} D_1 + D_e
% \end{align*}
% where, $(a)$ follows as $D_1 < D_e$. 
Next we can show that for $t\geq \tilde{\tau}_{\btheta^*}$
\begin{align*}
    \E[\Delta_{t}(\btheta')] \overset{(a)}{=} \sum_{s = \tilde{\tau}_{\btheta^*}}^t(1-\epsilon_s)D_0 + \sum_{s = \tilde{\tau}_{\btheta^*}}^t\epsilon_s D_e  &\overset{(b)}{\geq} \sum_{s = \tilde{\tau}_{\btheta^*}}^t(1-\epsilon_s)D_0 + \sum_{s = \tilde{\tau}_{\btheta^*}}^t\epsilon_s \frac{D_0}{n}\\
    %%%%%%%%%%%%%%
    &= (t - \tilde{\tau}_{\btheta^*}) D_0 - \sum_{s = \tilde{\tau}_{\btheta^*}}^t\epsilon_s D_0 + \sum_{s = \tilde{\tau}_{\btheta^*}}^t\epsilon_s \frac{D_0}{n}\\
    %%%%%%%%%%%%%%
    &= (t - \tilde{\tau}_{\btheta^*}) D_0 -  D_0\frac{n-1}{n}\sum_{s = \tilde{\tau}_{\btheta^*}}^t\epsilon_s \\
    %%%%%%%%%%%%%
    &\geq (t - \tilde{\tau}_{\btheta^*})D_0 - kD_0\\
    %\int_{\tilde{\tau}_{\btheta^*}}^{t}\frac{1}{\sqrt{s}}ds\\
    %%%%%%%%%%%%%
    %&= (t - \tilde{\tau}_{\btheta^*})D_0 - D_0\int_{1}^{t}\frac{1}{\sqrt{s}}ds + D_0\int_{1}^{\tilde{\tau}_{\btheta^*}}\frac{1}{\sqrt{s}}ds\\
    %%%%%%%%%%%%%%%%
    %&= (t - \tilde{\tau}_{\btheta^*})D_0 - 2D_0 
    &= (t - \tilde{\tau}_{\btheta^*} - k)D_0
\end{align*}
where, $(a)$ follows from the definition of exploration, and $(b)$ follows from as $D_0 \leq n D_e$, 
It follows that
\begin{align*}
    \Pb({\xi}_{\btheta',\btheta^*}(t)) 
    &= \Pb(\Delta_{t}(\btheta') < \beta(J,\delta))\\
    %%%%%%%%%%%%%%%%%%%%
    &\overset{}{=} \Pb(\Delta_{t}(\btheta') - \E[\Delta_{t}(\btheta')] < \beta(J,\delta) - \E[\Delta_{t}(\btheta')])\\
    %%%%%%%%%%%%%%%%%%%%%%%%%
    &\overset{(a)}{\leq} \Pb\left(\Delta_{t}(\btheta') - \E[\Delta_{t}(\btheta')] < \beta(J,\delta) - (t- \tilde{\tau}_{\btheta^*} - k)D_0\right)\\
    %%%%%%%%%%%
    &\overset{(b)}{=} \Pb\left(\Delta_{t}(\btheta') - \E[\Delta_{t}(\btheta')] < D_0\left(\dfrac{ \beta(J,\delta)}{D_0} - t + \tilde{\tau}_{\btheta^*} + k)\right)\right)
\end{align*}

Once we define the failure events we can follow the same proof technique as Theorem 2 and show that
\begin{align*}
    \Pb(\tilde{\xi}_{\btheta'\btheta^*}(t)) %&
    %\leq \Pb(\Delta_{t}(\btheta') - \E[\Delta_{t}(\btheta')] < \alpha(J) - \min\left\{(t - 2)(D_e - D_1), (t - 2)(D_0 - D_e) - (\tilde{\tau}_{\btheta^*} + 1)D_0\right\})\\
    %%%%%%%%%%%%%%%%%%%%
    & \overset{(a)}{\leq} \Pb(\Delta_{t}(\btheta') - \E[\Delta_{t}(\btheta')] < 2\alpha(J) - c_1tD_e)\\
    %%%%%%%%%%%%%%%%%%%%%%%%
    % & \overset{(b)}{\leq} \Pb(\Delta_{t}(\btheta') - \E[\Delta_{t}(\btheta')] < \alpha(J) - (t-2)D^e_1)
\end{align*}
where, in $(a)$ the choice of $c_1tD_1$ follows as $\E[\Delta_{t}(\btheta')] \geq c_1tD_e$ as $D_e > 0$. 
% and noting that $|\Delta_{s}(\btheta') - D_e| \leq 4\eta$ by  \eqref{eq:bound-D1} for any round $s\in[t]$. 
It follows from \Cref{conc:lemma:1} since $D_0 \geq D_e$ and $D_e > 0$ that the probability of the failure event is bounded as follows:
\begin{align*}
    &\Pb(\tau_{\btheta^*} > t) \leq \Pb(\bigcup_{\btheta'\neq \btheta^*} \xi_{\btheta'\btheta^*}(t)) \\
    %%%%%%%%%%%%%%%%%%%%%%%%%%%%
    &= \Pb\left(\left\{\bigcup_{\btheta'\neq \btheta^*}\xi_{\btheta'\btheta^*}(t)\right\}\bigcap\{\tilde{\tau}_{\btheta^*} < \dfrac{\alpha(J)}{D^{}_e} + \frac{tc}{2}\}\right) + \Pb\left(\bigcup_{\btheta'\neq \btheta^*}\left\{\xi_{\btheta'\btheta^*}(t)\right\}\bigcap\{\tilde{\tau}_{\btheta^*} \geq \dfrac{\alpha(J)}{D^{}_e} + \frac{tc}{2}\}\right) \\
    %%%%%%%%%%%%%%%%%%%%%%%%
    &\leq \sum_{\btheta'\neq \btheta^*}\Pb\left(\left\{\xi_{\btheta'\btheta^*}(t)\right\}\bigcap\{\tilde{\tau}_{\btheta^*} < \dfrac{\alpha(J)}{D^{}_e} + \frac{tc}{2}\}\right) + \sum_{\btheta'\neq \btheta^*}\sum_{t':t' \geq \frac{\alpha(J)}{D^{}_e} + \frac{tc}{2} }\Pb\left(\tilde{\xi}_{\btheta'\btheta^*}(t')\right)
    %%%%%%%%%%%%%%%%%%%%%%%
    \overset{(a)}{\leq} J C_1\exp\left(-C_2t\right)
    %%%%%%%%%%%%%%%%%%%%%
\end{align*}
where, in $(a)$ we substitute $C_1 \colonequals 110 + 55\max\left\{1, \dfrac{\eta^2}{2 (D^{}_e)^2 }\right\}$, $C_2 \colonequals \dfrac{ 2(D^{}_e)^2\min\{(\nicefrac{c}{2}-1)^2, c\}}{\eta^2}$, $c > 0$ is a constant, and $J \colonequals |\bTheta|$. 
Now we define the critical number of samples as follows:
\begin{align*}
    M \colonequals \left(\dfrac{C' \log J}{D^{}_e} + \dfrac{C' + \log(J/\delta)}{D_0} \right)
\end{align*}
where, $C' = \log\left(1 + \nicefrac{\eta^2}{\eta_0^2}\right)$. Then we can bound the sample complexity for some constant $c > 0$ as follows:
\begin{align*}
    \E[\tau_\delta] &\leq \E[\tau_{\btheta^{*}}] = \sum_{t=0}^{\infty}\Pb(\tau_{\btheta^*} > t) 
    \overset{}{\leq} 1 + (1 + c)M + \sum_{t: t > (1+c)M}\Pb(\tau_{\btheta^*} > t) \\
    %%%%%%%%%%%%%%%%%
    &\overset{(a)}{\leq} O\left(\dfrac{\eta\log(C)\log J}{D^{}_e} + \dfrac{\log(J/\delta)}{D^{}_0} + J(C)^{1/\eta}\delta^{D^{}_0/\eta^2} \right)
\end{align*}
where, in $(a)$ we substitute $\log(C) = \log(165 + \frac{\eta^2}{\eta_0^2}) > C'$ and the rest follows as $\log(C)/D^{}_0 < \log(C)/D^{}_e$.
\end{proof}

\subsection{Minimax Optimality Proof (\Cref{thm:minimax})}
\label{app:minimax}

\begin{example}
\label{ex:minimax}
We define an environment model $B_j$ consisting of $N$ actions and $J$ hypotheses with true hypothesis $\btheta^* = \btheta_j$ ($j$-th column) 
as follows:
\begin{align*}
\begin{matrix} 
    \btheta &= & \btheta_1 &\btheta_2  &  \btheta_3 & \ldots & 
    \btheta_J \\\hline
    \mu_1(\btheta) &=  & \Gamma & \Gamma\!-\!\frac{\Gamma}{J} & \Gamma\!-\!\frac{2\Gamma}{J} & \ldots & \Gamma\!-\!\frac{(J-1)\Gamma}{J}\\
    \mu_2(\btheta) &=   & \iota_{21} & \iota_{22} & \iota_{23} & \ldots & \iota_{2J}\\
    &\vdots & &&\vdots\\
    \mu_n(\btheta) &=   & \iota_{n1} & \iota_{n2} & \iota_{n3} & \ldots & \iota_{nJ}
\end{matrix}
\end{align*}
where, each $\iota_{ij}$ is distinct and satisfies $\iota_{ij} < \Gamma/4J$.
Note that we introduce such $\iota_{ij}$ for different hypotheses so as not to violate Assumption \ref{assm:positive-D1}. 
\(\btheta_1\) is the optimal hypothesis in \(A_1\), \(\btheta_2\) is the optimal hypothesis in \(A_2\) and so on such that for each $A_j$ and $j\in[J]$ we have column $j$ as the optimal hypothesis. 
\end{example}

\begin{customtheorem}{2}\textbf{(Restatement)}
Any $\delta$-PAC policy $\pi$ that identifies $\btheta^\ast$ 
in \eqref{eq:minimax-environment} 
satisfies $\E[\tau_\delta] \geq \Omega\left({J^2\Gamma^{-2}} \log({1}/{\delta})\right)$. 
%, where $\Gamma$ is a problem parameter
Applying Theorem~\ref{thm:chernoff-upper} to the same environment, the sample complexity of \cher is  $O\left(J^2\Gamma^{-2}\log(J/\delta)\right)$ which matches the lower bound upto log factors.\end{customtheorem}

\begin{proof}
The proof follows the standard change of measure argument. We follow the proof technique in Theorem 1 of \citet{huang2017structured}. We first state a problem setup in Example \ref{ex:minimax}.

Let, \(\Lambda_1\) be the set of alternate  models having a different optimal hypothesis than \(\btheta^{*} = \btheta_1\) such that all models having different optimal hypothesis than $\btheta_1$ such as  $A_2, A_3, \ldots A_J$ 
are in $\Lambda_1$. Let $\tau_\delta$ be the stopping time for any $\delta$-PAC policy $\pi$.
Let $Z_i(t)$ denote the number of times the action $i$ has been sampled till round $t$.
Let $\widehat{\btheta}(t)$ be the predicted optimal hypothesis at round $\tau_\delta$. We first consider the  model $A_1$. Define the event \(\xi=\{\widehat{\btheta}(t) \neq \btheta^*\}\) as the error event in model $A_1$. Let the event \(\xi'=\{\widehat{\btheta}(t) \neq \btheta^{'*}\}\) be the corresponding error event in model $A_2$. Note that $\xi^{\complement} \subset \xi'$. Now since $\pi$ is $\delta$-PAC policy we have $\Pb_{A_1,\pi}(\xi) \leq \delta$ and $\Pb_{A_2,\pi}(\xi^{\complement}) \leq \delta$. Hence we can show that,

\begin{align}
2 \delta \geq \Pb_{A_1, \pi}(\xi)+ \Pb_{A_2, \pi}(\xi^{\complement})
&\overset{(a)}{\geq} \frac{1}{2} \exp\left(-\KL\left(P_{A_1, \pi} || P_{A_2, \pi}\right)\right)\nonumber\\
%%%%%%%%%%%%%%%%%
 \KL\left(P_{A_1, \pi} || P_{A_2, \pi}\right) & \geq \log\left(\dfrac{1}{4\delta}\right)\nonumber\\
%%%%%%%%%%%%%%%%%
 \sum_{i=1}^{n} \E_{A_1, \pi}[Z_i(\tau_\delta)]\cdot\left(\mu_{i}(\btheta^*)^{}-\mu_{i}( \btheta^{'*})^{}\right)^2&\overset{(b)}{\geq} \log\left(\dfrac{1}{4\delta}\right)\nonumber\\
%%%%%%%%%%%%%%%%%
 \left(\Gamma - \Gamma +\frac{\Gamma}{J}\right)^2\E_{A_1, \pi}[Z_1(\tau_\delta)] + \sum_{i=2}^n(\iota_{i1} - \iota_{i2})^2\E_{A_1, \pi}[Z_i(\tau_\delta)] &\overset{(c)}{\geq} \log\left(\dfrac{1}{4\delta}\right)\nonumber\\
 %%%%%%%%%%%%%%
 \left(\dfrac{1}{J}\right)^2\Gamma^2\E_{A_1, \pi}[Z_1(\tau_\delta)] + \sum_{i=2}^n(\iota_{i1} - \iota_{i2})^2\E_{A_1, \pi}[Z_i(\tau_\delta)] &\overset{}{\geq} \log\left(\dfrac{1}{4\delta}\right)\nonumber\\
 %%%%%%%%%%%%%
 \left(\dfrac{1}{J}\right)^2\Gamma^2\E_{A_1, \pi}[Z_1(\tau_\delta)] + \sum_{i=2}^n\frac{\Gamma^2}{4J^2}\E_{A_1, \pi}[Z_i(\tau_\delta)] &\overset{(d)}{\geq} \log\left(\dfrac{1}{4\delta}\right)\label{eq:minimax-1}
%%%%%%%%%%%%%%%%%%
\end{align}
where, $(a)$ follows from Lemma \ref{lemma:tsybakov}, $(b)$ follows from Lemma \ref{lemma:divergence-decomp},  $(c)$ follows from the construction of the bandit environments, and $(d)$ follows as  $(\iota_{ij} - \iota_{ij'})^2 \leq \frac{\Gamma^2}{4J^2}$ for any $i$-th action and $j$-th hypothesis pair.

Now, we consider the alternate model $A_3$. Again define the event \(\xi=\{\widehat{\btheta}(t) \neq \btheta^*\}\) as the error event in model $A_1$ and the event  \(\xi'=\{\widehat{\btheta}(t) \neq \btheta^{''*}\}\) be the corresponding error event in model $A_3$. Note that $\xi^{\complement} \subset \xi'$. Now since $\pi$ is $\delta$-PAC policy we have $\Pb_{B_1,\pi}(\xi) \leq \delta$ and $\Pb_{A_3,\pi}(\xi^{\complement}) \leq \delta$. Following the same way as before we can show that,
\begin{align}
 \left(\dfrac{2}{J}\right)^2\Gamma^2\E_{A_1, \pi}[Z_1(\tau_\delta)] + \sum_{i=2}^n\frac{\Gamma^2}{4J^2}\E_{A_1, \pi}[Z_i(\tau_\delta)] &\overset{(d)}{\geq} \log\left(\dfrac{1}{4\delta}\right)\label{eq:minimax-2}.
\end{align}
Similarly we get the equations for all the other $(J-2)$ alternate models in $\Lambda_1$. Now consider an optimization problem 
\begin{align*}
    &\min_{x_i : i \in [n]} \sum x_i\\
    %\quad
    %%%%%%%%%%%%%%%%%%%%%%
    s.t. \quad & \left( \frac{1}{J}\right)^2\Gamma^2 x_1 + \frac{\Gamma^2}{4J^2} \sum_{i=2}^n x_i \geq \log(1/4\delta)\\
    %%%%%%%%%%%
    &\left(\frac{2}{J}\right)^2\Gamma^2 x_1 + \frac{\Gamma^2}{4J^2} \sum_{i=2}^n x_i \geq \log(1/4\delta)\\
    &\vdots\\
    &\left(\frac{J-1}{J}\right)^2\Gamma^2 x_1 + \frac{\Gamma^2}{4J^2} \sum_{i=2}^n x_i \geq \log(1/4\delta)\\
    %%%%%%%%%%%%%%%
    &x_i \geq 0,  \forall i\in [n]
\end{align*}
where the optimization variables are $x_i$. 
It can be seen that the optimum objective value is $J^2\Gamma^{-2} \log(1/4\delta)$. Interpreting $x_i = \mathbb{E}_{A_1,\pi}[Z_i(\tau_{\delta})]$ for all $i$, we get that $\E_{A_1,\pi}[\tau_\delta] = \sum_{i}x_i$ which gives us the required lower bound.
Let $\mathbf{p}_{\btheta^*}$ be the sampling p.m.f for the environment $A_1$ for verifying $\btheta^*$. We also know that the Chernoff verification in  \eqref{eq:opt-lower00} has a nice linear programming formulation as stated in \eqref{eq:opt:linear0}. Using that for $A_1$ we can show that,
\begin{align*}
    & \max z \\
    %%%%%%%%%%%%%%%%%%
    %\quad 
    s.t.\quad &  \mathbf{p}_{\btheta^*}(1)\Gamma^2\left( \frac{1}{J}\right)^2 + \sum_{i=2}^n \mathbf{p}_{\btheta^*}(i)(\iota_{i\btheta^*} - \iota_{i\btheta'})^2 \geq z\\
    %%%%%%%%%%%%%
    & \mathbf{p}_{\btheta^*}(1)\Gamma^2\left( \frac{2}{J}\right)^2 + \sum_{i=2}^n \mathbf{p}_{\btheta^*}(i)(\iota_{i\btheta^*} - \iota_{i\btheta''})^2 \geq z\\
    &\vdots\\
    %%%%%%%%%%%%%%%%
    & \mathbf{p}_{\btheta^*}(1)\Gamma^2\left(\frac{J-1}{J}\right)^2 + \sum_{i=2}^n \mathbf{p}_{\btheta^*}(i)(\iota_{i\btheta^*} - \iota_{i\btheta'''})^2 \geq z\\
    &\sum_{i=1}^n \mathbf{p}_{\btheta^*}(i) = 1.
\end{align*}
We can relax this further by noting that 
$(\iota_{i\btheta^*} - \iota_{i\btheta'})^2 \leq \frac{\Gamma^2}{4J^2}$ for all $i\in[n]$ and $\btheta^*,\btheta'\in\bTheta$. Hence,
\begin{align*}
    \max z \quad
    s.t. \quad & \mathbf{p}_{\btheta^*}(1)\Gamma^2\left( \frac{1}{J}\right)^2 + \sum_{i=2}^n \mathbf{p}_{\btheta^*}(i)\frac{\Gamma^2}{4J^2} \geq z\\
    %%%%%%%%%%%%%
    & \mathbf{p}_{\btheta^*}(1)\Gamma^2\left(\frac{2}{J}\right)^2 + \sum_{i=2}^n \mathbf{p}_{\btheta^*}(i)\frac{\Gamma^2}{4J^2} \geq z\\
    &\vdots\\
    %%%%%%%%%%%%%%%%
    & \mathbf{p}_{\btheta^*}(1)\Gamma^2\left(\frac{J-1}{J}\right)^2 + \sum_{i=2}^n \mathbf{p}_{\btheta^*}(i)\frac{\Gamma^2}{4J^2} \geq z\\
    &\sum_{i=1}^n \mathbf{p}_{\btheta^*}(i) = 1.
\end{align*}
Solving this above optimization gives that $\mathbf{p}_{\btheta^*}(1) = 1$, and $\mathbf{p}_{\btheta^*}(2) = \mathbf{p}_{\btheta^*}(3) =\ldots= \mathbf{p}_{\btheta^*}(n) =0$. Similarly, for verifying any hypothesis $\btheta\in \bTheta$ we can show that the verification proportion is given by $\mathbf{p}_{\btheta} = (1,\underbrace{0,0,\ldots,0}_{\text{(J-1) zeros}})$. This also shows that for Example \ref{ex:minimax}, 
\begin{align*}
    D_0 &= \min_{\btheta' \neq \btheta^\ast} \sum_{i=1}^n p_{\btheta^*}(i) (\mu_i(\btheta') -\mu_i(\btheta^\ast))^2 = p_{\btheta^*}(1)\Gamma^2\left( \frac{1}{J}\right)^2 +  \sum_{i=2}^np_{\btheta^*}(i)(\iota_{i\btheta^*}-\iota_{i\btheta'})^2 = \dfrac{\Gamma^2}{J^2}\\
    %%%%%%%%%%%%%%%%%
    %%%%%%%%%%%%%%%%
    D_1 &= \min_{\{\mathbf{p}_{\btheta} : \btheta \in \bTheta\}} \min_{\btheta' \neq \btheta^\ast} \sum_{i=1}^n p_{\btheta}(i)(\mu_i(\btheta') - \mu_i(\btheta^*))^2 \overset{(a)}{\geq} p_{\btheta}(1)\Gamma^2\left( \frac{1}{J}\right)^2 +  \sum_{i=2}^np_{\btheta}(i)(\iota_{i\btheta}-\iota_{i\btheta'})^2 = \dfrac{\Gamma^2}{J^2}
\end{align*}
where, $(a)$ follows as the verification of any hypothesis $\btheta$ is a one hot vector $p_{\btheta} = (1,\underbrace{0,0,\ldots,0}_{\text{(J-1) zeros}})$. Note that $\eta/4 = \Gamma^2$. Plugging this in Theorem \ref{thm:chernoff-upper} gives us that the upper bound of \cher as
\begin{align*}
    \E[\tau_\delta] &\leq O\left(\dfrac{J^2 \Gamma^2 \log(\Gamma^4/\eta_0^4) \Gamma\log J}{\Gamma^2} + \dfrac{J^2\log(J/\delta)}{\Gamma^2} + J\log(\Gamma^4/\eta_0^4)^{1/\Gamma^2}\delta^{\Gamma^2/(J^2\Gamma^4)}\right)\\
    %%%%%%%%%%%%%%%%
    &\leq O\left(J^2  \log(\Gamma^4/\eta_0^4) \Gamma\log J + \dfrac{J^2\log(J/\delta)}{\Gamma^2}\right) \leq O\left(\dfrac{J^2\log(J/\delta)}{\Gamma^2}\right).
\end{align*}
The claim of the theorem follows.
\end{proof}

\subsection{Proof of \Cref{thm:max-min-eig} (Continuous hypotheses)}
\label{app:thm:max-min-eig}
\begin{customtheorem}{3} \textbf{\textit{(Restatement)}}
Assume that $\mu_i(\btheta)$ for all $i \in [n]$ 
is a differentiable function, 
and the set 
$\{\nabla \mu_i (\wtheta(t)) : i \in [n]\}$ 
of gradients evaluated at $\wtheta(t)$ span 
$\mathbb{R}^d$. 
Consider a p.m.f. $\mathbf{p}_{\wtheta(t), r}$ 
from \eqref{eq:albert-opt} for verifying $\wtheta(t)$ against 
all alternatives in $\mathcal{B}^\complement_r(\wtheta(t))$. 
The limiting value of $\mathbf{p}_{\wtheta(t), r}$ as $r \rightarrow 0$ is
%The desired p.m.f.\ 
%is the solution $\mathbf{p}_{\wtheta(t)}$ to the following optimization:
\begin{align*}
\mathbf{p}_{\wtheta(t)}:=\arg\max_{\text{ }\mathbf{p}}  
%\text{min eigenvalue}
\lambda_{\min}
\left(
\sum_{i=1}^{n} p(i) \nabla\mu_i(\wtheta(t))
\nabla\mu_i(\wtheta(t))^T
\right).
%\label{eq:max-min-gradient}
\end{align*}
\end{customtheorem}
\begin{proof}
For brevity we drop the $\wtheta(t)$ argument from the notation for the closed ball $\mathcal{B}_r$ and its complement $\mathcal{B}_r^\complement$ centered at $\wtheta(t)$.  Denoting 
$g_i(\btheta) \colonequals (\mu_i(\btheta) - \mu_i(\widehat{\btheta}(t)))^2$ for any generic $\btheta$,
we can rewrite the optimization using a probability density function (p.d.f.) 
$\mathbf{q}$ over parameters in $\bTheta$, as explained subsequently.
\begin{align}
\mathbf{p}_{\widehat{\btheta}(t), r} &=\argmax_{\text{ }\mathbf{p}} \inf_{\btheta \in \mathcal{B}^\complement_r} \sum_{i=1}^n p(i) g_i(\btheta) \nonumber\\
&= 
\argmax_{\text{ } \mathbf{p}} \inf_{\mathbf{q}: q(\btheta) = 0 \forall \btheta \in \mathcal{B}_{r}} 
\int_{\bTheta} q(\btheta) \sum_{i=1}^{n} p(i) 
g_i(\btheta) d\btheta. \label{eq:pdf-opt2}
\end{align}
The above equality is true because if in the LHS, 
the inner infimum was attained at a $\btheta \in \bTheta$, 
then the same value can be attained in the RHS by a 
degenerate p.m.f. $\mathbf{q}$ that puts all its mass on that $\btheta$. 
Conversely, suppose the inner infimum in the RHS was attained at 
a p.d.f. $\mathbf{q}^\ast$. 
Since the objective function is a linear in $\mathbf{q}$, 
the objective value is the same for a degenerate pdf that puts 
all its mass on one of the support points of $\mathbf{q}^\ast$, 
and this value can also be attained by the LHS infimum. 

Let $\kappa_i = \sup_{\btheta \in \mathcal{B}_r} g_i(\theta), \kappa = \max_i \kappa_i$, then $\kappa \rightarrow 0$ as $r \rightarrow 0$. Since $g_i(\wtheta(t))=0$ for all $i$ we have that $\kappa > 0$. 
Consider a family $\mathcal{Q}_r$ of pdfs 
supported on the boundary of 
$\mathcal{B}_r$, i.e.,
\begin{align*}
\mathcal{Q}_r \colonequals \left\lbrace \mathbf{q}: \int_\bTheta q(\btheta) d\btheta = 1, 
q(\btheta') = 0 \text{ if } \lVert \btheta' 
- \widehat{\btheta}(t)\rVert \neq \epsilon
\right \rbrace.
\end{align*}
For any pmf $\mathbf{p}$, 
%if the infimum in \eqref{eq:pdf-opt2} is attained at $\mathbf{q}^\ast_{\mathbf{p}}$, then 
the suboptimality gap in \eqref{eq:pdf-opt2}
by restricting the infimum to be over the set 
$\mathcal{Q}_r$ is non-negative, and is upper 
bounded by
\begin{align}
\label{eq:kappa}
\inf_{\mathbf{q} \in \mathcal{Q}_r} \sum_{i=1}^{n} 
p(i) \int_{\bTheta} q(\btheta) g_i(\btheta) d\btheta 
\leq \sum_{i=1}^{n} 
p(i) \int_{\btheta: \lVert\btheta - \widehat{\btheta}(t)\rVert = r} q(\btheta) \kappa d\btheta \leq \kappa, 
\end{align}
where the first inequality is true for any $q(\btheta) \in \mathcal{Q}_r$ and the second inequality is true because $\kappa$ is an upper bound to the integrand at any point on the surface of $\mathcal{B}_r$. 
For any $r > 0$, we have that 
\begin{align*}
0 \leq
\inf_{\mathbf{q}:q(\btheta)=0 \forall \theta \in \mathcal{B}_r} \int_\bTheta q(\btheta) \sum_{i=1}^n p(i) g_i(\btheta) d\btheta 
\leq 
\inf_{q \in \mathcal{Q}_r} \int_\bTheta q(\btheta) \sum_{i=1}^n p(i) g_i(\btheta) d\btheta
\leq \kappa,
\end{align*}
where the first inequality is due to $g_i(\btheta) \geq 0$, the second inequality is because the domain of $\inf$ is reduced, and the third inequality is by \eqref{eq:kappa}. 
As $r \rightarrow 0$, the quantity $\kappa \rightarrow 0$ and the suboptimality 
gap also tends to zero. Hence for any pmf $\mathbf{p}$, 
\begin{align*}
\lim_{r \rightarrow 0} \inf_{\mathbf{q}: q(\btheta)=0 \forall \btheta \in \mathcal{B}_{r}} 
\int_\bTheta q(\btheta) \sum_{i=1}^{n} p(i) g_i(\btheta) d\btheta = \lim_{r \rightarrow 0}\inf_{\mathbf{q} \in \mathcal{Q}_r} \int_\bTheta q(\btheta) \sum_{i=1}^{n} p(i) g_i(\btheta) d\btheta.
\end{align*}
Since the above is true for any $\mathbf{p}$, it also holds for the maximizer of the infimum at each value of $r$ in the convergent series to $0$. Hence, we have that
\begin{align*}
\lim_{r \rightarrow 0} \mathbf{p}_{\wtheta(t), r} 
&= 
\lim_{r \rightarrow 0} \arg\max_{\mathbf{p}} 
\inf_{\mathbf{q}: q(\btheta)=0 \forall \btheta \in \mathcal{B}_{r}} 
\int_\bTheta q(\btheta) \sum_{i=1}^{n} p(i) g_i(\btheta) d\btheta\\
&=
\lim_{r \rightarrow 0} \arg\max_{\mathbf{p}} \inf_{\mathbf{q} \in \mathcal{Q}_r} \int_\bTheta q(\btheta) \sum_{i=1}^{n} p(i) g_i(\btheta) d\btheta.
\end{align*}
Consider the multivariable Taylor series of $g_i$ 
around $\widehat{\btheta}(t)$, for a $\btheta \in \mathcal{B}_r$.
\begin{align*}
g_i(\btheta) &= g_i(\widehat{\btheta}(t)) + (\btheta - \widehat{\btheta}(t))^T \nabla g_i(\widehat{\btheta}(t))\\
&\mspace{18mu} + 0.5 (\btheta - \widehat{\btheta}(t))^T \nabla^2 g_i(\widehat{\btheta}(t)) (\btheta - \widehat{\btheta}(t)) + o(\lVert\btheta - \widehat{\btheta}(t)\rVert^3).
\end{align*}
For indices $j,k \in [d]$ we can evaluate 
\begin{align*}
\nabla g_i(\btheta) &= \begin{bmatrix}
\frac{\partial g_i}{\partial \btheta_j} (\btheta)
\end{bmatrix}_j 
= \begin{bmatrix}
2(\mu_i(\btheta) - \mu_i(\widehat{\btheta}(t))) \frac{\partial \mu_i}{\partial \btheta_j}(\btheta)
\end{bmatrix}_j, \text{ and}\\
\nabla^2 g_i(\btheta) &= \begin{bmatrix}
\frac{\partial^2 g_i}{\partial \btheta_j \partial \btheta_k}(\btheta)
\end{bmatrix}_{j, k} \\
&= \begin{bmatrix}
2(\mu_i(\btheta) - \mu_i(\widehat{\btheta}(t))) \frac{\partial^2 \mu_i}{\partial \btheta_j \partial \btheta_k} (\btheta) + 2 \frac{\partial \mu_i}{\partial \btheta_j} (\btheta) \frac{\partial \mu_i}{\partial \btheta_k} (\btheta)
\end{bmatrix}_{j,k}
\end{align*}
giving that $\nabla g_i(\widehat{\btheta}(t)) = 0$ and 
$\nabla^2 g_i(\widehat{\btheta}(t)) = 2\nabla \mu_i (\widehat{\btheta}(t)) \nabla \mu_i (\widehat{\btheta}(t))^T$. 
Then $\mathbf{p}_{\wtheta(t), r}$ is the solution to the following:
\begin{align*}
&\max_{\mathbf{p}} \inf_{\mathbf{q} \in \mathcal{Q}_r} \int_{\Theta} q(\btheta) \sum_{i=1}^n p(i)g_i(\btheta)d\btheta \\
&= \max_{\mathbf{p}} \inf_{\mathbf{q}\in \mathcal{Q}_r} \int_{\btheta: \lVert\btheta - \widehat{\btheta}(t)\rVert = r} q(\btheta)
\sum_{i=1}^{n} p(i)
\left(
(\btheta - \widehat{\btheta}(t))^T \nabla \mu_i (\widehat{\btheta}(t)) \nabla \mu_i (\widehat{\btheta}(t))^T (\btheta - \widehat{\btheta}(t))
+ o(\lVert \btheta - \wtheta(t)\rVert^3)\right)
d\btheta\\
&= \max_{\mathbf{p}} \inf_{\mathbf{q}\in \mathcal{Q}_r}
\int_{\btheta: \lVert\btheta - \widehat{\btheta}(t)\rVert = r} \frac{(\btheta - \widehat{\btheta}(t))^T\sum_{i=1}^{n} p(i) \nabla \mu_i (\widehat{\btheta}(t)) \nabla \mu_i (\widehat{\btheta}(t))^T(\btheta - \widehat{\btheta}(t))}{(\btheta - \widehat{\btheta}(t))^T(\btheta - \widehat{\btheta}(t))} q(\btheta) \lVert \btheta - \wtheta(t) \rVert^2 d\btheta + o(r^3) \\
&= \max_{\mathbf{p}} \text{min eigenvalue}\left( 
\sum_{i=1}^{n} p(i) \nabla \mu_i (\widehat{\btheta}(t)) \nabla \mu_i (\widehat{\btheta}(t))^T \right) r^2 + o(r^3).
\end{align*}
The last equality uses the variational characterization of the minimum eigenvalue of a matrix and the fact that the $\inf$ would put all its mass on the $\btheta$ aligned with the corresponding eigenvector to attain the minimum value. 
In the limit $r \rightarrow 0$, the second term is insignificant compared to the first and we get the required result.
\end{proof}

\subsubsection{How to solve the optimization}
\label{app:optimization-todo}
The optimization in Theorem~\ref{thm:max-min-eig} can be solved using convex optimization software. This is because the objective function, i.e., the minimum eigenvalue function is a concave function of the matrix argument, and the domain of the optimization $\{\mathbf{p} : \sum_{i=1}^n p(i) = 1, p(i)\geq 0 \forall i \in [n]\}$ is a convex set. Hence we can maximize the objective over the domain. The set of gradients $\{\nabla \mu_i(\widehat{\btheta}(t)) : i \in [n]\}$ span $\mathbb{R}^d$. Hence the optimal objective value is positive. Note that the verification proportions are the solution to a convex optimization problem. So we can terminate it early to get an approximate solution. 
%Computing to specified accuracy takes $O(n^3 + nd^2 + d^3)$ operations \citep{fan1995minimizing}.
Ignoring accuracy factors, a solution can be obtained in $O((n^3 + n^2d^2 + nd^3)\sqrt{n+d})$ operations \citep{nesterov1994interior}.

\subsection{\cher Convergence Proof for Smooth Hypotheses Space}
\label{app:cher-smooth-upper}

\subsubsection{Theoretical Comparisons for Active Regression}
\label{app:active-regression-discussion}
From the result of \citet{chaudhuri2015convergence} we can show that the \actives algorithm enjoys a convergence guarantee as follows:
\begin{align}
    \mathbb{E}\left[P_{U}\left(\wtheta(t)\right)-P_{U}\left(\theta^{*}\right)\right] %%%%%%%%%%%%%%%
    \leq O\left( \frac{\sigma^2_{U}\sqrt{\log(dt)}}{t}+\frac{R}{t^{2}}\right) \label{eq:kamalika-result}
\end{align}
where, $P_U$ is the loss under uniform measure, $R$ is the maximum loss under any measure for any $\btheta\in\bTheta$, and $\sigma^2_U$ is defined as 
\begin{align}
    \sigma^{2}_U & \overset{(a)}{=} \frac{1}{t^{2}} \operatorname{Trace}\left[\left(\sum_{s=1}^{t} \sum_{s'=1}^{t} \E_{\ell_{s} \sim D}\left[\nabla \ell_{s}\left(\btheta^{*}\right)\right]\E_{\ell_{s'} \sim D} \left[ \nabla \ell_{s'}\left(\btheta^{*}\right)^{\top}\right]\right) \ I_{\Gamma}\left(\btheta^{*}\right)^{-1} \ I_{U}\left(\btheta^{*}\right) \ I_{\Gamma}\left(\btheta^{*}\right)^{-1}\right]\nonumber\\
    %%%%%%%%%%%%%
    & = \frac{1}{t^{2}} \operatorname{Trace}\left[\left(\sum_{s=1}^{t} \sum_{s'=1}^{t} \E_{\ell_{s} \sim D}\left[\nabla \ell_{s}\left(\btheta^{*}\right)\right]\E_{\ell_{s'} \sim D} \left[ \nabla \ell_{s'}\left(\btheta^{*}\right)^{\top}\right]\right) \ I_{\Gamma}\left(\btheta^{*}\right)^{-1} I_{U}\left(\btheta^{*}\right) \ I_{\Gamma}\left(\btheta^{*}\right)^{-1}\right]\nonumber\\
    %%%%%%%%%%%%%%%%
    & \overset{(b)}{=} \frac{1}{t^{2}} \operatorname{Trace}\left[\left(\sum_{s=1}^{t}\sum_{s'=1}^{t}  I_{\Gamma}\left(\btheta^{*}\right)\right)  I_{\Gamma}\left(\btheta^{*}\right)^{-1} I_{U}\left(\btheta^{*}\right)  I_{\Gamma}\left(\btheta^{*}\right)^{-1}\right]\nonumber\\
    %%%%%%%%%%%%%%%%%
    &= \operatorname{Trace}\left[  I_{U}\left(\btheta^{*}\right)  I_{\Gamma}\left(\btheta^{*}\right)^{-1}\right] \label{eq:sigma-U}
\end{align}
where, in $(a)$ the loss $\ell_{s}$ and $\ell_{s'}$ are i.i.d drawn from the same distribution $D$, and $I_\Gamma$, $I_{U}$ are the Fisher information matrix under the sampling distribution $\Gamma$ and $U$, and  $(b)$ follows from Lemma 5 of \citet{chaudhuri2015convergence}. Note that \actives is a two-stage process that samples according to the uniform distribution $U$ to build an estimate of $\btheta^*$ and then solves an SDP to build the sampling proportion $\Gamma$ that minimizes the quantity $\sigma_U^2$ and follows that sampling proportion $\Gamma$ for the second stage. It follows that
\begin{align*}
    \sigma^{2}_U &= \operatorname{Trace}\left[  I_{U}\left(\btheta^{*}\right) I_{\Gamma}\left(\btheta^{*}\right)^{-1}\right] \\
    %%%%%%%%%%%%
    % &\overset{(a)}{\leq} \operatorname{Trace}\left[  I_{U}\left(\btheta^{*}\right)\right] \operatorname{Trace}\left[ I_{\Gamma}\left(\btheta^{*}\right)^{-1}\right]\\
    &\leq \lambda_{\max}(I_{U}\left(\btheta^{*})\right)\lambda_{\max}(I_{\Gamma}\left(\btheta^{*})^{-1}\right)d  \leq \dfrac{\lambda_{1}}{\lambda_{\min, \actives}}d C_3 \eta.
\end{align*}
where $\lambda_{\min, \actives} = \lambda_{\min}(I_{\Gamma}\left(\btheta^{*})^{}\right)$. We also have that
% where, $(a)$ follows as for any positive semi-definite matrices $\operatorname{Trace}[AB] \leq \operatorname{Trace}[A]\operatorname{Trace}[B]$, and for $(b)$ we can show that
\begin{align*}
    I_U(\btheta^*) &= \E_{I_s\sim U }\nabla^2_{\btheta = \btheta^*} \ell_{s}(\btheta) = \E_{I_s\sim U }\nabla^2_{\btheta = \btheta^*}(Y_s - \mu_{I_s}(\btheta))^2\\
    %%%%%%%%%%%%%%%%%%%%%%%%%%
    %&= \nabla^2_{\btheta = \btheta^*}\left(\sum_{i=1}^n p_{\wtheta(t)}(i)(\mu_{i}(\btheta) - \mu_{i}(\btheta^*))^2 + \frac{1}{2}\right)\\
    %%%%%%%%%%%%%%%%%%%%%%%%%
    &=\E_{I_s\sim U }2\left(Y_s - \mu_i(\btheta^*)\right)\nabla^2 \mu_{I_s}(\btheta^*) - 2\nabla \mu_{I_s}(\btheta^*)\nabla \mu_{I_s}(\btheta^*)^T\\
    %%%%%%%%%%%%%%%%%%%%%%%%%
    &= 2\sum_{i=1}^n p_{\mathbf{unif}}(i)\left[\left(Y_s - \mu_i(\btheta^*)\right)\nabla^2 \mu_{I_s}(\btheta^*) - \nabla \mu_{I_s}(\btheta^*)\nabla \mu_{I_s}(\btheta^*)^T\right]
\end{align*}
This leads to the following bound on the maximum eigenvalue of the matrix $I_U(\btheta^*)$
\begin{align*}
    \lambda_{\max}[I_U(\btheta^*)] \leq \lambda_{\max}[\sum_{i=1}^n p_{\mathbf{unif}}(i)\left(Y_s - \mu_i(\btheta^*)\right)\nabla^2 \mu_{I_s}(\btheta^*)]&\leq \lambda_1 C_3\eta.
\end{align*}
% where $\left(Y_s - \mu_i(\btheta^*)\right)\nabla^2 \mu_{I_s}(\btheta^*) \leq \lambda_1 d C_3\eta$.
% and similarly, 
% $$
%     I_\Gamma(\btheta^*) = \E_{I_s\sim \Gamma }\nabla^2_{\btheta = \btheta^*} \ell_{s}(\btheta) = \E_{I_s\sim \Gamma }\nabla^2_{\btheta = \btheta^*}(Y_s - \mu_{I_s}(\btheta))^2.
% $$
% which leads to an upper bound for $\lambda_{\max}[I_\Gamma(\btheta^*)^{-1}] \leq \frac{1}{\lambda_{\min, \actives}} C_3 \sqrt{\eta}$. 
Plugging this in the statement of the result in \cref{eq:kamalika-result} we get that
\begin{align*}
    \mathbb{E}\left[P_{U}\left(\wtheta(t)\right)-P_{U}\left(\theta^{*}\right)\right] %%%%%%%%%%%%%%%
    \leq O\left( \frac{d\sqrt{\log(dt)}}{t}+\frac{R}{t^{2}}\right)
\end{align*}
where we are only concerned with the scaling with the dimension $d$. 
Comparing this to our result in \Cref{thm:cher-smooth} we have the following convergence rate
\begin{align}
    \E\left[P_t(\wtheta_t)-P_t\left(\btheta^{*}\right)\right] \leq\left(1+\rho_t\right) \frac{\sigma^2_t}{t}+\frac{R}{t^{2}} \label{eq:our-result}
\end{align}
where $P_t$ is a worst case measure over the data points. Next, we can show that,
\begin{align*}
    \sigma^2_t \colonequals \E\left[\left\|\nabla \wP_{t}\left(\btheta^{*}\right)\right\|_{\left(\nabla^{2} P_{t}\left(\btheta^{*}\right)\right)^{-1}}^{2}\right] &= \E\left[\nabla \wP_{t}\left(\btheta^{*}\right)^T \nabla^2 P_{t}\left(\btheta^{*}\right)^{-1}\nabla \wP_{t}\left(\btheta^{*}\right)\right] \\
    %%%%%%%%%%%%%%%%%
    &\overset{(a)}{\leq} \E\left[\lambda_{\max}\left(\nabla^2 P_{t}\left(\btheta^{*}\right)^{-1}\right)\|\nabla \wP_{t}\left(\btheta^{*}\right)\|^2\right] \\
    %%%%%%%%%%%%%%%%%%%
    &\overset{(b)}{\leq} \E\left[\lambda_{\max}\left(\nabla^2 P_{t}\left(\btheta^{*}\right)^{-1}\right) dC_3\eta\right]\\
    %%%%%%%%%%%%%%%%%%%
    &= \E\left[\left(\lambda_{\min}\left(\nabla^2 P_{t}\left(\btheta^{*}\right)^{}\right)\right)^{-1} dC_3\eta\right]\\
    %%%%%%%%%%%%%%%%%%%%%%
    & = \E\left[\left(\lambda_{\min}\left(\frac{2}{t}\sum_{s=1}^t \sum_{i=1}^n p_{\wtheta_s}(i)\nabla\mu_{i}(\btheta^*)\nabla\mu_{i}(\btheta^*)^T\right)\right)^{-1} dC_3\eta\right]\\
    %%%%%%%%%%%%%%%%%%
    % & \overset{(c)}\leq \left(\frac{2}{t}\sum_{s=1}^t \lambda_{\min}\left(\sum_{i=1}^n p_{\wtheta_s}(i)\nabla\mu_{i}(\btheta^*)\nabla\mu_{i}(\btheta^*)^T\right)\right)^{-1} dC_3\eta\\
    %%%%%%%%%%%%%%
    &\leq \frac{2 d C_3\eta}{\lambda_{\min, \cher}}
\end{align*}
where, $(a)$ follows from the min-max theorem (variational characterization of the maximum eigenvalue), in $(b)$ the quantity $\|\nabla \wP_{t}\left(\btheta^{*}\right)\|^2 \leq d^2C_3\eta$ almost surely by assumption and $C_3$ is a constant, and $(c)$ follows where $\lambda_{\min, \cher}$ is a lower bound to $\E\left[\left(\lambda_{\min}\left(\frac{1}{t}\sum_{s=1}^t \sum_{i=1}^n p_{\wtheta_s}(i)\nabla\mu_{i}(\btheta^*)\nabla\mu_{i}(\btheta^*)^T\right)\right)^{-1}\right]$. Plugging this in our result in \cref{eq:our-result} we get
\begin{align*}
    \E\left[P_t(\wtheta_t)-P_t\left(\btheta^{*}\right)\right] \leq O\left( \frac{ d 
    \sqrt{\log(dt)}}{t}+\frac{R}{t^{2}}\right),
\end{align*}
where again we are only concerned with the scaling with the dimension $d$. The convergence result is summarized below in this table:

\begin{table}[!ht]
    \centering
\begin{tabular}{|p{23.5em}|p{14em}|}
\hline
 \textbf{Sample Complexity Bound} & \textbf{Comments}\\
\hline \hline
      $\mathbb{E}\left[P_{U}\left(\wtheta(t)\right)-P_{U}\left(\theta^{*}\right)\right] %%%%%%%%%%%%%%%
    \leq O\left( d\frac{\sqrt{\log(dt)}}{t}+\frac{R}{t^{2}}\right)$ & Loss of \actives \citep{chaudhuri2015convergence}. The $P_U$ is loss under uniform measure over data points in pool.\\
    %%%%%%%%%%%%%%%%%%%%%%%%%%%%%%
     $\E\left[P_t(\wtheta_t)-P_t\left(\btheta^{*}\right)\right] \leq  O\left( d \frac{\sqrt{\log(dt)}}{t}+\frac{R}{t^{2}}\right)$ & Loss for \cher \textcolor{red}{(Ours)}. The $P_t$ is loss under a worst-case measure over data points.\\
    %%%%%%%%%%%%%%%%%%%%%%%%%%%%%%
    %  $\E\left[P_U(\wtheta_t)-P_U\left(\btheta^{*}\right)\right] \leq  O\left( d \frac{\sqrt{\log(dt)}}{t}+\frac{R}{t^{2}}\right)$ & Loss for \rcher \textcolor{red}{(Ours)}. The $P_U$ is uniform loss. This bound requires the additional assumption that forced exploration is conducted.\\
    \hline
    \end{tabular}
\caption{Active Regression comparison. }
    \label{tab:Active-regression0}
\end{table}
The two upper bounds have the same scaling, even though $P_t$ is a different loss measure than $P_U$. The proof has steps similar to that of \citet{chaudhuri2015convergence}, with some additional arguments to handle the fact that our loss measure varies with time. 

% Note that \rcher upper bound requires the additional assumption that forced exploration is conducted.
% \smnote{Ardhendu, can you write something here?}

\subsubsection{Discussion on Definitions and Assumptions for Continuous Hypotheses}
\label{app:cher-smooth-upper-assm}
\begin{definition}
We define the following star-norm quantity at round $t$ as
\begin{align*}
    \|A\|_{*}=\left\|\left(\nabla^{2} P_t\left(\btheta^*\right)\right)^{-1 / 2} \cdot A \cdot\left(\nabla^{2} P_t\left(\btheta^*\right)\right)^{-1 / 2}\right\|.
\end{align*}
\end{definition}
Now we state the two following assumptions required by the \Cref{thm:cher-smooth}. Also note that we define the squared loss function $\ell_{s}(\btheta) = (\mu_{I_s}(\btheta) - Y_s)^2$, the cumulative loss function $L_{s}(\btheta) = \sum_{s'=1}^s\ell_{s'}(\btheta^*) =  \sum_{s' = 1}^s(\mu_{I_{s'}}(\btheta) - Y_{s'})^2$, and $\ell_{1}(\btheta), \ell_{2}(\btheta),\ldots, \ell_{t}(\btheta)$ for any $\btheta\in\bTheta$ are not independent. In contrast \citet{chaudhuri2015convergence} assumes that the loss functions are independent for any time $s\in[t]$. Next we state the assumptions used for the proof of \Cref{thm:cher-smooth}.  

% \begin{assumption}
% \label{assm:posdef2}
% We assume that $\nabla_{\btheta = \btheta^{\prime}}^{2} \mu_{I_{s}}\left(\btheta^{}\right)$ is bounded such that $\lambda_{\max }\left(\nabla_{\btheta = \btheta'}^{2} \mu_{I_{s}}\left(\btheta^{}\right)\right) \leq \lambda_{1}$ for each $s\in[1,t]$ and for all $\btheta'\in\bTheta$.
% \end{assumption}
\Cref{assm:posdef2} in \Cref{sec:active-regression} is a mild assumption on the bounded nature of the eigenvalues of the Hessian matrix $\nabla_{\btheta = \btheta^{\prime}}^{2} \mu_{I_{s}}\left(\btheta^{}\right)$ evaluated at any $\btheta'\in\bTheta$. Then following assumption states a few regularity assumptions required for \Cref{thm:cher-smooth}. A similar set of assumptions has also been used by \citet{chaudhuri2015convergence, bu2019active}.

% \smnote{Do not put "We also assume that $\lambda_{\min }\left(\nabla_{\btheta = \btheta'}^{2} \mu_{I_{s}}\left(\btheta^{}\right)\right) \geq \lambda_{0}$ for each $s\in[1,t]$ and for all $\btheta'\in\bTheta$".}

\begin{assumption}\textbf{(Assumptions for \Cref{thm:cher-smooth}):}
\label{assm:thm}
We assume the following assumptions hold with probability $1$:
\begin{enumerate}
    \item \textbf{(Convexity of $\ell_{s}$):} The loss function $\ell_{s}$ is convex for all time $s\in[t]$.
    \item \textbf{(Smoothness of $\ell_{s}$):} The $\ell_{s}$ is smooth such that the first, second, and third derivatives exist at all interior points in $\bTheta$.
    \item \textbf{(Regularity Conditions):} \begin{enumerate}
        \item $\bTheta$ is compact and $\ell_{s}(\btheta)$ is bounded for all $\btheta\in\bTheta$ and for all $s\in[t]$.
        \item $\btheta^*$ is an interior point in $\bTheta$.
        \item $\nabla^2 \ell_{s}(\btheta^*)$ is positive definite, for all $s\in[t]$ .
        % \item We assume that there exists a neighborhood $B$ of $\btheta^*$ such that $\forall I_s\in [n]$, for all $s$ that is large enough and for all $\btheta$ in the ball $B$ we have
        % \begin{align*}
        %     \frac{1}{2}\nabla\mu_{I_s}(\btheta)\nabla\mu_{I_s}(\btheta)^T \preceq \nabla\mu_{I_s}(\btheta^*)\nabla\mu_{I_s}(\btheta^*)^T \preceq 2\nabla\mu_{I_s}(\btheta)\nabla\mu_{I_s}(\btheta)^T 
        % \end{align*}
        \item There exists a neighborhood $\mathcal{B}$ of $\btheta^*$ and a constant $C_{1}$, such that $\nabla^{2} \ell_{s}(\btheta)$ is $C_{1}$ -Lipschitz. Hence, we have that $\left\|\nabla^{2} \ell_{s}(\btheta)-\nabla^{2} \ell_{s}\left(\btheta^{\prime}\right)\right\|_{*} \leq C_{1}\left\|\btheta-\btheta^{\prime}\right\|_{\nabla^{2} P_s\left(\btheta^{*}\right)}$, for $\btheta, \btheta^{\prime}$ in this neighborhood.
    \end{enumerate}
    \item \textbf{(Concentration at $\btheta^{*}$):} We further assume that $\left\|\nabla \ell_{s}\left(\btheta^{*}\right)\right\|_{\left(\nabla^{2} P_s\left(\btheta^{*}\right)\right)^{-1}} \leq C_{2}$ hold with probability one.
    % and $\left\|\nabla^{2} \ell_{s}\left(\btheta^{*}\right)\right\|_{*} \leq C_{3}$.
\end{enumerate}
\end{assumption}
\Cref{assm:thm} (c) is different from that of \citet{chaudhuri2015convergence}, where they assumed that $\nabla^2 \mathbb{E}[\psi_s](\btheta^*)$ is positive definite, where $\psi_s$ are i.i.d.\ loss functions from some distribution. In our case the loss functions are not i.i.d., which is why we make the assumption on the loss at every time~$s$.

% \fxnote{Ardhendu write something about 3c}

% This assumption is required in our case because our sampling distribution is not i.i.d. (as opposed to \actives) and we need to assume nice properties for the neighbourhood around $\btheta^*$.

\subsubsection{Concentration Lemmas for Continuous Hypotheses}
\label{app:cher-conc-tyt1}

\begin{lemma}
\label{lemma:vector-conc}
  The probability that $\|\nabla \wP_t(\btheta^*)\|_{\left(\nabla^{2} P\left(\btheta^{*}\right)\right)^{-1}} $ crosses the threshold $\sqrt{\dfrac{c\gamma\log (dt)}{t}} > 0$ is bounded by
 \begin{align*}
     \Pb\left(\|\nabla \wP_t(\btheta^*)\|_{\left(\nabla^{2} P_{t}\left(\btheta^{*}\right)\right)^{-1}}\geq C_2\sqrt{\dfrac{c\gamma\log (dt)}{t}}\right) \leq \frac{1}{t^{c\gamma}}.
 \end{align*}
\end{lemma}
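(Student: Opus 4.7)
The plan is to apply the vector-valued Bernstein-type concentration inequality of \citet{hsu2012tail} (stated as \Cref{lemma:vector-martingale}) to the sequence $\{\nabla \ell_s(\btheta^*)\}_{s=1}^t$, after pre-multiplying by $(\nabla^2 P_t(\btheta^*))^{-1/2}$ so that the conclusion comes out in the required Mahalanobis norm. The key conceptual ingredient is the martingale structure of the gradient evaluated at $\btheta^*$.

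First I would verify that $\{\nabla \ell_s(\btheta^*)\}_{s=1}^t$ is a martingale difference with respect to the filtration $\{\F_s\}$. Since $\ell_s(\btheta) = (Y_s - \mu_{I_s}(\btheta))^2$, one has $\nabla \ell_s(\btheta^*) = -2(Y_s - \mu_{I_s}(\btheta^*))\nabla \mu_{I_s}(\btheta^*)$, and because $\mathbb{E}[Y_s - \mu_{I_s}(\btheta^*)\mid I_s, \F_{s-1}] = 0$ by the definition of $\btheta^*$ as the true parameter, we get $\mathbb{E}[\nabla \ell_s(\btheta^*)\mid \F_{s-1}] = 0$. Next, define $\bu_s \colonequals (\nabla^2 P_t(\btheta^*))^{-1/2}\nabla \ell_s(\btheta^*)$ and use \Cref{assm:thm}(4)---which bounds $\|\nabla \ell_s(\btheta^*)\|_{(\nabla^2 P_s(\btheta^*))^{-1}} \leq C_2$ almost surely---to obtain a uniform norm bound $\|\bu_s\|\leq C_2$. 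Consequently, the conditional second moment satisfies $\sum_{s=1}^t \mathbb{E}[\|\bu_s\|^2\mid \F_{s-1}]\leq tC_2^2$, giving parameters $v = tC_2^2$ and $b = C_2$ in \Cref{lemma:vector-martingale}.

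Invoking that lemma with its free parameter set to $c\gamma \log(dt)$, the three-term deviation bound reads
\begin{align*}
\Pr\left(\left\|\sum_{s=1}^t \bu_s\right\| > C_2\sqrt{t} + C_2\sqrt{8 t \, c\gamma \log(dt)} + \tfrac{4}{3}C_2\, c\gamma\log(dt)\right) \leq e^{-c\gamma\log(dt)} \leq \frac{1}{t^{c\gamma}}.
\end{align*}
For $t$ large enough the middle term dominates and the right-hand-side threshold is of order $C_2\sqrt{t\, c\gamma\log(dt)}$. Dividing inside the norm by $t$ converts $\tfrac{1}{t}\sum_s \nabla \ell_s(\btheta^*)$ into $\nabla \wP_t(\btheta^*)$ and the Euclidean norm of the sum into $\|\nabla\wP_t(\btheta^*)\|_{(\nabla^2 P_t(\btheta^*))^{-1}}$, which after folding the universal $\sqrt{8}$ and $4/3$ constants into $c$ yields the stated threshold $C_2\sqrt{c\gamma\log(dt)/t}$.

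The main obstacle I anticipate is justifying the martingale property once the pre-multiplier $(\nabla^2 P_t(\btheta^*))^{-1/2}$ is applied: this matrix is $\F_{t-1}$-measurable (since it depends on $\mathbf{p}_{\wtheta_{s-1}}$ for all $s\leq t$) but not $\F_{s-1}$-measurable for $s<t$, so $\bu_s$ is not literally a martingale difference. The cleanest fix is to first condition on the entire sampling trajectory $(I_1,\ldots,I_t)$: this renders $\nabla^2 P_t(\btheta^*)$ deterministic while preserving the martingale property of the noise-only factor $(Y_s-\mu_{I_s}(\btheta^*))$, after which \Cref{lemma:vector-martingale} applies directly and the unconditional bound follows by integrating out the trajectory. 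A secondary technicality is that \Cref{assm:thm}(4) is written with $\nabla^2 P_s$ rather than $\nabla^2 P_t$; this discrepancy can be absorbed into the constant $C_2$ using the Hessian Lipschitz condition in \Cref{assm:thm}(3d), which controls the ratio of the two weighting matrices.
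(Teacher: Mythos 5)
Your proposal follows essentially the same route as the paper: both apply the vector martingale concentration bound of \citet{hsu2012tail} (\Cref{lemma:vector-martingale}) to the gradients $\nabla\ell_s(\btheta^*) = -2(Y_s-\mu_{I_s}(\btheta^*))\nabla\mu_{I_s}(\btheta^*)$, using their zero conditional mean together with \Cref{assm:thm}(4) to supply $v = tC_2^2$ and $b = C_2$, setting the deviation parameter to $c\gamma\log(dt)$, and folding the resulting constants into $c$ to reach the stated threshold and the $1/t^{c\gamma}$ tail. The measurability and $\nabla^2 P_s$ versus $\nabla^2 P_t$ caveats you raise are real but are glossed over in the paper's own argument as well, and your proposed fixes (conditioning on the action trajectory, absorbing the Hessian mismatch into the constant) are sound.
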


\begin{proof}
Define $\mathbf{u_s} \coloneqq \nabla (Y_s -\mu_{I_s}(\btheta^*))^2$. Then we have $\mathbf{u}_{1}, \mathbf{u}_{2}, \ldots, \mathbf{u}_{t}$ as random vectors such that
\begin{align*}
    & \mathbb{E}\left[\left\|\sum_{s=1}^t \mathbf{u_s}\right\|_{\left(\nabla^{2} P_t\left(\btheta^{*}\right)\right)^{-1}}^{2} \bigg | \mathbf{u}_{1}, \ldots, \mathbf{u}_{s-1}\right] = \mathbb{E}\left[\sum_{s=1}^t \mathbf{u_s}^{\top}\left(\nabla^{2} P_t\left(\btheta^{*}\right)\right)^{-1}\mathbf{u_s} \mid \mathbf{u}_{1}, \ldots, \mathbf{u}_{s-1}\right] \leq t C^2_2
\end{align*}
Also we have that $\|\mathbf{u_s}\| \leq C_2$. Finally we have that 
\begin{align*}
    \E[\nabla_{\btheta = \btheta^*}\mathbf{u_s}] = -2\sum_{i=1}^n p_{\wtheta_{s-1}}(\mu_i(\btheta^*) - \mu_{i}(\btheta^*))\nabla_{\btheta = \btheta^*}\mu_{i}(\btheta^*) = 0.
\end{align*}
Then following \Cref{lemma:vector-martingale} and by setting $\epsilon = c\gamma\log(dt)$ we can show that
\begin{align*}
    \Pb&\left(\|\frac{1}{t}\sum_{s=1}^t \mathbf{u_s}\|^2_{_{\left(\nabla^{2} P_{t}\left(\btheta^{*}\right)\right)^{-1}}} - \E\left[\|\frac{1}{t}\sum_{s=1}^t \mathbf{u_s}\|^2_{_{\left(\nabla^{2} P_{t}\left(\btheta^{*}\right)\right)^{-1}}}\right] >  \frac{1}{t}\sqrt{8tC_2^2 \epsilon} + \dfrac{4 C_2 }{3\epsilon}   \right) \\
    %%%%%%%%%%%%%%%%%%%%%%%%
    &= \Pb\left(\|\frac{1}{t}\sum_{s=1}^t \mathbf{u_s}\|^2_{_{\left(\nabla^{2} P_{t}\left(\btheta^{*}\right)\right)^{-1}}} >  C_1^2 +  C_2\sqrt{\frac{8\epsilon}{t}} + \dfrac{4 C_2 }{3\epsilon}   \right)\\
    %%%%%%%%%%%%%%%%%%%%%%%%%
    &\leq \Pb\left(\|\sum_{s=1}^t \mathbf{u_s}\|^2_{_{\left(\nabla^{2} P_{t}\left(\btheta^{*}\right)\right)^{-1}}} >  C_2\sqrt{\frac{8\epsilon}{t}}  \right) = \Pb\left(\|\sum_{s=1}^t \mathbf{u_s}\|^2_{_{\left(\nabla^{2} P_{t}\left(\btheta^{*}\right)\right)^{-1}}} >  4C_2\sqrt{\dfrac{  c\gamma \log(dt)}{t}}  \right)\\
    &\leq \exp(- c\gamma\log (dt)) = \left(\frac{1}{dt}\right)^{c\gamma} \leq \frac{1}{t^{c\gamma}}
\end{align*}
The claim of the lemma follows.
\end{proof}

\begin{lemma}
\label{lemma:matrix-conc3}
Let $\wP_t(\btheta^*) = \frac{1}{t}\sum_{s=1}^t \ell_{s}(\btheta^*)$ and $\nabla^2 P_t(\btheta^*) = \frac{1}{t}\sum_{s=1}^t \nabla^2\E[\ell_{s}(\btheta^*)|\F^{s-1}]$. Then we can bound the 
\begin{align*}
    \Pb&\left(\lambda_{\max}(\nabla^2\wP_t(\btheta^*) - \nabla^2 P_t(\theta^*)) > \sqrt{\dfrac{8\eta^2\lambda^2_1c \gamma \log(dt)}{t}}\right) \leq \dfrac{2}{(dt)^{\gamma}},
\end{align*}
where $c > 0$ is a constant.
\end{lemma}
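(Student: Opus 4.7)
\textbf{Proof Proposal for Lemma \ref{lemma:matrix-conc3}.} The plan is to treat $\{X_s\}_{s=1}^t$ with $X_s \colonequals \nabla^2 \ell_s(\btheta^*) - \E[\nabla^2 \ell_s(\btheta^*)\mid \F^{s-1}]$ as a matrix-valued martingale difference sequence with respect to the filtration $\{\F^s\}$, and then apply a matrix Azuma / matrix Freedman inequality to bound $\lambda_{\max}\!\left(\tfrac{1}{t}\sum_s X_s\right)$. By construction $\E[X_s\mid \F^{s-1}] = 0$, and $\nabla^2 \widehat{P}_t(\btheta^*) - \nabla^2 P_t(\btheta^*) = \tfrac{1}{t}\sum_s X_s$, so a tail bound on $\lambda_{\max}$ of this average is exactly what the lemma requires (and symmetrically for $\lambda_{\min}$, which explains the factor of $2$ in the stated probability).

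First I would write $\nabla^2 \ell_s(\btheta^*) = 2\nabla \mu_{I_s}(\btheta^*)\nabla \mu_{I_s}(\btheta^*)^T - 2(Y_s - \mu_{I_s}(\btheta^*))\nabla^2\mu_{I_s}(\btheta^*)$ and note that, conditional on $\F^{s-1}$, the second term has mean zero because $\E[Y_s - \mu_{I_s}(\btheta^*)\mid I_s] = 0$ under the true hypothesis, while the first term is $\F^{s-1}$-measurable in expectation over $I_s$. The next step is a uniform bound on $\|X_s\|$: Assumption~\ref{assm:sub-gauss} gives $|Y_s - \mu_{I_s}(\btheta^*)| \leq \sqrt{\eta}$ almost surely, and Assumption~\ref{assm:posdef2} gives $\|\nabla^2 \mu_i(\btheta^*)\| \leq \lambda_1$ for every $i$. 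Combining these (together with the analogous bound on the gradient-outer-product term, which cancels against its conditional mean up to the same scale) one obtains $\|X_s\| \leq R$ with $R$ of order $\eta\lambda_1$, so that $\sum_s X_s^2 \preceq t R^2 \cdot \mathbf{I}$ in the positive semidefinite order.

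With the bounded-difference property in hand, the matrix Azuma inequality (e.g.\ Tropp's user-friendly tail bounds) yields
\begin{align*}
\Pr\!\left[\lambda_{\max}\!\left(\tfrac{1}{t}\sum_{s=1}^t X_s\right) > u\right] \leq d\cdot\exp\!\left(-\frac{tu^2}{8R^2}\right).
\end{align*}
Choosing $u = \sqrt{8\eta^2\lambda_1^2\, c\gamma\log(dt)/t}$ makes the right-hand side at most $d\cdot(dt)^{-c\gamma}$, which is bounded by $(dt)^{-\gamma}$ for $c$ chosen large enough relative to $1$ (absorbing the leading $d$ factor). Applying the same argument to $-X_s$ to control $\lambda_{\min}$ and taking a union bound produces the factor of $2$ in the claimed probability.

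The main obstacle will be the bounded-difference constant: one has to be careful that the $\nabla\mu_{I_s}(\btheta^*)\nabla\mu_{I_s}(\btheta^*)^T$ term (which is random through $I_s$ but not through the noise) has its fluctuations controlled on the same $\eta\lambda_1$ scale as the noise-driven term, so that a single $R$ suffices in the Azuma bound; if it does not, one would instead invoke matrix Bernstein to exploit the smaller conditional second moment of the gradient-outer-product term and still obtain the same rate in $t$. Beyond that, the argument is a direct application of the standard matrix concentration machinery to a martingale difference sequence whose mean-zero property follows from the specific structure of the squared loss evaluated at the true parameter $\btheta^*$.
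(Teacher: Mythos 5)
Your proposal is correct in substance but follows a genuinely different route from the paper. You set up $X_s = \nabla^2\ell_s(\btheta^*) - \E[\nabla^2\ell_s(\btheta^*)\mid\F^{s-1}]$ as a matrix-valued martingale difference sequence and invoke matrix Azuma (or matrix Bernstein as a fallback) to control $\lambda_{\max}\big(\tfrac{1}{t}\sum_s X_s\big)$ in one shot; the paper instead never touches matrix concentration. It uses its support lemma to split the difference into the noise-weighted Hessian term $-2(Y_s-\mu_{I_s}(\btheta^*))\nabla^2\mu_{I_s}(\btheta^*)$ and the centered outer-product term $\mathbf{V}_s$, then crudely pushes $\lambda_{\max}$ through both the sum of the two parts and the sum over $s$ (subadditivity of the top eigenvalue), reducing everything to two scalar sequences which are handled by conditional-MGF Hoeffding arguments as in its Lemma~\ref{lemma:cond-Mcdiarmid}; the factor $2$ in its bound comes from the union over these two scalar terms, not from a two-sided eigenvalue bound as in your argument. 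What each route buys: the paper's scalar reduction avoids the dimension prefactor entirely (the $(dt)^{-\gamma}$ drops straight out of the $\log(dt)$ in the threshold), at the price of a looser deterministic eigenvalue subadditivity step; your matrix-martingale route is conceptually cleaner and exploits the matrix structure directly, but picks up the leading $d$ from Tropp's bound, which you must absorb by enlarging the constant $c$ in the threshold (legitimate here, since $c$ is unspecified, e.g.\ $d\cdot(dt)^{-c\gamma}\le 2(dt)^{-\gamma}$ for $c\ge 2$, $\gamma\ge 2$). The obstacle you flag — that Assumption~\ref{assm:posdef2} bounds only the Hessian of $\mu_i$, not the gradient norm, so the fluctuation of the $\nabla\mu_{I_s}(\btheta^*)\nabla\mu_{I_s}(\btheta^*)^T$ part is not cleanly on the $\eta\lambda_1$ scale — is a real gap, but it is shared by the paper, whose own proof simply asserts an "always valid" bound $\lambda_{\max}(\mathbf{V}_s)\le 2\lambda_1$ and absorbs it into a constant; your suggestion to switch to matrix Bernstein (or to add an explicit bounded-gradient assumption) is if anything a more honest way to close it.
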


\begin{proof}
Recall that $ \wP_t(\btheta^*) = \frac{1}{t}\sum_{s=1}^t \ell_{s}(\btheta^*)$ and $\nabla^2 P_s(\theta^*) = \nabla^2\E[\ell_{s}(\btheta^*)|\F^{s-1}]$. We define $\nabla^2 P_t(\btheta^*) = \frac{1}{t}\sum_{s=1}^t\nabla^2\E[\ell_{s}(\btheta^*)|\F^{s-1}]$. Denote,
% \begin{align*}
    $\mathbf{V}_s = 2\nabla_{\btheta = \btheta^*}\mu_{I_s}(\btheta)\nabla_{\btheta = \btheta^*}\mu_{I_s}(\btheta)^\top - 2\sum_{i=1}^n p_{\wtheta_{s-1}}(i)\nabla_{\btheta = \btheta^*}\mu_{i}(\btheta)\nabla_{\btheta = \btheta^*}\mu_{i}(\btheta)^\top$.
% \end{align*}
Then we can show that,
\begin{align}
    \Pb&\left(\lambda_{\max}(\nabla^2\wP_t(\btheta^*) - \nabla^2 P_t(\theta^*)) > \sqrt{\dfrac{8\eta^2\lambda^2_1c\gamma\log(dt)}{t}}\right)\nonumber\\
    %%%%%%%%%%%%%%%%%
    &= \Pb\left(\lambda_{\max}\left(\nabla^2_{\btheta=\btheta^*}\frac{1}{t}\sum_{s=1}^t \ell_{s}(\btheta) - \frac{1}{t}\sum_{s=1}^t\nabla^2_{\btheta=\btheta^*} \E[\ell_{s}(\btheta)|\F^{s-1}]\right) > \sqrt{\dfrac{8\eta^2\lambda^2_1c\gamma\log(dt)}{t}}\right)\nonumber\\
    %%%%%%%%%%%%%%%%
    &= \Pb\left(\lambda_{\max}\left(\nabla^2_{\btheta=\btheta^*}\frac{1}{t}\sum_{s=1}^t\left( \ell_{s}(\btheta) - \nabla^2_{\btheta=\btheta^*} \E[\ell_{s}(\btheta)|\F^{s-1}]\right)\right) > \sqrt{\dfrac{8\eta^2\lambda^2_1c\gamma\log(dt)}{t}}\right)\nonumber
\end{align}
\begin{align}
    %%%%%%%%%%%%%%%%%%
    &\overset{(a)}{=} \Pb\left(\lambda_{\max}\left(\frac{1}{t}\sum_{s=1}^t\left(Y_s - \mu_{I_s}(\btheta^*)\right)\nabla^2_{\btheta=\btheta^*}\mu_{I_s}(\btheta^*) + \frac{1}{t}\sum_{s=1}^t \mathbf{V}_s\right) > \sqrt{\dfrac{8\eta^2\lambda^2_1c\gamma\log(dt)}{t}}\right)\nonumber\\
    %%%%%%%%%%%%%%%%%
    &\leq \Pb\left(\lambda_{\max}\left(\frac{1}{t}\sum_{s=1}^t-2\left(Y_s - \mu_{I_s}(\btheta^*)\right)\nabla^2_{\btheta=\btheta^*}\mu_{I_s}(\btheta^*)\right) > \frac{1}{2}\sqrt{\dfrac{8\eta^2\lambda^2_1c\gamma\log(dt)}{t}}\right)\nonumber\\
    %%%%%%%%%%%%%%%%
    &\qquad + \Pb\left( \lambda_{\max}\left(\frac{1}{t}\sum_{s=1}^t \mathbf{V}_s\right) > \frac{1}{2}\sqrt{\dfrac{8\eta^2\lambda^2_1c\gamma\log(dt)}{t}}\right)  \nonumber\\
    %%%%%%%%%%%%%%%%
    %%%%%%%%%%%%%%%%
    &\overset{(b)}{\leq} \Pb\left(\frac{1}{t}\sum_{s=1}^t-2\left(Y_s - \mu_{I_s}(\btheta^*)\right)\lambda_{\max}\left(\nabla^2_{\btheta=\btheta^*}\mu_{I_s}(\btheta^*)\right) > \frac{1}{2}\sqrt{\dfrac{8\eta^2\lambda^2_1c\gamma\log(dt)}{t}}\right)\nonumber\\
    %%%%%%%%%%%%%%%%%%%%%%%%%
    &\qquad + \Pb\left( \frac{1}{t}\sum_{s=1}^t \lambda_{\max}\left(\mathbf{V}_s\right) > \frac{1}{2}\sqrt{\dfrac{8\eta^2\lambda^2_1c\gamma\log(dt)}{t}}\right)\label{eq:regression-conc}\\
    %%%%%%%%%%%%%%%%%
    %%%%%%%%%%%%%%%%%
    &\overset{(c)}{\leq} 2\exp\left(- \dfrac{t^2 8\eta^2\lambda_1^2c\gamma\log(dt)}{4t}\cdot\dfrac{1}{2tc\eta^2\lambda_1^2}\right) \overset{(d)}{\leq} 2\left(\dfrac{1}{dt}\right)^{\gamma}.\nonumber %\leq 2\delta.
\end{align}
where, $(a)$ follows from substituting the value of $\nabla^2_{\btheta=\btheta^*} \ell_{s}(\btheta) - \nabla^2_{\btheta=\btheta^*} \E[\ell_{s}(\btheta)|\F^{s-1}]$ from \Cref{lemma:support-lemma3}, and $(b)$ follows by triangle inequality, $(c)$ follows by using two concentration inequalities stated below, and $(d)$ follows by simplifying the equations. 
% for a constant $c_1$ subsuming the constants $\nicefrac{c}{4c'}$.

% substituting the value of $\nabla^2_{\btheta=\btheta^*} \ell_{s}(\btheta) - \nabla^2_{\btheta=\btheta^*} \E[\ell_{s}(\btheta)|\F^{s-1}]$ from \Cref{lemma:support-lemma3}, and $(c)$ follows from \Cref{lemma:matrix-conc2}.

Denote $Q_s = -2\left(Y_s - \mu_{I_s}(\btheta^*)\right)\lambda_{\max}\left(\nabla^2_{\btheta=\btheta^*}\mu_{I_s}(\btheta^*)\right)$. Also note that $\lambda_{\max}\left(\nabla^2_{\btheta=\btheta^*}\mu_{I_s}(\btheta^*)\right) \leq \lambda_1$.
\begin{align*} 
\Pb(\sum_{s=1}^t-2&\left(Y_s - \mu_{I_s}(\btheta^*)\right)\lambda_{\max}\left(\nabla^2_{\btheta=\btheta^*}\mu_{I_s}(\btheta^*)\right) \geq  \epsilon) =\Pb\left(-\sum_{s=1}^{t} Q_{s} \geq \epsilon\right) \\
&=\Pb\left(e^{-\lambda \sum_{s=1}^{t} Q_{s}} \geq e^{\lambda \epsilon}\right) \overset{(a)}{\leq} e^{-\lambda \epsilon} \E\left[e^{-\lambda \sum_{s=1}^{t} Q_{s}}\right] 
%%%%%%%%%%%%%%%%%%%
=  e^{-\lambda \epsilon} \E\left[\E\left[e^{-\lambda \sum_{s=1}^{t} Q_{s}}\big|\wtheta(t-1)\right] \right]\\
%%%%%%%%%%%%%%%%%%%%
&\overset{(b)}{=} e^{-\lambda \epsilon} \E\left[\E\left[e^{-\lambda  Q_{t}}|\wtheta(t-1)\right]\E\left[e^{-\lambda \sum_{s=1}^{t-1} Q_{s}} \big|\wtheta(t-1)\right]  \right]\\
%%%%%%%%%%%%%%%%%%
&\leq e^{-\lambda \epsilon} \E\left[\exp\left(2\lambda^2\lambda_1^2\eta^2\right)\E\left[e^{-\lambda \sum_{s=1}^{t-1} Q_{s}}\big |\wtheta(t-1)\right]  \right]\\
%%%%%%%%%%%%%%%%%%
& \overset{}{=} e^{-\lambda \epsilon} e^{2\lambda^{2} \eta^{2}\lambda_1^2} \mathbb{E}\left[e^{-\lambda \sum_{s=1}^{t-1} Q_{s}}\right] \\ 
& \vdots \\ 
& \overset{(c)}{\leq} e^{-\lambda \epsilon} e^{2\lambda^{2} t \eta^{2}\lambda^2_1} 
\overset{(d)}{\leq} \exp\left(-\dfrac{2\epsilon^2}{t\eta^2\lambda_1^2}\right).
\end{align*}
where $(a)$ follows by Markov's inequality, $(b)$ follows as $Q_s$ is conditionally independent given $\wtheta(s-1)$, $(c)$ follows by unpacking the term for $t$ times and $(d)$  follows by taking $\lambda= \epsilon / 4t\lambda_1^2\eta^2$. Next we bound the second term of \eqref{eq:regression-conc} below.

\begin{align*} 
\Pb(\sum_{s=1}^t \lambda_{\max}\left(\mathbf{V}_s\right) \geq  \epsilon) &=\Pb\left(\lambda\sum_{s=1}^{t} \lambda_{\max}\left(\mathbf{V}_s\right) \geq \lambda\epsilon\right) 
=\Pb\left(e^{\lambda \sum_{s=1}^{t} \lambda_{\max}\left(\mathbf{V}_s\right)} \geq e^{\lambda \epsilon}\right) \overset{(a)}{\leq} e^{-\lambda \epsilon} \E\left[e^{\lambda \sum_{s=1}^{t} \lambda_{\max}\left(\mathbf{V}_s\right)}\right] \\
%%%%%%%%%%%%%%%%%%%%%%%
&=  e^{-\lambda \epsilon} \E\left[\E\left[e^{\lambda \sum_{s=1}^{t} \lambda_{\max}\left(\mathbf{V}_s\right)}\big|\wtheta(t-1)\right] \right]\\
%%%%%%%%%%%%%%%%%%
&\overset{(b)}{=} e^{-\lambda \epsilon} \E\left[\E\left[e^{\lambda  \lambda_{\max}(\mathbf{V}_{t})}|\wtheta(t-1)\right]\E\left[e^{\lambda \sum_{s=1}^{t-1} \lambda_{\max}\left(\mathbf{V}_s\right)} \big|\wtheta(t-1)\right]  \right]\\
%%%%%%%%%%%%%%%%%%%%%%
&\overset{(c)}{\leq} e^{-\lambda \epsilon} \E\left[\exp\left(2c\lambda^2\lambda^2_1\eta^2\right)\E\left[e^{\lambda \sum_{s=1}^{t-1} \lambda_{\max}\left(\mathbf{V}_s\right)}\big |\wtheta(t-1)\right]  \right]\\
%%%%%%%%%%%%%%%%%%%%%%%
& \overset{}{=} e^{-\lambda \epsilon} e^{2c\lambda^{2} \eta^{2}\lambda_1^2} \mathbb{E}\left[e^{\lambda \sum_{s=1}^{t-1} \lambda_{\max}\left(\mathbf{V}_s\right)}\right] \\ 
& \vdots \\ 
& \overset{(d)}{\leq} e^{-\lambda \epsilon} e^{2c\lambda^{2} t \eta^{2}\lambda^2_1} \overset{(e)}{\leq} \exp\left(-\dfrac{2\epsilon^2}{tc\eta^2\lambda_1^2}\right)
\end{align*}

where $(a)$ follows by Markov's inequality, $(b)$ follows as $\lambda_{\max}(\mathbf{V}_s)$ is conditionally independent given $\wtheta(s-1)$. In the inequality $(c)$ using the always valid upper bound of $2\lambda_1$, we have that $\E[\lambda_{\max}(\mathbf{V}_t)] \leq 2\lambda_1$. So the term in inequality $(c)$ will become 
$e^{-\lambda \epsilon} e^{2\lambda^2 t\eta^2 \lambda_1^t + 4t\lambda \lambda_1}$. Hence, we can upper bound the inequality $(c)$ by a constant $c > 0$ such that we have $\E[e^{\lambda \lambda_{\max}(V_t)} \mid \wtheta(t-1)] \leq e^{2\lambda^2\lambda_1^2\eta^2}e^{2\lambda \times 2\lambda_1} = \exp(2\lambda^2\lambda_1^2\eta^2 + 4 \lambda \lambda_1) \leq  \exp(2c\lambda^2\lambda_1^2\eta^2)$. The inequality $(d)$ follows by unpacking the term for $t$ times and $(e)$  follows by taking $\lambda= \epsilon / 4tc\lambda_1^2\eta^2$.

\end{proof}

\subsubsection{Support Lemma for Continuous Hypotheses}
\label{app:smooth-support-lemma}

\begin{lemma}
\label{lemma:support-lemma1}
Let the $j$-th row and $k$-th column entry in the Hessian matrix $\nabla^2_{\btheta = \btheta^{\prime}}(\ell_{s}(\btheta))$ be denoted as $[\nabla^2_{\btheta = \btheta^{\prime}}(\ell_{s}(\btheta))]_{jk}$. Then we have that
\begin{align*}
    [\nabla^2_{\btheta = \btheta^{\prime}}(\ell_{s}(\btheta))]_{jk} = 2 \dfrac{\partial \mu_{I_s}(\btheta)}{\partial \btheta_j}\dfrac{\partial \mu_{I_s}(\btheta)}{\partial \btheta_k} + 2\left(\mu_{I_s}(\btheta) - Y_s\right)\dfrac{\partial^2 \mu_{I_s}(\btheta)}{\partial \btheta_j\partial\btheta_k}.
\end{align*}
\end{lemma}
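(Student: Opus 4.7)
The plan is to verify the claimed Hessian formula by a direct computation using the product and chain rules, treating $Y_s$ and the action index $I_s$ as fixed (i.e.\ conditioning on the randomness so that $\ell_s(\btheta)$ is viewed purely as a smooth function of $\btheta$). Since $\ell_s(\btheta) = (\mu_{I_s}(\btheta) - Y_s)^2$ is a composition of the smooth scalar function $u \mapsto (u - Y_s)^2$ with $\mu_{I_s}(\btheta)$, all required derivatives exist under the differentiability assumption on $\mu_i$ already invoked in Theorem~\ref{thm:max-min-eig}.

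First I would compute the first-order partial derivative. By the chain rule,
\begin{align*}
\frac{\partial \ell_s}{\partial \btheta_j}(\btheta) \;=\; 2\bigl(\mu_{I_s}(\btheta) - Y_s\bigr)\,\frac{\partial \mu_{I_s}(\btheta)}{\partial \btheta_j}.
\end{align*}
Next I would differentiate this expression with respect to $\btheta_k$. The quantity on the right is a product of two smooth functions of $\btheta$, namely $2(\mu_{I_s}(\btheta) - Y_s)$ and $\partial \mu_{I_s}/\partial \btheta_j$, so the product rule gives
\begin{align*}
\frac{\partial^2 \ell_s}{\partial \btheta_k\,\partial \btheta_j}(\btheta)
&= 2\,\frac{\partial \mu_{I_s}(\btheta)}{\partial \btheta_k}\,\frac{\partial \mu_{I_s}(\btheta)}{\partial \btheta_j}
+ 2\bigl(\mu_{I_s}(\btheta) - Y_s\bigr)\,\frac{\partial^2 \mu_{I_s}(\btheta)}{\partial \btheta_k\,\partial \btheta_j}.
\end{align*}
Evaluating at $\btheta = \btheta'$ and noting that $[\nabla^2_{\btheta = \btheta'} \ell_s(\btheta)]_{jk}$ is by definition this mixed partial (symmetry of second partials applies under the differentiability assumptions) yields exactly the formula in the statement.

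Since the argument is a line of calculus, there is no real obstacle; the only care needed is to be explicit that the expectation over $I_s, Y_s$ is deferred until after differentiation (so the derivatives act on a deterministic function of $\btheta$), and that the regularity conditions in Assumption~\ref{assm:thm}(2) guarantee the existence and continuity of the mixed partials used. The same symbolic identity will then be referenced whenever $\nabla^2 \ell_s(\btheta)$ is manipulated later in the proof of Theorem~\ref{thm:cher-smooth} (e.g.\ in Lemma~\ref{lemma:matrix-conc3}).
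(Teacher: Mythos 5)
Your computation is correct and follows essentially the same route as the paper: a direct application of the chain rule for the first partial and the product rule for the second, yielding $2\,\partial_j\mu_{I_s}\,\partial_k\mu_{I_s} + 2(\mu_{I_s}(\btheta)-Y_s)\,\partial^2_{jk}\mu_{I_s}$. The paper merely expands $2(\mu_{I_s}(\btheta)-Y_s)\partial_k\mu_{I_s}$ into two terms before differentiating, which is the same elementary calculation with slightly different bookkeeping.
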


\begin{proof}
We want to evaluate the Hessian $\nabla^2_{\btheta = \btheta^{\prime}}(\ell_{s}(\btheta))$ at any $\btheta^{\prime}\in\bTheta$. We denote the $j$-th row and $k$-th column entry in the Hessian matrix as $[\nabla^2_{\btheta = \btheta^{\prime}}(\ell_{s}(\btheta))]_{jk}$. Then we can show that
\begin{align*}
    [\nabla^2_{\btheta = \btheta^{\prime}}(\ell_{s}(\btheta))]_{jk}&\coloneqq \frac{\partial }{\partial \btheta_j} \left[\frac{\partial (\mu_{I_s}(\btheta) - Y_s)^2}{\partial \btheta_k}\right] = \frac{\partial }{\partial \btheta_j}\left[2(\mu_{I_s}(\btheta) - Y_s) \frac{\partial \mu_{I_s}(\btheta)}{\partial \btheta_k}\right]\\
    %%%%%%%%%%%%%%%%%%%%%
    &= \frac{\partial }{\partial \btheta_j}\left[ 2\mu_{I_s}(\btheta)\dfrac{\partial \mu_{I_s}(\btheta)}{\partial \btheta_k} - 2Y_s\dfrac{\partial \mu_{I_s}(\btheta)}{\partial \btheta_k}\right]\\
    %%%%%%%%%%%%%%%%%%%%%%
    &= 2 \dfrac{\partial \mu_{I_s}(\btheta)}{\partial \btheta_j}\dfrac{\partial \mu_{I_s}(\btheta)}{\partial \btheta_k} + 2 \mu_{I_s}(\btheta)\dfrac{\partial^2 \mu_{I_s}(\btheta)}{\partial \btheta_j\partial\btheta_k} - 2 Y_s\dfrac{\partial^2 \mu_{I_s}(\btheta)}{\partial \btheta_j\partial\btheta_k} - 2 \dfrac{\partial \mu_{I_s}(\btheta)}{\partial \btheta_j}\dfrac{\partial Y_s}{\partial \btheta_k}\\
    %%%%%%%%%%%%%%%%%%%%%%%
    &= 2 \dfrac{\partial \mu_{I_s}(\btheta)}{\partial \btheta_j}\dfrac{\partial \mu_{I_s}(\btheta)}{\partial \btheta_k} + 2\left(\mu_{I_s}(\btheta) - Y_s\right)\dfrac{\partial^2 \mu_{I_s}(\btheta)}{\partial \btheta_j\partial\btheta_k}
\end{align*}
The claim of the lemma follows.
\end{proof}

\begin{lemma}
\label{lemma:support-lemma2}
% \atnote{also why is $I_s$ still present after the expectation}
Let the $j$-th row and $k$-th column entry in the Hessian matrix $\nabla^2_{\btheta = \btheta^{\prime}}(\E[\ell_{s}(\btheta)|\F^{s-1}])$ be denoted as $[\nabla^2_{\btheta = \btheta^{\prime}}(\E[\ell_{s}(\btheta)|\F^{s-1}])]_{jk}$. Then we have that
\begin{align*}
     \left[\nabla^2_{\btheta = \btheta^{\prime}}  \E[\ell_{s}(\btheta)|\F^{s-1}]\right]_{jk} =  2\sum_{i=1}^n p_{\wtheta_{s-1}}(i)\left( \dfrac{\partial \mu_{i}(\btheta)}{\partial \btheta_j}\dfrac{\partial \mu_{i}(\btheta)}{\partial \btheta_k} + 2\left(\mu_{i}(\btheta) - \mu_{i}(\btheta^*)\right)\dfrac{\partial^2 \mu_{i}(\btheta)}{\partial \btheta_j\partial\btheta_k}\right).
\end{align*}
\end{lemma}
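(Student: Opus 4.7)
The plan is to compute the conditional expectation explicitly using the tower property with respect to the randomness of $I_s$ and $Y_s$, reduce it to a deterministic function of $\btheta$, and then differentiate term-by-term in the spirit of \Cref{lemma:support-lemma1}. First I would condition on $\F^{s-1}$, under which $\wtheta_{s-1}$ is deterministic and $I_s \sim \mathbf{p}_{\wtheta_{s-1}}$. Applying the tower property gives
\begin{align*}
\E[\ell_s(\btheta) \mid \F^{s-1}]
&= \E_{I_s}\E_{Y_s \mid I_s,\F^{s-1}}\left[(Y_s - \mu_{I_s}(\btheta))^2 \mid \F^{s-1}\right] \\
&= \sum_{i=1}^n p_{\wtheta_{s-1}}(i) \,\E\!\left[(Y_s - \mu_i(\btheta))^2 \mid I_s = i,\F^{s-1}\right].
\end{align*}

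Next I would decompose the inner square as $(Y_s - \mu_i(\btheta))^2 = (Y_s - \mu_i(\btheta^*))^2 + 2(Y_s - \mu_i(\btheta^*))(\mu_i(\btheta^*) - \mu_i(\btheta)) + (\mu_i(\btheta^*) - \mu_i(\btheta))^2$. Using $\E[Y_s \mid I_s = i] = \mu_i(\btheta^*)$ from the model, the cross term vanishes and the first term becomes the conditional variance $\sigma_i^2 \colonequals \Var(Y_s \mid I_s = i)$, which is independent of $\btheta$. Hence
\begin{align*}
\E[\ell_s(\btheta) \mid \F^{s-1}] = \sum_{i=1}^n p_{\wtheta_{s-1}}(i)\left[\sigma_i^2 + (\mu_i(\btheta) - \mu_i(\btheta^*))^2\right].
\end{align*}

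Since the sum is finite and each $\mu_i$ is twice continuously differentiable (\Cref{assm:posdef2}), I can interchange the sum and the second derivative operator. The $\sigma_i^2$ terms drop out under differentiation, leaving only derivatives of $(\mu_i(\btheta) - \mu_i(\btheta^*))^2$. Applying the exact computation of \Cref{lemma:support-lemma1} with $Y_s$ replaced by the deterministic constant $\mu_i(\btheta^*)$ gives, entry-wise,
\begin{align*}
\left[\nabla^2_{\btheta=\btheta'}(\mu_i(\btheta)-\mu_i(\btheta^*))^2\right]_{jk}
= 2\,\frac{\partial \mu_i(\btheta)}{\partial \btheta_j}\frac{\partial \mu_i(\btheta)}{\partial \btheta_k} + 2(\mu_i(\btheta) - \mu_i(\btheta^*))\,\frac{\partial^2 \mu_i(\btheta)}{\partial \btheta_j \partial \btheta_k}.
\end{align*}
Summing against $p_{\wtheta_{s-1}}(i)$ and collecting the outer factor of $2$ yields the claimed identity.

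Since the argument is algebraic after the conditional expectation is computed, there is no real obstacle; the only subtlety is ensuring that the conditional distribution of $I_s$ given $\F^{s-1}$ is precisely the Chernoff proportion $\mathbf{p}_{\wtheta_{s-1}}$ (which is $\F^{s-1}$-measurable by construction of \Cref{alg:cher_smooth}), so that the outer expectation over $I_s$ can be written as a deterministic finite sum with fixed weights $p_{\wtheta_{s-1}}(i)$ before differentiation.
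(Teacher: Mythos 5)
Your proposal is correct and takes essentially the same route as the paper: both arguments reduce $\E[\ell_s(\btheta)\mid\F^{s-1}]$ to $\sum_{i=1}^n p_{\wtheta_{s-1}}(i)\bigl((\mu_i(\btheta)-\mu_i(\btheta^*))^2 + \text{const}\bigr)$ (the paper by expanding the square and inserting the Gaussian second moment $\mu_i(\btheta^*)^2+\tfrac12$, you by a bias--variance decomposition that is marginally more general since it never needs the variance value) and then differentiate entry-wise exactly as in \Cref{lemma:support-lemma1}. The only caveat is that this computation actually yields $2\sum_{i=1}^n p_{\wtheta_{s-1}}(i)\bigl(\tfrac{\partial \mu_i(\btheta)}{\partial\btheta_j}\tfrac{\partial \mu_i(\btheta)}{\partial\btheta_k} + (\mu_i(\btheta)-\mu_i(\btheta^*))\tfrac{\partial^2\mu_i(\btheta)}{\partial\btheta_j\partial\btheta_k}\bigr)$, so the additional factor of $2$ on the second term in the displayed lemma statement is a typo in the paper rather than a gap in your derivation.
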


\begin{proof}
Now we want to evaluate the Hessian $\nabla^2_{\btheta = \btheta^{\prime}}(\E[\ell_{s}(\btheta)|\F^{s-1}])$ at any $\btheta^{\prime}\in\bTheta$. We denote the $j$-th row and $k$-th column entry in the Hessian matrix as $[\nabla^2_{\btheta = \btheta^{\prime}}(\E[\ell_{s}(\btheta)|\F^{s-1}])]_{jk}$. Then we can show that
\begin{align}
     \nabla^2_{\btheta = \btheta^{\prime}}  \E[\ell_{s}(\btheta)|\F^{s-1}]
    %%%%%%%%%%%%%%%%%%%
    &= \nabla^2_{\btheta = \btheta^{\prime}} \left(\mu^2_{I_s}(\btheta) + \E[Y^2_s|\F^{s-1}] - 2\E[Y_s|\F^{s-1}]\mu_{I_s}(\btheta)\right)\nonumber\\
    %%%%%%%%%%%%%%%%%%%%
    &= \nabla^2_{\btheta = \btheta^{\prime}} \sum_{i=1}^n p_{\wtheta_{s-1}}(i)\left(\mu^2_{i}(\btheta) + \mu^2_{i}(\btheta^{\prime}) + \frac{1}{2} - 2\mu_{i}(\btheta^*)\mu_{i}(\btheta)\right)\nonumber\\
    %%%%%%%%%%%%%%%%%%%%
    &=  \nabla^2_{\btheta = \btheta^{\prime}} \sum_{i=1}^n p_{\wtheta_{s-1}}(i)\left(\left(\mu_{i}(\btheta^*) - \mu_{i}(\btheta)\right)^2 + \frac{1}{2} \right)\nonumber\\
    %%%%%%%%%%%%%%%%%%%%
    &=  \nabla^2_{\btheta = \btheta^{\prime}}\sum_{i=1}^n p_{\wtheta_{s-1}}(i)\left(\left(\mu_{i}(\btheta^*) - \mu_{i}(\btheta)\right)^2  \right)\label{eq:Hessian-expectation}
\end{align}
We now denote the $j$-th row and $k$-th column entry of the Hessian Matrix $\nabla^2_{\btheta = \btheta^{\prime}}((\mu_{i}(\btheta) - \mu_i(\btheta^*))^2)$ as $\big[\nabla^2_{\btheta = \btheta^{\prime}}((\mu_{i}(\btheta) - \mu_{i}(\btheta^*))^2)\big]_{jk}$. Then we can show that
\begin{align*}
    \big[\nabla^2_{\btheta = \btheta^*}((\mu_{i}(\btheta) - \mu_{i}(\btheta^*))^2)\big]_{jk} &\coloneqq \frac{\partial }{\partial \btheta_j} \left[\frac{\partial (\mu_{i}(\btheta) - \mu_{i}(\btheta^*))^2}{\partial \btheta_k}\right] = \frac{\partial }{\partial \btheta_j}\left[2(\mu_{i}(\btheta) - \mu_{i}(\btheta^*)) \frac{\partial \mu_{i}(\btheta)}{\partial \btheta_k}\right]\\
    %%%%%%%%%%%%%%%%%%%%%
    &= \frac{\partial }{\partial \btheta_j}\left[ 2\mu_{i}(\btheta)\dfrac{\partial \mu_{i}(\btheta)}{\partial \btheta_k} - 2\mu_i(\btheta^*)\dfrac{\partial \mu_{i}(\btheta)}{\partial \btheta_k}\right]\\
    %%%%%%%%%%%%%%%%%%%%%%
    &= 2 \dfrac{\partial \mu_{i}(\btheta)}{\partial \btheta_j}\dfrac{\partial \mu_{i}(\btheta)}{\partial \btheta_k} + 2 \mu_{i}(\btheta)\dfrac{\partial^2 \mu_{i}(\btheta)}{\partial \btheta_j\btheta_k} \\
    %%%%%%%%%%%%%%%%%%%%%
    &- 2 \mu_{i}(\btheta^*)\dfrac{\partial^2 \mu_{i}(\btheta)}{\partial \btheta_j\btheta_k} - 2 \dfrac{\partial \mu_{i}(\btheta)}{\partial \btheta_j}\dfrac{\partial \mu_{i}(\btheta^*)}{\partial \btheta_k}\\
    %%%%%%%%%%%%%%%%%%%%%%%
    &= 2 \dfrac{\partial \mu_{i}(\btheta)}{\partial \btheta_j}\dfrac{\partial \mu_{i}(\btheta)}{\partial \btheta_k} + 2\left(\mu_{i}(\btheta) - \mu_{i}(\btheta^*)\right)\dfrac{\partial^2 \mu_{i}(\btheta)}{\partial \btheta_j\partial\btheta_k}
\end{align*}
Plugging this back in \cref{eq:Hessian-expectation} we get that
\begin{align*}
    \left[\nabla^2_{\btheta = \btheta^{\prime}}  \E[\ell_{s}(\btheta)|\F^{s-1}]\right]_{jk} = 2\sum_{i=1}^n p_{\wtheta_{s-1}}(i)\left( \dfrac{\partial \mu_{i}(\btheta)}{\partial \btheta_j}\dfrac{\partial \mu_{i}(\btheta)}{\partial \btheta_k} + 2\left(\mu_{i}(\btheta) - \mu_{i}(\btheta^*)\right)\dfrac{\partial^2 \mu_{i}(\btheta)}{\partial \btheta_j\partial\btheta_k}\right).
\end{align*}
\end{proof}

\begin{lemma}
\label{lemma:support-lemma3}
The sum of the difference of the Hessians $\sum_{s=1}^{t} \nabla_{\btheta = \btheta'}^{2} \ell_{s}\left(\btheta^{}\right)-\E\left[\nabla_{\btheta=\btheta'}^{2} \ell_{s}\left(\btheta^{}\right) \mid \F^{s-1}\right]$ is given by 
\begin{align*}
    \sum_{s=1}^{t}\nabla_{\btheta = \btheta'}^{2} \ell_{s}\left(\btheta^{}\right)-\E\left[\nabla_{\btheta=\btheta'}^{2} \ell_{s}\left(\btheta^{}\right) \mid \F^{s-1}\right] \!\! =\!\! \sum_{s=1}^t\bigg( -2(Y_s - \mu_{I_s}(\btheta))\dfrac{\partial^2 \mu_{I_s}(\btheta)}{\partial \btheta_j\partial\btheta_k} &+ 2\dfrac{\partial \mu_{I_s}(\btheta)}{\partial \btheta_j}\dfrac{\partial \mu_{I_s}(\btheta)}{\partial \btheta_k} \\
    %%%%%%%%%%%%%%%%%%%%%%%
    &- 2 \sum_{i=1}^n p_{\wtheta_{s-1}}(i)\dfrac{\partial \mu_{i}(\btheta)}{\partial \btheta_j}\dfrac{\partial \mu_{i}(\btheta)}{\partial \btheta_k}\bigg).
\end{align*}
\end{lemma}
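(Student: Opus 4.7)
The plan is to prove this lemma by a direct term-by-term subtraction, leveraging the two component identities already established in Lemma~\ref{lemma:support-lemma1} and Lemma~\ref{lemma:support-lemma2}. Specifically, I would first invoke Lemma~\ref{lemma:support-lemma1} to write out the $(j,k)$-entry of $\nabla^2_{\btheta=\btheta'}\ell_s(\btheta)$ as
$$2\,\tfrac{\partial \mu_{I_s}(\btheta)}{\partial \btheta_j}\tfrac{\partial \mu_{I_s}(\btheta)}{\partial \btheta_k} + 2(\mu_{I_s}(\btheta)-Y_s)\,\tfrac{\partial^2 \mu_{I_s}(\btheta)}{\partial \btheta_j \partial \btheta_k},$$
and then Lemma~\ref{lemma:support-lemma2} to write out the $(j,k)$-entry of $\nabla^2_{\btheta=\btheta'}\E[\ell_s(\btheta)\mid \F^{s-1}]$ as
$$2\sum_{i=1}^n p_{\wtheta_{s-1}}(i)\!\left(\tfrac{\partial \mu_i(\btheta)}{\partial \btheta_j}\tfrac{\partial \mu_i(\btheta)}{\partial \btheta_k} + (\mu_i(\btheta)-\mu_i(\btheta^*))\tfrac{\partial^2 \mu_i(\btheta)}{\partial \btheta_j \partial \btheta_k}\right).$$

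Next I would specialize the evaluation point to $\btheta'=\btheta^*$, which is in fact where this identity is invoked in the proof of Lemma~\ref{lemma:matrix-conc3}. With this substitution the factor $(\mu_i(\btheta^*)-\mu_i(\btheta^*))$ in the second summand of the expression from Lemma~\ref{lemma:support-lemma2} vanishes identically, so that the conditional-expectation Hessian collapses to the outer-product expression $2\sum_{i=1}^n p_{\wtheta_{s-1}}(i)\,\tfrac{\partial \mu_i(\btheta)}{\partial \btheta_j}\tfrac{\partial \mu_i(\btheta)}{\partial \btheta_k}$. Subtracting entry-wise then yields exactly the three summands appearing in the claim (after rewriting $2(\mu_{I_s}(\btheta)-Y_s) = -2(Y_s-\mu_{I_s}(\btheta))$), and summing the identity over $s=1,\ldots,t$ produces the stated formula.

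There is no real obstacle in this proof; it is a bookkeeping exercise that stitches together two already-proved identities. The only subtlety worth flagging is the implicit evaluation at $\btheta=\btheta^*$ needed for the $(\mu_i(\btheta)-\mu_i(\btheta^*))\nabla^2 \mu_i$ term to drop out---otherwise the right-hand side of the lemma would carry an extra second-derivative contribution from the expected loss. Once this is observed, the remainder is algebra, and the lemma slots cleanly into step $(a)$ of the displayed chain of inequalities in the proof of Lemma~\ref{lemma:matrix-conc3}, where each of the three resulting summands is controlled separately by a conditional Hoeffding-type bound.
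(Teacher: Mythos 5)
Your proposal is correct and takes essentially the same route as the paper: write out the $(j,k)$ entries of $\nabla^2\ell_s(\btheta)$ and $\nabla^2\E[\ell_s(\btheta)\mid\F^{s-1}]$ via Lemma~\ref{lemma:support-lemma1} and Lemma~\ref{lemma:support-lemma2}, subtract entrywise, and sum over $s$. Your explicit note that the evaluation point must be $\btheta^*$ so that the $(\mu_i(\btheta)-\mu_i(\btheta^*))\,\partial^2\mu_i/\partial\btheta_j\partial\btheta_k$ contribution vanishes is exactly the step the paper handles loosely (it drops that term in the lemma statement while still carrying it in the last display of its proof), and it is indeed justified because the lemma is only invoked at $\btheta^*$ in Lemma~\ref{lemma:matrix-conc3}.
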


\begin{proof}
First note that the difference $\nabla_{\btheta = \btheta'}^{2} \ell_{s}\left(\btheta^{}\right)-\E\left[\nabla_{\btheta=\btheta'}^{2} \ell_{s}\left(\btheta^{}\right) \mid \F^{s-1}\right]_{jk}$ is given by
\begin{align}
   \nabla_{\btheta = \btheta'}^{2} \ell_{s}\left(\btheta^{}\right)-\E\left[\nabla_{\btheta=\btheta'}^{2} \ell_{s}\left(\btheta^{}\right) \mid \F^{s-1}\right] \overset{(a)}{=} & 2\dfrac{\partial \mu_{I_s}(\btheta)}{\partial \btheta_j}\dfrac{\partial \mu_{I_s}(\btheta)}{\partial \btheta_k} + 2\left(\mu_{I_s}(\btheta) - Y_s\right)\dfrac{\partial^2 \mu_{I_s}(\btheta)}{\partial \btheta_j\partial\btheta_k}\nonumber\\
    %%%%%%%%%%%%%%%%%%%%
    &- 2 \sum_{i=1}^n p_{\wtheta_{s-1}}(i)\bigg( \dfrac{\partial \mu_{i}(\btheta)}{\partial \btheta_j}\dfrac{\partial \mu_{i}(\btheta)}{\partial \btheta_k} - \left(\mu_{i}(\btheta) - \mu_{i}(\btheta^*)\right)\dfrac{\partial^2 \mu_{i}(\btheta)}{\partial \btheta_j\partial\btheta_k}\bigg)\nonumber\\
    %%%%%%%%%%%%%%%%%%%%%
    = & -2(Y_s - \mu_{I_s}(\btheta))\dfrac{\partial^2 \mu_{I_s}(\btheta)}{\partial \btheta_j\partial\btheta_k} + 2\dfrac{\partial \mu_{I_s}(\btheta)}{\partial \btheta_j}\dfrac{\partial \mu_{I_s}(\btheta)}{\partial \btheta_k}\nonumber\\
    %%%%%%%%%%%%%%%%
    &\qquad - 2 \sum_{i=1}^n p_{\wtheta_{s-1}}(i)\dfrac{\partial \mu_{i}(\btheta)}{\partial \btheta_j}\dfrac{\partial \mu_{i}(\btheta)}{\partial \btheta_k} \label{eq:support-equality}
\end{align}
where, $(a)$ follows from \Cref{lemma:support-lemma1} and \Cref{lemma:support-lemma2}. Plugging this equality in \Cref{eq:support-equality} below we get
\begin{align*}
    \sum_{s=1}^{t} \nabla_{\btheta = \btheta'}^{2} \ell_{s}\left(\btheta^{}\right)-\E\left[\nabla_{\btheta=\btheta'}^{2} \ell_{s}\left(\btheta^{}\right) \mid \F^{s-1}\right]  &= \sum_{s=1}^t \bigg(-2(Y_s - \mu_{I_s}(\btheta))\dfrac{\partial^2 \mu_{I_s}(\btheta)}{\partial \btheta_j\partial\btheta_k} + 2\dfrac{\partial \mu_{I_s}(\btheta)}{\partial \btheta_j}\dfrac{\partial \mu_{I_s}(\btheta)}{\partial \btheta_k}\\
    %%%%%%%%%%%%%%%%%%%%%%%
    &\qquad  - 2 \sum_{i=1}^n p_{\wtheta_{s-1}}(i)\bigg(\dfrac{\partial \mu_{i}(\btheta)}{\partial \btheta_j}\dfrac{\partial \mu_{i}(\btheta)}{\partial \btheta_k} - 2\left(\mu_{i}(\btheta) - \mu_{i}(\btheta^*)\right)\dfrac{\partial^2 \mu_{i}(\btheta)}{\partial \btheta_j\partial\btheta_k}\bigg)\bigg).
\end{align*}
The claim of the lemma follows.
\end{proof}

\begin{lemma}
\label{lemma:inequality}
Let $\wtheta_t - \btheta^* = \left(\nabla^2 \wP_t(\ttheta_t)\right)^{-1}\nabla \wP_t(\btheta^*)$ where $\ttheta_t$ is between $\wtheta_{t}$ and $\btheta^{*}$. Then we can show that
\begin{align*}
    \left\|\wtheta_{t}-\btheta^{*}\right\|_{\nabla^{2} P_{t}\left(\btheta^{*}\right)} \leq \left\|\left(\nabla^{2} P_{t}\left(\btheta^{*}\right)\right)^{1 / 2}\left(\nabla^{2} \wP_{t}(\ttheta_{t})\right)^{-1}\left(\nabla^{2} P_{t}\left(\btheta^{*}\right)\right)^{1 / 2}\right\|\left\|\nabla \wP_{t}\left(\btheta^{*}\right)\right\|_{\left(\nabla^{2} P_{t}\left(\btheta^{*}\right)\right)^{-1}} .
\end{align*}
\end{lemma}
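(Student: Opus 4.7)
The plan is to reduce the lemma to a one-line application of the submultiplicativity of the spectral norm after a suitable change of basis. Introduce shorthand $A := \nabla^{2} P_{t}(\btheta^{*})$, $H := \nabla^{2} \wP_{t}(\ttheta_{t})$, and $g := \nabla \wP_{t}(\btheta^{*})$. By Assumption~\ref{assm:thm}(3c), $A$ is positive definite (as an average of positive definite matrices), so $A^{1/2}$ and $A^{-1/2}$ are well-defined, and the target inequality becomes
\[
\|H^{-1} g\|_{A} \;\leq\; \|A^{1/2} H^{-1} A^{1/2}\| \cdot \|g\|_{A^{-1}}.
\]

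The first step is to rewrite the $A$-weighted norm on the left-hand side using the Euclidean norm $\|\cdot\|_2$: by definition of the matrix norm $\|x\|_{A} = \sqrt{x^{T} A x} = \|A^{1/2} x\|_{2}$. Applied to $x = H^{-1} g$, this gives $\|H^{-1} g\|_{A} = \|A^{1/2} H^{-1} g\|_{2}$. Next, insert the identity $A^{1/2} A^{-1/2} = I$ between $H^{-1}$ and $g$ to obtain the factorization
\[
A^{1/2} H^{-1} g \;=\; \bigl(A^{1/2} H^{-1} A^{1/2}\bigr)\bigl(A^{-1/2} g\bigr).
\]

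Now apply the submultiplicative property of the Euclidean operator norm to this product of a matrix and a vector, giving $\|A^{1/2} H^{-1} g\|_{2} \leq \|A^{1/2} H^{-1} A^{1/2}\| \cdot \|A^{-1/2} g\|_{2}$. Finally, the identification $\|A^{-1/2} g\|_{2} = \sqrt{g^{T} A^{-1} g} = \|g\|_{A^{-1}}$ completes the proof after substituting back $A$, $H$, $g$, and using the hypothesis $\wtheta_{t} - \btheta^{*} = H^{-1} g$.

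There is no genuine obstacle here: the argument is a routine manipulation of weighted norms, and the only nontrivial input is positive definiteness of $\nabla^{2} P_{t}(\btheta^{*})$, which is already guaranteed by the assumptions. The main purpose of stating the lemma in this form is to cleanly separate the two quantities that will later be controlled independently in the proof of Theorem~\ref{thm:cher-smooth} --- the operator-norm factor, bounded via \Cref{lemma:matrix-conc3} together with \Cref{assm:thm}(3d), and the dual-norm factor, bounded via \Cref{lemma:vector-conc}.
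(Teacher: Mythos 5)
Your proof is correct and follows essentially the same route as the paper: rewrite $\lVert \wtheta_t - \btheta^*\rVert_{\nabla^2 P_t(\btheta^*)}$ as a Euclidean norm via $A^{1/2}$, insert $A^{1/2}A^{-1/2}$ to factor out $A^{1/2}H^{-1}A^{1/2}$, and bound the product by the operator norm times $\lVert g\rVert_{A^{-1}}$ (the paper labels this last step ``Cauchy--Schwarz,'' but the mechanism is the same $\lVert Mx\rVert_2 \le \lVert M\rVert\,\lVert x\rVert_2$ bound you invoke). No issues.
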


\begin{proof}
We begin with the definition of $\left\|\wtheta_{t}-\btheta^{*}\right\|_{\nabla^{2} P_{t}\left(\btheta^{*}\right)}$ as follows:
\begin{align*}
\left\|\wtheta_{t}-\btheta^{*}\right\|_{\nabla^{2} P_{t}\left(\btheta^{*}\right)} &\overset{(a)}{=} \sqrt{(\wtheta_{t}-\btheta^{*})^T\nabla^{2} P_{t}\left(\btheta^{*}\right)(\wtheta_{t}-\btheta^{*})}\\
%%%%%%%%%%%%%%%%%%%%%%%%%
&\overset{(b)}{=} \sqrt{\left(\left(\nabla^{2} \wP_{t}(\ttheta_{t})\right)^{-1} \nabla \wP_{t}\left(\btheta^{*}\right)\right)^T\nabla^{2} P_{t}\left(\btheta^{*}\right)\left(\left(\nabla^{2} \wP_{t}(\ttheta_{t})\right)^{-1} \nabla \wP_{t}\left(\btheta^{*}\right)\right)}\\
%%%%%%%%%%%%%%%%%%%%%%%%
&\overset{(c)}{\leq}  \left\|\nabla^2 P_{t}\left(\btheta^{*}\right)^{1/2} \left(\nabla^{2} \wP_{t}(\ttheta_{t})\right)^{-1}\nabla^2 P_{t}\left(\btheta^{*}\right)^{1/2}\right\| \sqrt{\left(\nabla^{} \wP_{t}\left(\btheta^{*}\right)^T\left(\nabla^{2} P_{t}(\btheta^*)\right)^{-1} \nabla \wP_{t}\left(\btheta^{*}\right)\right)}\\
%%%%%%%%%%%%%%%%%%%%%%%%%%%%
& = \left\|\left(\nabla^{2} P_{t}\left(\btheta^{*}\right)\right)^{1 / 2}\left(\nabla^{2} \wP_{t}(\ttheta_{t})\right)^{-1}\left(\nabla^{2} P_{t}\left(\btheta^{*}\right)\right)^{1 / 2}\right\|\left\|\nabla \wP_{t}\left(\btheta^{*}\right)\right\|_{\left(\nabla^{2} P_{t}\left(\btheta^{*}\right)\right)^{-1}} .
\end{align*}
where, $(a)$ follows as $\|x\|_{M} = \sqrt{x^T M x}$, $(b)$ follows as $\|\wtheta_t - \btheta^*\|_{\nabla^2 P_t(\btheta^*)} = \left(\nabla^2 \wP_t(\ttheta)\right)^{-1}\nabla \wP_t(\btheta^*)$, and $(c)$ follows from Cauchy Schwarz inequality.
% \smnote{I think this should be}
% \begin{align*}
% \left\|\wtheta_{t}-\btheta^{*}\right\|_{\nabla^{2} P_{t}\left(\btheta^{*}\right)} &\overset{(a)}{=} \sqrt{(\wtheta_{t}-\btheta^{*})^T\nabla^{2} P_{t}\left(\btheta^{*}\right)(\wtheta_{t}-\btheta^{*})}\\
% %%%%%%%%%%%%%%%%%%%%%%%%%
% &\overset{(b)}{=} \sqrt{\left(\left(\nabla^{2} \wP_{t}(\ttheta_{t})\right)^{-1} \nabla \wP_{t}\left(\btheta^{*}\right)\right)^T\nabla^{2} P_{t}\left(\btheta^{*}\right)\left(\left(\nabla^{2} \wP_{t}(\ttheta_{t})\right)^{-1} \nabla \wP_{t}\left(\btheta^{*}\right)\right)}\\
% %%%%%%%%%%%%%%%%%%%%%%%%
% &\overset{(c)}{\leq}  \left\|\nabla^2 P_{t}\left(\btheta^{*}\right)^{1/2} \left(\nabla^{2} \wP_{t}(\ttheta_{t})\right)^{-1}\nabla^2 P_{t}\left(\btheta^{*}\right)^{1/2}\right\| \sqrt{\left(\nabla^{} \wP_{t}\left(\btheta^{*}\right)^T\left(\nabla^{2} P_{t}(\ttheta_t)\right)^{-1} \nabla \wP_{t}\left(\btheta^{*}\right)\right)}\\
% %%%%%%%%%%%%%%%%%%%%%%%%%%%%
% & = \left\|\left(\nabla^{2} P_{t}\left(\btheta^{*}\right)\right)^{1 / 2}\left(\nabla^{2} \wP_{t}(\ttheta_{t})\right)^{-1}\left(\nabla^{2} P_{t}\left(\btheta^{*}\right)\right)^{1 / 2}\right\|\left\|\nabla \wP_{t}\left(\btheta^{*}\right)\right\|_{\left(\nabla^{2} P_{t}\left(\ttheta_{t}\right)\right)^{-1}} .
% \end{align*}

% \smnote{There is a change in (c) for the norm which is over $\left(\nabla^{2} P_{t}\left(\ttheta_{t}\right)\right)^{-1}$ instead of $\left(\nabla^{2} P_{t}\left(\btheta^{*}\right)\right)^{-1}$. Now note that in Frostig this is actually over $\left(\nabla^{2} P_{t}\left(\btheta^{*}\right)\right)^{-1}$. Can you clarify this?}
The claim of the lemma follows.
\end{proof}

\subsubsection{Proof of Main \Cref{thm:cher-smooth}}
\label{app:thm:cher-smooth}

\begin{customtheorem}{4}\textbf{(Restatement)}
Suppose $\ell_{1}(\btheta), \ell_{2}(\btheta), \cdots, \ell_{t}(\btheta): \mathbb{R}^{d} \rightarrow \mathbb{R}$ are squared loss functions from a distribution that satisfies \Cref{assm:posdef2} and \Cref{assm:thm} in  \Cref{app:cher-smooth-upper-assm}. Further define 
% \begin{align*}
    $P_t(\btheta) = \frac{1}{t}\sum_{s=1}^t\E_{I_s\sim \mathbf{p}_{\wtheta_{s-1}}}[\ell_s(\btheta)|\F^{s-1}]$
% \end{align*}
where, $\wtheta_t =\argmin_{\btheta \in \bTheta} \sum_{s = 1}^t \ell_{s}(\btheta)$. If $t$ is large enough such that $ \frac{\gamma\log(dt)}{t}\leq c^{\prime} \min \left\{\frac{1}{C_{1}C_{2} }, \frac{\operatorname{diameter}(\mathcal{B})}{C_{2}}\right\}$
then for a constant $\gamma \geq 2$ and universal constants $C_1,C_2,c'$,  we can show that 
\begin{align*}
\left(1-\rho_{t}\right) \frac{\sigma_t^2}{t}- \frac{C_1^2}{t^{\gamma / 2}} 
&\leq \E\left[P_t(\wtheta_t)-P_t\left(\btheta^{*}\right)\right] \leq \left(1+\rho_{t}\right) \frac{\sigma_t^2}{t}\!+\!\frac{\max\limits_{\btheta \in \bTheta}\left(\!P_{t}(\btheta)\!-\!P_{t}\left(\btheta^{*}\!\right)\right)}{t^{\gamma}},
\end{align*}
where 
$\sigma^{2}_t \coloneqq \E_{}\left[\frac{1}{2}\left\|\nabla \wP_{t}\left(\btheta^{*}\right)\right\|_{\left(\nabla^{2} P_t\left(\btheta^{*}\right)\right)^{-1}}^{2}\right]$, 
and $\rho_t \coloneqq \left(C_1C_2 + 2\eta^2\lambda_1^2\right)\sqrt{\frac{\gamma\log(dt)}{t}}$.
\end{customtheorem}

\begin{proof}
\textbf{Step 1:} We first bound the $\left\|\nabla^{2} \wP_{t}(\btheta)-\nabla^{2} P_t\left(\btheta^{*}\right)\right\|_{*}$ as follows
\begin{align}
\left\|\nabla^{2} \wP_{t}(\btheta)-\nabla^{2} P_t\left(\btheta^{*}\right)\right\|_{*} & \overset{(a)}{\leq}\left\|\nabla^{2} \wP_{t}(\btheta)-\nabla^{2} \wP_{t}\left(\btheta^{*}\right)\right\|_{*}+\left\|\nabla^{2} \wP_{t}\left(\btheta^{*}\right)-\nabla^{2} P_t\left(\btheta^{*}\right)\right\|_{*} \nonumber\\
%%%%%%%%%%%%%%%%%%%%%%%%%
& \overset{(b)}{\leq} C_{1}\left\|\btheta-\btheta^{*}\right\|_{\nabla^{2} P_t\left(\btheta^{*}\right)}+ \sqrt{\dfrac{8\eta^2\lambda_1^2c \gamma\log(dt)}{t}}\label{eq:1}
\end{align}
where, $(a)$ follows from triangle inequality, and $(b)$ is due to \Cref{assm:thm}.3.d and \Cref{lemma:matrix-conc3}.

\textbf{Step 2 (Approximation of $\nabla^{2} P_t\left(\btheta^{*}\right)$):} By choosing a sufficiently smaller ball $\mathcal{B}_{1}$ of radius of $\min \left\{1 /\left(10 C_{1}\right), \right.$ diameter $\left.(\mathcal{B})\right\}$ ), the first term in \eqref{eq:1} can be made small for $\btheta \in \mathcal{B}_{1}$. Also, for sufficiently large $t$, the second term in \eqref{eq:1} can be made arbitrarily small (smaller than $1 / 10$ ), which occurs if $\sqrt{\frac{\gamma \log (dt)}{t}} \leq \frac{c^{\prime}}{\sqrt{2\eta^2\lambda_1^2}}$. Hence for large $t$ and $\btheta\in \mathcal{B}_1$ we have 
\begin{align}
    \frac{1}{2} \nabla^{2} \wP_{t}(\btheta) \preceq \nabla^{2} P_t\left(\btheta^{*}\right) \preceq 2 \nabla^{2} \wP_{t}(\btheta) \label{eq:relation}
\end{align}

\textbf{Step 3 (Show $\wtheta_t$ in $\mathcal{B}_1$):} Fix a $\ttheta$ between $\btheta$ and $\btheta^*$ in $\mathcal{B}_1$. Apply Taylor's series approximation
\begin{align*}
    \wP_{t}(\btheta)=\wP_{t}\left(\btheta^{*}\right)+\nabla \wP_{t}\left(\btheta^{*}\right)^{\top}\left(\btheta-\btheta^{*}\right)+\frac{1}{2}\left(\btheta-\btheta^{*}\right)^{\top} \nabla^{2} \wP_{t}(\ttheta)\left(\btheta-\btheta^{*}\right)
\end{align*}
We can further reduce this as follows:
\begin{align}
\wP_{t}(\btheta)-\wP_{t}\left(\btheta^{*}\right) &\overset{(a)}{=}\nabla \wP_{t}\left(\btheta^{*}\right)^{\top}\left(\btheta-\btheta^{*}\right)+\frac{1}{2}\left\|\btheta-\btheta^{*}\right\|_{\nabla^{2} \wP_t(\ttheta)}^{2} \nonumber\\
%%%%%%%%%%%%%%%%%%%%%
& \overset{(b)}{\geq} \nabla \wP_{t}\left(\btheta^{*}\right)^{\top}\left(\btheta-\btheta^{*}\right)+\frac{1}{4}\left\|\btheta-\btheta^{*}\right\|_{\nabla^{2} P_{t}\left(\btheta^{*}\right)}^{2} \nonumber\\
%%%%%%%%%%%%%%%%%%
&\geq -\left\|\btheta-\btheta^{*}\right\|_{\nabla^{2} P_{t}\left(\btheta^{*}\right)}\left\|\nabla \wP_{t}\left(\btheta^{*}\right)\right\|_{\left(\nabla^{2} P_{t}\left(\btheta^{*}\right)\right)^{-1}} + \frac{1}{4}\left(\left\|\btheta-\btheta^{*}\right\|_{\nabla^{2} P_{t}\left(\btheta^{*}\right)}\right)^{\top}\left(\left\|\btheta-\btheta^{*}\right\|_{\nabla^{2} P_{t}\left(\btheta^{*}\right)}\right)\nonumber\\
%%%%%%%%%%%%%%%%%%
& =\left\|\btheta-\btheta^{*}\right\|_{\nabla^{2} P_{t}\left(\btheta^{*}\right)}\left(-\left\|\nabla \wP_{t}\left(\btheta^{*}\right)\right\|_{\left(\nabla^{2} P_{t}\left(\btheta^{*}\right)\right)^{-1}}+\frac{1}{4}\left\|\btheta-\btheta^{*}\right\|_{\nabla^{2} P_{t}\left(\btheta^{*}\right)}\right) \label{eq:2}
\end{align}
where, $(a)$ follows as $\left\|\btheta-\btheta^{*}\right\|_{\nabla^{2} \wP_t(\ttheta)}^{2}\coloneqq \left(\btheta-\btheta^{*}\right)^{\top} \nabla^{2} \wP_{t}(\ttheta)\left(\btheta-\btheta^{*}\right)$, and $(b)$ follows as $\ttheta$ is in between $\btheta$ and $\btheta^*$ and then using \eqref{eq:relation}. 
Note that in \eqref{eq:2} if the right hand side is positive for some $\btheta \in \mathcal{B}_{1}$, then $\btheta$ is not a local minimum. Also, since $\left\|\nabla \wP_{t}\left(\btheta^{*}\right)\right\| \rightarrow 0,$ for a sufficiently small value of $\left\|\nabla \wP_{t}\left(\btheta^{*}\right)\right\|,$ all points on the boundary of $\mathcal{B}_{1}$ will have values greater than that of $\btheta^{*} .$ Hence, we must have a local minimum of $\wP_{t}(\btheta)$ that is strictly inside $\mathcal{B}_{1}$ (for $t$ large enough). We can ensure this local minimum condition is achieved by choosing an $t$ large enough so that $\sqrt{\frac{\gamma \log (dt)}{t}} \leq c^{\prime} \min \left\{\frac{1}{C_{1}C_{2} }, \frac{\operatorname{diameter}(\mathcal{B})}{C_{2}}\right\},$ using \Cref{lemma:vector-conc} (and
our bound on the diameter of $\mathcal{B}_{1}$ ). By convexity, we have that this is the global minimum, $\wtheta_{t},$ and so $\wtheta_{t} \in \mathcal{B}_{1}$ for $t$ large enough. We will assume now that $t$ is this large from here on.

\textbf{Step 4 (Bound $\left\|\wtheta_{t}-\btheta^{*}\right\|_{\nabla^{2} P_t\left(\btheta^{*}\right)}$):} For the $\wtheta(t)$ that minimizes the sum of squared errors, $0=\nabla \wP_{t}(\wtheta_{t})$. Again, using Taylor's theorem if $\wtheta_{t}$ is an interior point, we have:
\begin{align}
0=\nabla \wP_{t}(\wtheta_{t})=\nabla \wP_{t}\left(\btheta^{*}\right)+\nabla^{2} \wP_{t}(\ttheta_{t})\left(\wtheta_{t}-\btheta^{*}\right)\label{eq:taylor}
\end{align}
for some $\ttheta_{t}$ between $\btheta^{*}$ and $\wtheta_{t}$. Now observe that $\ttheta_{t}$ is in $B_{1}$ (since, for $t$ large enough, $\wtheta_{t} \in \mathcal{B}_{1}$ ). Thus it follows from \eqref{eq:taylor} that,
\begin{align}
\wtheta_{t} - \btheta^{*}=\left(\nabla^{2} \wP_{t}(\ttheta_{t})\right)^{-1} \nabla \wP_{t}\left(\btheta^{*}\right)    \label{eq:erm}
\end{align}
where the invertibility is guaranteed by \eqref{eq:relation} and the positive definiteness of $\nabla^2 P_{t}\left(\btheta^{*}\right)$ (by \Cref{assm:thm} (3c)). We finally derive the upper bound to $\left\|\wtheta_{t}-\btheta^{*}\right\|_{\nabla^{2} P_{t}\left(\btheta^{*}\right)}$ as follows
\begin{align}
\left\|\wtheta_{t}-\btheta^{*}\right\|_{\nabla^{2} P_{t}\left(\btheta^{*}\right)} 
%%%%%%%%%%%%%%%%%%%%%%%%%%%%
&\overset{(a)}{\leq} \left\|\left(\nabla^{2} P_{t}\left(\btheta^{*}\right)\right)^{1 / 2}\left(\nabla^{2} \wP_{t}(\ttheta_{t})\right)^{-1}\left(\nabla^{2} P_{t}\left(\btheta^{*}\right)\right)^{1 / 2}\right\|\left\|\nabla \wP_{t}\left(\btheta^{*}\right)\right\|_{\left(\nabla^{2} P_{t}\left(\btheta^{*}\right)\right)^{-1}} \nonumber\\
%%%%%%%%%%%%%%%%%%%%%%%%%
&\overset{(b)}{\leq} c C_{2} \sqrt{\frac{\gamma \log (dt)}{t}} \label{eq:3}
\end{align}
where $(a)$ follows from \Cref{lemma:inequality}, and $(b)$ from \Cref{lemma:vector-conc}, \eqref{eq:2}, and $c$ is some universal constant.

\textbf{Step 5 (Introducing $\tz$):} Fix a $\tz_t$ between $\btheta^*$ and $\wtheta_t$. Apply Taylor's series 
\begin{align}
    P_{t}(\wtheta_{t})-P_{t}\left(\btheta^{*}\right)=\frac{1}{2}\left(\wtheta_{t}-\btheta^{*}\right)^{\top} \nabla^{2} P_{t}\left(\tz_{t}\right)\left(\wtheta_{t}-\btheta^{*}\right) \label{eq:Pt-z}
\end{align}
Now note that both $\ttheta_{t}$ and $\tz_{t}$ are between $\wtheta_{t}$ and $\btheta^{*},$ which implies $\ttheta_{t} \rightarrow \btheta^{*}$ and $\tz_{t} \rightarrow \btheta^{*}$ since $\wtheta_{t} \rightarrow \btheta^{*}$. By \eqref{eq:1} and \eqref{eq:3} and applying the concentration inequalities give us
\begin{align}
&\left\|\nabla^{2} \wP_{t}(\ttheta_{t})-\nabla^{2} P_{t}\left(\btheta^{*}\right)\right\|_{*} \leq \rho_t \label{eq:theta-ttheta}\\
%%%%%%%%%%%%%%%%%%%
&\left\|\nabla^{2} P_{t}\left(\tz_{t}\right)-\nabla^{2} P_{t}\left(\btheta^{*}\right)\right\|_{*} \leq C_{1}\left\|\tz_{t} - \btheta^{*}\right\|_{\nabla^{2} P_{t}\left(\btheta^{*}\right)} \leq \rho_t \label{eq:theta-tz}
\end{align}
where $\rho_t=c\left(C_{1}C_{2} + 2\eta^2\lambda_1^2\right) \sqrt{\frac{\gamma \log (dt)}{t}}$.

\textbf{Step 6 (Define $\bM_{1, t}$ and $\bM_{2, t}$):} It follows from the inequality \eqref{eq:theta-ttheta} that 
\begin{align*}
&\nabla^{2} \wP_{t}(\ttheta_{t}) \preceq\left(1+\rho_t\right) \nabla^{2} P_{t}\left(\btheta^{*}\right)
\implies \nabla^{2} \wP_{t}(\ttheta_{t}) - \nabla^{2} P_{t}\left(\btheta^{*}\right) \preceq \rho_t \nabla^{2} P_{t}\left(\btheta^{*}\right) \\
&\implies \nabla^{2} P_{t}\left(\btheta^{*}\right)^{-1/2}(\wP_{t}(\ttheta_{t}) - \nabla^{2} P_{t}\left(\btheta^{*}\right)) \nabla^{2} P_{t}\left(\btheta^{*}\right)^{-1/2} \preceq \rho_t I
\\
&\implies \lVert \nabla^2\wP_{t}(\ttheta_{t}) - \nabla^{2} P_{t}\left(\btheta^{*}\right) \rVert_* \leq \rho_t.
\end{align*}
Then we can use the inequalities \eqref{eq:theta-ttheta} and \eqref{eq:theta-tz} to show that
\begin{align*}
&\left(1-\rho_t\right) \nabla^{2} P_{t}\left(\btheta^{*}\right) \preceq \nabla^{2} \wP_{t}(\ttheta_{t}) \preceq\left(1+\rho_t\right) \nabla^{2} P_{t}\left(\btheta^{*}\right)\\
%%%%%%%%%%%%%%
&\left(1-\rho_t\right) \nabla^{2} P_{t}\left(\btheta^{*}\right) \preceq \nabla^{2} P_{t}\left(\tz_{t}\right) \preceq\left(1+\rho_t\right) \nabla^{2} P_{t}\left(\btheta^{*}\right).
\end{align*}
Now we define the two quantities $\bM_{1, t}$ and $\bM_{2, t}$ as follows:
\begin{align*}
\bM_{1, t} &\colonequals \left(\nabla^{2} P_{t}\left(\btheta^{*}\right)\right)^{1 / 2}\left(\nabla^{2} \wP_{t}(\ttheta_{t})\right)^{-1}\left(\nabla^{2} P_{t}\left(\btheta^{*}\right)\right)^{1 / 2} \\
\bM_{2, t} &\colonequals \left(\nabla^{2} P_{t}\left(\btheta^{*}\right)\right)^{-1 / 2} \nabla^{2} P_{t}\left(\tz_{t}\right)\left(\nabla^{2} P_{t}\left(\btheta^{*}\right)\right)^{-1 / 2}.
\end{align*}

\textbf{Step 7 (Lower bound $P_{t}(\wtheta_{t})-P_{t}\left(\btheta^{*}\right)$):} Now for the lower bound it follows from \Cref{eq:Pt-z} that
\begin{align*}
    P_{t}(\wtheta_{t})-P_{t}\left(\btheta^{*}\right) & = \frac{1}{2}\left(\wtheta_{t}-\btheta^{*}\right)^{\top} \nabla^{2} P_{t}\left(\tz_{t}\right)\left(\wtheta_{t}-\btheta^{*}\right)\\
    %%%%%%%%%%%%%%%%%%%%%%%%
    % &=\frac{1}{2}\left(\wtheta_{t}-\btheta^{*}\right)^{\top} \nabla^2 P_t(\btheta^*) \nabla^2 P_t(\btheta^*)^{-1}\nabla^{2} P_{t}\left(\tz_{t}\right)\left(\wtheta_{t}-\btheta^{*}\right)\\
    % %%%%%%%%%%%%%%%%%%%%%%%%%
    % &= \frac{1}{2}\left(\wtheta_{t}-\btheta^{*}\right)^{\top}\nabla^2 P_t(\btheta^*)^{-1}\nabla^{2} P_{t}\left(\tz_{t}\right)\nabla^2 P_t(\btheta^*)\left(\wtheta_{t}-\btheta^{*}\right)\\
    %%%%%%%%%%%%%%%%%%%%%%%
    &= \frac{1}{2}\left(\wtheta_{t}-\btheta^{*}\right)^{\top}\nabla^2 P_t(\btheta^*)^{\frac{1}{2}}\nabla^2 P_t(\btheta^*)^{-\frac{1}{2}}\nabla^{2} P_{t}\left(\tz_{t}\right)\nabla^2P_t(\btheta^*)^{-\frac{1}{2}} \nabla^2 P_t(\btheta^*)^{\frac{1}{2}}\left(\wtheta_{t}-\btheta^{*}\right)\\
    %%%%%%%%%%%%%
    &\overset{(a)}{=} \frac{1}{2} \mathbf{u}^T \mathbf{M}_{2,t} \mathbf{u}
\end{align*}
where, in $(a)$ we define the vector $\mathbf{u} \colonequals \left(\wtheta_{t}-\btheta^{*}\right)^{\top}\nabla^2 P_t(\btheta^*)^{\frac{1}{2}}$. Now observe from the definition of and then using the min-max theorem we can show that
\begin{align*}
P_{t}(\wtheta_{t})-P_{t}\left(\btheta^{*}\right) & \geq \frac{1}{2} \lambda_{\min }\left(\bM_{2, t}\right) \mathbf{u}^T\mathbf{u}\\
%%%%%%%%%%%%%%%%%%%%%%%%%
& = \frac{1}{2} \lambda_{\min }\left(\bM_{2, t}\right)\left\|\wtheta_{t}-\btheta^{*}\right\|_{\nabla^{2} P_{t}\left(\btheta^{*}\right)}^{2} \\
%%%%%%%%%%%%%%%%%%%%%%%
&\overset{}{=}\frac{1}{2} \lambda_{\min }\left(\bM_{2, t}\right)\left\|\nabla^{2} \wP_{t}(\ttheta_{t})\left(\wtheta_{t}-\btheta^{*}\right)\right\|_{\left(\nabla^{2} \wP_{t}(\ttheta_{t})\right)^{-1} \nabla^{2} P_{t}\left(\btheta^{*}\right)\left(\nabla^{2} \wP_{t}(\ttheta_{t})\right)^{-1}}^{2} \\
%%%%%%%%%%%%%%%%%%%%%%%%%%%
& \geq \frac{1}{2}\left(\lambda_{\min }\left(\bM_{1, t}\right)\right)^{2} \lambda_{\min }\left(\bM_{2, t}\right)\left\|\nabla^{2} \wP_{t}(\ttheta_{t})\left(\wtheta_{t}-\btheta^{*}\right)\right\|_{\left(\nabla^{2} P_{t}\left(\btheta^{*}\right)\right)^{-1}}^{2} \\
%%%%%%%%%%%%%%%%%%%%%%%%%%
&\overset{(a)}{=}\frac{1}{2}\left(\lambda_{\min }\left(\bM_{1, t}\right)\right)^{2} \lambda_{\min }\left(\bM_{2, t}\right)\left\|\nabla \wP_{t}\left(\btheta^{*}\right)\right\|_{\left(\nabla^{2} P_{t}\left(\btheta^{*}\right)\right)^{-1}}^{2}
\end{align*}
where, in $(a)$ we use the \cref{eq:erm}.
%\smnote{talk about minmax theorem from reference fr 1st inequality}

\textbf{Step 8:} Define $I(\mathcal{E})$ as the indicator that the desired previous events hold, which we can ensure with probability greater than $1-2\left(\dfrac{1}{dt}\right)^{\gamma}$. Then we can show that:

\begin{align*}
 \E\left[P_{t}(\wtheta_{t})-P_{t}\left(\btheta^{*}\right)\right] 
\geq & \E\left[\left(P_{t}(\wtheta_{t})-P_{t}\left(\btheta^{*}\right)\right) I(\mathcal{E})\right] \\
%%%%%%%%%%%%%%%%%%%%%%%
\geq & \frac{1}{2} \E\left[\left(\lambda_{\min }\left(\bM_{1, t}\right)\right)^{2} \lambda_{\min }\left(\bM_{2, t}\right)\left\|\nabla \wP_{t}\left(\btheta^{*}\right)\right\|_{\left(\nabla^{2} P_{t}\left(\btheta^{*}\right)\right)^{-1}}^{2} I(\mathcal{E})\right] \\
%%%%%%%%%%%%%%%%%%%%%%%%%
\geq &\left(1-c^{\prime} \rho_t\right) \frac{1}{2} \E\left[\left\|\nabla \wP_{t}\left(\btheta^{*}\right)\right\|_{\left(\nabla^{2} P_{t}\left(\btheta^{*}\right)\right)^{-1}}^{2} I(\mathcal{E})\right] \\
%%%%%%%%%%%%%%%%%%%%%%%%%%
=&\left(1-c^{\prime} \rho_t\right) \frac{1}{2} \E\left[\left\|\nabla \wP_{t}\left(\btheta^{*}\right)\right\|_{\left(\nabla^{2} P_{t}\left(\btheta^{*}\right)\right)^{-1}}^{2}(1-I(\operatorname{not} \mathcal{E}))\right] \\
%%%%%%%%%%%%%%%%%%%%%%%%%
\overset{(a)}{=}&\left(1-c^{\prime} \rho_t\right)\left(\sigma^{2}_t-\frac{1}{2} \E\left[\left\|\nabla \wP_{t}\left(\btheta^{*}\right)\right\|_{\left(\nabla^{2} P_{t}\left(\btheta^{*}\right)\right)^{-1}}^{2} I(\operatorname{not} \mathcal{E})\right]\right) \\
%%%%%%%%%%%%%%%%%%%%%%%%%%%%%
\geq &\left(1-c^{\prime} \rho_t\right) \sigma^{2}_t-\E\left[\left\|\nabla \wP_{t}\left(\btheta^{*}\right)\right\|_{\left(\nabla^{2} P_{t}\left(\btheta^{*}\right)\right)^{-1}}^{2} I(\operatorname{not} \mathcal{E})\right]
\end{align*}
where, in $(a)$ we have $\sigma^2_t\colonequals \left\|\nabla \wP_{t}\left(\btheta^{*}\right)\right\|_{\left(\nabla^{2} P_{t}\left(\btheta^{*}\right)\right)^{-1}}^{2}$, and $c'$ is an universal constant.

\textbf{Step 9:} Define the random variable $Z=\left\|\nabla \wP_{t}\left(\btheta^{*}\right)\right\|_{\left(\nabla^{2} P_{t}\left(\btheta^{*}\right)\right)^{-1}}$. With a failure event probability of less than $2\left(\dfrac{1}{dt}\right)^{\gamma}$ for any $z_{0},$ we have:
\begin{align*}
\mathbb{E}\left[Z^{2} I(\operatorname{not} \mathcal{E})\right] &=\E\left[Z^{2} I(\operatorname{not} \mathcal{E}) I\left(Z^{2} < z_{0}\right)\right]+\E\left[Z^{2} I(\operatorname{not} \mathcal{E}) I\left(Z^{2} \geq z_{0}\right)\right] \\
& \leq z_{0} \E[I(\operatorname{not} \mathcal{E})]+\E\left[Z^{2} I\left(Z^{2} \geq z_{0}\right)\right] \\
& \leq \frac{z_{0}}{2 t^{\gamma}}+\E\left[Z^{2} \frac{Z^{2}}{z_{0}}\right] \\
& \leq \frac{z_{0}}{2 t^{\gamma}}+\frac{\E\left[Z^{4}\right]}{z_{0}} \\
& \leq \frac{\sqrt{\E\left[Z^{4}\right]}}{t^{\gamma / 2}}
\end{align*}
where $z_{0}=t^{\gamma / 2} \sqrt{\mathbb{E}\left[Z^{4}\right]}$.

\textbf{Step 10 (Upper Bound): } For an upper bound we have that:
\begin{align*}
\E\left[P_{t}(\wtheta_{t})-P_{t}\left(\btheta^{*}\right)\right] &=\E\left[\left(P_{t}(\wtheta_{t})-P_{t}\left(\btheta^{*}\right)\right) I(\mathcal{E})\right]+\E\left[\left(P_{t}(\wtheta_{t})-P_{t}\left(\btheta^{*}\right)\right) I(\operatorname{not} \mathcal{E})\right] \\
& \leq \E\left[\left(P_{t}(\wtheta_{t})-P_{t}\left(\btheta^{*}\right)\right) I(\mathcal{E})\right]+\frac{\max_{\btheta \in \bTheta}\left(P_{t}(\btheta)-P_{t}\left(\btheta^{*}\right)\right)}{t^{\gamma}}
\end{align*}
since the probability of not $\mathcal{E}$ is less than $\dfrac{1}{t^{\gamma}}$. Now for an upper bound of the first term, observe that
\begin{align*}
\E\left[\left(P_{t}(\wtheta_{t})-P_{t}\left(\btheta^{*}\right)\right) I(\mathcal{E})\right] 
\leq & \frac{1}{2} \E\left[\left(\lambda_{\max }\left(\bM_{1, t}\right)\right)^{2} \lambda_{\max }\left(\bM_{2, t}\right)\left\|\nabla \wP_{t}\left(\btheta^{*}\right)\right\|_{\left(\nabla^{2} P_{t}\left(\btheta^{*}\right)\right)^{-1}}^{2} I(\mathcal{E})\right] \\
\leq &\left(1+c^{\prime} \rho_t\right) \frac{1}{2} \E\left[\left\|\nabla \wP_{t}\left(\btheta^{*}\right)\right\|_{\left(\nabla^{2} P_{t}\left(\btheta^{*}\right)\right)^{-1}}^{2} I(\mathcal{E})\right] \\
\leq &\left(1+c^{\prime} \rho_t\right) \frac{1}{2} \E\left[\left\|\nabla \wP_{t}\left(\btheta^{*}\right)\right\|_{\left(\nabla^{2} P_{t}\left(\btheta^{*}\right)\right)^{-1}}^{2}\right] \\
=&\left(1+c^{\prime} \rho_t\right) \frac{\sigma^{2}_t}{t}
\end{align*}
where, $c'$ is another universal constant.
\end{proof}

\subsection{Additional Experiment Details}
\label{app:expt}

\subsubsection{Hypothesis Testing Experiments}
\label{app:testing-addl-expt}
In all the active testing experiments 
we use the threshold function for the Gaussian distribution as proved in Lemma \ref{lemma:stop-time}. Hence the threshold function used is
\begin{align*}
    \beta = \log(J/\delta).
\end{align*}
Note that this threshold function is smaller than the general sub-Gaussian threshold function proved in Lemma \ref{lemma:stop-time-general}. 

\textbf{\Cref{ex:unif-dominates}: } Recall that the \Cref{ex:unif-dominates} is given by the following table under the three different hypotheses $\{\btheta^\ast, \btheta', \btheta''\}$
\begin{align*}
    \begin{matrix}
     \btheta &= & \btheta^* &\btheta'  & \btheta^{''} \\\hline
    \mu_1(\btheta) &=  & 1 & 0.001 & 0 \\
    \mu_2(\btheta) &=   & 1 & 1.002 & 0.998
\end{matrix}
\end{align*}
We can show that under $p_{\btheta^*}$ we have the following optimization problem
\begin{align*}
    &\max z \\
    %%%%%%%%%%%%%%%%%
    \textbf{s.t.  } & p(1)0.999^2 + p(2)0.002^2 \geq z\\
    %%%%%%%%%%%%%%%
    & p(1)1^2 + p(2)0.002^2 \geq z.
\end{align*}
The solution to the above optimization is given by $p_{\btheta^*} = [p(1), p(2)] = [1, 0]$. 
Similarly we can show that $p_{\btheta'}$ we have the following optimization problem
\begin{align*}
    &\max z \\
    %%%%%%%%%%%%%%%%%
    \textbf{s.t.  } & p(1)0.001^2 + p(2)0.002^2 \geq z\\
    %%%%%%%%%%%%%%%
    & p(1)0.001^2 + p(2)0.004^2 \geq z.
\end{align*}
The solution to the above optimization is given by $p_{\btheta'} = [p(1), p(2)] = [0, 1]$. Finally we can show that $p_{\btheta''}$ we have the following optimization problem
\begin{align*}
    &\max z \\
    %%%%%%%%%%%%%%%%%
    \textbf{s.t.  } & p(1)0.001^2 + p(2)0.004^2 \geq z\\
    %%%%%%%%%%%%%%%
    & p(1)1^2 + p(2)0.002^2 \geq z.
\end{align*}
The solution to the above optimization is given by $p_{\btheta''} = [p(1), p(2)] = [0, 1]$. Hence, $D_1 \colonequals \min_{p_{\btheta}\in\bTheta}\min_{\btheta'\neq \btheta^*}\sum_{i=1}^n p_{\btheta}(i)(\mu_{i}(\btheta') - \mu_{i}(\btheta^*))^2 = 0.002^2 = 4\times 10^{-6}$. Similarly, we can compute that $D_0 \colonequals \max_{\mathbb{p}} \min_{\btheta'\neq \btheta^*} \sum_{i=1}^n (\mu_i(\btheta') - \mu_i(\btheta^*))^2 = 0.999^2$. Hence the non-asymptotic term $(\log J)/D_1 = 0.3\times 10^6$ and the asymptotic term $\log(J/\delta)/D_0 = 3.4$. 

\textbf{%\Cref{ex:cher-dominates}
Active Testing environment ($3$ Group setting): } In this setting there are three groups of actions. In first group there is a single action that discriminates best between all pair of hypotheses. In second group there are $5$ actions which can discriminate one hypotheses from others. Finally in the third group there are $44$ actions which cannot discriminate between any pair of hypotheses. The following table describes the $\mu_1(\cdot), \mu_2(\cdot), \ldots, \mu_{50}(\cdot)$ under different hypotheses as follows:
\begin{align*}
    \begin{matrix}
     \btheta &= & \btheta^* &\btheta_2  & \btheta_3 & \btheta_4 & \btheta_5 & \btheta_{6} \\\hline
    \mu_1(\btheta) &=  & \textcolor{red}{3} & 0 & 0 & 0 & 0 & 0 \\
    \mu_2(\btheta) &=  & 2 & \textcolor{red}{3} & 2 & 2 & 2 & 2 \\
    \mu_3(\btheta) &=  & 2 & 2 & \textcolor{red}{3} & 2 & 2 & 2 \\
    \mu_4(\btheta) &=  & 2 & 2 & 2 & \textcolor{red}{3} & 2 & 2 \\
    \mu_5(\btheta) &=  & 2 & 2 & 2 & 2 & \textcolor{red}{3} & 2 \\
    \mu_6(\btheta) &=  & 2 & 2 & 2 & 2 & 2 & \textcolor{red}{3} \\
    \mu_7(\btheta) &=  & 1 + \iota_{7,1} & 1 + \iota_{7,2} & 1 + \iota_{7,3} & 1 + \iota_{7,4} & 1 + \iota_{7,5} & 1 + \iota_{7,6} \\
    \vdots & & & \vdots\\
    \mu_{50}(\btheta) &=  & 1 + \iota_{50,1} & 1 + \iota_{50,2} & 1 + \iota_{50,3} & 1 + \iota_{50,4} & 1 + \iota_{50,5} & 1 + \iota_{50,6}
\end{matrix}
\end{align*}
In the above setting,we define $\iota_{i,j}$ for the $i$-th action and $j$-th hypothesis as a small value close to $0$ and $\iota_{i,j} \neq \iota_{i',j'}$ for any pair of hypotheses $j,j'\in[J]$ and actions $i,i'\in[n]$.
% mu = np.array([[3.0, 0.0, 0.0, 0.0, 0.0, 0.0], [2.0, 3.0, 2.0, 2.0, 2.0, 2.0], [2.0, 2.0, 2.0, 3.0, 2.0, 2.0], 
%               [2.0, 2.0, 2.0, 2.0, 3.0, 2.0], [2.0, 2.0, 2.0, 2.0, 2.0, 3.0]])
% mu1 = [[1.0, 1.0, 1.0, 1.0, 1.0 + np.random.uniform(0,0.00001), 1.0] for i in range(45)]

\subsubsection{Active Regression Experiment for Non-linear Reward Model %From \Cref{sec:active-learning-expt}
}
\label{app:active-learning-expt-addl}

% \textbf{Active Regression for Non-linear Reward Model: } 

\textbf{Algorithmic Details: } We describe each of the algorithm used in this setting as follows:
\begin{enumerate}
    \item \textbf{\emcm:} The \emcm algorithm of \citet{cai2016batch} first quantifies the  change  as  the  difference  between  the  current model  parameters  and  the  new  model  parameters  learned from  enlarged  training  data,  and then chooses  the  data  examples that  result  in  the  greatest  change. 
    % \item \textbf{$\epsilon$-\eog:} The $\epsilon$-\eog algorithm with $(1-\epsilon)$ probability at each round chooses the most orthogonal action to its least square estimate $\widehat{\btheta}(t)$ to maximally reduce the uncertainty of $\widehat{\btheta}(t)$. Additionally it explores other actions with $\epsilon$ probability. We set $\epsilon = 0.1$.
    \item \textbf{\cher:} The \cher policy used is stated as in \Cref{sec:active-regression}. To calculate the least square estimate $\widehat{\btheta}(t)$ we use the python scipy.optimize least-square function which solves a nonlinear least-squares problem.
    \item \textbf{\unif:} The \unif policy samples each action uniform randomly at every round.
    \item \textbf{\actives:} The \actives policy in  \citet{chaudhuri2015convergence} is a two-stage algorithm. It first samples all actions uniform randomly to build an initial estimate of $\btheta^*$. It then solves an Semi-definite Programming (SDP) to obtain a new sampling distribution that minimizes the quantity $\sigma^2_U$ as defined in \Cref{eq:sigma-U}. In the second stage \actives follows this new sampling distribution to sample actions.
\end{enumerate}

\textbf{Implementation Details: } This setting consist of $50$ measurement actions divided into three groups. The first group consist of the optimal action $\mathbf{x}_{i^*} \colonequals (1,0)$ in the direction of $\btheta^* \colonequals (1,0)$. The second group consist of the informative action $\mathbf{x}_2 \colonequals (0,1)$ which is orthogonal to $\mathbf{x}_{i^*}$ and selecting it maximally reduces the uncertainty of $\widehat{\btheta}(t)$. Finally the third group consist of $48$ actions such that $\mathbf{x}_{i} \colonequals (0.71 \pm \iota_i, 0.71 \mp \iota_i)$ for $i\in[3,50]$ where $\iota_i$ is a small value close to $0$ and $\iota_{i}\neq \iota_{i'}$. Note that these $48$ actions are less informative in comparison to action $2$. This is shown in \Cref{fig:nonlinerregression}.

\subsubsection{Active Regression Experiment for Neural Networks %\Cref{sec:active-learning-expt}
}
\label{app:active-learning-expt-NN-addl}

% \textbf{Algorithmic Details: } This section deals with the Neural Network experiment. \fxnote{Ardhendu: Neural Network}

\textbf{Implementation Details:} At every time step, we use the the least squares optimizer of scipy to find $\widehat{\btheta}_t$. Since $c_1, c_2 \in \{-1, 1\}$, we solve four different least squares problems at each step corresponding to all $(c_1, c_2)$ choices, and use the values returned by the problem having the smallest sum of squares as our current estimate for $(\mathbf{w}_1, \mathbf{w}_2, b_1, b_2)$. The derivative with respect to any parameter is found by the backward pass of automatic differentiation.

\subsubsection{Active Regression for the UCI Datasets}
\label{app:real-dataset-expt}

\textbf{Implementation Details:} The UCI Red Wine Quality dataset consist of $1600$ samples of red wine with each sample $i$ having feature $\mathbf{x}_i\in\mathbb{R}^{11}$. We first fit a least square estimate to the original dataset and get an estimate of $\btheta^*$. The reward model is linear and given by $\mathbf{x}_{I_t}^T\btheta^* + \text{noise}$ where $x_{I_t}$ is the observed action at round $t$, and the noise is a zero-mean additive noise. Note that we consider the $1600$ samples as actions. Then we run each of our benchmark algorithms on this dataset and reward model and show the result in \Cref{fig:red-wine}. We further show the \cher proportion on this dataset in \cref{fig:red-wine-prop} and show that it is indeed sparse with proportion concentrated on few actions. The Air quality dataset consist of $1500$ samples each of which consist of $6$ features. We again build an estimate of $\btheta^*$ by fitting a least square regression on this dataset. We use a similar additive noise linear reward model as described before and run all the benchmark algorithms on this dataset. 

\begin{figure}
    \centering
    \includegraphics[scale = 0.4]{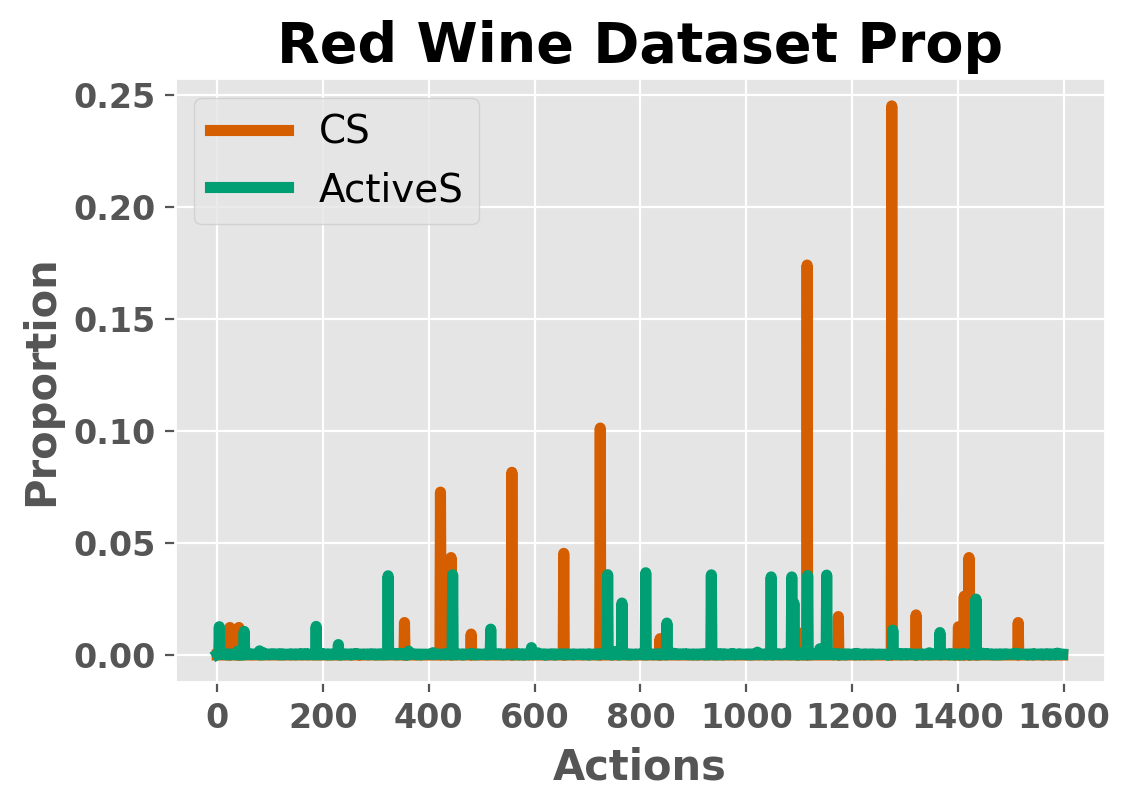}
    \caption{\cher Proportions over $1600$ actions in Red Wine Dataset. Note that \cher Proportion is sparse.}
    \label{fig:red-wine-prop}
\end{figure}

\newpage
\subsection{Table of Notations}
\label{app:notation}
\begin{table}[!th]
    \centering
    \begin{tabular}{c|c}
    \textbf{Notation} & \textbf{Definition}\\\hline
        $n$ & Total number of actions\\
        $J$ & Total number of hypotheses \\
        $\bTheta$& Parameter Space\\ 
        $\mu_{i}(\btheta)$ & Mean of action $i$ under hypothesis $\btheta$\\
        $\pi$ & Policy\\
        $\delta$ & Probability of error of $\delta$-PAC policy\\
        $\tau_\delta$ & Stopping time of $\delta$-PAC policy\\
        $\beta(J,\delta)$ & $\log(CJ/\delta)$, $C$ is a constant depending on $\eta, \eta_0$\\
        $\alpha(J)$ & $b\log(J), b >0$\\
        $Y^t$ & Vector of rewards observed till round $t$\\
        $I^t$ & Vector of actions sampled till round $t$\\
        $I_s$ & action sampled at round $s$\\
        $Z_i(t)$ & Number of time action $i$ is sampled till round $t$\\
        $\eta$ & Constant $>0$ s.t. $Y_s\in [-\eta/2, \eta/2]$\\
        $\eta_0$ & $ \min_{i \in [n]}\min_{\btheta \neq \btheta'} (\mu_{i}(\btheta) - \mu_{i}(\btheta'))^2$\\
        $\mathbf p_{\btheta}$ & p.m.f. to verify hypothesis $\btheta$ (Solution to Chernoff optimization in \eqref{eq:opt-lower00})\\
        $\KL(.||.)$ & Kullback-Leibler divergence\\
        $\widehat{\btheta}(t)$ & Most likely hypothesis at round $t$\\
        $\tilde{\btheta}(t)$ & Second most likely hypothesis at round $t$\\
        $L_{t}(\btheta)$ &  Sum of squared errors till round $t$ under hypothesis $\btheta$\\
        $\ell_{s}(\btheta)$ & Squared error at a specific round $s$ under hypothesis $\btheta$\\
        $\Delta_{t}(\btheta, \btheta^*)$ & $L_{t}(\btheta) - L_{t}(\btheta^*)$\\
         $\Delta_{s}(\btheta, \btheta^*)$ & $\ell_{s}(\btheta) - \ell_{s}(\btheta^*)$ at a specific round $s$\\
        $\xi^\delta(\btheta, \btheta^*)$ & Event that $ \{L_{\tau_\delta}(\btheta') - L_{\tau_\delta}(\btheta) > \beta(J,\delta), \forall \btheta' \neq \btheta\}$\\
        $(1+c)M$ & Critical number of samples $(1+c)O\left(\nicefrac{\log J}{D_1} + \nicefrac{\log(J/\delta)}{D_0}\right)$, for a constant $c>0$\\
        $\Is$ &  $ \{i\in[n]: i=\argmax_{i'\in[n]}(\mu_{i'}(\btheta) - \mu_{i'}(\btheta'))^2 \text{ for some } \btheta,\btheta'\in \bTheta\}$\\
        %%%%%%%%%%%%%%%%%%%%%
        $\gamma$ & Constant $\geq 2$, controlling the convergence rate \\
        $d$ & Dimension of the parameter space $\bTheta$\\
        %%%%%%%%%%%%%%%%%%
        $C$ & $165 + \eta^2/\eta_0^2$\\
        $D_0$ &$ \max_{\text{ }\mathbf{p}} \min_{\btheta' \neq \btheta^\ast} \sum_{i=1}^n p(i) (\mu_i(\btheta') -\mu_i(\btheta^\ast))^2$\\
        %%%%%%%%%%%%%%%%%%
        $D_1$ &$\min_{\{\mathbf{p}_{\btheta} : \btheta \in \bTheta\}} \min_{\btheta' \neq \btheta^\ast} \sum_{i=1}^n p_{\btheta}(i)(\mu_i(\btheta') - \mu_i(\btheta^*))^2$\\
        %%%%%%%%%%%%%%%%%%%%%%
        $D_0'$ & $\min\limits_{\btheta,\btheta' \neq \btheta^*}\sum_{i=1}^n u_{\btheta^*\btheta}(i)(\mu_i(\btheta')- \mu_i(\btheta^*))^2$ \\
        %%%%%%%%%%%%%%%%%%%%
        $D_1'$ & $\min_{\btheta \neq  \btheta', \btheta' \neq \btheta^*}\sum_{i=1}^n u_{\btheta\btheta'}(i)(\mu_i(\btheta') - \mu_i(\btheta^*))^2$\\
        %%%%%%%%%%%%%%%%%%%
        $P_t(\btheta)$ & $ \frac{1}{t}\sum_{s=1}^t\E_{I_s\sim \mathbf{p}_{\wtheta_{s-1}}}[L_s(\btheta)|\F^{s-1}]$\\
        %%%%%%%%%%%%%%%%%%%%%%
        $\mathbf{p}_{\mathbf{unif}}$ & pmf of a uniform distribution over the actions.\\
        %%%%%%%%%%%%%%%%%%%%%
        $P_U(\btheta)$ & $\E_{I_s\sim \mathbf{p}_{\mathbf{unif}}}[L_s(\btheta)]$\\
        %%%%%%%%%%%%%%%%%%%%
        $\sigma^2_t$ & $\E\left\|\nabla \wP_{t}\left(\btheta^{*}\right)\right\|_{\left(\nabla^{2} P_{t}\left(\btheta^{*}\right)\right)^{-1}}^{2}$\\
        %%%%%%%%%%%%%%%%%%%%
        $\overline{\sigma^2}_t$ & $\E\left\|\nabla \wP_{t}\left(\btheta^{*}\right)\right\|_{\left(\nabla^{2} P_{U}\left(\btheta^{*}\right)\right)^{-1}}^{2}$ \\
        %%%%%%%%%%%%%%%%%%%
        $\sigma^2_U$ & $\operatorname{Trace}\left[  I_{U}\left(\btheta^{*}\right)  I_{\Gamma}\left(\btheta^{*}\right)^{-1}\right]$ where $I_U$ and $I_\Gamma$ are Fisher Information matrices.\\
        \\\hline
    \end{tabular}
    \caption{Table of Notations}
    \label{tab:notations}
\end{table}

\end{document}